\definecolor{darkgreen}{rgb}{0,0.5,0}
\definecolor{darkred}{rgb}{0.7,0,0}
\definecolor{teal}{rgb}{0.3,0.8,0.8}
\newcommand{\kibitz}[2]{\ifnum\Comments=1\textcolor{#1}{#2}\fi}
\newcommand{\nan}[1]{\kibitz{olive}{[NJ: #1]}}
\newcommand{\ntrain}{n}
\newcommand{\neval}{n_{\textrm{eval}}}
\newcommand{\nest}{n_{\textrm{est}}}
\newcommand{\treeroot}{\varnothing}
\newcommand{\berr}[1]{\Ecal(#1)}
\newcommand{\dtctberr}[1]{\tilde \Ecal(#1)} 
\newcommand{\empberr}[1]{\hat \Ecal(#1)} 
\newcommand{\Fht}[1]{\Fcal_{i_{#1}}}
\newcommand{\ft}[1]{f_{#1}}
\newcommand{\pit}[1]{\pi_{#1}}
\newcommand{\Th}{T_h} 
\newcommand{\elip}{G}	
\newcommand{\Vf}[1]{V_{#1}}
\newcommand{\Vpi}[1]{V^{#1}}
\newcommand{\empVf}[1]{\hat{V}_{#1}}
\newcommand{\brank}{\textrm{Bellman rank}\xspace}
\newcommand{\brankbig}{\textrm{Bellman Rank}\xspace}
\newcommand{\brankem}{\textrm{\it Bellman rank}\xspace}
\newcommand{\bfac}{\textrm{Bellman factorization}\xspace}
\newcommand{\bfacbig}{\textrm{Bellman Factorization}\xspace}
\newcommand{\bfacem}{\textrm{\it Bellman factorization}\xspace}
\newcommand{\vstar}{V_{\Fcal}^\star} 
\newcommand{\primalbound}{\ensuremath{\Psi}}
\newcommand{\dualbound}{\ensuremath{\Phi}}
\newcommand{\normbound}{\ensuremath{\zeta}} 
\newcommand{\modellong}{Contextual Decision Process\xspace}
\newcommand{\modellongs}{Contextual Decision Processes\xspace}
\newcommand{\modelshort}{CDP\xspace}
\newcommand{\modelshorts}{CDPs\xspace}
\newcommand{\model}{\modelshort}
\newcommand{\cdplearn}{\textsc{Olive}\xspace}
\newcommand{\cdplearnslack}{\textsc{Oliver}\xspace}
\newcommand{\cdplearnbf}{OLIVE\xspace}
\newcommand{\cdplearnslackbf}{OLIVER\xspace}
\newcommand{\pvec}{\nu} 
\newcommand{\bvec}{\xi} 
\newcommand{\slack}{\theta}		
\newcommand{\slackM}{\eta}	
\newcommand{\fstarslack}{f_{\slack}^\star} 
\newcommand{\vstarslack}{V_{\Fcal, \slack}^\star} 
\newcommand{\vval}{V-value\xspace} 
\newcommand{\qFht}[1]{\Ucal(\Fht{#1})}
\newcommand{\tick}{\tau} 
\newcommand{\cutto}{\gamma} 
\newcommand{\trans}{\top}
\begin{document}

\begin{center}
{\LARGE{{\bf{Contextual Decision Processes with Low \brankbig \\ are PAC-Learnable \par}}}}

\vspace*{.2in}

\begin{tabular}{ccc}
  Nan Jiang$^\ddagger$
  &
  Akshay Krishnamurthy$^\star$
  &
  Alekh Agarwal$^\dagger$\\
  \texttt{nanjiang@umich.edu} &
  \texttt{akshay@cs.umass.edu}&
  \texttt{alekha@microsoft.com} 
  \vspace*{.1in}\\
  \multicolumn{3}{c}{
  John Langford$^\dagger$
  \qquad \qquad \quad
  Robert E. Schapire$^\dagger$}\\
    \multicolumn{3}{c}{
  \texttt{jcl@microsoft.com} \qquad
  \texttt{schapire@microsoft.com}}
\end{tabular}

\vspace*{.2in}

\begin{tabular}{ccc}
    University of Michigan$^\ddagger$ & University of
  Massachusetts, Amherst$^\star$ & Microsoft Research$^\dagger$ \\
  Ann Arbor, MI & Amherst, MA & New York, NY
\end{tabular}
\end{center}

\begin{abstract}
This paper studies systematic exploration for reinforcement learning
with rich observations and function approximation. We introduce a new
model called \emph{contextual decision processes}, that unifies
and generalizes most prior settings. Our first contribution is a
complexity measure, the \brankem, that we show enables tractable
learning of near-optimal behavior in these processes and is naturally
small for many well-studied reinforcement learning settings. Our
second contribution is a new reinforcement learning algorithm that
engages in systematic exploration to learn contextual decision
processes with low \brank. Our algorithm provably learns near-optimal
behavior with a number of samples that is polynomial in all relevant
parameters but independent of the number of unique observations. The
approach uses Bellman error minimization with optimistic exploration
and provides new insights into efficient exploration for reinforcement
learning with function approximation.
\end{abstract}

\section{Introduction}
\label{sec:intro}

How can we tractably solve sequential decision making problems where
the agent receives rich observations? 

In this paper, we study this question by considering reinforcement
learning (RL) problems where the agent receives rich sensory
observations from the environment, forms complex contexts from the
sensorimotor streams, uses function approximation to generalize to
unseen contexts, and must engage in \emph{systematic exploration} to
efficiently learn to complete tasks.  Such problems are at the core of
empirical reinforcement learning research (e.g., \citep{mnih2015human,
  bellemare2016unifying}), yet no existing theory provides rigorous
and satisfactory guarantees in a general setting.  \nan{``This
  situation motivates an important research question: how can we solve
  RL problems where exploration is critical and the agent receives
  rich observations in a sample-efficient manner?'' To avoid
  repetition I am removing this sentence, although I prefer removing
  the beginning one.}

To answer the question, we propose a new formulation, which we call
\emph{Contextual Decision Processes} (CDPs), to capture a large class
of sequential decision-making problems: CDPs generalize MDPs where the
state forms the context (Example~\ref{exm:mdp}), and POMDPs where the
history forms the context (Example~\ref{exm:pomdp}). We describe
\modelshorts formally in Section~\ref{sec:cdp}, and the learning goal
is to find a near-optimal policy for a \modelshort in a
sample-efficient manner.\footnote{Throughout the paper, by
  sample-efficient we mean a number of trajectories that is polynomial
  in the problem horizon, number of actions, 
  Bellman rank (to be introduced), and polylogarithmic in the
  number of candidate value-functions.}

\paragraph{A structural assumption.} 
When the context space is very large or infinite, as is common in
practice, lower bounds that are \emph{exponential in the problem
  horizon} preclude efficient learning of CDPs, even when simple
function approximators are used.
However, RL problems arising in applications are often far more benign
than the pathological lower bound instances, and we identify a
structural assumption capturing this intuition. As our first major
contribution, we define a notion of \bfacem
(Definition~\ref{def:brank}) in Section~\ref{sec:brank}, and focus on
problems with low \brankem.

\begin{table}[ht]
\centering
\begin{tabular}{|p{2.5cm}|c|c|c|c|c|c|} \hline
\emph{Model} & tabular MDP &  low-rank MDP & reactive POMDP & reactive PSR & LQR  \\ \hline
\brankem & \# states & rank & \# hidden states & PSR rank & \# state variables\\ \hline
\emph{PAC Learning} & known & new & extended & new & known\footnotemark[3] \\ \hline
\end{tabular}
\caption{Summary of settings that admit low \brank, with formal
  statements in Section~\ref{sec:brank}
  (Proposition~\ref{prop:mdp_finite} to \ref{prop:lqr}, from left to
  right in the table).  The 2nd row gives the parameters that bound
  the \brank. In the 3rd row, ``known" means that sample-efficient
  algorithms already exist for this setting (e.g., tabular MDPs),
  ``extended" means results here substantially extend previous work
  (e.g., POMDPs with large observation spaces and reactive value
  functions~\citep{krishnamurthy2016contextual}) and ``new" means our
  result gives the first sample-efficient algorithm (e.g., MDPs with
  low-rank transition dynamics).}
\label{tab:models_low_brank}
\end{table}

At a high level, \brank is a form of algebraic dimension on the
interplay between the CDP and the value-function approximator that we
show is small for many natural settings.  For example, every MDP with
a tabular value-function has \brank bounded by the rank of its
transition matrix, which is at most the number of states but can be
considerably smaller. For a POMDP with reactive value-functions, the
\brank is at most the number of hidden states and has no dependence on
the observation space. We provide other instances of low \brank
including Linear Quadratic Regulators and Predictive State
Representations. Overall, \modelshorts with a small \brank yield a
unified framework for a large class of sequential decision making
problems.

\paragraph{A new algorithm.} Our second contribution is a new algorithm for episodic
reinforcement learning called \cdplearn (Optimism Led Iterative
Value-function Elimination), detailed in Section~\ref{sec:alg_alg}.
\cdplearn combines optimism-driven exploration and Bellman error-based
search in a new way crucial for theoretical guarantees.

The algorithm is an iterative procedure that successively refines a
space of candidate Q-value functions $\Fcal$. At iteration $t$, it
first finds the surviving value function $f \in \Fcal_t$ that predicts
the highest value on the initial context distribution.  By collecting
a few trajectories according to $f$'s greedy policy, $\pi_f$, we can
verify this prediction. If the attained value is close to the
prediction, our algorithm terminates and outputs $f$. If not, we
eliminate all surviving $f' \in \Fcal_t$ which violate certain Bellman
equations on trajectories sampled using $\pi_f$. $\Fcal_{t+1}$ is set
to all the surviving functions and this process repeats.

\paragraph{A PAC guarantee.} 
We prove that \cdplearn performs sample-efficient learning in
\modelshorts with a small \brank (See
Section~\ref{sec:alg_pac}). Concretely, when the optimal
value-function in a CDP can be represented by the function
approximator $\Fcal$, the algorithm uses $\otil(M^2H^3K\log
(N/\delta)/\epsilon^2)$ trajectories to find an $\epsilon$-suboptimal
policy,\footnote{A logarithmic dependence on a norm parameter
  $\normbound$ is omitted here, as $\normbound$ is polynomial in most
  cases.}  where $M$ is the \brank, $H$ is the horizon (the
length of an episode), $K$ is the number of actions, $N$ is the
cardinality of $\Fcal$, and $\delta$ is the failure probability.

Importantly, the sample complexity bound has a logarithmic dependence
on $\Fcal$, thus enabling powerful function approximation, and no
direct dependence on the size of the context space, which can be very
large or even infinite. As many existing models, including the ones
mentioned above, have low \brank, the result immediately implies
sample-efficient learning in all of these settings,\footnote{Our algorithm requires discrete action spaces and does not immediately apply to LQRs; see more discussion in Section~\ref{sec:brank}.\label{foot:lqr}} as highlighted in
Table~\ref{tab:models_low_brank}. 

We also present several extensions of the main result, showing
robustness to the failure of the assumption
that the optimal value-function is captured by the function
approximator, adaptivity to unknown \brank, and extension to infinite
function classes of bounded statistical complexity. Altogether, these
results show that the notion of \brank robustly captures the
difficulty of exploration in sequential decision-making problems. \nan{The first sentence reads a bit awkward to me.}

To summarize, this work advances our understanding in reinforcement
learning with complex observations where long-term planning and
exploration are critical.  There are, of course, several additional
questions that must be resolved before we have satisfactory tools for
these problems.  The biggest drawback of \cdplearn is its
computational complexity, which is polynomial in the number of value
functions and hence intractable for the powerful classes of interest.
This issue must be addressed before we can empirically evaluate the
effectiveness of the proposed algorithm.  We leave this and other 
open questions for future work.

\paragraph{Related work.}
There is a rich body of theoretical literature on learning Markov
Decision Processes (MDPs) with small state
spaces~\citep{kearns2002near,brafman2003r,strehl2006pac}, with an
emphasis on sophisticated exploration techniques that find
near-optimal policies in a sample-efficient manner. While there have
been attempts to extend these techniques to large state
spaces~\citep{kakade2003exploration,jong2007model,pazis2016efficient},
these approaches fail to be a good fit for practical scenarios where
the environment is typically perceived through complex sensory
observations such as image, text, or audio signals. Alternatively,
Monte Carlo Tree Search (MCTS) methods can handle arbitrarily large
state spaces, but only at the cost of exponential dependence on the
planning horizon~\citep{kearns2002sparse,kocsis2006bandit}.  Our work
departs from these existing efforts by aiming for a sample complexity
that is independent of the size of the context space and at most
polynomial in the horizon.  Similar goals have been attempted by
\citet{wen2013efficient} and \citet{krishnamurthy2016contextual} where
attention is restricted to decision processes with deterministic
dynamics and special structures. In contrast, we study a much broader
class of problems with relatively mild conditions.

On the empirical side, the prominent recent success on both the Atari
platform~\citep{mnih2015human,wang2015dueling} and
Go~\citep{silver2016mastering} have sparked a flurry of research
interest.  These approaches leverage advances in deep learning for
powerful function approximation, while, in most cases, using simple
heuristic strategies, such as $\epsilon$-greedy, for exploration.
More advanced exploration strategies include extending 
the methods for small state spaces (e.g., the use of pseudo-counts
in~\citet{bellemare2016unifying}), 
and combining 
MCTS with function approximation (e.g., \citet{silver2016mastering}). 
Unfortunately, both types of approaches 
often require strong
domain knowledge and large amounts of data to be
successful.

\citet{hallak2015contextual} have proposed a setting called Contextual
MDPs, where a context refers to some static information that can be
used to generalize across many similar MDPs. In this paper, a context
is most similar to state features in the RL literature and is a
natural generalization of the notion of context as in the contextual
bandit literature \citep{langford2008epoch}.


\section{Contextual Decision Processes (CDPs)}
\label{sec:cdp}

We introduce a new model, called a \modellong, as a unified framework
for reinforcement learning with rich observations. We first present
the model, before the relevant notation and definitions.

\subsection{Model and Examples}
\label{sec:cdp_examples}


\modellongs make minimal assumptions to capture a very general class
of RL problems and are defined as follows.

\begin{definition}[\modellong(\model)]
A (finite-horizon) \modellong (\model for short) is defined as a tuple
$(\Xcal, \Acal, H, P)$, where $\Xcal$ is the
context space, $\Acal$ is the action space, and $H$ is the horizon of
the problem. $P = (P_{\treeroot}, P_+)$ is the \emph{system
  descriptor}, where $P_{\treeroot} \in \Delta(\Xcal)$ is a
distribution over initial contexts, that is $x_1 \sim P_{\treeroot}$,
and $P_+: (\Xcal \times \Acal \times \RR)^* \times \Xcal \times \Acal
\to \Delta(\RR \times \Xcal)$ elicits the next reward and context from
the interactions so far $x_1, a_1, r_1, \ldots, x_h, a_h$:
$$ (r_h, x_{h+1}) \sim P_+(x_1, a_1, r_1, \ldots, x_h, a_h).
$$
\end{definition}

In a \model, the agent's interaction with the environment proceeds in
episodes. In each episode, the agent observes a context $x_1$, takes
action $a_1$, receives reward $r_1$ and observes $x_2$, repeating 
$H$ times. A policy $\pi: \Xcal \to \Acal$ specifies the
decision-making strategy of an agent, that is $a_h = \pi(x_h), ~
\forall h\in[H]$, and induces a distribution over the trajectory
$(x_1, a_1, r_1, \ldots, x_H, a_H, r_H, x_{H+1})$ according to the
system descriptor $P$.
\footnote{More generally, a sequence of stochastic policies $\pi_1,
  \ldots, \pi_H: \Xcal \to \Delta(\Acal)$ induces a distribution over
  trajectories in a similar way, where $a_h \sim \pi_h(x_h) ~ \forall
  h\in[H]$.}  The value of a policy, $V^{\pi}$, is defined as
\begin{align}
\label{eq:vpi_def}
V^{\pi} = \EE_P \left[ \textstyle \sum_{h=1}^H r_H ~\Big|~ a_{1:H}
  \sim \pi \right],
\end{align}
where $ a_{1:H} \sim \pi$ abbreviates for $a_1 = \pi(x_1), \ldots, a_H
= \pi(x_H)$. Here, and in the sequel, the expectation is always taken
over contexts and rewards drawn according to the system descriptor
$P$, so we suppress the subscript $P$ for brevity. The goal
of the agent is to find a policy $\pi$ that attains the largest value.

Below we show that \modelshorts capture classical RL models, including MDPs and POMDPs, and the optimal policies can be expressed as a function of appropriately chosen contexts.

\begin{example}[\emph{MDPs with states as contexts}] \label{exm:mdp}
Consider a finite-horizon MDP $(\Scal, \Acal, H, \Gamma_1,
\Gamma, R)$, where $\Scal$ is the state space, $\Acal$ is the
action space, $H$ is the horizon, $\Gamma_1 \in \Delta(\Scal)$ is the
initial state distribution, $\Gamma: \Scal \times \Acal \to
\Delta(\Scal)$ is the state transition function, $R: \Scal \times
\Acal \to \Delta([0, 1])$ is the reward function, and an episode takes
the form of $(s_1, a_1, r_1, \ldots, s_H, a_H, r_H)$. We can convert
the MDP to a CDP $(\Xcal, \Acal, H, P)$ by letting
$\Xcal = \Scal \times [H]$ and $x_h = (s_h, h)$, which allows the set of policies $\{\Xcal\to\Acal\}$ to contain the
optimal policy \citep{puterman1994markov}. The system descriptor
is $P=(P_{\treeroot}, P_+)$, where $P_{\treeroot}(x_1) =
\Gamma_1(s_1)$, and $ P_+(r_h, x_{h+1} \,|\, x_1, a_1, r_1, \ldots,
x_h, a_h) = R(r_h | s_h, a_h) \, \Gamma(s_{h+1} | s_h, a_h) $.
\end{example}

The system descriptor for a particular model is usually obvious from
the definitions, and here we give its explicit form as an illustration. We
omit the specification of system descriptor in the remaining examples.
 
Next we turn to POMDPs. It might seem that a CDP describes a similar
process as a POMDP but limits the agent's decision-making strategies
to memoryless (or reactive) policies, as we only consider policies in
$\{\Xcal\to\Acal\}$. This is not true. We clarify this issue by
showing that we can use the history as context, and the induced CDP
suffers no loss in the ability to represent optimal policies.

\begin{example}[\emph{POMDPs with histories as contexts}]\label{exm:pomdp}
Consider a finite-horizon POMDP with a \emph{hidden} state space
$\Scal$, an observation space $\Ocal$,
and an emission process $D_s$ that associates each $s\in\Scal$ with a
distribution over $\Ocal$. We can convert the POMDP to a CDP $(\Xcal, \Acal, H, P)$ by letting $\Xcal = (\Ocal\times \Acal
\times \RR)^* \times \Ocal$ and $x_h = (o_1, a_1, r_1, \ldots, o_h)$
is the observed history at time $h$.
\end{example}

It is also clear from this example that we can assume 
contexts are Markovian in CDPs without loss of generality, as we
can always use history as context. While we do not commit to this
assumption to allow for a flexible framework with simple notation
(see Example~\ref{exm:pomdp_window}), we later connect to well-known
results in MDP literature based on this observation so that readers
can transfer insights from MDPs to CDPs.

\begin{example}[\emph{POMDPs with sliding windows of observations as contexts}]
\label{exm:pomdp_window}
In some application scenarios, partial observability can be resolved
by using a small sliding window: for example, in Atari games, it is
common to keep track of the last $4$ frames of images
\citep{mnih2015human}. In this case, we can represent the problem as a
CDP by letting $x_h = (o_{h-3}, o_{h-2}, o_{h-1}, o_h)$.
\end{example}

We hope these examples convince the reader of the flexibility of
keeping contexts separate from intrinsic quantities such as states or
observations. Finally, we introduce a regularity assumption on the
rewards.

\begin{assum}[Boundedness of rewards]\label{ass:bounded}
We assume that regardless of how actions are chosen, for any $h=1,
\ldots, H$, $r_h \ge 0$ and $\sum_{h=1}^H r_h \le 1$ almost surely.
\end{assum}

\subsection{Value-based RL and Function Approximation}
\label{sec:values}

Now that we have a model in place, we turn to some important
solution concepts.

A \model makes no assumption on the cardinality of the context space,
which makes it critical to generalize across contexts, since the agent
might not encounter the same context twice. Therefore, we consider
value-based RL with function approximation. That is, the agent is
given a set of functions $\Fcal \subseteq \Xcal\times \Acal \to [0,
  1]$ and uses it to approximate an \emph{action-value function} (or
Q-value function). Without loss of generality we assume that
$f(x_{H+1},a) \equiv 0$.\footnote{This frees us from having to treat
  the last level ($h=H$) differently in the Bellman equations.} For
the purpose of presentation, we assume that $\Fcal$ is a finite space
with $|\Fcal| = N < \infty$ for most of the paper. In
Section~\ref{sec:inf_hyp_class} we relax this assumption and allow
infinite function classes with bounded complexity.

As in typical value-based RL, the goal is to identify $f\in\Fcal$
which respects a particular set of \emph{Bellman equations} and
achieves a high value with its greedy policy $\pi_f(x) =
\argmax_{a\in\Acal} f(x,a)$. We next set up the appropriate extensions
of Bellman equations to \modelshorts and the optimal value $\vstar$
through a series of definitions. Unlike typical definitions in MDPs,
these involve both the \modelshort and function approximator $\Fcal$.

\begin{definition}[Average Bellman error]\label{def:berr}
Given any policy $\pi: \Xcal \to \Acal$ and a function $f:
\Xcal\times \Acal \to [0,1]$, the \emph{average Bellman error} of $f$
under roll-in policy $\pi$ at level $h$ is defined as
\begin{align}\label{eqn:berr}
\berr{f,\pi,h} = \EE \, \big[f(x_h, a_h) - r_h - f(x_{h+1},a_{h+1})
  ~\big|~ a_{1:h-1} \sim \pi,~ a_{h:h+1} \sim \pi_f \big].
\end{align}
\end{definition}

In words, the average Bellman error measures the self-consistency of a
function $f$ between its predictions at levels $h$ and $h+1$ when all
the previous actions are taken according to some policy
$\pi$. \footnote{In many existing approaches (e.g.,
  LSPI~\citep{lagoudakis2003least} and FQI~\citep{ernst2005tree}), the
  Bellman errors are defined as taking the expectation of a squared
  error unlike this definition.}  Given this definition, we now define
a set of Bellman equations.

\begin{definition}[Bellman equations and validity of $f$]\label{def:valid}
Given an $(f, \pi, h)$ triple, a \emph{Bellman equation} posits $\berr{f,
\pi, h} = 0$.  We say $f\in\Fcal$ is \emph{valid} if the Bellman equation
on $(f, \pi_{f'}, h)$ holds for every $f' \in \Fcal, h\in[H]$.
\end{definition}

Note that the validity assumption only considers roll-ins according to
the greedy policies $\pi_f$, which is the natural policy class in a
function approximation setting. In Section~\ref{sec:policy_vval}, we
show how to incorporate a separate policy class in these
definitions. In the MDP setting, each Bellman equation can be viewed
as the linear combination of the standard Bellman optimality equations
for $Q^\star$, \footnote{Readers who are not familiar with the
  definition of $Q^\star$ are advised to consult a textbook, such as
  \citep{sutton1998reinforcement}.}  where the coefficients
are the probabilities with which the roll-in policy $\pi$ visits each
state. This leads to the following consequence.

\begin{fact}[$Q^\star$ is always valid]
\label{fact:valid_mdp}
Given an MDP and a space of functions $\Fcal: \Scal \times [H] \times
\Acal \to [0, 1]$, if the optimal Q-value function of the MDP
$Q^\star$ lies in $\Fcal$,
then in the
corresponding CDP with $\Xcal = \Scal \times [H]$, $Q^\star$ is valid.
\end{fact}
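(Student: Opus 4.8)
The plan is to show directly that \(\berr{Q^\star, \pi_{f'}, h} = 0\) for every \(f' \in \Fcal\) and every \(h \in [H]\), which is precisely the validity condition of Definition~\ref{def:valid} for \(f = Q^\star\). The guiding idea, already foreshadowed in the surrounding text, is that the average Bellman error is merely a reweighting (by the state-visitation probabilities induced by the roll-in \(\pi_{f'}\)) of the \emph{pointwise} Bellman optimality residuals of \(Q^\star\); since each such residual is identically zero, any reweighting vanishes, and the identity of \(f'\) never enters the argument.

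First I would recall the finite-horizon Bellman optimality equation in the CDP encoding, where \(x_h = (s_h, h)\): for every action \(a \in \Acal\) and every level,
\[
  Q^\star(x_h, a) = \EE\big[\, r_h + \max_{a'} Q^\star(x_{h+1}, a') \;\big|\; s_h, a \,\big],
\]
with the convention \(Q^\star(x_{H+1}, \cdot) \equiv 0\) covering the terminal case. The crucial observation is that in Definition~\ref{def:berr} the action at the next level obeys \(a_{h+1} \sim \pi_{Q^\star}\), i.e.\ \(a_{h+1} = \argmax_{a'} Q^\star(x_{h+1}, a')\), so \(Q^\star(x_{h+1}, a_{h+1}) = \max_{a'} Q^\star(x_{h+1}, a') = V^\star(s_{h+1})\). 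Substituting this into the residual and conditioning on the realized pair \((x_h, a_h)\) gives
\[
  \EE\big[\, Q^\star(x_h, a_h) - r_h - Q^\star(x_{h+1}, a_{h+1}) \;\big|\; x_h, a_h \,\big] = 0,
\]
directly by the optimality equation. Note this inner expectation vanishes for \emph{any} value of \(a_h\), not only the greedy one, so the sole property of \(\pi_f\) being used is that it acts greedily at level \(h+1\).

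Finally I would invoke the tower property: the roll-in \(a_{1:h-1} \sim \pi_{f'}\) together with \(a_h \sim \pi_{Q^\star}\) induces some distribution over \((x_h, a_h)\), and averaging the identically-zero conditional residual over this distribution yields \(\berr{Q^\star, \pi_{f'}, h} = 0\); since the argument never references \(f'\) beyond its role in determining the visitation law, the conclusion holds uniformly over all \(f' \in \Fcal\) and all \(h\). There is no genuine obstacle here—the statement is essentially the Bellman optimality equation pushed through an expectation—so the only point requiring care is the bookkeeping at the terminal level \(h = H\), where one must check that the \(Q^\star(x_{H+1},\cdot)\equiv 0\) convention correctly encodes \(V^\star(s_{H+1}) = 0\) and that the reduction to the pointwise residual still goes through at the boundary.
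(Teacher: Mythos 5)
Your proof is correct and matches the paper's own (unformalized) argument: the text preceding Fact~\ref{fact:valid_mdp} justifies it by observing that the average Bellman error is a visitation-probability-weighted combination of the pointwise Bellman optimality residuals of $Q^\star$, each of which vanishes --- exactly your tower-property reduction, made explicit later in the proof of Proposition~\ref{prop:mdp_finite} via the vectors $\pvec_h$ and $\bvec_h$. Your added observations (that the conditional residual vanishes for arbitrary $a_h$, and that the $f(x_{H+1},\cdot)\equiv 0$ convention handles the boundary) are accurate refinements of the same route.
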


While $Q^\star$ satisfies the Bellman equations and yields the optimal
policy $\pi^\star = \pi_{Q^\star}$, there can be other
functions which also satisfy the equations while yielding suboptimal
policies. This happens because Eq.~\eqref{eqn:berr} only considers
$a_h$ drawn according to $\pi_f$ and does not use the values
on other actions. For instance, consider a \model where at every
context, action $a$ always gets a reward of 0 and action $a'$ always
gets a reward of 1.  A function that predicts $f(x,a)=f(x,a')=0~
\forall x,a$ is trivially valid as long as tie-breaks always favor
$a$.

Since validity alone does not imply that we get a good policy, it is
natural to search for a valid value function which also induces a
high-value policy.  We formalize this goal in the next definition.

\begin{definition}[Optimal value]\label{def:vstar}
Define 
$
f^\star = \argmax_{f\in\Fcal: \,f \textrm{\,is valid}} \Vpi{\pi_f},
$ 
and 
$
\vstar = \Vpi{\pi_{f^\star}}.
$ 
\end{definition}

\begin{fact} \label{fact:qstar_vstar}
For the same setting as in Fact~\ref{fact:valid_mdp}, when $Q^\star
\in \Fcal$, we have $f^\star = Q^\star$, and $\vstar = V^\star$, which
is the optimal long-term value.
\end{fact}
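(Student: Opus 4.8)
The plan is to lean on the two preceding facts together with the optimality of the MDP's optimal policy. First, by Fact~\ref{fact:valid_mdp}, since $Q^\star \in \Fcal$ the function $Q^\star$ is valid in the corresponding CDP, so it is a feasible candidate in the argmax defining $f^\star$ in Definition~\ref{def:vstar}. It therefore suffices to show that $Q^\star$ attains the maximum of $\Vpi{\pi_f}$ among all valid $f$, and that this maximal value equals $V^\star$.

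Next I would identify the CDP greedy policy $\pi_{Q^\star}$ with the MDP's optimal policy. Under the context construction of Example~\ref{exm:mdp} ($\Xcal = \Scal \times [H]$, $x_h = (s_h,h)$), the greedy policy $\pi_{Q^\star}(x) = \argmax_{a} Q^\star(x,a)$ is exactly an optimal policy $\pi^\star$ of the MDP. Because the CDP faithfully reproduces the MDP's reward and transition structure, the value of any policy is preserved under the reduction, so $\Vpi{\pi_{Q^\star}} = V^\star$, the optimal long-term value.

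The final step is to invoke optimality. For any valid $f$, its greedy policy $\pi_f$ is a deterministic policy in the class $\{\Xcal \to \Acal\}$, which contains $\pi^\star$; since $\pi^\star$ is optimal among all such policies, $\Vpi{\pi_f} \le V^\star = \Vpi{\pi_{Q^\star}}$. Hence $Q^\star$ maximizes $\Vpi{\pi_f}$ over valid $f$, giving $f^\star = Q^\star$ and $\vstar = \Vpi{\pi_{f^\star}} = V^\star$.

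The one delicate point is the exact equality $f^\star = Q^\star$ rather than merely the statement that $Q^\star$ is a maximizer. The argmax in Definition~\ref{def:vstar} need not be unique: a different valid $f \neq Q^\star$ whose greedy policy is also optimal (for instance, resolving ties among equally good actions differently) would attain the same value $V^\star$. I would therefore read $f^\star = Q^\star$ as the assertion that $Q^\star$ realizes the maximum, so that the argmax can be taken to return it, with the substantive content being the value identity $\vstar = V^\star$. That identity requires no computation beyond the value-preservation of the MDP-to-CDP reduction noted above, so this is the only step I would treat with any care.
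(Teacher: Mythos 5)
Your proof is correct and takes precisely the route the paper intends: the paper states this fact without proof, treating it as immediate from Fact~\ref{fact:valid_mdp}, the identity $\pi_{Q^\star} = \pi^\star$, and the fact (cited from \citet{puterman1994markov} in Example~\ref{exm:mdp}) that the class $\{\Xcal \to \Acal\}$ with $\Xcal = \Scal \times [H]$ contains an optimal policy, so $\Vpi{\pi_f} \le V^\star$ for every (valid or not) $f$. Your closing caveat about non-uniqueness of the $\argmax$ is also well taken and consistent with the paper's own remark that functions other than $Q^\star$ can satisfy the Bellman equations; reading $f^\star = Q^\star$ as ``$Q^\star$ realizes the maximum'' is the right interpretation.
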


Definition~\ref{def:vstar} implicitly assumes that there is at least
one valid $f\in\Fcal$. This is weaker than the realizability
assumption made in the value-based RL literature, that $\Fcal$
contains the optimal Q-value function of an MDP
$Q^\star$~\citep{antos2008learning, krishnamurthy2016contextual}.
Indeed, the setup subsumes realizability, as evidenced by
Fact~\ref{fact:valid_mdp} and \ref{fact:qstar_vstar}. When $Q^\star
\in \Fcal$, the algorithm aims to identify a policy achieving value
close to $V^\star$, the optimal value achievable by any agent. When no
functions in $\Fcal$ approximate $Q^\star$ well, finding the best
valid value function is still a meaningful and non-trivial objective.
In this sense, our work makes substantially weaker realizability-type
assumptions than prior theoretical results for value-based
RL~\citep{antos2008learning, krishnamurthy2016contextual}, which
assume $Q^\star \in \Fcal$ often in addition to several stronger
requirements.

\paragraph{Approximation to Bellman Equations.} 
In general, $\Fcal$ may not contain $Q^*$, or any valid functions at
all, which makes our learning goal trivial. It is desirable to have an
algorithm robust to such a scenario, and we show how our algorithm
requires only an approximate notion of validity in
Section~\ref{sec:robust}, implying a graceful degradation in the
results.

\section{\bfacbig and \brankbig}
\label{sec:brank}
\modelshorts are general models for sequential decision making, but
are there efficient RL algorithms for them? 

Unfortunately, without further assumptions, learning in CDPs is generally
hard, since they subsume MDPs and POMDPs with arbitrarily large
state/observation spaces. Moreover, a function class $\Fcal$ with low
statistical complexity,
which would generalize effectively in a standard supervised learning setting with a fixed data distribution,
does not overcome this difficulty in \modelshorts 
where the data distribution crucially depends on the agent's policy. 
In particular, even when
$\log N$, the statistical complexity for finite classes, is small, there
exists an $\Omega(K^H)$ lower bound on the sample complexity of
learning \modelshorts.  The result is due
to~\citet{krishnamurthy2016contextual}, and is included in
Appendix~\ref{app:lower_exp} for completeness.


While exponential lower bounds for learning \modelshorts exist, they
are fairly pathological, and most real problems have substantially
more structure.  To capture these realistic instances and circumnavigate the lower
bounds, we propose a new complexity measure and restrict our attention
to settings where this measure is low. As we will see, this measure is
naturally small for many existing models, and, when it is small,
efficient reinforcement learning is possible.

The complexity measure we propose is a structural characterization of
the set of Bellman equations induced by the \model and the value-function
class (recall Definition~\ref{def:berr}) that we need to check to find
valid functions. While checking validity by enumeration is intractable
for large $\Fcal$, observe that the Bellman equations are structured
in tabular MDPs: the average Bellman error under any roll-in policy is
a stochastic combination of the single-state errors, and checking the
single-state errors (which is tractable) is sufficient to guarantee
validity. This observation hints toward a more general phenomenon:
whenever the collection of Bellman errors across all roll-in policies
can be concisely represented, we may be able to check the validity of
all functions in a tractable way.


This intuition motivates a new complexity measure that we call the
\brankem. Define the Bellman error matrices, one for each $h$, to be
$|\Fcal|\times|\Fcal|$ matrices where the $(f, f')^{\textrm{th}}$ entry
is the Bellman error $\berr{f,\pi_{f'},h}$. Informally, the \brank for
a \model and a given value-function class $\Fcal$ is a uniform upper
bound on the rank of these $H$ Bellman error matrices.


Now we give the formal definition below.

\begin{definition}[\bfac and \brank]
\label{def:brank}
We say that a \model $(\Xcal, \Acal, H, P)$ and $\Fcal
\subset \Xcal \times \Acal \to [0, 1]$ admit \bfac with \brankem $M$
and norm parameter $\normbound$, if there exists $\pvec_h: \Fcal \to
\RR^M, \bvec_h: \Fcal \to \mathbb{R}^M$ for each $h\in[H]$, such that
for any $f,f'\in\Fcal, h\in[H]$,
\begin{align} \label{eq:bellman_decomposition}
\berr{f,\pi_{f'},h} = \langle \pvec_h(f'), \bvec_h(f) \rangle,
\end{align}
and  $\|\pvec_h(f')\|_2 \cdot \|\bvec_h(f)\|_2 \le \normbound < \infty$.
\end{definition}

The exact factorization in Eq.~\eqref{eq:bellman_decomposition} can be
relaxed to an approximate version as is discussed in
Section~\ref{sec:robust}. In the remaining sections of this paper we
introduce the main algorithm, and analyze its sample-efficiency in
problems with low \brank. In the remainder of this section we showcase
the generality of Definition~\ref{def:brank} by describing a number of
common RL settings that have a small \brank. Throughout, we see how
the \brank captures the process-specific structures that allow for
efficient exploration. Proofs of all claims in this section are
deferred to Appendix~\ref{app:low_brank_proof}.

We start with the tabular MDP setting, and show that the \brank is at
most the number of states.

\begin{proposition}[\brank bounded by number of states in MDPs] \label{prop:mdp_finite}
Consider the MDP setting of Example~\ref{exm:mdp} with the
corresponding \modelshort. With any $\Fcal \subset \Xcal\times \Acal
\to [0, 1]$, this model admits a \bfac with $M = |\Scal|$ and
$\normbound=2\sqrt{M}$.
\end{proposition}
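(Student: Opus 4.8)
The plan is to exhibit an explicit \bfac of dimension $M = |\Scal|$ by unpacking the average Bellman error in the MDP setting. Recall from the discussion preceding Definition~\ref{def:brank} that in a tabular MDP the average Bellman error under any roll-in policy is a stochastic combination of per-state Bellman errors. I will make this precise. Fix $h\in[H]$ and contexts $x_h = (s_h, h)$. Starting from Definition~\ref{def:berr},
\begin{align*}
\berr{f,\pi_{f'},h} = \EE\big[f(x_h,a_h) - r_h - f(x_{h+1},a_{h+1}) ~\big|~ a_{1:h-1}\sim\pi_{f'},~ a_{h:h+1}\sim\pi_f\big].
\end{align*}
The key observation is that conditioned on reaching state $s$ at level $h$, the inner expectation $\EE[f(x_h,a_h) - r_h - f(x_{h+1},a_{h+1}) \mid s_h = s,~ a_{h:h+1}\sim\pi_f]$ depends only on $f$ (through $\pi_f$ and the one-step MDP dynamics $\Gamma, R$) and on the state $s$, but \emph{not} on the roll-in $\pi_{f'}$. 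Meanwhile the probability of reaching state $s$ at level $h$ depends only on $\pi_{f'}$ (and the fixed dynamics), not on $f$.

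This suggests the factorization indexed by $s\in\Scal$. I would define $\pvec_h(f') \in \RR^{|\Scal|}$ to have $s$-th coordinate equal to the visitation probability $\Pr[s_h = s \mid a_{1:h-1}\sim\pi_{f'}]$, and $\bvec_h(f)\in\RR^{|\Scal|}$ to have $s$-th coordinate equal to the conditional one-step Bellman residual $\EE[f(x_h,a_h)-r_h-f(x_{h+1},a_{h+1}) \mid s_h=s,~a_{h:h+1}\sim\pi_f]$. By the tower rule, summing over the state reached at level $h$ gives exactly $\berr{f,\pi_{f'},h} = \langle \pvec_h(f'), \bvec_h(f)\rangle$, which is Eq.~\eqref{eq:bellman_decomposition} with $M = |\Scal|$. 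This is the crux and I expect the only subtlety is bookkeeping the conditioning correctly — that the roll-in $\pi_{f'}$ governs only the arrival distribution while $\pi_f$ governs the two subsequent actions, matching exactly the conditioning in Definition~\ref{def:berr}.

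It remains to verify the norm bound $\|\pvec_h(f')\|_2 \cdot \|\bvec_h(f)\|_2 \le 2\sqrt{M}$. For $\pvec_h(f')$, its entries form a probability distribution over $\Scal$, so $\|\pvec_h(f')\|_2 \le \|\pvec_h(f')\|_1 = 1$. For $\bvec_h(f)$, each coordinate is a difference of terms bounded in $[0,1]$ (using $f\in[0,1]$, $r_h\ge 0$, and Assumption~\ref{ass:bounded}), so each coordinate lies in $[-2, 2]$ in absolute value, giving $\|\bvec_h(f)\|_2 \le 2\sqrt{|\Scal|} = 2\sqrt{M}$. Multiplying the two bounds yields $\normbound \le 2\sqrt{M}$ as claimed. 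The whole argument is essentially a conditioning/tower-rule computation; the main obstacle is purely expository, namely setting up the state-visitation versus conditional-residual split cleanly so that the inner product reproduces the expectation, and I do not anticipate any genuine analytic difficulty.
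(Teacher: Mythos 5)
Your proposal is correct and matches the paper's own proof essentially verbatim: Appendix~\ref{app:mdp_finite} uses exactly the same factorization, with $[\pvec_h(f')]_s$ the state-visitation probability under $\pi_{f'}$ at level $h$ and $[\bvec_h(f)]_s$ the per-state conditional Bellman residual, together with the same norm argument ($\|\pvec_h(\cdot)\|_1 = 1$ hence $\|\pvec_h(\cdot)\|_2 \le 1$, and $\|\bvec_h(\cdot)\|_\infty \le 2$ hence $\|\bvec_h(\cdot)\|_2 \le 2\sqrt{M}$). No gaps; the tower-rule bookkeeping you flag as the only subtlety is precisely the step the paper leaves as ``easy to verify.''
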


A related model introduced by \citet{li2009unifying} for
extending tabular PAC-MDP methods to large MDPs using a form of state
abstractions also has low \brank (See
Appendix~\ref{app:li}).


The MDP example is particularly simple as 
each coordinate of the $M$-dimensional space corresponds to a state, which is observable. 
Our next few examples show that this is not necessary, and that \bfac
can be based on latent properties of the process. We next consider
large MDPs whose transition dynamics have a low-rank structure. A
closely related setting has been considered by
\citet{barreto2011reinforcement, barreto2014policy} where the low-rank
structure is exploited to speed up MDP planning, but prior to this
work, no sample-efficient RL algorithms are known for this setting.

\begin{proposition}[\brank in low-rank MDPs, informally]
\label{prop:mdp_low_rank_dynamics}
Consider the MDP setting of Example~\ref{exm:mdp} with a transition
matrix $\Gamma$ having rank at most $M$. The induced \modelshort along
with any $\Fcal \subset \Xcal\times \Acal \to [0, 1]$ admits a \bfac
with \brank $M$.
\end{proposition}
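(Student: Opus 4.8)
The plan is to reduce the average Bellman error to an expectation of \emph{single-state} Bellman errors taken under the state-visitation distribution of the roll-in policy, and then to observe that the low-rank transition structure forces these visitation distributions into an $M$-dimensional subspace. Fix a level $h$. Conditioning on the state $s_h$ reached at step $h$, and recalling that in the induced CDP $x_h=(s_h,h)$, $a_h=\pi_f(x_h)$, and $a_{h+1}=\pi_f(x_{h+1})$, I would define the single-state Bellman error
\[
g_f(s) = f((s,h),\pi_f(s,h)) - \bar r(s,\pi_f(s,h)) - \EE_{s'\sim\Gamma(\cdot\,|\,s,\pi_f(s,h))}\big[f((s',h+1),\pi_f(s',h+1))\big],
\]
where $\bar r(s,a)$ is the expected immediate reward; note $g_f$ depends only on $f$ and the MDP. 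Letting $d_h^{\pi_{f'}}$ denote the distribution of $s_h$ induced by rolling in with $\pi_{f'}$ for the first $h-1$ steps, the law of total expectation gives $\berr{f,\pi_{f'},h} = \sum_s d_h^{\pi_{f'}}(s)\, g_f(s)$. The crucial point is that $d_h^{\pi_{f'}}$ depends only on $f'$ (only $a_{1:h-1}\sim\pi_{f'}$ affects the state at step $h$), while $g_f$ depends only on $f$, so the two arguments are already cleanly separated.

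Next I would exploit the rank-$M$ factorization $\Gamma(s'\,|\,s,a) = \langle \phi(s,a),\mu(s')\rangle$ with $\phi(s,a),\mu(s')\in\RR^M$, which the rank assumption on $\Gamma$ guarantees. Writing the one-step recursion for the visitation distribution and substituting this factorization, for $h\ge2$,
\[
d_h^{\pi_{f'}}(s) = \sum_{\bar s} d_{h-1}^{\pi_{f'}}(\bar s)\,\Gamma(s\,|\,\bar s,\pi_{f'}(\bar s,h-1)) = \Big\langle \mu(s),\ \textstyle\sum_{\bar s} d_{h-1}^{\pi_{f'}}(\bar s)\,\phi(\bar s,\pi_{f'}(\bar s,h-1))\Big\rangle =: \langle \mu(s), w_h(f')\rangle,
\]
where $w_h(f')\in\RR^M$ depends only on $f'$; thus every reachable visitation distribution lies in the span of the $M$ left-factor coordinates $\mu(\cdot)$. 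Plugging this into the decomposition and exchanging the order of summation yields
\[
\berr{f,\pi_{f'},h} = \sum_s \langle \mu(s),w_h(f')\rangle\, g_f(s) = \Big\langle w_h(f'),\ \textstyle\sum_s \mu(s)\, g_f(s)\Big\rangle,
\]
so setting $\pvec_h(f') = w_h(f')$ and $\bvec_h(f) = \sum_s \mu(s)\,g_f(s)$ gives the desired $M$-dimensional factorization. Level $h=1$ I would handle separately: there the visitation distribution equals the fixed initial distribution $\Gamma_1$ and is independent of $f'$, so $\berr{f,\pi_{f'},1}$ is a scalar depending only on $f$, realizable by a rank-one (hence $\le M$) factorization.

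The conceptual crux—and the step I expect to require the most care—is recognizing which object carries the rank. A naive factorization indexing coordinates by state would give rank $|\Scal|$ and no saving, since the single-state errors $g_f$ alone span a $|\Scal|$-dimensional space. The reduction to $M$ comes entirely from the fact that the transition factorization confines the \emph{roll-in visitation distributions} to the subspace spanned by $\mu$, and the correct move is to absorb the single-state errors against this $\mu$ factor into $\bvec_h(f)$ while pushing the policy-dependent $\phi$-weights into $\pvec_h(f')$. Beyond this the only remaining technical items are checking that $d_h^{\pi_{f'}}$ genuinely depends on $f'$ alone and, should one wish to sharpen the informal statement to track $\normbound$, bounding $\|\pvec_h(f')\|_2\,\|\bvec_h(f)\|_2$ using the boundedness of $f$ and the rewards together with the magnitudes of the factorization vectors $\phi,\mu$.
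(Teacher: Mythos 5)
Your proof is correct, and it arrives at the same bilinear structure as the paper via a somewhat different and in one respect more general route. The paper (Appendix~\ref{app:pomdp_like}) does not argue directly on the MDP: its formal version assumes a factorization $\Gamma = \Gamma^{(1)}\Gamma^{(2)}$ into \emph{row-stochastic} factors, interprets the inner index as a latent variable, and embeds the low-rank MDP into a unified POMDP-like model in which the observed state plays the role of the observation and the $M$-valued factor plays the role of the hidden state; the identity $\berr{f,\pi_{f'},h} = \sum_{z} \mu_{f'}(z)\,\EE_{\mu_{f|z}}\big[f(x_h,a_h)-r_h-f(x_{h+1},a_{h+1})\big]$ then follows from a conditional-independence argument, with $\pvec_h(f')$ a genuine probability vector over the latent factor and $\bvec_h(f)$ the conditional Bellman errors, and this probabilistic reading is what yields the clean norm bound $\normbound = 2\sqrt{M}$ (the same proof simultaneously covers Proposition~\ref{prop:reactive_pomdp}). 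You instead decompose over the state-visitation distribution $d_h^{\pi_{f'}}$ and use an arbitrary, possibly signed, rank-$M$ factorization to show $d_h^{\pi_{f'}}$ lies in the span of $\mu(\cdot)$; when the factors happen to be stochastic, your $w_h(f')$ is exactly the paper's latent-factor distribution and your $\sum_s \mu(s)\,g_f(s)$ exactly its vector of conditional errors, so the two factorizations coincide coordinatewise. What your route buys: it matches the informal hypothesis ``rank at most $M$'' literally, whereas the paper's formal Proposition~\ref{prop:mdp_low_rank_dynamics_formal} assumes a stochastic factorization, a genuinely stronger condition since nonnegative rank can exceed rank; your explicit rank-one treatment of $h=1$ also makes precise a point the paper leaves implicit. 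What it gives up: without stochasticity you obtain only a finite, factorization-dependent $\normbound$ rather than $2\sqrt{M}$, which suffices for the informal statement but matters for the sample complexity through its $\log \normbound$ dependence.
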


The next example considers POMDPs with large observations spaces and
reactive value functions, where the \brank is at most the number of
hidden states.

\begin{proposition}[\brank bounded by hidden states in reactive POMDPs]
\label{prop:reactive_pomdp}
Consider the POMDP setting of Example~\ref{exm:pomdp_window} with
$|\Scal| < \infty$ and a sliding window of size 1 along with the
induced \modelshort. Given any $\Fcal \subset \Xcal \times \Acal \to
[0, 1]$, this model admits a \bfac with $M = |\Scal|$ and
$\normbound=2\sqrt{M}$.
\end{proposition}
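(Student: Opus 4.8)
The plan is to exploit the Markov structure of the hidden state together with the fact that the context (and hence the greedy policy $\pi_f$) depends only on the current observation. First I would observe that in the Bellman error $\berr{f,\pi_{f'},h}$ the only role of the roll-in policy $\pi_{f'}$ is to determine how the agent arrives at level $h$, which in turn is fully summarized by the distribution it induces over the hidden state $s_h$. Accordingly, for each $s\in\Scal$ I would define the coordinate $\pvec_h(f')[s] = \Pr[s_h = s \mid a_{1:h-1}\sim\pi_{f'}]$, so that $\pvec_h(f')\in\Delta(\Scal)$, and define $\bvec_h(f)[s]$ to be the Bellman error conditioned on starting level $h$ in hidden state $s$, namely $\bvec_h(f)[s] = \EE[f(x_h,a_h) - r_h - f(x_{h+1},a_{h+1}) \mid s_h = s,\, a_{h:h+1}\sim\pi_f]$. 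These are $M = |\Scal|$-dimensional vectors, with $\pvec_h$ depending only on the roll-in $f'$ and $\bvec_h$ depending only on $f$, as required by Definition~\ref{def:brank}.

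The central step is then to establish the identity $\berr{f,\pi_{f'},h} = \langle \pvec_h(f'), \bvec_h(f)\rangle$ by conditioning on $s_h$ and applying the law of total expectation. The key point, and where I expect the argument to require the most care, is to verify that $\bvec_h(f)[s]$ is genuinely independent of the roll-in history: conditioned on $s_h = s$, the observation $o_h\sim D_s$, the greedy action $a_h = \pi_f(o_h)$, the reward $r_h$, the next hidden state $s_{h+1}$, the observation $o_{h+1}$, and the action $a_{h+1} = \pi_f(o_{h+1})$ are all generated by the POMDP's emission, reward, and transition kernels together with $\pi_f$, with no further dependence on how $s_h$ was reached. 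This conditional independence is exactly the Markov property of the hidden state, and it is what makes the summand $\bvec_h(f)[s]$ a function of $(f,s)$ alone; reactivity of $\pi_f$ (it reads only $o_h$, resp.\ $o_{h+1}$) is what keeps the actions at levels $h$ and $h+1$ measurable with respect to the relevant observations. Summing over $s_h$ then yields the claimed inner product.

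Finally I would bound the norm parameter. Since $\pvec_h(f')$ is a probability vector, $\|\pvec_h(f')\|_2 \le \|\pvec_h(f')\|_1 = 1$. For $\bvec_h(f)$, Assumption~\ref{ass:bounded} gives $r_h\in[0,1]$ and $f\in[0,1]$, so each entry satisfies $|\bvec_h(f)[s]| = |\EE[f(x_h,a_h) - r_h - f(x_{h+1},a_{h+1})\mid\cdots]| \le 2$, whence $\|\bvec_h(f)\|_2 \le 2\sqrt{M}$. Multiplying the two bounds gives $\|\pvec_h(f')\|_2\cdot\|\bvec_h(f)\|_2 \le 2\sqrt{M} = \normbound$, completing the verification of Definition~\ref{def:brank} with $M = |\Scal|$.
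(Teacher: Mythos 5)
Your proposal is correct and matches the paper's proof essentially verbatim: the paper also sets $\pvec_h(f')$ to be the hidden-state distribution $\mu_{f'}$ induced by $\pi_{f'}$ at level $h$ and $\bvec_h(f)[s]$ to be the conditional Bellman error $\EE_{\mu_{f|s}}[f(x_h,a_h)-r_h-f(x_{h+1},a_{h+1})]$, justifying the factorization by exactly the conditional-independence observation you highlight (conditioned on $s_h$, the law of $(o_h,a_h,r_h,o_{h+1},a_{h+1})$ has no dependence on $f'$), and obtains $\normbound=2\sqrt{M}$ from $\|\pvec_h(f')\|_1=1$ and $\|\bvec_h(f)\|_\infty\le 2$. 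The only cosmetic difference is that the paper carries out this argument in a slightly more general POMDP-like model (where transitions and rewards may also depend on the observation) so as to cover Proposition~\ref{prop:mdp_low_rank_dynamics} simultaneously, but the argument is the same.
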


Proposition \ref{prop:mdp_low_rank_dynamics} and
\ref{prop:reactive_pomdp} can be proved under a unified model that
generalizes POMDPs by allowing the transition function and the reward
function to depend on the observation (See
Figure~\ref{fig:reactive_pomdp} (a) -- (c) for graphical
representations of these models). This unified model captures the
experimental settings considered in state-of-the-art empirical RL work
(Figure~\ref{fig:reactive_pomdp} (d)), where agents act in a
grid-world ($|\Scal|$ is small) and receives complex and rich
observations such as raw pixel images ($|\Ocal|$ is large). The model also subsumes and generalizes the setting of \citet{krishnamurthy2016contextual} which requires deterministic transitions in the underlying MDP. Our new algorithm eliminates the need for determinism, and still guarantee sample-efficient learning.

\begin{figure}
\centering
\begin{subfigure}[b]{0.45\textwidth}
\centering
\begin{tikzpicture}
\node[draw=black,fill=lightgray,circle,name=s1,minimum width=20pt] at (0,0) {};
\node[draw=black,fill=lightgray,circle,name=s2,minimum width=20pt] at (4,0) {};
\node[draw=black,circle,name=o1,minimum width=20pt] at (0,-1.5) {$s$};
\node[draw=black,circle,name=r1,minimum width=20pt] at (1.25,-1.75) {$r$};
\node[draw=black,circle,name=o2,minimum width=20pt] at (4,-1.5) {$s'$};
\node[draw=black,rectangle,name=a1,minimum height=20pt,minimum width=20pt,rotate=45] at (0.5, -3) {};
\node at (0.5,-3) {$a$};
\draw[black, ->] (s1) to (o1);
\draw[black, ->] (s2) to (o2);
\draw[black, ->] (o1) to (r1);
\draw[dashed, black, ->] (o1) to (a1);
\draw[black, ->] (a1) to (r1);
\draw[black, ->] (o1) to (s2);
\draw[black, ->] (a1) to (s2);
\draw[black] (-.5 ,-0.75) node{$\ldots$};
\draw[black] (4.5 ,-0.75) node{$\ldots$};
\end{tikzpicture}
\caption{MDP with low-rank transition dynamics. Gray nodes represent
  the hidden factors in the low-rank factorization.} \label{fig:mdp_low_rank}
\end{subfigure}
~~
\begin{subfigure}[b]{0.35\textwidth}
\centering
\begin{tikzpicture}
\node[draw=black,fill=lightgray,circle,name=s1,minimum width=20pt] at (0,0) {$s$};
\node[draw=black,fill=lightgray,circle,name=s2,minimum width=20pt] at (4,0) {$s'$};
\node[draw=black,circle,name=o1,minimum width=20pt] at (0,-1.5) {$o$};
\node[draw=black,circle,name=r1,minimum width=20pt] at (1.25,-1.75) {$r$};
\node[draw=black,circle,name=o2,minimum width=20pt] at (4,-1.5) {$o'$};
\node[draw=black,rectangle,name=a1,minimum height=20pt,minimum width=20pt,rotate=45] at (0.5, -3) {};
\node at (0.5,-3) {$a$};
\draw[black, ->] (s1) to (o1);
\draw[black, ->] (s2) to (o2);
\draw[black, ->] (s1) to (r1);
\draw[dashed, black, ->] (o1) to (a1);
\draw[black, ->] (a1) to (r1);
\draw[black, ->] (s1) to (s2);
\draw[black, ->] (a1) to (s2);
\draw[black] (-.5 ,-0.75) node{$\ldots$};
\draw[black] (4.5 ,-0.75) node{$\ldots$};
\end{tikzpicture}
\caption{POMDP with reactive policy. Gray nodes represent hidden states.\\~
} \label{fig:pomdp}
\end{subfigure} \\
\vspace*{1em}
\begin{subfigure}[b]{0.4\textwidth}
\centering
\begin{tikzpicture}
\node[draw=black,fill=lightgray,circle,name=s1,minimum width=20pt] at (0,0) {$s$};
\node[draw=black,fill=lightgray,circle,name=s2,minimum width=20pt] at (4,0) {$s'$};
\node[draw=black,circle,name=o1,minimum width=20pt] at (0,-1.5) {$o$};
\node[draw=black,circle,name=r1,minimum width=20pt] at (1.25,-1.75) {$r$};
\node[draw=black,circle,name=o2,minimum width=20pt] at (4,-1.5) {$o'$};
\node[draw=black,rectangle,name=a1,minimum height=20pt,minimum width=20pt,rotate=45] at (0.5, -3) {};
\node at (0.5,-3) {$a$};
\draw[black, ->] (s1) to (o1);
\draw[black, ->] (s2) to (o2);
\draw[black, ->] (s1) to (r1);
\draw[black, ->] (o1) to (r1);
\draw[dashed, black, ->] (o1) to (a1);
\draw[black, ->] (a1) to (r1);
\draw[black, ->] (s1) to (s2);
\draw[black, ->] (o1) to (s2);
\draw[black, ->] (a1) to (s2);
\draw[black] (-.5 ,-0.75) node{$\ldots$};
\draw[black] (4.5 ,-0.75) node{$\ldots$};
\end{tikzpicture}
\caption{A unified model that subsumes (a) and (b) and yields low \brank.} \label{fig:ctx_pomdp}
\end{subfigure}
\qquad
\begin{subfigure}[b]{0.4\textwidth}
\includegraphics[width=\textwidth]{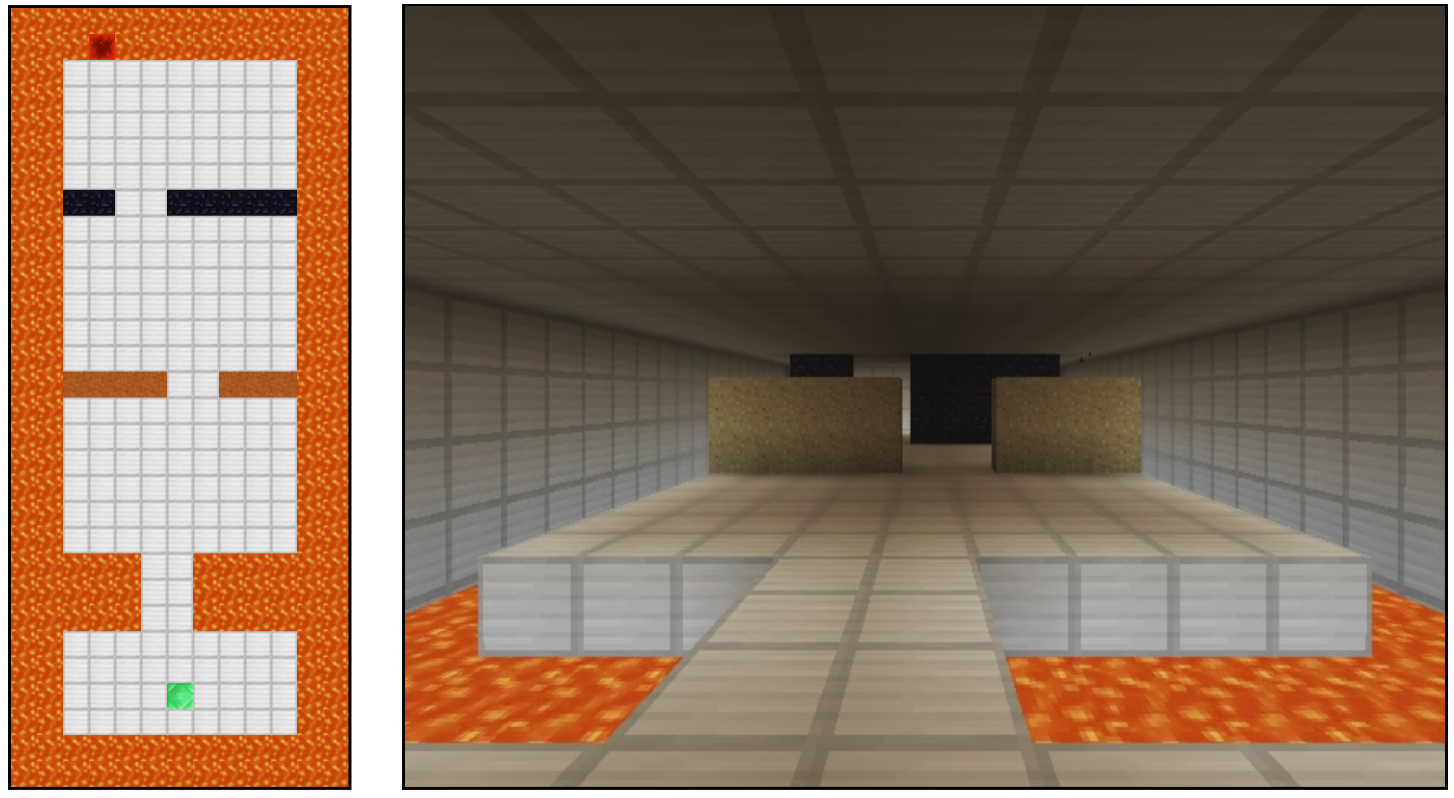}
\caption{A popular RL experiment setting \citep{johnson2016malmo}.} \label{fig:malmo}
\end{subfigure}
\caption{Example RL scenarios that yield low \brank. No
  sample-efficient exploration strategies are known for these problems
  previously, so the algorithm and analysis provide the first
  PAC-Learning guarantees. \textbf{(a)} An MDP that has low-rank
  transition structure (see
  Proposition~\ref{prop:mdp_low_rank_dynamics}). The unlabelled nodes
  correspond to the hidden factors in the low-rank factorization, and
  can take $M$ different values. \textbf{(b)} A POMDP with reactive
  policies (see Proposition~\ref{prop:reactive_pomdp}). The dashed
  arrow from $o$ to $a$ implies that we only consider policies that
  map $\Ocal$ to $\Acal$ (the dependence on time step is made
  implicit). \textbf{(c)} A model that unifies (a) and (b), which
  still yields low \brank (proof is in
  Appendix~\ref{app:low_brank_proof}). To embed (a) in (c), the
  observed states $s$ and $s'$ in (a) become the observations $o$ and
  $o'$ in (c). \textbf{(d)} A popular experiment setting in
  state-of-the-art empirical work. The environment is a grid-world
  (hence $|\Scal|$ is small and so is the \brank), and the agent's
  sensory inputs are raw pixel images (hence $|\Ocal|$ is very
  large).}
\label{fig:reactive_pomdp}
\end{figure}

Next, we consider Predictive State Representations (PSRs), which are
models of partially observable systems with parameters
grounded in observable quantities
\citep{littman2001predictive}. Similar to the case of
POMDPs, we can bound the \brank in terms of the rank of the PSR\footnote{Every POMDP has an equivalent PSR whose rank is bounded by the number of hidden states \citep{singh2004predictive}.} when the
candidate value functions are reactive. 

\begin{proposition}[\brank in PSRs, informally] \label{prop:psr}
Consider a partially observable system with observation space $\Ocal$, and the induced CDP $(\Xcal, \Acal, H, P)$  with $x_h = (o_h,h)$. If the linear dimension of the system (i.e., rank of its PSR model) is at most $L$, then given any $\Fcal: \Xcal\times\Acal \to [0,1]$, the \brank is bounded by $LK$.
\end{proposition}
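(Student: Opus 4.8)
The plan is to exploit the defining linear-algebraic property of a PSR. If the system has linear dimension $L$, there is a finite set of core tests $Q$ with $|Q| = L$ such that for every history $\tau$ the prediction vector $\psi(\tau) := \big(p(q\mid\tau)\big)_{q\in Q}\in\RR^L$ is a sufficient statistic: the probability of any future test is a linear functional of $\psi(\tau)$ with $\tau$-independent coefficients, and hence (by summing these linear predictions over observation paths) so is the conditional expectation of any bounded functional of a fixed-length future under a continuation rule that may react to the intervening observations. Writing $V_f(o)$ for the value the reactive $f$ assigns to observation $o$ at the relevant level, i.e. $V_f(o)=f((o,h),\pi_f((o,h)))$, the goal is to produce maps $\bvec_h(f),\pvec_h(f')\in\RR^{LK}$ (with $K=|\Acal|$) satisfying $\berr{f,\pi_{f'},h}=\langle\pvec_h(f'),\bvec_h(f)\rangle$.

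The crux is to condition in the right place so that the dependence on $f$ (through $V_f$ and $\pi_f$) cleanly separates from the dependence on the roll-in $f'$. First I would condition on the history $\tau_{h-1}$ produced by rolling in with $\pi_{f'}$ through step $h-1$ (up to and including the observation $o_{h-1}$), summarized by $b:=\psi(\tau_{h-1})$. Given $\tau_{h-1}$, the roll-in action is $a_{h-1}=\pi_{f'}(o_{h-1})$, and the residual $V_f(o_h)-r_h-V_f(o_{h+1})$ with $a_h=\pi_f(o_h)$ is a bounded functional of the outcome of the reactive two-step test ``do $a_{h-1}$, see $o_h$; do $\pi_f(o_h)$, see $(r_h,o_{h+1})$''. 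By the PSR property its conditional expectation is linear in $b$:
\[
\EE\big[\,V_f(o_h)-r_h-V_f(o_{h+1}) \,\big|\, \tau_{h-1},\ a_h=\pi_f(o_h)\,\big] = \big\langle c_{f,a_{h-1}},\, b\big\rangle,
\]
where $c_{f,a}:=\sum_{o,r,o'}\big(V_f(o)-r-V_f(o')\big)\,m_{a,o,\pi_f(o),r,o'}\in\RR^L$ and $m_{a,o,a',r,o'}$ is the history-independent PSR weight vector of that test. Crucially, $c_{f,a}$ depends only on $f$ and on the action value $a$, not on $\tau_{h-1}$ nor on $f'$.

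Taking the outer expectation over $\tau_{h-1}\sim\pi_{f'}$ and splitting on the $K$ possible values of the roll-in action then decouples the two functions:
\[
\berr{f,\pi_{f'},h} = \sum_{a\in\Acal}\Big\langle c_{f,a},\ \EE_{\tau_{h-1}\sim\pi_{f'}}\big[\mathbf{1}\{\pi_{f'}(o_{h-1})=a\}\,\psi(\tau_{h-1})\big]\Big\rangle .
\]
Setting $\beta_{f',a}:=\EE_{\tau_{h-1}\sim\pi_{f'}}\big[\mathbf{1}\{\pi_{f'}(o_{h-1})=a\}\,\psi(\tau_{h-1})\big]\in\RR^L$, which depends only on $f'$, and stacking $\bvec_h(f)=(c_{f,a})_{a\in\Acal}$ and $\pvec_h(f')=(\beta_{f',a})_{a\in\Acal}$ in $\RR^{LK}$ yields $\berr{f,\pi_{f'},h}=\langle\pvec_h(f'),\bvec_h(f)\rangle$, so the \brank is at most $LK$. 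The degenerate level $h=1$ has an empty roll-in and reduces to the single fixed vector $\psi(\treeroot)$, giving rank at most $L$.

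I expect the main obstacle to be precisely this decoupling. A naive conditioning on the state \emph{after} observing $o_h$ fails, because $V_f(o_h)$ is then entangled with the prediction vector and with the reactive action $\pi_f(o_h)$; conditioning one step earlier and absorbing the reactive action into the $f$-dependent test weights $c_{f,a}$ is what makes the factorization go through, and the factor $K$ is the unavoidable price of splitting on the roll-in action $\pi_{f'}(o_{h-1})$. The remaining routine steps are (i) confirming that rewards can be treated as linearly predictable within the PSR, e.g. by folding $r_h$ into the emitted symbol as above, and (ii) bounding $\normbound=\max_{f,f',h}\|\pvec_h(f')\|_2\,\|\bvec_h(f)\|_2$, which is finite and polynomial in the PSR parameters using $V_f,r_h\in[0,1]$, boundedness of $\|\psi(\cdot)\|$, and boundedness of the weight vectors $m$ guaranteed by a well-conditioned PSR.
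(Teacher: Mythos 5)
Your proposal is correct and takes essentially the same route as the paper's proof (Proposition~\ref{prop:psr_formal} in Appendix~\ref{app:psr}): the paper likewise conditions on the history through $o_{h-1}$, splits into $K$ blocks on the last roll-in action (via a padding operator), and factors the two-step moment matrix as $P_{2|h-1} = P_1 P_2$, so that $\pvec_h(f') = P_1^\trans \mu_{f',h}$ is precisely your stacked $\beta_{f',a}$ and $\bvec_h(f) = P_2 F_{f,h}$ is precisely your stacked $c_{f,a}$ of $f$-dependent reactive two-step test weights (with rewards handled, as you suggest, by tying $r_h$ to the emitted symbol $o_{h+1}$). The one item you defer as routine, bounding $\normbound$, is where the paper's formal version does invest extra work (assuming non-negative PSR parameters to obtain $\normbound \le 2K^2L^3\sqrt{L}/\sigma_{\min}^3$), but finiteness of $\normbound$ holds as you argue and suffices for the informal rank-$LK$ claim.
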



The last example considers a class of linear control problems well studied in control theory, called Linear Quadratic Regulators (LQRs). We show that the \brank in LQRs is bounded by the dimension of the state space. 
Exploration in this class of problems has been previously considered by \citet{osband2014model}. 
Note that the algorithm to be introduced in the next section does not directly apply to LQRs due to the continuous action space, and adaptations that exploit the structure of the action space may be needed, which we leave for future work. 

%

\begin{proposition}[\brank in LQRs, informally]\label{prop:lqr}
An LQR can be viewed as an MDP with continuous state space $\RR^d$  and action space $\RR^K$, where the dynamics are described by some linear equations. Given any function class $\Fcal$ consisting of non-stationary quadratic functions of the state, the \brank is bounded by $d^2+1$.
\end{proposition}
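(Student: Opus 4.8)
The plan is to exploit the fact that, for an LQR, the average Bellman error collapses — after integrating out everything downstream of the roll-in — into a \emph{quadratic} functional of the state $s_h$ reached by the roll-in policy, and that this quadratic form carries no linear term. First I would fix notation: write the dynamics as $s_{h+1} = A s_h + B a_h + w_h$ with zero-mean noise $w_h$ of covariance $\Sigma_w$, centered quadratic rewards $\EE[r_h \mid s,a] = -(s^\top Q s + a^\top R a)$, and write each $f\in\Fcal$ at level $h$ as a centered quadratic $f(s,a) = z^\top M_f z + c_f$ with $z = \begin{pmatrix} s \\ a \end{pmatrix}$. The key starting observation is that the greedy policy $\pi_f(s) = \argmax_a f(s,a)$ is a \emph{linear} map $a = K_f s$ (well-defined when the $a$-block of $M_f$ is negative definite, which we assume of $\Fcal$), with \emph{no} affine offset precisely because $f$ has no linear term.

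Second, I would show that the per-state Bellman residual
\[
g_f(s) := f(s, \pi_f(s)) - \EE[r_h \mid s, \pi_f(s)] - \EE\big[f(s_{h+1}, \pi_f(s_{h+1})) \,\big|\, s, \pi_f(s)\big]
\]
is a quadratic-plus-constant function of $s$ with no linear term. Each piece is handled by composition: $f(s,K_f s) = s^\top \big[ \begin{pmatrix} I \\ K_f \end{pmatrix}^\top M_f \begin{pmatrix} I \\ K_f \end{pmatrix} \big] s + c_f$ is quadratic; the expected reward becomes $-s^\top(Q + K_f^\top R K_f)s$ once $a = K_f s$ is substituted; and $\EE[f(s_{h+1}, \pi_f(s_{h+1})) \mid s]$, with $s_{h+1} = (A + B K_f)s + w_h$, splits into a quadratic in the mean $(A + BK_f)s$ plus a constant trace term involving $\Sigma_w$, the cross term vanishing exactly because $\EE[w_h]=0$. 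Hence $g_f(s) = s^\top W_f s + c'_f$ for a (symmetric) matrix $W_f$ and scalar $c'_f$ determined by $f$.

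Third, by the definition of average Bellman error, $\berr{f,\pi_{f'},h} = \EE_{s_h \sim \mu_h^{f'}}[g_f(s_h)] = \langle W_f, \EE[s_h s_h^\top]\rangle + c'_f$, where $\mu_h^{f'}$ is the law of $s_h$ induced by rolling in with $\pi_{f'}$ for $h-1$ steps. This is manifestly bilinear: setting $\bvec_h(f) = (\mathrm{vec}(W_f),\, c'_f) \in \RR^{d^2+1}$ and $\pvec_h(f') = (\mathrm{vec}(\EE[s_h s_h^\top]),\, 1) \in \RR^{d^2+1}$ gives $\berr{f,\pi_{f'},h} = \langle \pvec_h(f'), \bvec_h(f)\rangle$, so the \brank is at most $d^2+1$. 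Using symmetry of $W_f$ and $\EE[s_h s_h^\top]$ one could further reduce to $d(d+1)/2 + 1$, but $d^2+1$ suffices.

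The main obstacle — and essentially the only step beyond bookkeeping — is the vanishing of the linear term, which is what distinguishes the claimed $d^2+1$ from the naive $(d+1)^2$ that a general inhomogeneous quadratic would produce. This hinges on the problem's ``centering'': purely quadratic value functions, a homogeneous linear greedy policy, centered quadratic rewards, and zero-mean noise; any affine offset in these would reintroduce a linear term and inflate the dimension. I would also need to address the norm parameter $\normbound$, which is why the statement is only informal: for LQR the states, and hence both $\EE[s_h s_h^\top]$ and $W_f$, are a priori unbounded, so a finite $\normbound$ requires an additional boundedness assumption on the reachable state distributions and the quadratic coefficients, under which the factorization above is exact.
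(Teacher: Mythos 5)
Your proposal is correct and takes essentially the same route as the paper's proof (Proposition~\ref{prop:lqr_detailed}): the paper likewise reduces the average Bellman error to $\tr\big[\big(\Lambda - Q - P_\pi^\trans P_\pi - (A+BP_\pi)^\trans \Lambda'(A+BP_\pi)\big)\,\EE_{x\sim(\pi',h)}\, xx^\trans\big] + O - O' - \tr(\Lambda\Sigma)$ and pairs $\big(\mathrm{vec}(\cdot),\,\text{offset}\big)$ against $\big(\mathrm{vec}(\EE\, xx^\trans),\,1\big)$ in $\RR^{d^2+1}$, which is exactly your $(W_f, c_f')$ factorization, including the vanishing cross term from zero-mean noise and the absence of linear terms. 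The only cosmetic differences are that the paper works with decoupled $(\pi,g)$ pairs as in Section~\ref{sec:policy_vval} (linear policies and quadratic \vval functions), so it never needs your negative-definiteness assumption to make the greedy policy linear, and it carries out the norm calculation you only flag, deriving a bound on $\EE[xx^\trans]$ by unrolling the dynamics under spectral-norm assumptions on the parameters, which yields a finite $\normbound$ that is exponential in $H$.
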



\section{Algorithm and Main Results}
\label{sec:algorithm}
In this section we present the algorithm for learning \modelshorts
that have a \bfac with a small \brank and the main
sample complexity guarantee.  To aid presentation and help convey the
main ideas, we make three simplifying assumptions:
\begin{enumerate}
\item We assume the \brank parameter $M$ is known to the agent.\footnote{We also assume
    knowledge of the corresponding norm parameter, but this is
    relatively minor.}
\item We assume the function class $\Fcal$ is finite with $|\Fcal| =
  N$.
\item We assume exact validity (Definition~\ref{def:valid}) and exact
  \bfac (Definition~\ref{def:brank}).
\end{enumerate}
All three assumptions can be relaxed, and we sketch these relaxations in Section~\ref{sec:extensions}. 

We are interested in designing an algorithm for \emph{PAC Learning \modelshorts}.
We say that an algorithm PAC learns if given $\Fcal$, two parameters $\epsilon, \delta \in (0,1)$, and access to a \modelshort, the algorithm outputs a policy $\hat{\pi}$ with $V^{\hat{\pi}} \ge \vstar - \epsilon$ with probability at least $1-\delta$.
The sample complexity is the number of episodes needed to achieve such a guarantee, and is typically expressed in terms of $\epsilon$, $\delta$ and other relevant parameters.
The goal is to design an algorithm with sample complexity that is $\textrm{Poly}(M,K,H,1/\epsilon,\log(N),\log(1/\delta))$ where $M$ is the \brank, $K$ is the number of actions, and $H$ is the time horizon.
Importantly, the bound has no dependence on the number of unique contexts $|\Xcal|$. 

\subsection{Algorithm}
\label{sec:alg_alg}
Pseudocode for the algorithm, which we call \cdplearn (Optimism Led
Iterative Value-function Elimination), is displayed in
Algorithm~\ref{alg:simple}.  Theorem~\ref{thm:cdp_complexity}
describes how to set the parameters $\nest, \neval, \ntrain$, and
$\phi$.

At a high level, the algorithm aims to eliminate functions $f \in \Fcal$ that fail to satisfy the validity condition in Definition~\ref{def:valid}. 
This is done by Lines~\ref{lin:estm_berr} and~\ref{lin:learning_step} inside the loop of the algorithm. 
Observe that, since the actions $a_{h_t}$ are chosen uniformly at random, Eq.~\eqref{eq:estm_berr} produces an unbiased estimate of $\berr{f,\pit{t},h_t}$, the average Bellman error for function $f$ on roll-in policy $\pit{t}$ at time $h_t$.
Thus, Eq.~\eqref{eq:learning_step} eliminates functions that have high average Bellman error on this distribution, which means they fail to satisfy the validity criteria. 
\nan{Mention that the estimate uses importance sampling?}

The other major component of the algorithm involves choosing the roll-in policy and level on which to do the learning step. 
At iteration $t$, we choose the roll-in policy $\pit{t}$ \emph{optimistically}, by choosing $\ft{t}$ that predicts the highest value at the starting context distribution, and letting $\pit{t} = \pi_{\ft{t}}$.
To pick the level, we compute $\ft{t}$'s average Bellman error on its own roll-in distribution (Eq.~\eqref{eq:ft_bellman}), and set $h_t$ to be any level for which this average Bellman error is high (See Line~\ref{lin:detected}). 
As we will show, these choices ensure that substantial learning happens on each iteration, guaranteeing that the algorithm uses polynomially many episodes.

The last component is the termination criterion. 
The algorithm terminates if $\ft{t}$ has small average Bellman error on its own roll-in distribution at all levels. 
This criteria guarantees that $\pit{t}$ is near optimal. 

Computationally, the algorithm requires enumeration of the
value-function class, which we expect to be extremely large or
infinite in practice. 
A computationally efficient implementation is essential for a practical algorithm, which is left to future work. We focus on the sample efficiency of the algorithm in this paper. 

\begin{algorithm}[t]
\begin{algorithmic}[1]
\State \textbf{Collect} $\nest$ trajectories 
with actions taken in an arbitrary manner; 
save initial contexts 
$\{x_1^{(i)}\}_{i=1}^{\nest}$.
\label{lin:init_eval}
\State \textbf{Estimate} the predicted value for each $f \in \Fcal$: $\empVf{f} = \frac{1}{\nest}\sum_{i=1}^{\nest} f(x_1^{(i)}, \pi_f(x_1^{(i)}))$. \label{lin:vf}
\State $\Fcal_0 \gets \Fcal$.
\For{$t=1,2,\ldots$} \label{lin:forloop}
\State \textbf{Choose policy} $\ft{t} = \argmax_{f \in \Fcal_{t-1}}
\empVf{f}$, $\pit{t} = \pi_{\ft{t}}$. \label{lin:opt}
\State \textbf{Collect} $\neval$ trajectories $\{(x^{(i)}_1,a^{(i)}_1,r^i_1,\ldots, x^{(i)}_H,a^{(i)}_H,r^{(i)}_H)\}_{i=1}^{\neval}$ by following $\pit{t}$ (i.e. $a_h^{(i)} = \pit{t}(x_h^{(i)})$ for all $h,i$). \label{lin:eval} 
\State \textbf{Estimate} $\forall h\in[H]$,  \label{lin:detect}
\begin{align}
\dtctberr{\ft{t}, \pi_t, h} = \frac{1}{\neval}\sum_{i=1}^{\neval} \left[  \ft{t}(x^{(i)}_h, a^{(i)}_h) - r^{(i)}_h - \ft{t}(x^{(i)}_{h+1}, a^{(i)}_{h+1}) \right].
\label{eq:ft_bellman}
\end{align}
\If{$\sum_{h=1}^H \dtctberr{\ft{t}, \pi_t,h} \le 5\epsilon/8$}
\State Terminate and ouptut $\pit{t}$. 
\EndIf
\State Pick any $h_t \in [H]$ for which $\dtctberr{\ft{t}, \pi_t,h_t} \ge 5\epsilon/8H$ (One is guaranteed to exist). \label{lin:detected}
\State Collect trajectories $\{(x_1^{(i)}, a_1^{(i)}, r_1^{(i)}, \ldots, x_H^{(i)}, a_H^{(i)}, r_H^{(i)})\}_{i=1}^{\ntrain}$ where $a_h^{(i)} = \pit{t}(x_h^{(i)})$ for all $h \ne h_t$ and $a_{h_t}^{(i)}$ is drawn uniformly at random. \nan{from Rob: why not stopping at $h_t$} \label{lin:explore}
\State \textbf{Estimate} \label{lin:estm_berr}
\begin{align}
\empberr{f,\pit{t}, h_t} &= \frac{1}{\ntrain}\sum_{i=1}^{\ntrain} \frac{\mathbf{1}[a_{h_t}^{(i)} = \pi_f(x_{h_t}^{(i)})]}{1/K} \Big(f(x_{h_t}^{(i)},a_{h_t}^{(i)}) - r_{h_t}^{(i)} - f(x_{h_{t}+1}^{(i)},\pi_f(x_{h_t+1}^{(i)}))\Big).
\label{eq:estm_berr}
\end{align}
\State \textbf{Learn} \label{lin:learning_step}
\begin{align}
\Fcal_{t} = \left\{f \in \Fcal_{t-1} : \left|\empberr{f,\pit{t},h_t} \right|\le \phi \right\}.
\label{eq:learning_step}
\end{align}
\EndFor
\end{algorithmic}
\caption{\cdplearn$(\Fcal,M,\normbound,\epsilon,\delta)$ -- \textbf{O}ptimism \textbf{L}ed \textbf{I}terative \textbf{V}alue-function \textbf{E}limination}
\label{alg:simple}
\end{algorithm}

\textbf{Intuition for \cdplearnbf.} To convey intuition, it is helpful
to ignore any sampling effects by replacing all empirical estimates
with population values, and set $\epsilon$ to $0$. The first important fact is that the algorithm
never eliminates a valid function, since the learning step in
Eq.~\eqref{eq:learning_step} only eliminates a function $f$  if we can
find a distribution on which it has a large average Bellman error. If
$f$ is valid, then $\berr{f,\pi,h} = 0$ for all $\pi,h$, so $f$ is
never eliminated.

The second fact is that if a function $f$ is valid, then its predicted
value is exactly the value achieved by the greedy policy $\pi_f$, that
is $V_f = \EE[f(x_1,\pi_f(x_1))] = V^{\pi_f}$.  This follows by
telescoping the recursion in the definition of average Bellman error.
Therefore, since $\ft{t}$ is chosen \emph{optimistically} as the
maximizer of the value prediction among the surviving functions, and
since we never eliminate valid functions, if \cdplearn terminates, it
must output a 
policy with value $\vstar$.  In the analysis, we
incorporate sampling effects to derive robust versions of these facts
so the algorithm always outputs a policy that is at most
$\epsilon$-suboptimal.

The more challenging component is ensuring that the algorithm
terminates in polynomially many iterations, which is critical for
obtaining a polynomial sample complexity bound.  This argument
crucially relies on the \bfac (recall
Definition~\ref{def:brank}), which enables us to embed the
distributions into $M$ dimensions and measure progress in this
low-dimensional space. 

For now, fix some $h$ and focus on the iterations when $h_t = h$. If we ignore sampling effects we can set $\phi=0$, and, by using the
\bfac to write $\berr{f,\pi_{\ft{t}},h}$ as an inner product,
we can think of 
the learning step in Line~\ref{lin:learning_step} as
introducing a homogeneous linear constraint on the set of $\bvec_h(f)$
vectors, that is, $\langle \pvec_{h}(\ft{t}), \bvec_{h}(f)\rangle = 0$. Now, if we execute the learning step at level $h$
again in a later iteration $t'$, 
we have $\langle
\pvec_h(\ft{t'}),\bvec_h(\ft{t'}) \rangle \ne 0$ from Line~\ref{lin:detected}. Importantly, this
means that $\pvec_h(\ft{t'})$ must be linearly independent from previous $\pvec_h(\ft{t})$ since $\langle \pvec_h(\ft{t}), \bvec_h(\ft{t'}) \rangle = 0$. In general, every time $h_t = h$, the number of linearly independent constraints increases by 1, and therefore the number of iterations where $h_t=h$ is at most the dimension of the space, which is $M$.
Thus the \brank leads to a bound on the number of iterations.


The above heuristic reasoning, despite relying on the brittle notion
of linear independence, can be made robust.  With sampling effects,
rather than homogeneous linear equalities, the learning step for level
$h$ introduces linear inequality constraints to the $\bvec_h(f)$
vectors. But if $f'$ is a surviving function that forces us to train
at level $h$, it means that $\langle \pvec_h(f'),\bvec_h(f')\rangle$
is very large, while $\langle \pvec_h(\cdot),\bvec_h(f')\rangle$ is
very small for all previous $\pvec_h(\cdot)$ vectors used in the
learning step.  Intuitively this means that the new $\pvec(f')$ vector
is quite different from all of the previous ones. In our proof, we use
a volumetric argument to show that this suffices to guarantee
substantial learning takes place.

The optimistic choice for $\ft{t}$ is critical for driving the agent's
exploration.  With this choice, if $\ft{t}$ is valid, then the
algorithm terminates correctly, and if $\ft{t}$ is not valid, then
substantial progress is made. Thus the agent does not get stuck
exploring with many valid but suboptimal functions, which could result
in exponential sample complexity.

\subsection{Sample Complexity}
\label{sec:alg_pac}
We now turn to the main result, which guarantees that \cdplearn PAC-learns \modellongs with polynomial sample complexity. 

\begin{theorem}
\label{thm:cdp_complexity}
For any $\epsilon,\delta \in (0,1)$, any \modellong and function class $\Fcal$ that admits a \bfac with parameters $M,\normbound$, run \cdplearn with the following parameters:
\begin{align*}
\phi &= \frac{\epsilon}{12H\sqrt{M}}, \qquad \nest = \frac{32}{\epsilon^2}\log(6N/\delta), \\
\neval &= \frac{288H^2}{\epsilon^2}\log\left(\frac{12H^2 M \log(6H\sqrt{M}\normbound/\epsilon)}{\delta}\right),\\
\ntrain &= \frac{4608H^2MK}{\epsilon^2}\log\left(\frac{12NHM\log(6H\sqrt{M}\normbound/\epsilon)}{\delta}\right).
\end{align*}
Then, with probability at least $1-\delta$, \cdplearn halts and returns a policy $\hat{\pi}$ that satisfies $V^{\hat{\pi}} \ge \vstar - \epsilon$ ($\vstar$ in Definition~\ref{def:valid}), and the number of episodes required is at most\footnote{We use $\otil(\cdot)$ notation to suppress poly-logarithmic dependence on everything except $N$ and $\delta$.}
\begin{align}
\otil\left(\frac{M^2H^3K}{\epsilon^2}\log(N\normbound/\delta)\right).
\end{align}
\end{theorem}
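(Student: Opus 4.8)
The plan is to split the argument into three parts: a uniform concentration step controlling every empirical quantity, a correctness step showing the returned policy is $\epsilon$-suboptimal whenever the loop halts, and an iteration-count step bounding the number of loop executions through the \bfac. Only the last part is genuinely subtle; the first two reduce to Hoeffding bounds plus two elementary identities.

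First I would fix the high-probability event. The value estimate $\empVf{f}$ (Line~\ref{lin:vf}) averages $\nest$ numbers in $[0,1]$, the detection estimate $\dtctberr{\ft{t},\pit{t},h}$ (Eq.~\eqref{eq:ft_bellman}) averages $\neval$ bounded terms, and the training estimate $\empberr{f,\pit{t},h_t}$ (Eq.~\eqref{eq:estm_berr}) averages $\ntrain$ importance-weighted terms whose range is inflated by a factor $K$ (this is the origin of the extra $K$ in $\ntrain$). The reweighting by $K$ against the uniform action makes Eq.~\eqref{eq:estm_berr} an \emph{unbiased} estimate of $\berr{f,\pit{t},h_t}$. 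A Hoeffding bound for each, unioned over the $N$ functions and over the iterations, shows on an event of probability $\ge 1-\delta$ that every $\empVf{f}$ is within $\epsilon/8$ of $\Vf{f}$, every $\dtctberr{\ft{t},\pit{t},h}$ within $\epsilon/(12H)$ of $\berr{\ft{t},\pi_{\ft{t}},h}$, and, crucially, every $\empberr{f,\pit{t},h_t}$ within $\phi$ of $\berr{f,\pit{t},h_t}$. There is a mild circularity, since the union over iterations needs an a priori iteration cap; I would resolve it by proving the iteration bound first (it holds deterministically once concentration holds at each executed round) and feeding the resulting $\otil(MH)$ cap into the $\log$ terms of $\neval$ and $\ntrain$.

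For correctness I would use the telescoping identity $\Vf{f}-\Vpi{\pi_f}=\sum_{h=1}^H\berr{f,\pi_f,h}$, so a valid $f$ has $\Vf{f}=\Vpi{\pi_f}$. Since $f^\star$ (Definition~\ref{def:vstar}) is valid, $\berr{f^\star,\pi_{f'},h}=0$ for all $f',h$, so the learning step in Eq.~\eqref{eq:learning_step} never eliminates it and $f^\star\in\Fcal_{t-1}$ at every iteration. Optimism (Line~\ref{lin:opt}) then gives $\empVf{\ft{t}}\ge\empVf{f^\star}\ge\Vf{f^\star}-\epsilon/8=\vstar-\epsilon/8$, hence $\Vf{\ft{t}}\ge\vstar-\epsilon/4$. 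When the loop halts, $\sum_h\dtctberr{\ft{t},\pit{t},h}\le 5\epsilon/8$, so $\sum_h\berr{\ft{t},\pi_{\ft{t}},h}\le 5\epsilon/8+\epsilon/12$, and telescoping yields $\Vpi{\pit{t}}=\Vf{\ft{t}}-\sum_h\berr{\ft{t},\pi_{\ft{t}},h}\ge\vstar-\epsilon/4-5\epsilon/8-\epsilon/12=\vstar-23\epsilon/24\ge\vstar-\epsilon$. The specific constants in the stated parameters are exactly what makes these fractions sum below $\epsilon$.

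The hard part is bounding the iterations. Fix a level $h$ and consider only iterations with $h_t=h$, indexed $t_1<t_2<\cdots$; write $\nu_j:=\pvec_h(\ft{t_j})$ and $\xi_j:=\bvec_h(\ft{t_j})$, so $\berr{\ft{t_k},\pi_{\ft{t_j}},h}=\langle\nu_j,\xi_k\rangle$ by Eq.~\eqref{eq:bellman_decomposition}. On the good event two facts hold: the trigger in Line~\ref{lin:detected} forces $\langle\nu_j,\xi_j\rangle\ge 5\epsilon/(8H)-\epsilon/(12H)\ge\epsilon/(2H)$, whereas survival of $\ft{t_j}$ through every earlier learning step at level $h$ forces $|\langle\nu_i,\xi_j\rangle|\le 2\phi$ for all $i<j$. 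Thus each test vector $\xi_j$ is strongly aligned with its own $\nu_j$ but nearly orthogonal to all earlier $\nu_i$, a genuine ``new direction.'' I would make this precise with a volumetric (log-determinant) argument: set $\Sigma_j=\lambda I+\sum_{i\le j}\nu_i\nu_i^\top$ and note $\det\Sigma_j/\det\Sigma_{j-1}=1+\nu_j^\top\Sigma_{j-1}^{-1}\nu_j$, where by Cauchy--Schwarz in the $\Sigma_{j-1}$ metric $\nu_j^\top\Sigma_{j-1}^{-1}\nu_j\ge\langle\nu_j,\xi_j\rangle^2/(\xi_j^\top\Sigma_{j-1}\xi_j)$ and $\xi_j^\top\Sigma_{j-1}\xi_j\le\lambda\|\xi_j\|_2^2+(j-1)(2\phi)^2$ is small; the large/small ratio forces a constant-factor determinant increase each round, while $\|\pvec_h(\cdot)\|_2\,\|\bvec_h(\cdot)\|_2\le\normbound$ caps $\det\Sigma_n$ from above. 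The choice $\phi=\epsilon/(12H\sqrt M)$ is exactly calibrated so that the per-round gain stays bounded below, capping the count at $O(M\log(\cdots))$ rounds per level and $\otil(MH)$ iterations total. Multiplying by the per-iteration cost $\neval+\ntrain=\otil(H^2MK\epsilon^{-2}\log(N/\delta))$ (dominated by $\ntrain$, whose $M$ comes from needing accuracy $\phi\sim\epsilon/(H\sqrt M)$) and adding the one-time $\nest$ gives the claimed $\otil(M^2H^3K\epsilon^{-2}\log(N\normbound/\delta))$. The main obstacle is aligning the constants in this volumetric step so the gap $\epsilon/(2H)$ versus $2\phi$ yields $O(M)$ rather than $O(M/\phi)$ rounds per level, which is precisely why the scaling $\phi\propto 1/\sqrt M$ is essential.
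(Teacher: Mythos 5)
Your skeleton (three concentration events, optimism plus the telescoping identity $\Vf{f}-\Vpi{\pi_f}=\sum_h\berr{f,\pi_f,h}$ for correctness, a per-level counting argument through the \bfac, and resolving the union-bound circularity by proving the iteration cap first) matches the paper's proof, and your correctness chain is essentially the paper's Lemmas~\ref{lem:bellman_decomposition} and~\ref{lem:optimism_explore}. The genuine gap is in the iteration-count step. Elimination only yields \emph{per-term} constraints $|\langle\nu_i,\xi_j\rangle|\le 2\phi$ for each $i<j$, so your Cauchy--Schwarz denominator $\xi_j^\trans\Sigma_{j-1}\xi_j\le\lambda\|\xi_j\|_2^2+4(j-1)\phi^2$ grows linearly in $j$, and the guaranteed gain $\nu_j^\trans\Sigma_{j-1}^{-1}\nu_j\ge 36M\phi^2/(\lambda\|\xi_j\|_2^2+4(j-1)\phi^2)$ is bounded below by a constant only for the first $O(M)$ rounds; afterwards it decays like $M/j$. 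Comparing the total guaranteed log-determinant gain (which then grows like $cM\log(n/M)$ for a constant $c$) against the capacity $M\log(1+n\Psi^2/(M\lambda))$ caps the per-level count only at $M\cdot\mathrm{poly}(\normbound/\phi)$ --- roughly $M(\normbound/(\sqrt{M}\phi))^{1/4}$ --- i.e., \emph{polynomial}, not logarithmic, in $H\sqrt{M}\normbound/\epsilon$, which would contaminate the final bound with $\mathrm{poly}(1/\epsilon,\normbound)$ factors and fall short of the stated $\epsilon^{-2}$ with only $\log\normbound$. So your assertion that ``the per-round gain stays bounded below'' is exactly the unproved step. The paper's Lemma~\ref{lem:vol} avoids this with a different volumetric object: it tracks the minimum volume enclosing ellipsoid of the surviving set $\{\bvec_h(f):f\in\Fcal_t\}$; each training round intersects this set with the slab $\{v:|\langle\pvec_h(\ft{t}),v\rangle|\le 2\phi\}$, and since the surviving witness satisfies $|\langle\nu_j,\xi_j\rangle|\ge 6\sqrt{M}\phi$ (ratio $1/(3\sqrt{M})\le 1/\sqrt{M}$), Todd's theorem (Theorem~\ref{thm:todd}) forces the MVEE volume to shrink by a factor of at most $0.6$ per round, while the set always contains a ball of radius $2\phi/\sup_f\|\pvec_h(f)\|_2$; this gives $M\log(\normbound/2\phi)/\log(5/3)$ rounds per level, logarithmic in $\normbound/\phi$ as required. (Your potential could likely be repaired with geometrically increasing weights on the $\nu_i\nu_i^\trans$ terms, but that is not in your proposal.)

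There is also a quantitative gap in the concentration step. For the training estimate in Eq.~\eqref{eq:estm_berr} the importance-weighted summands have range of order $K$, so Hoeffding at accuracy $\phi=\epsilon/(12H\sqrt{M})$ forces $\ntrain=\Omega(K^2H^2M\epsilon^{-2}\log(N/\delta))$, producing a final $K^2$ dependence --- inconsistent both with the theorem's linear $K$ and with your own per-iteration cost of $\otil(H^2MK\epsilon^{-2}\log(N/\delta))$. The paper's Lemma~\ref{lem:phi} instead applies Bernstein's inequality, exploiting that the indicator $\mathbf{1}[a_{h_t}=\pi_f(x_{h_t})]$ fires with probability $1/K$ so the second moment is at most $4K$; this is precisely what yields $\ntrain\propto K/\phi^2$, and indeed the paper notes in the infinite-class extension (Theorem~\ref{thm:inf_hyp_class}) that losing access to Bernstein costs exactly the extra factor of $K$ you would incur here.
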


According to this theorem, if a \modellong and function class $\Fcal$
admit a \bfac with small \brank and $\Fcal$ contains valid functions,
\cdplearn is guaranteed to find a near optimal valid function using
only polynomially many episodes.  
To our knowledge, our result is the most
general polynomial sample complexity bound for reinforcement learning
with rich observation spaces and function approximation, as  many popular models 
are shown to admit 
small \brank (see Section~\ref{sec:brank}). 
The result also certifies that the notion of
\bfac, which is quite general, is sufficient for efficient exploration
and learning in sequential decision making problems.

It is worth briefly comparing this result with prior work.
\begin{enumerate}
\item The most closely related result is the recent work
  of~\citet{krishnamurthy2016contextual}, who also consider episodic
  reinforcement learning with infinite observation spaces and function
  approximation.  The model studied there is a form of \modellong with
  \brank $M$, so the result applies as is to that setting.
  Importantly, we eliminate the need for deterministic transitions,
  resolving one of their open problems.  Moreover, the sample
  complexity bound improves the dependence on $H$ and $\epsilon$, at
  the cost of a worse dependence on $M$.  We emphasize that this
  result applies to a much more general class of models.
\item Another related body of work provides sample complexity bounds
  for fitted value/policy iteration methods (e.g.,
  \citep{munos2003error, antos2008learning, munos2008finite}). These
  works consider the infinite-horizon discounted MDP setting, and
  impose much stronger assumptions than we do including not only that
  the function class captures $Q^\star$, but also that it is approximately closed
  under Bellman update operators. 
More importantly, the analyses rely
  on the so-called concentrability coefficients to correct the
  mismatch between training and test distributions
  \citep{farahmand2010error,lazaric2012finite}, implicitly assuming
  that an exploration distribution is given, hence these results do
  not address the exploration issue which is the main focus here.
\item Since \modelshorts include small-state MDPs~\citep{kearns2002near,brafman2003r,strehl2006pac}, the algorithm can
  be applied as is to these problems. Unfortunately, the sample
  complexity is polynomially worse than the state of the art
  $\otil(\frac{M\textrm{poly}(H)K}{\epsilon^2} \log(1/\delta))$ bounds 
  for PAC-learning MDPs~\citep{dann2015sample}.  On the other hand, the algorithm also applies to MDPs with infinite state spaces with Bellman factorizations, which cannot be handled by tabular methods.
\item This approach applies to learning reactive policies in POMDPs (see Proposition~\ref{prop:reactive_pomdp}). \citet{azizzadenesheli2016reinforcement} provides a sample-efficient algorithm in a closely related setting, where both the observation space and the hidden-state space are small in cardinality. While their approach does not require realizable value-functions, the sample complexity depends polynomially on the number of unique observations, and the method relies on additional mixing assumptions, which we do not require. 
\item Finally, \modellongs also encompass contextual bandits, where
  the optimal sample complexity is
  $O(K\log(N)/\epsilon^2)$~\citep{agarwal2012contextual}.  As
  contextual bandits have $M = 1$ and $H=1$, \cdplearn achieves
  optimal sample complexity in this special case.
\end{enumerate}

Turning briefly to lower bounds, since the \model setting with \bfac
is new, general lower bounds for the broad class do not
exist. However, we can use MDP lower bounds for guidance on the
question of optimality, since the small-state MDPs in
Example~\ref{exm:mdp} are a special case. While no existing MDP lower
bounds apply as is (because formulations vary), in
Appendix~\ref{app:lower_new} we adapt ideas from  \citet{auer2002nonstochastic} to obtain a
$\Omega(MKH/\epsilon^2)$ sample complexity lower bound for learning
the MDPs in Example~\ref{exm:mdp}.

Comparing with this lower bound, the sample complexity in
Theorem~\ref{thm:cdp_complexity} is worse in $M,H$, and $\log(N)$
factors, but of course the small-state MDP is a significantly simpler
special case.  We leave as future work the question of
optimal sample complexity for learning \modelshorts with low \brank.

\section{Extensions}
\label{sec:extensions}
We introduce four important extensions to the algorithm and analysis. 

\subsection{Unknown \brankbig} \label{sec:guessM}
The first extension eliminates the need to know $M$ in advance (note that Algorithm~\ref{alg:simple} requires $M$
as an input parameter).  A simple procedure described in
Algorithm~\ref{alg:guessM}, can guess the value of $M$ on a doubling
schedule and handle this situation with no consequences to asymptotic
sample complexity.\footnote{In Algorithm~\ref{alg:guessM} we assume
  that $\normbound$ is known. In the examples provided in
  Proposition~\ref{prop:mdp_finite}, \ref{prop:mdp_low_rank_dynamics},
  and \ref{prop:reactive_pomdp}, however, $\normbound$ grows with $M$
  in the form of $\normbound = 2\sqrt{M}$. In this case, we can
  compute $\normbound' = 2\sqrt{M'}$ and call \cdplearn with
  $\normbound'$ instead of $\normbound$. As long as $\normbound$ is a
  polynomial term and non-decreasing in $M$ the same analysis applies
  and Theorem~\ref{thm:guessM} holds.} 

\begin{algorithm}
\caption{\textsc{GuessM}$(\Fcal, \normbound, \epsilon, \delta)$}
\label{alg:guessM}
\begin{algorithmic}[1]
\For{$i=1,2,\ldots$} 
\State $M' \gets 2^i$.
\State Call $\cdplearn(\Fcal, M', \epsilon, \frac{\delta}{i(i+1)})$ with parameters specified on Theorem~\ref{thm:cdp_complexity}.
\State Terminate the subroutine when $t > H M'\log\left(\frac{6H \sqrt{M'} \normbound}{\epsilon}\right)/\log(5/3)$ \label{lin:hard_stop}
in Line~\ref{lin:forloop} (the for-loop).
\If{a policy $\pi$ is returned from \cdplearn}
\Return $\pi$.
\EndIf
\EndFor
\end{algorithmic}
\end{algorithm}

\begin{theorem}
\label{thm:guessM}
For any $\epsilon,\delta \in (0,1)$, any \modellong and function
class $\Fcal$ that admits a \bfac with parameters $M,\normbound$, if
we run \textsc{GuessM}$(\Fcal, \epsilon, \delta)$, then with
probability at least $1-\delta$, \cdplearn halts and returns a policy which
satisfies $V^{\hat{\pi}} \ge \vstar - \epsilon$, and the number of
episodes required is at most
\begin{align*}
\otil\left(\frac{M^2H^3K}{\epsilon^2}\log(N\normbound/\delta)\right).
\end{align*}
\end{theorem}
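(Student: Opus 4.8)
The plan is to reduce the whole claim to Theorem~\ref{thm:cdp_complexity} by exploiting two facts: that the \bfac is monotone in its rank parameter, and that the correctness of \cdplearn does not depend on the guess $M'$ being accurate, only its halting time does. First I would observe that any \modellong admitting a \bfac of \brank $M$ with norm $\normbound$ also admits one of \brank $M'$ for every $M' \ge M$: pad $\pvec_h,\bvec_h$ with zero coordinates, which leaves every inner product in Eq.~\eqref{eq:bellman_decomposition} and every product of norms unchanged. Hence for any guess $M' \ge M$, Theorem~\ref{thm:cdp_complexity} applies verbatim with parameters $(M',\normbound)$ and the failure budget $\delta' = \delta/(i(i+1))$ handed to that call: with probability $1-\delta'$ the call halts and returns $\hat\pi$ with $V^{\hat\pi} \ge \vstar - \epsilon$, and the iteration count implicit in its proof is exactly $H M' \log(6H\sqrt{M'}\normbound/\epsilon)/\log(5/3)$, which is the hard-stop threshold of Line~\ref{lin:hard_stop}. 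Thus for $M' \ge M$ the for-loop terminates naturally, returning a valid near-optimal policy, before the hard stop can fire.

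Next I would argue that a call with an \emph{under}-estimate $M' < M$ can never produce an incorrect output. The guess $M'$ enters only through $\phi = \epsilon/(12H\sqrt{M'})$ and the sample sizes, all written as consistent functions of $M'$. The two guarantees driving correctness—that the elimination threshold $\phi$ dominates the importance-sampling error of Eq.~\eqref{eq:estm_berr} (so no valid function is eliminated) and that the stopping test $\sum_{h}\dtctberr{\ft{t},\pit{t},h} \le 5\epsilon/8$ certifies $\epsilon$-optimality (via the telescoping identity equating a valid function's predicted value with the value of its greedy policy)—scale identically in $M'$ and hence hold for every $M'$ with probability $1-\delta'$. So whatever the guess, any returned policy is $\epsilon$-optimal; a call with $M' < M$ may instead reach the hard stop without returning, whereupon \textsc{GuessM} doubles the guess and retries. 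Since $2^{\lceil \log_2 M\rceil} \in [M,2M)$, the first guess with $M' \ge M$ arises at index $i^\star = \lceil \log_2 M\rceil$ and is guaranteed to return.

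For the overall failure probability I would union bound over the calls: call $i$ fails with probability at most $\delta/(i(i+1))$, and $\sum_{i\ge 1}\delta/(i(i+1)) = \delta$ telescopes, giving total failure at most $\delta$. For the sample complexity, every call $i$ runs at most $T_i := H\,2^i\log(6H\sqrt{2^i}\normbound/\epsilon)/\log(5/3)$ iterations (its hard stop), each costing $\nest+\neval+\ntrain = \otil(H^2 2^i K \log(N/\delta')/\epsilon^2)$ episodes, for a per-call cost of $\otil(H^3 4^i K \log(N\normbound/\delta)/\epsilon^2)$. Summing over $i = 1,\dots,i^\star$ yields a geometric series dominated by its last term, with $4^{i^\star} \le (2M)^2$, so the total is $\otil(M^2 H^3 K \log(N\normbound/\delta)/\epsilon^2)$, matching the claim; the extra $\log\log M$ from $\delta' = \delta/(i^\star(i^\star+1))$ is absorbed into $\otil$.

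The main obstacle is the first paragraph: one must verify that instantiating the proof of Theorem~\ref{thm:cdp_complexity} with the padded rank-$M'$ factorization and the parameters written as functions of $M'$ produces an iteration count that is \emph{exactly} the hard-stop value of Line~\ref{lin:hard_stop}—loose enough never to abort a correct run, yet tight enough to cap the wasted work on the under-estimates. Concretely, this requires reading the precise per-level iteration bound out of the volumetric argument behind Theorem~\ref{thm:cdp_complexity} and confirming it equals $H M'\log(6H\sqrt{M'}\normbound/\epsilon)/\log(5/3)$ with $M$ replaced by $M'$; everything else is a routine union bound and geometric summation.
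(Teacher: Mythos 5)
Your proposal is correct and follows essentially the same route as the paper's own proof: the same $\delta/(i(i+1))$ split, the same observation that $\phi$ and $\ntrain$ are set consistently in $M'$ so that $f^\star$ survives and any returned policy is near-optimal even when $M' < M$, and the same doubling/hard-stop accounting with $M' \le 2M$. Your explicit zero-padding argument for monotonicity of the \bfac and your geometric summation over calls (versus the paper's cruder last-call-times-number-of-calls bound) are minor refinements of details the paper treats implicitly, not a different approach.
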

We give some intuition about the proof here, with details in
Appendix~\ref{app:extensions}. In Algorithm~\ref{alg:guessM}, $M'$ is
a guess for $M$ which grows exponentially. When $M' \ge M$, analysis
of the main algorithm shows that $\cdplearn(\Fcal, M', \epsilon,
\frac{\delta}{i(i+1)})$ terminates and returns a near-optimal policy
with high probability.  The doubling schedule implies that the largest
guess is at most $2M$, which has negligible effect on the sample
complexity.  On the other hand, \cdplearn may not explore effectively
when $M' < M$, because not enough samples (chosen according to $M'$)
are used to estimate the average Bellman errors in
Eq.~\eqref{eq:estm_berr}. This worse accuracy does not guarantee
sufficient progress in learning.

However, the high-probability guarantee that $f^\star$ is not
eliminated is unaffected, because the threshold $\phi$ on
Line~\ref{eq:learning_step} of \cdplearn is set in accordance with the
sample size $\ntrain$ specified in Theorem~\ref{thm:cdp_complexity},
regardless of $M$. Consequently, if the algorithm ever terminates when
$M' < M$, we still get a near-optimal policy.  When $M' < M$ the \cdplearn
subroutine may not terminate, which the explicit termination on
line~\ref{lin:hard_stop} in Algorithm~\ref{alg:guessM} addresses.
Finally, by splitting the failure probability $\delta$ appropriately
among all guesses of $M'$, we obtain the same order of sample
complexity as in Theorem~\ref{thm:cdp_complexity}.

\subsection{Separation of Policy Class and \vval Class}
\label{sec:policy_vval}
So far, we have assumed that the agent has access to a class of Q-value functions $\Fcal \subset \Xcal\times \Acal \to [0,1]$.
In this section, we show the algorithm allows separate representations of policies and \emph{\vval} functions. 

For every $f\in\Fcal$, and any $x\in\Xcal, a\ne \pi_f(x)$, we note that the value of $f(x,a)$ is not used by Algorithm~\ref{alg:simple}, and changing it to arbitrary values does not affect the execution of the algorithm as long as $f(x,a) \le f(x,\pi_f(x))$ (so that $\pi_f$ does not change). In other words, the algorithm only interacts with $f$ in two forms: 
\begin{enumerate}
\item $f$'s greedy policy $\pi_f$.
\item A mapping $g_f: x \mapsto f(x,\pi_f(x))$. We call such mappings \vval functions to contrast the previous use of Q-value functions.\footnote{In the MDP setting, such functions are also known as \emph{state-value functions}.}
\end{enumerate}
Hence, supplying $\Fcal$ is equivalent to supplying the following space of (policy, \vval function) pairs: 
\[
\{\big(\pi_f, g_f\big): f \in \Fcal \}.
\]
This observation provides further evidence that
Definition~\ref{def:valid} is significantly less restrictive than
standard realizability assumptions.  Validity of $f$ means that
$(\pi_f, g_f)$ obeys the \emph{Bellman Equations for Policy
  Evaluation} (i.e., $g_f$ predicts the long-term value of following
$\pi_f$), as opposed to the more common \emph{Bellman Optimality
  Equations}. In MDPs, there are many ways to satisfy the policy
evaluation equations at every state simultaneously, while $Q^\star$ is
the only function that satisfies all optimality equations.  


More generally, instead of using a Q-value function class, we can run
\cdplearn with a policy space $\Pi \subset\Xcal \to \Acal$ and a \vval
function class $\Gcal \subset \Xcal \to [0,1]$ where we assemble
(policy,\vval function) pairs by taking the Cartesian product of $\Pi$
and $\Gcal$.  \cdplearn can be run here with the understanding that
each Q-value function $f$ in \cdplearn is associated with a $(\pi,g)$
pair, and the algorithm uses $\pi$ instead of $\pi_f$ and $g(x)$
instead of $f(x,\pi_f(x))$.  All the analysis applies directly with
this transformation, and the $\log |\Fcal|$ dependence in sample
complexity is replaced by $\log|\Pi| + \log|\Gcal|$.  Note also that
the definition of \bfac also extends naturally to this case, where the
first argument is the $(\pi,g)$ pair and the second argument is a
roll-in policy, $\pi'$.

\subsection{Infinite Hypothesis Classes}
\label{sec:inf_hyp_class}
The arguments in Section~\ref{sec:algorithm} assume that $|\Fcal|= N <
\infty$. However, almost all commonly used function approximators are
infinite classes, which restricts the applicability of the
algorithm. On the other hand, the size of the function class appears
in the analysis only through deviation bounds, so techniques from
empirical process theory can be used to generalize the results to
infinite classes. This section establishes parallel versions of those
deviation bounds for function classes with finite combinatorial
dimensions, and together with the rest of the original analysis we can
show the algorithm enjoys similar guarantees when working with
infinite hypothesis classes. 

Specifically, we consider the setting where $\Pi$ and $\Gcal$ are
given (see Section~\ref{sec:policy_vval}), and they are infinite
classes with finite combinatorial dimensions. We assume that $\Pi$ has
finite \emph{Natarajan dimension} (Definition~\ref{def:natarajan}),
and $\Gcal$ has finite \emph{pseudo dimension}
(Definition~\ref{def:pdim}). These two dimensions are standard
extensions of VC-dimension to multi-class classification and
regression respectively.

\begin{definition}[Natarajan dimension \citep{natarajan1989learning}]\label{def:natarajan}
Suppose $\Xcal$ is a feature space and $\Ycal$ is a finite label space. Given hypothesis class $\Hcal \subset \Xcal \to \Ycal$, its Natarajan dimension $\textrm{Ndim}(\Hcal)$ is defined as the maximum cardinality of a set $A \subseteq \Xcal$ that satisfies the following: there exists $h_1, h_2: A \to \Ycal$ such that (1) $\forall x\in A$, $h_1(x) \ne h_2(x)$, and (2) $\forall B \subseteq A$, $\exists h\in\Hcal$ such that $\forall x\in B$, $h(x)=h_1(x)$ and $\forall x\in A\setminus B$, $h(x)=h_2(x)$.
\end{definition}

\begin{definition}[Pseudo dimension \citep{haussler1992decision}]\label{def:pdim}
Suppose $\Xcal$ is a feature space. Given hypothesis class $\Hcal \subset \Xcal \to \RR$, its pseudo dimension $\textrm{Pdim}(\Hcal)$ is defined as $\textrm{Pdim}(\Hcal) = \textrm{VC-dim}(\Hcal^+)$, where $\Hcal^+ = \{(x, \xi) \mapsto \textbf{1}[h(x) > \xi] : h \in \Hcal\} \subset \Xcal \times \mathbb{R} \to \{0, 1\}$.
\end{definition}

The definition of pseudo dimension relies on that of VC-dimension, whose definition and basic properties are recalled
in the appendix. We state the final sample complexity result
here. Since the algorithm parameters are somewhat complex expressions,
we omit them in the theorem statement and provide specification
in the proof, which is deferred to Appendix~\ref{app:extensions}.

\begin{theorem}
\label{thm:inf_hyp_class}
Let $\Pi \subset \Xcal \to \Acal$ with $\textrm{Ndim}(\Pi) \le d_{\Pi} < \infty$ and $\Gcal \subset \Xcal \to [0,1]$ with $\textrm{Pdim}(\Gcal) \le d_{\Gcal} < \infty$. 
For any $\epsilon,\delta \in (0,1)$, any \modellong with policy space $\Pi$ and function space $\Gcal$ that admits a \bfac with parameters $M,\normbound$, if we run \cdplearn with appropriate parameters, then with probability at least $1-\delta$, \cdplearn halts and returns a policy $\hat{\pi}$ that satisfies $V^{\hat{\pi}} \ge \vstar - \epsilon$, and the number of episodes required is at most\begin{align}
\otil\left(\frac{M^2H^3K^2}{\epsilon^2}\Big(d_{\Pi} + d_{\Gcal} + \log(\normbound/\delta)\Big)\right).
\end{align}
\end{theorem}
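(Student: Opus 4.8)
The plan is to reuse the entire correctness-and-termination analysis of Theorem~\ref{thm:cdp_complexity} as a black box, changing only the two places where the cardinality $N=|\Fcal|$ enters, namely the deviation bounds that are proved by a union over $\Fcal$ in the finite case. Inspecting \cdplearn, $\log N$ appears only in $\nest$ (Line~\ref{lin:vf}, the value estimates $\empVf{f}$) and in $\ntrain$ (Lines~\ref{lin:estm_berr}--\ref{lin:learning_step}, the Bellman-error estimates $\empberr{f,\pit{t},h_t}$ driving the elimination step). The estimates $\dtctberr{\ft{t},\pi_t,h}$ on Line~\ref{lin:detect} require \emph{no} uniform control over the class: conditioned on the history preceding Line~\ref{lin:eval}, the optimistic choice $\ft{t}$ is a single fixed (policy,\vval)-pair, so fresh-sample Hoeffding plus a union over the $H$ levels and over the (class-independent) iteration bound suffices, leaving $\neval$ unchanged. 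Likewise, the volumetric iteration-count argument that bounds the number of rounds by $O\big(HM\log(H\sqrt M\normbound/\epsilon)\big)$ relies only on the \bfac of Definition~\ref{def:brank} and is independent of the class size, so it transfers verbatim. It therefore suffices to prove two uniform-convergence statements over the infinite classes and re-solve for the sample sizes.

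First, for the value estimates: in the separated setting of Section~\ref{sec:policy_vval} a function is a pair $(\pi,g)$ and $\empVf{(\pi,g)}=\frac{1}{\nest}\sum_i g(x_1^{(i)})$ depends only on $g$. Since $g\in\Gcal$ takes values in $[0,1]$ and $\Gcal$ has pseudo dimension at most $d_\Gcal$ (Definition~\ref{def:pdim}), a Pollard-type uniform convergence bound for bounded real-valued classes gives, with probability $1-\delta$, $\sup_{g\in\Gcal}\big|\empVf{g}-\Vf{g}\big|\le O\big(\sqrt{(d_\Gcal\log(1/\epsilon)+\log(1/\delta))/\nest}\big)$. Matching this to the accuracy required in Theorem~\ref{thm:cdp_complexity} yields $\nest=O\big(\epsilon^{-2}(d_\Gcal\log(1/\epsilon)+\log(1/\delta))\big)$, which replaces $\log N$ by $d_\Gcal$ up to logarithmic factors.

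The main work, and the main obstacle, is the second uniform-convergence statement for the importance-weighted Bellman-error estimator. Writing $z=(x_{h_t},a_{h_t},r_{h_t},x_{h_t+1})$, each summand in Eq.~\eqref{eq:estm_berr} is $\psi_{\pi,g}(z)=K\,\mathbf{1}[a_{h_t}=\pi(x_{h_t})]\,(g(x_{h_t})-r_{h_t}-g(x_{h_t+1}))$, indexed by $(\pi,g)\in\Pi\times\Gcal$ and bounded in magnitude by $2K$. I would bound the covering number of the class $\{\psi_{\pi,g}\}$ by combining (i) the Natarajan-dimension analogue of Sauer's lemma (Definition~\ref{def:natarajan}) to control the distinct labelings of the policy indicators $\{x\mapsto\mathbf{1}[\pi(x)=a]\}$ in terms of $d_\Pi$, which is where factors of $K$ and $\log K$ from the $K$-ary label set enter, with (ii) covering-number bounds for $\Gcal$ in terms of $d_\Gcal$, applied at the two evaluation points $x_{h_t}$ and $x_{h_t+1}$. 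The product structure gives a covering number of order $\mathrm{poly}(\ntrain)^{O(d_\Pi)}K^{O(d_\Pi)}(K/\alpha)^{O(d_\Gcal)}$ at scale $\alpha$, and a standard covering-based (or Rademacher) deviation argument then yields $\sup_{\pi,g}|\empberr{(\pi,g),\pit{t},h_t}-\berr{(\pi,g),\pit{t},h_t}|\le O\big(K\sqrt{(d_\Pi+d_\Gcal)\log(\ntrain K/\delta)/\ntrain}\big)$. The delicate points are the multiclass Sauer bound for the policy indicators and the bookkeeping of the estimator's range: because the uniform bound is governed by the $O(K)$ range of $\psi$ rather than by its $O(K)$ variance (which a Bernstein argument exploits in the finite-class proof), setting the right-hand side to $\phi/2=O(\epsilon/(H\sqrt M))$ gives $\ntrain=\otil\big(\epsilon^{-2}H^2MK^2(d_\Pi+d_\Gcal+\log(1/\delta))\big)$, i.e.\ an extra factor of $K$ relative to Theorem~\ref{thm:cdp_complexity}. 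Multiplying by the class-independent $O\big(HM\log(\cdot)\big)$ iteration bound and adding $\nest$ produces the claimed $\otil\big(M^2H^3K^2\epsilon^{-2}(d_\Pi+d_\Gcal+\log(\normbound/\delta))\big)$ episode complexity. I expect the composite covering-number bound for $\psi_{\pi,g}$ in step (i)--(ii) to be the technically hardest part, since it must merge a multiclass (Natarajan) combinatorial bound with a real-valued (pseudo-dimension) covering bound through a product of an indicator and a bounded difference.
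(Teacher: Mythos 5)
Your proposal is correct and follows essentially the same route as the paper's proof in Appendix~\ref{app:inf_hyp_class}: reuse the Theorem~\ref{thm:cdp_complexity} analysis verbatim, keep $\neval$ and the volumetric iteration bound unchanged since they need no union over the class, replace $\log N$ in $\nest$ by a Pollard/pseudo-dimension bound over $\Gcal$ alone, and handle the importance-weighted Bellman estimator by merging a Natarajan--Sauer bound for the policy indicators with a pseudo-dimension covering bound for $\Gcal$, accepting a range-based ($O(K)$-range) deviation bound in place of Bernstein and hence the extra factor of $K$. The only (cosmetic) difference is bookkeeping: the paper splits each summand into three products $\mathbf{1}[a=\pi(x)]\,g(x_h)$, $\mathbf{1}[a=\pi(x)]\,r_h$, $\mathbf{1}[a=\pi(x)]\,g(x_{h+1})$, bounds the pseudo dimension of the composite class $\{(x,a,x')\mapsto \mathbf{1}[a=\pi(x)]g(x')\}$ once (Lemma~\ref{lem:pdim}), and applies the generic covering-based deviation bound (Corollary~\ref{cor:devPdim}) to each term at accuracy $\phi/(3K)$, whereas you cover the joint class $\{\psi_{\pi,g}\}$ directly---both yield the same $\otil\big(K^2(d_\Pi+d_\Gcal)/\phi^2\big)$ sample size for $\ntrain$ and the claimed episode complexity.
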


Compared to Theorem~\ref{thm:cdp_complexity}, the sample complexity we
get for infinite hypothesis classes has two differences: (1) $\log N$
is replaced by $d_{\Pi} + d_{\Gcal}$, which is expected, based on the
discussion in Section~\ref{sec:policy_vval}, and (2) the dependence on
$K$ is quadratic as opposed to linear. In fact, in the proof of
Theorem~\ref{thm:cdp_complexity}, we exploited the low-variance
property of importance weights in Eq.~\eqref{eq:estm_berr}, and
applied Bernstein's inequality to avoid a factor of $K$. With infinite
hypothesis classes, the same approach does not apply
directly. However, this may only be a technical issue, and a more
refined analysis may recover a linear dependence (e.g., using
tools from \citet{panchenko2002some}).

\subsection{Approximate Validity and Approximate \brankbig}
\label{sec:robust}

Recall that the sample-efficiency guarantee of \cdplearn relies on two major assumptions: 
\begin{itemize}
\item We assumed that $\Fcal$ contains valid functions (Definition~\ref{def:valid}). In practice, however, it is hard to specify a function class that contains strictly  valid functions, as the notion of validity depends on the environment dynamics, which are unknown. A much more realistic situation is that some functions in $\Fcal$ satisfy validity only \emph{approximately}.
\item We assumed that the average Bellman errors have an exact low-rank factorization (Definition~\ref{def:brank}). While this is true for a number of RL models (Section~\ref{sec:brank}), it is worth keeping in mind that these are only \emph{models} of the environments, which are different from and only approximations to the real environments themselves. Therefore, it is more realistic to assume that an \emph{approximate} factorization exists when defining \bfac.
\end{itemize}

In this section, we show that the algorithmic ideas of \cdplearn are indeed robust against both types of approximation errors, and degrades gracefully as the two assumptions are violated. Below we introduce the approximate versions of Definition~\ref{def:valid} and \ref{def:brank}, give a slightly extended version of the algorithm, \cdplearnslack (for Optimism-Led Iterative Value-function Elimination with Robustness, see Algorithm~\ref{alg:slack}), and state its sample complexity guarantee in Theorem~\ref{thm:slack}. 

\begin{definition}[Approximate validity of $f$]
\label{def:approx_valid}
Given any CDP and function class $\Fcal$, we say $f\in\Fcal$ is $\slack$-valid if for any $f'\in\Fcal$ and any $h\in[H]$, 
$
\left|\berr{f, \pi_{f'}, h}\right| \le \slack.
$
\end{definition}

The approximation error $\slack$ introduced in Definition~\ref{def:approx_valid} allows the algorithm to compete against a broader range of functions; hence the notions of optimal function and value need to be re-defined accordingly.
\begin{definition}
\label{def:approx_vstar}
For a fixed $\slack$, define 
$
\fstarslack = \argmax_{f\in\Fcal: \,f \textrm{\,is $\slack$-valid}} \Vpi{\pi_f},
$
and 
$
\vstarslack = \Vpi{\pi_{\fstarslack}}.
$ 
\end{definition}

By definition, $\vstarslack$ is non-decreasing in $\slack$ with Definition~\ref{def:valid} being a special case where $\slack=0$.
When $\slack > 0$, we compete against some functions that do not obey
Bellman equations, breaking an essential element of value-based RL. As
a consequence, 
returning a policy with value close to
$\vstarslack$ in a sample-efficient manner is very challenging, so the value that \cdplearnslack can guarantee is suboptimal to $\vstarslack$ by a term that is proportional to $\slack$ and does not diminish with more data.

\begin{definition}[Approximate \brank] \label{def:approx_brank}
We say that a CDP $(\Xcal, \Acal, H, P)$ and $\Fcal \subset \Xcal \times \Acal \to [0, 1]$, admits a \bfac with \brankem $M$, norm parameter $\normbound$, and approximation error $\slackM$, 
if there exists $\pvec_h: \Fcal \to \RR^M, \bvec_h: \Fcal \to \mathbb{R}^M$ for each $h\in[H]$, such that for any $f,f'\in\Fcal, h\in[H]$,
\begin{align}
\left|\berr{f,\pi_{f'},h} - \langle \pvec_h(f'), \bvec_h(f) \rangle\right| \le \slackM,
\end{align}
and $\|\pvec_h(f')\|_2 \cdot \|\bvec_h(f)\|_2 \le \normbound < \infty$. 
\end{definition}

\begin{algorithm}[t]
\begin{algorithmic}[1]
\State Let $\epsilon' = \epsilon + 2H(3\sqrt{M} (\slack + \slackM) + \slackM)$.  \label{lin:def_eps_prime_slack}
\State \textbf{Collect} $\nest$ trajectories 
with actions taken in an arbitrary manner; 
save initial contexts $\{x_1^{(i)}\}_{i=1}^{\nest}$.
\label{lin:init_eval_slack}
\State \textbf{Estimate} the predicted value for each $f \in \Fcal$: $\empVf{f} = \frac{1}{\nest}\sum_{i=1}^{\nest} f(x_1^{(i)}, \pi_f(x_1^{(i)}))$. \label{lin:vf_slack}
\State $\Fcal_0 \gets \Fcal$. \label{lin:init_Fcal_slack}
\For{$t=1,2,\ldots$} \label{lin:forloop_slack}
\State \textbf{Choose policy} $\ft{t} = \argmax_{f \in \Fcal_{t-1}} \empVf{f}$, $\pit{t} = \pi_{\ft{t}}$.
\State \textbf{Collect} $\neval$ trajectories
$\{(x^{(i)}_1,a^{(i)}_1,r^i_1,\ldots,
x^{(i)}_H,a^{(i)}_H,r^{(i)}_H)\}_{i=1}^{\neval}$ by following
$\pit{t}$ (i.e. $a_h^{(i)} = \pit{t}(x_h^{(i)})$ for all $h,i$). \label{lin:eval_slack} 
\State \textbf{Estimate} $\forall h\in[H]$,  \label{lin:detect_slack}
\begin{align}
\dtctberr{\ft{t}, \pit{t}, h} = \frac{1}{\neval}\sum_{i=1}^{\neval}
\left[ \ft{t}(x^{(i)}_h, a^{(i)}_h) - r^{(i)}_h -
  \ft{t}(x^{(i)}_{h+1}, a^{(i)}_{h+1}) \right].
\label{eq:ft_bellman_slack}
\end{align}
\If{$\sum_{h=1}^H \dtctberr{\ft{t}, \pit{t},h} \le 5\epsilon'/8$} \label{lin:check_epsilon_prime_slack}
\State Terminate and ouptut $\pit{t}$. 
\EndIf
\State Pick any $h_t \in [H]$ for which $\dtctberr{\ft{t}, \pi_t,h_t} \ge 5\epsilon'/8H$ (One is guaranteed to exist). \label{lin:detected_slack}
\State Collect trajectories $\{(x_1^{(i)}, a_1^{(i)}, r_1^{(i)},
\ldots, x_H^{(i)}, a_H^{(i)}, r_H^{(i)})\}_{i=1}^{\ntrain}$ where
$a_h^{(i)} = \pit{t}(x_h^{(i)})$ for all $h \ne h_t$ and
$a_{h_t}^{(i)}$ is drawn uniformly at random.  \label{lin:explore_slack}
\State \textbf{Estimate}
\begin{align}
\empberr{f,\pit{t}, h_t} &= \frac{1}{\ntrain}\sum_{i=1}^{\ntrain} \frac{\mathbf{1}[a_{h_t}^{(i)} = \pi_f(x_{h_t}^{(i)})]}{1/K} \Big(f(x_{h_t}^{(i)},a_{h_t}^{(i)}) - r_{h_t}^{(i)} - f(x_{h_{t}+1}^{(i)},\pi_f(x_{h_t+1}^{(i)}))\Big).
\label{eq:estm_berr_slack}
\end{align}
\State \textbf{Learn}
\begin{align}
\Fcal_{t} = \left\{f \in \Fcal_{t-1} : \left|\empberr{f,\pit{t},h_t} \right|\le \phi + \slack \right\}.
\label{eq:learning_step_slack}
\end{align}
\EndFor
\end{algorithmic}
\caption{\cdplearnslack$(\Fcal, \slack, M,\normbound, \slackM, \epsilon, \delta)$}
\label{alg:slack}
\end{algorithm}

A modified version of \cdplearn that deals with these approximation
errors, \cdplearnslack, is specified
in Algorithm~\ref{alg:slack}. Here, we use $\epsilon$ to denote the
component of the suboptimality that diminish as more data is
collected, and the total suboptimality that we can guarantee is
$\epsilon$ plus a term proportional to $\slack$ and $\slackM$ (see
Eq.~\eqref{eq:subopt_slack} in Theorem~\ref{thm:slack}).  The
algorithm is almost identical to \cdplearn except in two places:
(1) it uses $\epsilon'$ (defined on Line~\ref{lin:def_eps_prime_slack}) in
the termination condition (Line~\ref{lin:check_epsilon_prime_slack}) as opposed to $\epsilon$, and (2) it uses a
higher threshold that depends on $\slack$ in
Eq.~\eqref{eq:learning_step_slack} to avoid eliminating $\slack$-valid
functions.

\begin{theorem}
\label{thm:slack}
For any $\epsilon,\delta \in (0,1)$, any \modellong and function class $\Fcal$ that admits a \bfac with parameters $M$, $\normbound$, and $\slackM$, suppose we run \cdplearnslack with any $\theta \in [0, 1]$, and
$\nest, \neval, \ntrain, \phi$ 
as specified in Theorem~\ref{thm:cdp_complexity}. 
Then with probability at least $1-\delta$, \cdplearnslack halts and returns a policy $\hat{\pi}$ which is at most
\begin{align} \label{eq:subopt_slack}
\epsilon + 8H\sqrt{M}(\slack+\slackM)
\end{align}
suboptimal compared to $\vstarslack$ defined in Definition~\ref{def:approx_vstar}, and the number of episodes required is at most
\begin{align}
\otil\left(\frac{M^2H^3K}{\epsilon^2}\log(N\normbound/\delta)\right).
\end{align}
\end{theorem}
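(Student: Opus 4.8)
The plan is to mirror the proof of Theorem~\ref{thm:cdp_complexity}, since \cdplearnslack differs from \cdplearn only in the relaxed elimination threshold $\phi+\slack$ (Eq.~\eqref{eq:learning_step_slack}) and in the substitution of $\epsilon'$ for $\epsilon$ in the termination test. Because the sample sizes $\nest,\neval,\ntrain$ and the threshold $\phi$ are unchanged, I would first import the deviation bounds from that proof verbatim. On a single $1-\delta$ event these guarantee: (i) every value estimate $\empVf{f}$ is within $\epsilon/8$ of $\Vf{f}:=\EE[f(x_1,\pi_f(x_1))]$; (ii) each on-policy level estimate $\dtctberr{\ft{t},\pit{t},h}$ is within $O(\epsilon/H)$ of $\berr{\ft{t},\pit{t},h}$; and (iii) each importance-weighted estimate $\empberr{f,\pit{t},h_t}$ is within $\phi$ of $\berr{f,\pit{t},h_t}$, the last using Bernstein's inequality to exploit the low variance of the importance weight and avoid a factor of $K$. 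All subsequent steps are conditioned on this event.

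The second component is soundness. I would first show $\fstarslack$ is never eliminated: it is $\slack$-valid, so $|\berr{\fstarslack,\pit{t},h_t}|\le\slack$, and by (iii) its empirical error is at most $\phi+\slack$, which is exactly the relaxed threshold in Eq.~\eqref{eq:learning_step_slack}. Hence $\fstarslack\in\Fcal_{t-1}$ at every iteration, and optimism gives $\empVf{\ft{t}}\ge\empVf{\fstarslack}$. Telescoping the definition of average Bellman error (using $f(x_{H+1},\cdot)\equiv 0$) yields the identity $\Vf{f}-\Vpi{\pi_f}=\sum_h\berr{f,\pi_f,h}$; applied to $\fstarslack$ with its $\slack$-validity this gives $\Vf{\fstarslack}\ge\vstarslack-H\slack$, and applied to the output $\ft{t}$ at termination, together with (ii) and the test $\sum_h\dtctberr{\ft{t},\pit{t},h}\le 5\epsilon'/8$, it gives $\Vpi{\pit{t}}\ge\Vf{\ft{t}}-5\epsilon'/8-O(\epsilon)$. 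Chaining these with (i) and substituting the definition of $\epsilon'$ collapses the suboptimality to $\epsilon+8H\sqrt{M}(\slack+\slackM)$, matching Eq.~\eqref{eq:subopt_slack}; the $\sqrt{M}$ factor enters only through $\epsilon'$.

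The third and hardest component bounds the number of iterations, which yields the episode count after multiplying by the per-iteration cost $\neval+\ntrain$ and adding $\nest$. Fixing a level $h$ and restricting to iterations with $h_t=h$, I would use the \emph{approximate} factorization (Definition~\ref{def:approx_brank}) to convert the algorithm's guarantees into statements about $\pvec_h,\bvec_h$: the detection rule (Line~\ref{lin:detected_slack}) with (ii) and the $\slackM$ slack forces $\langle\pvec_h(\ft{t}),\bvec_h(\ft{t})\rangle\ge\kappa:=5\epsilon'/(8H)-O(\epsilon/H)-\slackM$, while survival through every earlier learning step at level $h$ with (iii) and the $\slackM$ slack forces $|\langle\pvec_h(\ft{t'}),\bvec_h(\ft{t})\rangle|\le\lambda:=2\phi+\slack+\slackM$ for all earlier such $t'$. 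I would then reuse the volumetric (ellipsoid-potential) argument from Theorem~\ref{thm:cdp_complexity} on the accumulated second-moment matrix of the $\pvec_h(\ft{t})$ vectors: each qualifying iteration produces a direction on which the new $\bvec$ projects large but all prior $\pvec$'s project small, so its log-determinant grows by a constant factor (the $\log(5/3)$ appearing in the hard stop of Algorithm~\ref{alg:guessM}) while the norm bound $\normbound$ caps it, giving $O(M\log(H\sqrt{M}\normbound/\epsilon))$ iterations per level and $O(HM\log(\cdot))$ overall.

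The main obstacle is the interface between the last two components: I must verify that $\epsilon'$ is large enough that $\kappa$ still dominates the accumulated small inner products by the same constant-factor margin as in the exact case, even though up to $M$ previous directions each now carry an extra slack of order $\slack+\slackM$ beyond the sampling error. The choice $\epsilon'=\epsilon+2H(3\sqrt{M}(\slack+\slackM)+\slackM)$ is reverse-engineered precisely so that the effective gap $\kappa-\sqrt{M}\lambda$ reduces to the exact-case gap of order $\epsilon/H$, preserving the $\log(5/3)$ growth and hence the identical iteration bound and sample complexity. The same inflated $\epsilon'$ is exactly what propagates into the $8H\sqrt{M}(\slack+\slackM)$ suboptimality term, so the two approximation parameters must be tracked simultaneously through the value bound and the volumetric bound to confirm that they balance.
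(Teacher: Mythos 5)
Your proposal matches the paper's proof essentially step for step: the paper likewise reuses the deviation bounds and parameter choices from Theorem~\ref{thm:cdp_complexity} verbatim, establishes soundness via the robust optimism lemma (the $\slack$-valid $\fstarslack$ survives the relaxed threshold $\phi+\slack$, so termination yields value at least $\vstarslack - \epsilon' - H\slack$, which with the definition of $\epsilon'$ collapses to Eq.~\eqref{eq:subopt_slack}), and bounds the iterations per level by $M\log(\normbound/(2\phi))/\log\frac{5}{3}$ using exactly the identity you reverse-engineered, $\epsilon'/(2H) = 3\sqrt{M}(2\phi+\slack+\slackM)+\slackM$, before the identical sample-count accounting. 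The only (cosmetic) divergence is in your description of the geometric step: the paper's Lemma~\ref{lem:vol_app} tracks the minimum-volume-enclosing-ellipsoid volume of the shrinking set of surviving $\bvec_h(f)$ vectors cut by slabs (Todd's theorem, which needs the \emph{ratio} condition $\cutto/\kappa \le 1/(3\sqrt{M})$ rather than a gap of the form $\kappa - \sqrt{M}\lambda$), not a log-determinant potential on the accumulated second moments of the $\pvec_h(\ft{t})$ vectors, though your elliptic-potential variant would deliver the same $O(M\log(\normbound/\phi))$ per-level bound.
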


\section{Proofs of Main Results}
\label{sec:sketch}

In this section, we provide the main ideas as well as
the key lemmas involved in proving
Theorem~\ref{thm:cdp_complexity}. We also show how the lemmas are
assembled to prove the theorem.
Detailed proofs of the lemmas are in Appendix~\ref{app:main-lemmas}.

The proof follows an \emph{explore-or-terminate} argument common to
existing sample-efficient RL algorithms. We argue that the
optimistic policy chosen in Line~\ref{lin:opt} of
Algorithm~\ref{alg:simple} is either approximately optimal, or visits
a context distribution under which it has a large Bellman error. This
implies that using this policy for exploration leads to learning on a
new context distribution. For sample efficiency, we then need to
establish that this event cannot happen too many times. This is done
by leveraging the \bfac of the process and arguing that the number of
times an $\epsilon$ sub-optimal policy is found can be no larger than
$\otil(MH)$. Combining with the number of samples collected for every
sub-optimal policy, this immediately yields the PAC learning
guarantee. 

\subsection{Key Lemmas for Theorem~\ref{thm:cdp_complexity}}

We begin by decomposing a policy-loss-like term into the sum of Bellman errors. 

\begin{lemma}[Policy loss decomposition]
\label{lem:bellman_decomposition}
Define $\Vf{f} = \EE [f(x_1, \pi_f(x_1))]$. Then $\forall f: \Xcal\times\Acal \to [0,1]$,
\begin{align}
\Vf{f} - \Vpi{\pi_f} = \sum_{h=1}^H \berr{f,\pi_f,h}.
\label{eq:bellman_decomposition_lem}
\end{align}
\end{lemma}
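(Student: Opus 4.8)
The plan is to rewrite the right-hand side of Eq.~\eqref{eq:bellman_decomposition_lem} as a single expectation over a trajectory drawn by $\pi_f$ and then telescope. First I would specialize Definition~\ref{def:berr} to the roll-in policy $\pi = \pi_f$: since the roll-in policy and the greedy policy coincide, \emph{all} actions $a_1,\ldots,a_{h+1}$ are taken according to $\pi_f$, so for every $h$ the conditioning events $\{a_{1:h-1}\sim\pi_f,\ a_{h:h+1}\sim\pi_f\}$ describe one and the same law --- the trajectory distribution induced by running $\pi_f$ for the whole episode. This is the key observation: it lets me pull all $H$ terms under a common expectation by linearity, which is exactly what makes the telescoping legitimate.

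Next I would sum over $h$ and interchange sum and expectation:
\begin{align*}
\sum_{h=1}^H \berr{f,\pi_f,h} = \EE\!\left[\sum_{h=1}^H\Big(f(x_h,a_h) - r_h - f(x_{h+1},a_{h+1})\Big)\,\Big|\, a_{1:H}\sim\pi_f\right].
\end{align*}
The $f$-terms now telescope, namely $\sum_{h=1}^H\big(f(x_h,a_h)-f(x_{h+1},a_{h+1})\big) = f(x_1,a_1) - f(x_{H+1},a_{H+1})$, and the boundary convention $f(x_{H+1},\cdot)\equiv 0$ annihilates the last term. What remains inside the expectation is therefore $f(x_1,a_1) - \sum_{h=1}^H r_h$.

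Finally I would identify the two surviving pieces. Since $a_1 = \pi_f(x_1)$ and $x_1\sim P_{\treeroot}$, the first term gives $\EE[f(x_1,\pi_f(x_1))] = \Vf{f}$; the reward term gives $\EE[\sum_{h=1}^H r_h \mid a_{1:H}\sim\pi_f] = \Vpi{\pi_f}$ by the definition of policy value in Eq.~\eqref{eq:vpi_def}. Subtracting yields exactly $\Vf{f} - \Vpi{\pi_f}$, as claimed. I do not anticipate a genuine obstacle: the only point that warrants care is the first step --- checking that the per-level conditional expectations in Definition~\ref{def:berr} are all taken under the single $\pi_f$-trajectory law, so that linearity of expectation may be applied \emph{before} telescoping. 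Once that is in place, the argument reduces to a one-line telescoping identity together with the $f(x_{H+1},\cdot)\equiv 0$ convention.
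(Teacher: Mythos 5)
Your proof is correct and follows essentially the same route as the paper's: both rest on the observation that when the roll-in policy is $\pi_f$ itself, all $H$ conditional expectations in Definition~\ref{def:berr} are taken under the single trajectory law induced by $a_{1:H}\sim\pi_f$, after which linearity, telescoping, and the convention $f(x_{H+1},\cdot)\equiv 0$ yield $\Vf{f}-\Vpi{\pi_f}$. No gaps; your emphasis on justifying the common-law step before invoking linearity is exactly the one point the paper's proof also singles out.
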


The structure of this lemma is similar to many existing results in RL that upper-bound the loss of following an approximate value function greedily using the function's Bellman errors (e.g., \citet{singh1994upper}). However, most existing results are inequalities that use max-norm relaxations to deal with mismatch in distributions; hence, they are likely to be loose. 
This lemma, on the other hand, is an equality, thanks to the fact that we are comparing $\Vpi{\pi_f}$ to $\Vf{f}$, not $V^\star$. 
As the remaining analysis shows, this simple equation allows us to relate policy loss (from the LHS) with the average Bellman error (the RHS) that we use to drive exploration.
In particular, this lemma implies an explore-or-terminate behavior for
the algorithm.

\begin{lemma}[Optimism drives exploration]
\label{lem:optimism_explore}
Suppose the estimates $\empVf{f}$ and $\dtctberr{\ft{t}, \pit{t}, h}$ in
Line~\ref{lin:vf} and \ref{lin:detect} always satisfy
\begin{align}
\label{eq:vf_vpi}
|\empVf{f} - \Vf{f}| \le \epsilon/8, \qquad and \qquad |\dtctberr{\ft{t}, \pit{t}, h} -
\berr{\ft{t}, \pit{t}, h }| \le \frac{\epsilon}{8H}
\end{align}
throughout the execution of the algorithm. Assume further that
$f^\star$ is never eliminated. Then in any iteration $t$, one of the
following two statements holds: 
\begin{itemize}
\item[(i)] the algorithm does not terminate and
\begin{align}
\label{eq:ht}
\berr{\ft{t}, \pit{t}, h_t} \ge \frac{\epsilon}{2H},
\end{align}
\item[(ii)] the algorithm terminates and the output policy
  $\pit{t}$ satisfies $\Vpi{\pit{t}} \ge \vstar -
  \epsilon$.
\end{itemize}
\end{lemma}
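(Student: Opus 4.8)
The plan is to split on whether the termination test on Line~\ref{lin:detected} (i.e. $\sum_{h=1}^H \dtctberr{\ft{t}, \pit{t}, h} \le 5\epsilon/8$) succeeds at iteration $t$, and to handle the two cases separately. In both cases the deviation bounds in Eq.~\eqref{eq:vf_vpi} let me pass between the empirical quantities the algorithm actually computes and their population counterparts, so the whole argument reduces to bookkeeping of the $\epsilon/8$ and $\epsilon/(8H)$ slacks together with one application of the policy-loss decomposition in Lemma~\ref{lem:bellman_decomposition}. Throughout I use that $\pi_{\ft{t}} = \pit{t}$ by construction on Line~\ref{lin:opt}.

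For case (i), suppose the algorithm does not terminate, so $\sum_{h=1}^H \dtctberr{\ft{t}, \pit{t}, h} > 5\epsilon/8$. A pigeonhole/averaging argument over the $H$ levels then shows some $h_t$ satisfies $\dtctberr{\ft{t}, \pit{t}, h_t} \ge 5\epsilon/(8H)$ (this is precisely the existence claim asserted on Line~\ref{lin:detected}). Subtracting the per-level deviation bound $\epsilon/(8H)$ from Eq.~\eqref{eq:vf_vpi} converts this into the population statement $\berr{\ft{t}, \pit{t}, h_t} \ge 5\epsilon/(8H) - \epsilon/(8H) = \epsilon/(2H)$, which is exactly Eq.~\eqref{eq:ht}.

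For case (ii), suppose the algorithm terminates, so $\sum_{h=1}^H \dtctberr{\ft{t}, \pit{t}, h} \le 5\epsilon/8$. Summing the per-level slack gives an aggregate error $H \cdot \epsilon/(8H) = \epsilon/8$, hence $\sum_{h=1}^H \berr{\ft{t}, \pit{t}, h} \le 3\epsilon/4$. Applying Lemma~\ref{lem:bellman_decomposition} to $\ft{t}$ rewrites the left-hand side as $\Vf{\ft{t}} - \Vpi{\pit{t}}$, so $\Vpi{\pit{t}} \ge \Vf{\ft{t}} - 3\epsilon/4$. The remaining work is to lower-bound the optimistic prediction $\Vf{\ft{t}}$ by $\vstar$. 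Since $f^\star$ is assumed never eliminated it lies in $\Fcal_{t-1}$, and $\ft{t}$ maximizes $\empVf{\cdot}$ over $\Fcal_{t-1}$, so $\empVf{\ft{t}} \ge \empVf{f^\star}$; two applications of the value deviation bound yield $\Vf{\ft{t}} \ge \Vf{f^\star} - \epsilon/4$. Finally, validity of $f^\star$ gives $\berr{f^\star, \pi_{f^\star}, h} = 0$ for every $h$, so Lemma~\ref{lem:bellman_decomposition} forces $\Vf{f^\star} = \Vpi{\pi_{f^\star}} = \vstar$. Chaining, $\Vpi{\pit{t}} \ge \Vf{\ft{t}} - 3\epsilon/4 \ge \vstar - \epsilon/4 - 3\epsilon/4 = \vstar - \epsilon$, as claimed.

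I expect no genuine obstacle: once Lemma~\ref{lem:bellman_decomposition} is available the argument is mechanical, and the only real content is the optimism step, namely that the optimistically-chosen prediction $\Vf{\ft{t}}$ dominates $\vstar$ because $f^\star$ survives every elimination and its prediction is self-consistent ($\Vf{f^\star}=\vstar$). The one point requiring care is tracking the deviation slacks so they line up exactly with the $5\epsilon/8$ and $5\epsilon/(8H)$ thresholds hard-coded into the algorithm; this is pure constant-chasing rather than a conceptual difficulty.
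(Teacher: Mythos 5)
Your proof is correct and follows essentially the same route as the paper's: case-split on the termination test, deviation-bound bookkeeping against the $5\epsilon/8$ and $5\epsilon/(8H)$ thresholds, Lemma~\ref{lem:bellman_decomposition} to convert the summed Bellman errors into policy loss, and optimism via the surviving $f^\star$ with $\Vf{f^\star}=\vstar$. The only (immaterial) difference is that the paper routes the optimism chain through the intermediate population maximizer $f_{\max}=\argmax_{f\in\Fcal_{t-1}}\Vf{f}$ before comparing to $f^\star$, whereas you compare $\empVf{\ft{t}}\ge\empVf{f^\star}$ directly, arriving at the same $\vstar-\epsilon$ bound with identical constants.
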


The lemma guarantees that the policy $\pi_t$ used at iteration $t$ in
\cdplearn has sufficiently large Bellman error on at least one of the
levels, provided that the two conditions in Equation~\eqref{eq:vf_vpi}
are met. These conditions require that (1) we have reasonably accurate
value function estimates from Line~\ref{lin:init_eval}, and (2) we
collect enough samples in Line~\ref{lin:eval} to form reliable Bellman
error estimates under $f_t$ at each level $h$. The result of
Theorem~\ref{thm:cdp_complexity} can then be obtained using two
further ingredients. First, we need to make sure that the first case
in Lemma~\ref{lem:optimism_explore} does not happen too many
times. Second, we need to collect enough samples in
Lines~\ref{lin:init_eval} and~\ref{lin:eval} to ensure the
preconditions in Equation~\eqref{eq:vf_vpi}. We first establish a
bound on the number of iterations using the \brankem of the problem,
before moving on to sample complexity questions.

\begin{lemma}[Iteration complexity]
\label{lem:vol}
If $\empberr{f, \pit{t}, h_t }$ in Eq.~\eqref{eq:estm_berr} always satisfies
\begin{align}
|\empberr{f, \pit{t}, h_t } - \berr{f, \pit{t}, h_t }| \le \phi
\label{eq:learning_works}
\end{align}
throughout the execution of the algorithm ($\phi$ is the threshold in the elimination criterion), then $f^\star$ is never eliminated. Furthermore, for any particular level $h$, if whenever $h_t=h$ we have
\begin{align}
|\berr{\ft{t},\pit{t}, h_t}| \ge 6\sqrt{M}\phi, \label{eq:large_violation}
\end{align}
then the number of iterations that $h_t=h$ is at most
$
M\log \left(\frac{\normbound}{2\phi}\right) / \log\frac{5}{3}.
$
\end{lemma}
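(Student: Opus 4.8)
The plan is to prove the two claims separately; the first is short and the second is the technical heart of the lemma.

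\textbf{$f^\star$ is never eliminated.} Since $f^\star$ is valid (Definition~\ref{def:valid}), $\berr{f^\star,\pi_{f'},h}=0$ for every $f'$ and $h$. Every roll-in used by the algorithm has the form $\pit{t}=\pi_{\ft{t}}$, so at each elimination step $\berr{f^\star,\pit{t},h_t}=0$. Under the accuracy hypothesis~\eqref{eq:learning_works}, $|\empberr{f^\star,\pit{t},h_t}|\le|\berr{f^\star,\pit{t},h_t}|+\phi=\phi$, so $f^\star$ always passes the retention test in Eq.~\eqref{eq:learning_step} and survives every iteration.

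\textbf{Iteration bound at a fixed level.} Fix $h$ and let $t_1<t_2<\cdots<t_n$ be the iterations with $h_t=h$; abbreviate $u_i=\pvec_h(\ft{t_i})$ and $v_i=\bvec_h(\ft{t_i})$, so by Definition~\ref{def:brank} every relevant Bellman error becomes an inner product $\langle u_i,v_j\rangle$. I would first record two facts. (a) \emph{Signal}: the triggering condition~\eqref{eq:large_violation} gives $|\langle u_j,v_j\rangle|=|\berr{\ft{t_j},\pit{t_j},h}|\ge 6\sqrt{M}\phi$. (b) \emph{Survivor smallness}: since $\ft{t_j}$ is still alive at the earlier iteration $t_i$ ($i<j$), it passed the retention test there, so by~\eqref{eq:learning_works} $|\langle u_i,v_j\rangle|=|\berr{\ft{t_j},\pit{t_i},h}|\le 2\phi$ for all $i<j$. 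Finally, using that only the product $\|\pvec_h\|_2\,\|\bvec_h\|_2$ is bounded by $\normbound$, I would globally rescale the factorization so that $\|u_i\|_2\le\sqrt{\normbound}$ and $\|v_j\|_2\le\sqrt{\normbound}$ for all $i,j$, which leaves every inner product unchanged.

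The core is a volumetric potential argument showing the $u_i$ behave like an almost-independent set, so their number is controlled by the ambient dimension $M$ up to a logarithmic factor. Define $\Sigma_0=\lambda I_M$ and $\Sigma_j=\Sigma_{j-1}+u_j u_j^\top$ for a regularizer $\lambda$ tuned to $\phi,\normbound,M$. By the matrix determinant lemma, $\det\Sigma_j=\det\Sigma_{j-1}\,(1+u_j^\top\Sigma_{j-1}^{-1}u_j)$, so it suffices to show each step multiplies the determinant by at least $5/3$, i.e. $u_j^\top\Sigma_{j-1}^{-1}u_j\ge 2/3$. Writing $v_j^\perp$ for the component of $v_j$ orthogonal to $\mathrm{span}\{u_i:i<j\}$, the variational bound $u_j^\top\Sigma_{j-1}^{-1}u_j\ge\langle u_j,v_j^\perp\rangle^2/\big((v_j^\perp)^\top\Sigma_{j-1}v_j^\perp\big)$ together with $\Sigma_{j-1}v_j^\perp=\lambda v_j^\perp$ reduces the task to showing that $v_j$ certifies a genuinely new direction: survivor smallness (b) forces the in-span part of $v_j$ to interact only weakly with $u_j$, while the signal (a) keeps $|\langle u_j,v_j^\perp\rangle|$ of order $\sqrt{M}\phi$, so the factor $\sqrt{M}$ in~\eqref{eq:large_violation} is exactly what is needed to beat the dimension and produce a constant multiplicative gain. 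Finally, from $\mathrm{tr}\,\Sigma_n\le\lambda M+n\normbound$ and the arithmetic--geometric mean inequality on eigenvalues, $\det\Sigma_n\le(\lambda+n\normbound/M)^M$; comparing $(5/3)^n\det\Sigma_0\le\det\Sigma_n$ with this upper bound and simplifying yields $n\le M\log(\normbound/2\phi)/\log(5/3)$.

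The main obstacle is the per-step constant-factor growth, the robust replacement for the brittle linear-independence sketch of Section~\ref{sec:alg_alg}. The delicate point is that a single fixed witness such as $v_j$ need not be well conditioned against $\Sigma_{j-1}$, since its in-span part can align with high-eigenvalue directions; the challenge is to show that projecting onto the orthogonal complement of the previous $u_i$ loses only a controlled fraction of the signal, so that $|\langle u_j,v_j^\perp\rangle|$ remains of order $\sqrt{M}\phi$. Making the constants line up—so that survivor smallness $2\phi$, signal $6\sqrt{M}\phi$, and the choice of $\lambda$ combine to give exactly the $5/3$ factor and the clean $\log(\normbound/2\phi)$—is where the careful bookkeeping lies; the rest is routine.
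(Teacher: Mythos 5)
Your first claim ($f^\star$ survives) is correct and matches the paper. The gap is in the volumetric step, and it is exactly at the point you flag as ``delicate'': the assertion that the signal survives projection, i.e.\ that $|\langle u_j, v_j^\perp\rangle|$ remains of order $\sqrt{M}\phi$, is simply false, and no bookkeeping fixes it. Concretely, take $M\ge 2$, $L=\sqrt{\normbound}$, previous roll-in embeddings $u_1 = L e_1$ and $u_2 = L(\cos\theta\, e_1 + \sin\theta\, e_2)$ with $\sin\theta = \tfrac{1}{3\sqrt{M}}$, witness $v_j = \tfrac{6\sqrt{M}\phi}{L}\, e_2$, and $u_j = L e_2$. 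Then $\langle u_1, v_j\rangle = 0$ and $\langle u_2, v_j\rangle = 2\phi$ (survivor smallness holds with equality), while $\langle u_j, v_j\rangle = 6\sqrt{M}\phi$ (full signal); yet $v_j \in \mathrm{span}\{u_1,u_2\}$, so $v_j^\perp = 0$ and your variational lower bound yields nothing. When the previous $u_i$ are ill-conditioned, the entire signal can live in their span, so the in-span part of $v_j$ cannot be discarded. The natural fallback---Cauchy--Schwarz with the unprojected witness, $\langle u_j,v_j\rangle^2 \le (u_j^\top \Sigma_{j-1}^{-1}u_j)(v_j^\top \Sigma_{j-1} v_j)$---only gives $u_j^\top\Sigma_{j-1}^{-1}u_j \ge 36M\phi^2/(\lambda\normbound + 4(j-1)\phi^2)$, because $v_j^\top\Sigma_{j-1}v_j$ accumulates one $(2\phi)^2$ term per past cut. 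This gain decays like $M/j$ once $j \gtrsim M$, so the per-step constant factor $5/3$ cannot be sustained; summing the degraded gains against the determinant upper bound $(\lambda + n\normbound/M)^M$ yields an iteration bound that is \emph{polynomial} in $\normbound/\phi$, not the stated $M\log\bigl(\frac{\normbound}{2\phi}\bigr)/\log\frac{5}{3}$, which would pollute the final sample complexity with extra $\mathrm{poly}(1/\epsilon)$ factors.

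The paper avoids this by running the potential in the $\bvec$-space rather than the $\pvec$-space, and---crucially---in an \emph{adaptive} metric. It maintains the minimum volume enclosing ellipsoid $B_\tau$ of the set of surviving $\bvec_h(f)$ vectors; each triggered iteration intersects this set with the symmetric slab $\{v : |p_\tau^\top v|\le 2\phi\}$, and the surviving witness with $|p_\tau^\top v| \ge 6\sqrt{M}\phi$ certifies $\sqrt{p_\tau^\top B_{\tau-1} p_\tau} \ge 6\sqrt{M}\phi$, i.e.\ the slab is narrow \emph{as measured in the current ellipsoid's norm} ($\gamma/\kappa = 1/(3\sqrt{M}) \le 1/\sqrt{M}$). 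Todd's theorem on the MVEE of an ellipsoid cut by a symmetric slab (Theorem~\ref{thm:todd}, Lemma~\ref{lem:todd_use}, Fact~\ref{lem:0.6}) then gives a volume ratio of at most $3/5$ per cut, and the final volume is bounded below by the ball of radius $2\phi/\primalbound$. The re-shaping of the MVEE after each cut is precisely what neutralizes the ill-conditioned configurations above, and is what your fixed $\lambda I$-regularized Gram potential lacks; repairing your argument essentially amounts to reproducing this ellipsoid-method analysis.
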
  

Precondition~\eqref{eq:learning_works} simply posits that we collect
enough samples for reliable Bellman error estimation in
Line~\ref{lin:explore}. Intuitively, since $f^\star$ has no
Bellman error, this is sufficient to ensure that it is never
eliminated. Precondition~\eqref{eq:large_violation} is naturally
satisfied by the exploration policies $\pi_t$ given
Lemma~\ref{lem:optimism_explore}. Given this, the above lemma bounds
the number of iterations at which we can find a large Bellman error at
any particular level.

The intuition
behind this claim is most clear in the POMDP setting of
Proposition~\ref{prop:reactive_pomdp}.   
In this case, 
$\pvec_h(f')$ in Definition~\ref{def:brank} corresponds to the distribution over hidden states induced by $\pi_{f'}$ at level $h$.
At iteration $t$, the exploration policy $\pi_{\ft{t}}$ induces such a
hidden-state distribution $p = \pvec_{h}(\ft{t})$ at the chosen level
$h = h_t$, which results in the elimination of all functions that
have large Bellman error on $p$. Thanks to the \bfac, 
this
corresponds to the elimination of all $f$ with a large $|p^\trans
\bvec_{h}(f)|$, where $\bvec_{h}(f)$ is also defined in
Definition~\ref{def:brank}.  In this case, it can be easily shown that
$\bvec_{h}(f) \in [-2,2]^M$, so the space of all such vectors
$\{\bvec_{h}(f): f\in\Fcal\}$ at each level $h$ is originally
contained in an $\ell_\infty$ ball in $M$ dimensions with radius $2$,
and, whenever $h_t=h$, we intersect this set with two parallel
halfspaces. Via a geometric argument adapted from \citet{todd1982minimum}, we show that each such
intersection reduces the volume of the space by a multiplicative factor of $3/5$. 
We also show that the volume is bounded from below, hence volume reduction cannot occur indefinitely. 
Together, these two facts lead to the iteration complexity upper bound
in Lemma~\ref{lem:vol}. The mathematical techniques used here are
analogous to the analysis of the Ellipsoid method in linear
programming. \nan{Rob: a good reference for this sentence?}

Finally, we need to ensure that the number of samples collected in
each of Lines~\ref{lin:init_eval}, \ref{lin:eval},
and~\ref{lin:explore} of \cdplearn can be upper
bounded, which yields the overall PAC learning result in
Theorem~\ref{thm:cdp_complexity}. The next three lemmas present precisely
the deviation bounds required for this argument. The first two
follow from simple applications of Hoeffding's inequality.

\begin{lemma}[Deviation bound for $\empVf{f}$]
\label{lem:devVf}
With probability at least $1-\delta$, 
$$
|\empVf{f} - \Vf{f}| \le \sqrt{\frac{1}{2\nest}\log\frac{2N}{\delta}}
$$
holds for all $f\in\Fcal$ simultaneously. 
Hence, we can set 
$
\nest \ge \frac{32}{\epsilon^2}\log\frac{2N}{\delta}
$ 
to guarantee that $|\empVf{f} - \Vf{f}|\le \epsilon/8$.
\end{lemma}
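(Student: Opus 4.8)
The plan is a textbook application of Hoeffding's inequality combined with a union bound over the finite class $\Fcal$. The crucial structural observation is that the initial contexts $\{x_1^{(i)}\}_{i=1}^{\nest}$ recorded in Line~\ref{lin:init_eval} are i.i.d.\ draws from $P_{\treeroot}$: even though the $\nest$ trajectories are collected with actions chosen arbitrarily, each $x_1^{(i)} \sim P_{\treeroot}$ is produced before any action is taken, so its law is unaffected by the action-selection rule. Therefore, for a fixed $f \in \Fcal$, the quantities $f(x_1^{(i)}, \pi_f(x_1^{(i)}))$ are i.i.d.\ copies of $f(x_1, \pi_f(x_1))$, whose mean is exactly $\Vf{f}$ by the definition introduced in Lemma~\ref{lem:bellman_decomposition}, and $\empVf{f}$ is precisely their empirical average.

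First I would fix a single $f$. Since $f$ takes values in $[0,1]$, each summand lies in $[0,1]$, so $\empVf{f}$ averages $\nest$ independent variables of range $1$ with common mean $\Vf{f}$. Hoeffding's inequality then yields $\Pr\!\left[\,|\empVf{f} - \Vf{f}| > t\,\right] \le 2\exp(-2\nest t^2)$ for every $t>0$. Choosing $t = \sqrt{\tfrac{1}{2\nest}\log\tfrac{2N}{\delta}}$ makes the right-hand side equal to $\delta/N$, and a union bound over all $N=|\Fcal|$ functions gives the claimed uniform deviation bound with total failure probability at most $\delta$.

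For the parameter prescription I would substitute $\nest = \tfrac{32}{\epsilon^2}\log\tfrac{2N}{\delta}$ into the deviation bound: this gives $\tfrac{1}{2\nest}\log\tfrac{2N}{\delta} = \tfrac{\epsilon^2}{64}$, so the deviation is at most $\sqrt{\epsilon^2/64} = \epsilon/8$, as asserted. There is no substantive obstacle in this lemma; the only point requiring any care is confirming both the mutual independence of the summands and their $[0,1]$ range, so that Hoeffding applies with range parameter $1$ and the stated constants come out exactly. After that, the argument is entirely mechanical.
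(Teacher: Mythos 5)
Your proposal is correct and matches the paper's own proof, which likewise applies Hoeffding's inequality with the union bound over the $N$ functions after verifying that $\Vf{f}$ is the mean of $\empVf{f}$ and that the summands lie in $[0,1]$. Your added observation that each $x_1^{(i)} \sim P_{\treeroot}$ independently of the arbitrary action choices is a worthwhile explicit check that the paper leaves implicit, and your constant-tracking ($t = \sqrt{\tfrac{1}{2\nest}\log\tfrac{2N}{\delta}}$ yielding failure probability $\delta/N$ per function, and $\nest = \tfrac{32}{\epsilon^2}\log\tfrac{2N}{\delta}$ yielding deviation $\epsilon/8$) is exactly right.
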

This controls the number of samples required in
Line~\ref{lin:init_eval}.

\begin{lemma}[Deviation bound for $\dtctberr{\ft{t}, \pit{t}, h}$]
\label{lem:devVht}
For any fixed $\ft{t}$, with probability at least $1-\delta$, 
$$
|\empberr{\ft{t}, \pit{t}, h} - \berr{\ft{t}, \pit{t}, h}| \le 3\sqrt{\frac{1}{2\neval}\log\frac{2H}{\delta}}
$$
holds for all $h\in[H]$ simultaneously. Hence, we can set 
$
\neval \ge \frac{288 H^2}{\epsilon^2}\log\frac{2H}{\delta}
$ 
to guarantee that $|\dtctberr{\ft{t}, \pit{t}, h} - \berr{\ft{t},  \pit{t},
  h}|\le \frac{\epsilon}{8H}$. 
\end{lemma}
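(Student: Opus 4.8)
The plan is to prove this as a direct application of Hoeffding's inequality, in close parallel to Lemma~\ref{lem:devVf}, differing only in that the per-trajectory summands here have range $3$ rather than $1$, and that the union bound is taken over the $H$ levels rather than over the $N$ functions. The crux is first to check that, conditioned on the fixed choice of $\ft{t}$, the estimate $\dtctberr{\ft{t},\pit{t},h}$ in Eq.~\eqref{eq:ft_bellman} is an average of $\neval$ i.i.d.\ and \emph{unbiased} samples of $\berr{\ft{t},\pit{t},h}$.

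First I would fix a level $h\in[H]$ and, for each trajectory $i$ collected in Line~\ref{lin:eval}, set
\[
Z_i \;=\; \ft{t}(x_h^{(i)}, a_h^{(i)}) - r_h^{(i)} - \ft{t}(x_{h+1}^{(i)}, a_{h+1}^{(i)}),
\]
so that $\dtctberr{\ft{t},\pit{t},h} = \frac{1}{\neval}\sum_{i=1}^{\neval} Z_i$. Since these trajectories follow $\pit{t}=\pi_{\ft{t}}$ at every step, the conditioning of Definition~\ref{def:berr} (roll-in $a_{1:h-1}\sim\pit{t}$ and $a_{h:h+1}\sim\pi_{\ft{t}}$) is realized exactly, giving $\EE[Z_i]=\berr{\ft{t},\pit{t},h}$; as each trajectory is an independent draw from the system descriptor, the $Z_i$ are i.i.d. Using $\ft{t}\in[0,1]$ and $r_h^{(i)}\in[0,1]$ (by Assumption~\ref{ass:bounded}), each $Z_i$ lies in $[-2,1]$, an interval of width $3$.

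Next I would apply Hoeffding's inequality to the i.i.d., bounded sequence $Z_1,\dots,Z_{\neval}$, obtaining for every $t>0$
\[
\Pr\!\left(\left|\dtctberr{\ft{t},\pit{t},h} - \berr{\ft{t},\pit{t},h}\right|\ge t\right)\;\le\; 2\exp\!\left(-\frac{2\neval t^2}{9}\right).
\]
Choosing $t = 3\sqrt{\frac{1}{2\neval}\log\frac{2H}{\delta}}$ makes the right-hand side equal to $\delta/H$, and a union bound over the $H$ levels yields the simultaneous guarantee with total failure probability at most $\delta$. Crucially, this union is over levels only: Line~\ref{lin:detect} evaluates the estimate at the single, already-fixed function $\ft{t}$ selected in Line~\ref{lin:opt} from data gathered before the fresh trajectories of Line~\ref{lin:eval}, so no $\log N$ factor is incurred and the bound carries $\log\frac{2H}{\delta}$ rather than $\log\frac{2NH}{\delta}$. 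To reach accuracy $\epsilon/(8H)$ it then suffices to enforce $3\sqrt{\frac{1}{2\neval}\log\frac{2H}{\delta}}\le \epsilon/(8H)$; squaring and rearranging gives $\neval\ge \frac{288H^2}{\epsilon^2}\log\frac{2H}{\delta}$, as stated.

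There is no genuine obstacle here beyond routine bookkeeping. The two points that require care are (i) the unbiasedness, namely that following $\pit{t}$ throughout matches the mixed roll-in/greedy conditioning of Definition~\ref{def:berr}, which holds precisely because $\pit{t}=\pi_{\ft{t}}$ so that $a_h$ and $a_{h+1}$ are themselves greedy with respect to $\ft{t}$; and (ii) tracking the range-$3$ bound on each $Z_i$, which produces the factor $3$ in the deviation bound and hence the constant $288 = 9\cdot 64/2$ in the sample size. The analogous factor was $1$ in Lemma~\ref{lem:devVf}, because there the summand $\ft{t}(x_1,\pi_{\ft{t}}(x_1))$ lies in $[0,1]$.
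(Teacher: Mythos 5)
Your proof is correct and follows exactly the paper's approach: Hoeffding's inequality for the i.i.d.\ per-trajectory Bellman residuals (unbiased because $\pit{t}=\pi_{\ft{t}}$ realizes the conditioning of Definition~\ref{def:berr}), a union bound over the $H$ levels only, and no $\log N$ factor since $\ft{t}$ is fixed before the fresh data of Line~\ref{lin:eval} is drawn. Your range $[-2,1]$ for each summand is in fact the accurate one (the paper's proof states $[-1,2]$, but either way the width is $3$, which is all the bound uses).
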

This lemma can be seen as the sample complexity at each iteration in
Line~\ref{lin:eval}. Note that no union bound over $\Fcal$ is needed
here, since Line~\ref{lin:eval} only estimates the average Bellman
error for a single function, which is fixed before data is
collected. Finally, we bound the sample complexity of the learning
step.

\begin{lemma}[Deviation bound for $\empberr{f, \pit{t}, h_t }$]
\label{lem:phi}
For any fixed $\pit{t}$ and $h_t$, with probability at least
$1-\delta$,
$$ |\empberr{f,\pit{t}, h_t} - \berr{f,\pit{t}, h_t}| \le
\sqrt{\frac{8K\log\frac{2N}{\delta}}{\ntrain}} +
\frac{2K\log\frac{2N}{\delta}}{\ntrain}
$$ 
holds for all $f\in\Fcal$ simultaneously.  Hence, we can set $
\ntrain \ge \frac{32K}{\phi^2}\log\frac{2N}{\delta} $ to guarantee
that $|\empberr{f,\pit{t}, h_t} - \berr{f,\pit{t}, h_t}|\le
\phi$ as long as $\phi \le 4$.
\end{lemma}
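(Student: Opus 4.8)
The plan is to recognize $\empberr{f,\pit{t},h_t}$ as an empirical mean of $\ntrain$ i.i.d.\ terms and apply \textbf{Bernstein's} inequality rather than Hoeffding's, exploiting the fact that importance weighting inflates the \emph{range} of each term by a factor of $K$ but only its \emph{second moment} by a factor of $K$ (not $K^2$); this is exactly what saves the extra factor of $K$ in the final sample complexity. First I would fix $f$, $\pit{t}$, and $h_t$ (these are fixed before the data in Line~\ref{lin:explore} is collected, so the trajectories are i.i.d.\ conditioned on them), and write the $i$-th summand as $Z_i = K\,\mathbf{1}[a_{h_t}^{(i)}=\pi_f(x_{h_t}^{(i)})]\,\Delta_i$, where $\Delta_i = f(x_{h_t}^{(i)},a_{h_t}^{(i)}) - r_{h_t}^{(i)} - f(x_{h_t+1}^{(i)},\pi_f(x_{h_t+1}^{(i)}))$. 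Since $a_{h_t}^{(i)}$ is drawn uniformly over the $K$ actions while everything else follows $\pit{t}$, the importance weight corrects the sampling distribution: $\EE[Z_i] = \berr{f,\pit{t},h_t}$, matching Definition~\ref{def:berr} with roll-in $\pit{t}$ up to level $h_t-1$ and $a_{h_t:h_t+1}\sim\pi_f$. So $\empberr{f,\pit{t},h_t}$ is an unbiased estimate and the task reduces to a concentration bound for a mean of bounded-variance i.i.d.\ variables.

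Next I would bound the range and variance of $Z_i$. By Assumption~\ref{ass:bounded} we have $r_{h_t}\in[0,1]$, and since $f\in[0,1]$ this gives $\Delta_i\in[-2,1]$, hence $|Z_i|\le 2K$. The crucial computation is the second moment: because the indicator is nonzero with probability exactly $1/K$,
\[
\EE[Z_i^2] = K^2\,\EE\!\left[\mathbf{1}[a_{h_t}=\pi_f(x_{h_t})]\,\Delta_i^2\right] = K\,\EE\!\left[\Delta_i^2 \mid a_{h_t}=\pi_f(x_{h_t})\right] \le 4K,
\]
so $\mathrm{Var}(Z_i)\le 4K$, which is linear in $K$ even though the range is $2K$. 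This gap between range and variance is the entire content of the lemma.

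Finally I would invoke Bernstein's inequality for the single fixed $f$: with variance proxy $\sigma^2 = 4K$ and range $2K$, it yields, with probability at least $1-\delta'$, a deviation of order $\sqrt{2\sigma^2\log(2/\delta')/\ntrain} + (2/3)\cdot 2K\log(2/\delta')/\ntrain$, whose leading term is $\sqrt{8K\log(2/\delta')/\ntrain}$. Taking a union bound over all $N$ functions in $\Fcal$ with $\delta'=\delta/N$ replaces $\log(2/\delta')$ by $\log(2N/\delta)$ and gives the stated two-term bound. To conclude the ``Hence'' clause, I would substitute $\ntrain = 32K\log(2N/\delta)/\phi^2$: the first term becomes $\phi/2$ exactly, and the second becomes $\phi^2/16 \le \phi/2$ whenever $\phi\le 8$ (so $\phi\le 4$ certainly suffices), yielding a total deviation at most $\phi$. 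I do not expect any serious obstacle here; the only point requiring care is the variance computation above, since applying Hoeffding with the naive range $2K$ would produce a $K$-dependence of $K/\sqrt{\ntrain}$ and thus an $\ntrain = O(K^2/\phi^2)$ requirement, a factor of $K$ worse than what Bernstein delivers.
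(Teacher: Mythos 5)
Your proposal is correct and matches the paper's own proof essentially step for step: both establish unbiasedness of the importance-weighted estimator, bound the second moment by $4K$ by conditioning on the indicator event of probability $1/K$, apply Bernstein's inequality with a union bound over the $N$ functions, and plug in $\ntrain$ to get deviation $\phi/2 + \phi^2/16 \le \phi$ under $\phi \le 4$ (the paper uses the centered range $[-3K,3K]$, giving the slightly larger second-order constant $2K$ in the stated bound, which your version also satisfies). No gaps.
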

This lemma uses Bernstein's inequality to exploit the small variance of the importance
weighted estimates. 


\subsection{Proof of Theorem~\ref{thm:cdp_complexity}}

Suppose the preconditions of Lemma~\ref{lem:optimism_explore} (Eq.~\eqref{eq:vf_vpi}) and Lemma~\ref{lem:vol} (Eq.~\eqref{eq:learning_works}) hold; we show them via concentration inequalities later. Applying Lemma~\ref{lem:optimism_explore}, in every iteration $t$ before the algorithm terminates, 
$$
\berr{\ft{t},  \pi_{t},h_t } \ge \frac{\epsilon}{2H} =  6\sqrt{M}\phi,
$$
due to the choice of $\phi$. For level $h = h_t$, Eq.~\eqref{eq:large_violation} is satisfied. According to Lemma~\ref{lem:vol}, the event $h_t = h$ can happen at most $M\log \left(\frac{\normbound}{2\phi}\right)/ \log\frac{5}{3}$ times for every $h\in[H]$. Hence, the total number of iterations in the algorithm is at most 
$$
H   M\log \left(\frac{\normbound}{2\phi}\right)/ \log\frac{5}{3} = H M\log \left(\frac{6H\sqrt{M} \normbound}{\epsilon}\right) / \log\frac{5}{3}.
$$
Now we are ready to apply the concentration inequalities to show that Eq.~\eqref{eq:vf_vpi} and \eqref{eq:learning_works} hold with high probability. We split the total failure probability $\delta$ among the following estimation events:
\begin{enumerate}
\item Estimation of $\empVf{f}$ (Lemma~\ref{lem:devVf}; only once): $\delta / 3$.
\item Estimation of $\dtctberr{\ft{t},\pit{t},h}$ (Lemma~\ref{lem:devVht}; every iteration):  $\delta / \left( 3H M\log \left(\frac{6H\sqrt{M} \normbound}{\epsilon}\right) / \log\frac{5}{3}\right)$.
\item Estimation of $\empberr{f, \pit{t},h_t}$ (Lemma~\ref{lem:phi}; every iteration): same as above.
\end{enumerate}
Since these events happen in a particular sequence, the proof actually
bounds the probability of these failure events conditioned on all
previous events succeeding. This imposes no technical challenge as
fresh data is collected for every event, so it effectively reduces to
a standard union bound.

Applying Lemmas~\ref{lem:devVf}, \ref{lem:devVht}, and  \ref{lem:phi} with the above failure probabilities, we can verify that the choices of $\nest, \neval,$ and $\ntrain$ in the algorithm statement satisfy the preconditions of Lemmas~\ref{lem:optimism_explore} and \ref{lem:vol}. Finally, we upper bound the total number of episodes as
\begin{align*}
&~ \nest + \neval \cdot H M\log \left(\frac{6H\sqrt{M} \normbound}{\epsilon}\right) / \log\frac{5}{3}  + \ntrain \cdot H M\log \left(\frac{6H\sqrt{M} \normbound}{\epsilon}\right) / \log\frac{5}{3} \\
= &~ \tilde O\left(\frac{\log(N/\delta)}{\epsilon^2} + \frac{MH^3}{\epsilon^2}\log(\normbound/\delta) + \frac{M^2 H^3 K}{\epsilon^2}\log(N\normbound/\delta)\right)
= \tilde O\left(\frac{M^2 H^3 K}{\epsilon^2}\log(N\normbound/\delta)\right).
\tag*{\qedhere}
\end{align*}

\section{Conclusions and Discussions}

In this paper, we presented a new model for RL with rich observations,
called \modellongs, and a structural property, the \bfac, of these
models that enables sample-efficient learning. The unified approach
allows us to address several settings of practical interest that have
largely eluded RL theory to date. Via extensions of the main result,
we also demonstrated that the techniques are quite robust and degrade
gracefully with violation of assumptions. 
\nan{``While the results are an exciting development for RL theory'' Rob: be more modest and let the reader decide if the results are exciting.}
These results also elicit several
further questions:

\begin{enumerate}
  \item Can we obtain a computationally efficient algorithm for some
    form of this setting?  Prior related work (for instance in
    contextual bandits~\citep{dudik2011efficient,agarwal2014taming})
    used supervised learning oracles for computationally efficient
    approaches. Is there a suitable oracle for this setting?
  \item The sample complexity depends polynomially on the cardinality
    of the action space. Can we extend the results to handle large or
    continuous action spaces?
  \item Can we address sample-efficient RL given only a policy class
    rather than a value function class? Empirical approaches in policy
    search often rely on policy gradients, which are subject to local
    optima. Are there parallel results to this work, without
    access to value functions?
\end{enumerate}

Understanding these questions is a key bridge between RL theory and
practice, and is critical to success in challenging reinforcement
learning problems.

\newpage

\appendix
\section*{Appendix}

\section{Lower Bounds}

\subsection{An Exponential Lower Bound}
\label{app:lower_exp}
We include a result from \citet{krishnamurthy2016contextual} to formally show that, without making additional assumptions, the sample complexity of value-based RL for CDPs as introduced in Section~\ref{sec:cdp} has a lower bound of order $K^H$.

\begin{proposition}[Restatement of Proposition 2 in
    \citet{krishnamurthy2016contextual}] \label{prop:exp_lower_bound}
  For any $H, K \in \NN$ with $K\ge 2$, and any $\epsilon \in (0,
  \sqrt{1/8})$, there exists a family of finite-horizon MDPs with
  horizon $H$ and $|\Acal|=K$, such that we can construct a function
  space $\Fcal$ with $|\Fcal| = K^H$ to guarantee that $Q^\star \in
  \Fcal$ for all MDP instances in the family, yet there exists a
  universal constant $c$ such that for any algorithm and any $T \le
  cK^H / \epsilon^2$, the probability that the algorithm outputs a
  policy $\hat \pi$ with $V^{\hat \pi} \ge V^\star - \epsilon$ after
  collecting $T$ trajectories is at most $2/3$ when the problem
  instance is chosen from the family by an adversary.
\end{proposition}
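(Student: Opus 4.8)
The plan is to exhibit a ``needle-in-a-haystack'' family realized as a complete $K$-ary tree of depth $H$, reducing the search for a near-optimal policy to identifying one distinguished root-to-leaf path among the $K^H$ candidates. First I would build the MDPs: take the states to be the nodes of a depth-$H$ complete $K$-ary tree, with deterministic transitions in which action $a$ at a node moves to its $a$-th child, and with the root as the initial context $x_1$. Rich observations are built in because every node is a distinct context, so a trajectory descending one branch reveals nothing about any other branch. All rewards are zero except at the terminal level. For each action sequence $a^\star \in \Acal^H$ I define an instance $M_{a^\star}$ whose terminal reward along the path $a^\star$ is a Bernoulli variable whose mean exceeds the common baseline $\tfrac12$ of every other leaf by an amount of order $\epsilon$, calibrated so that $V^\star$ is attained uniquely by following $a^\star$ and every policy not traversing $a^\star$ is more than $\epsilon$-suboptimal. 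Assumption~\ref{ass:bounded} holds because exactly one reward, lying in $[0,1]$, is collected per episode.

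Next I would construct the function class. For each $a^\star$ let $f_{a^\star}$ be the optimal action-value function $Q^\star$ of $M_{a^\star}$, and set $\Fcal = \{ f_{a^\star} : a^\star \in \Acal^H \}$. This gives $|\Fcal| = K^H$ and guarantees $Q^\star \in \Fcal$ for every instance of the family simultaneously, meeting the realizability requirement of the statement; note that no smaller class suffices precisely because the tree structure forbids generalization across branches.

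The core of the argument is information-theoretic. I would introduce a reference instance $M_0$ in which all leaves pay the baseline reward, so that $M_{a^\star}$ differs from $M_0$ only along the single path $a^\star$. Fix any algorithm collecting $T$ trajectories, and let $P_{a^\star}$ (resp. $P_0$) denote the induced law of the observed transcript under $M_{a^\star}$ (resp. $M_0$). Since each episode descends to exactly one leaf, writing $N_{a^\star}$ for the number of episodes traversing path $a^\star$ we have $\sum_{a^\star} N_{a^\star} = T$ deterministically, and the KL chain rule gives $\mathrm{KL}(P_0 \,\|\, P_{a^\star}) = \EE_0[N_{a^\star}]\cdot c_\epsilon$, where $c_\epsilon$ is the per-leaf KL divergence between the baseline and perturbed reward distributions. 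The per-leaf divergence is $O(\epsilon^2)$, and the admissible range $\epsilon < \sqrt{1/8}$ is exactly what bounds it by $8\epsilon^2$. Averaging over a uniform choice of $a^\star$ and using $\sum_{a^\star}\EE_0[N_{a^\star}] = T$ shows that the mean of $\mathrm{KL}(P_0 \,\|\, P_{a^\star})$ over the $K^H$ instances is at most $8\epsilon^2 T / K^H$, which is an arbitrarily small constant once $T \le c K^H/\epsilon^2$ with $c$ small. A change-of-measure bound (Le Cam's two-point method, or Pinsker's inequality combined with the averaging above, or Fano's inequality) then shows that the algorithm cannot, for more than a small fraction of the paths, reliably output the branch $a^\star$; hence for a uniformly random $a^\star$ the probability of returning an $\epsilon$-optimal policy is at most $2/3$. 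The adversary simply selects the instance on which the success probability is no larger than this average, which yields the claim.

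The main obstacle is the change-of-measure step: one must pass from ``$P_0$ and $P_{a^\star}$ are statistically close on average'' to ``no single algorithm succeeds on more than a $2/3$ fraction of instances,'' while correctly handling the fact that both the sampling strategy and the final output are adaptive, and that the only informative observations are those confined to the path actually traversed in a given episode. Making $\EE_0[N_{a^\star}]$ well defined under the common reference measure $M_0$ and justifying the KL decomposition in the presence of this adaptivity is where the care is needed; the tree structure is precisely what makes $\sum_{a^\star} N_{a^\star} = T$ hold exactly and thereby forces the $K^H$ factor into the bound.
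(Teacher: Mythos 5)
Your proposal is correct and follows essentially the same route as the paper: the identical complete $K$-ary tree construction with deterministic transitions, Bernoulli leaf rewards differing only at one distinguished leaf, and $\Fcal$ assembled from the $K^H$ optimal Q-functions. The only difference is presentational --- where the paper reduces to best-arm identification in a $K^H$-armed bandit and cites the known lower bound (via \citet{krishnamurthy2016contextual} and \citet{auer2002nonstochastic}), you unpack that bandit bound inline with the standard KL-decomposition and change-of-measure argument, which is exactly the content of the cited result.
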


\begin{proof}[Proof sketch for completeness]
The proof relies on the fact that \modelshorts include MDPs where the
state space is arbitrarily large. 
Each instance of the MDP family is a complete tree with
branching factor $K$ and depth $H$. Transition dynamics are
deterministic, and only leaf nodes have non-zero rewards. All leaves
give Ber$(1/2)$ rewards, except for one that gives
Ber$(1/2+\epsilon)$. Changing the position of the most rewarding leaf
node yields a family of $K^H$ MDP instances so collecting 
optimal Q-value functions forms the desired function class
$\Fcal$. Since $\Fcal$ provides no information other than the fact
that the true MDP lies in this family, the problem is equivalent to
identifying the best arm in a multi-arm bandit with $K^H$ arms, and
the remaining analysis follows exactly the same as in
\citet{krishnamurthy2016contextual}.
\end{proof}

\subsection{A Polynomial Lower Bound that Depends on \brankbig}
\label{app:lower_new}

In this section, we prove a new lower bound for layered episodic MDPs
that meet the assumptions we make in this paper.

We first recall some definitions.  A layered episodic MDP is defined
by a time horizon $H$, a state space $\Scal$, partitioned into sets
$\Scal_1, \ldots, \Scal_H$, each of size at most $M$, and an action
space $\Acal$ of size $K$.  The system descriptor is replaced with a
transition function $\Gamma$ that associates a distribution over
states with each state action pair.  More formally, for any $s_h \in
\Scal_h$, and $a \in \Acal$, $\Gamma(s_h,a) \in \Delta(\Scal_{h+1})$.
The starting state is drawn from $\Gamma_1 \in \Delta(\Scal_1)$, and
all transitions from $\Scal_H$ are terminal.

There is also a reward distribution $R$ that associates a random
reward with each state-action pair. We use $r \sim R(s,a)$ to denote
the random instantaneous reward for taking action $a$ at state $s$.
We assume that the cumulative reward $\sum_{h=1}^H r_h \in [0,1]$,
where $r_h$ is the reward received at level $h^{\textrm{th}}$ as in Assumption~\ref{ass:bounded}.

Observe that this process is a special case of the finite-horizon
\modellong and moreover, with the set of all value functions $\Fcal =
(\Scal \times \Acal \rightarrow [0,1])$, admits a \bfac with \brank at most $M$ (by Proposition~\ref{prop:mdp_finite}).
Thus the upper bounds for PAC learning apply directly to this setting.

We now state the lower bound.
\begin{theorem}
Fix $M \ge 4, H,K \ge 2$ and $\epsilon \in
(0,\frac{1}{48\sqrt{8}})$. For any algorithm and any $n \le
cMKH/\epsilon^2$, there exists a layered episodic MDP with $H$ layers,
$M$ states per layer, and $K$ actions, such that the probability that
the algorithm outputs a policy $\hat{\pi}$ with $V(\hat{\pi}) \ge
V^\star - \epsilon$ after collecting $n$ trajectories is at most
$11/12$. Here $c > 0$ is a universal constant.
\end{theorem}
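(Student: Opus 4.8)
The plan is to reduce PAC learning of these layered MDPs to a collection of $MH$ essentially independent best-arm-identification problems, one per (layer, state) pair, and then run an information-theoretic argument in the style of \citet{auer2002nonstochastic}. Concretely, I would construct a family of MDPs whose transitions are action-\emph{independent}: from any state in layer $h$ the next state is uniform over the $M$ states of layer $h+1$, except that with a carefully chosen probability the episode ``stops'' and collects its one and only reward. Tuning the stopping probabilities so that the stopping layer is uniform on $[H]$ makes the rewarded pair a uniformly random element of the $MH$ (layer, state) pairs. At the rewarded pair the agent receives a single $\mathrm{Ber}\bigl(1/2 + \gamma\,\mathbf 1[a = a^\star(s)]\bigr)$ reward, where the gap is $\gamma=\Theta(\epsilon)$ and the planted ``good'' action $a^\star(s)\in\Acal$ is drawn independently and uniformly for each of the $MH$ pairs; every other step gives zero reward. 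This automatically satisfies $\sum_h r_h\le 1$ (a single Bernoulli draw per episode) and yields the clean value identity
\[
V^\star - V^{\pi} = \frac{\gamma}{MH}\,\bigl|\{(h,s): \pi(s)\ne a^\star(s)\}\bigr|,
\]
so a policy is $\epsilon$-optimal only if it names the correct action on all but at most $\epsilon MH/\gamma$ of the $MH$ pairs, i.e.\ on a constant fraction once $\gamma=\Theta(\epsilon)$.

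\textbf{Information-theoretic core.} The essential point --- and the reason the bound carries the full factor $MKH$ rather than only $MK$ --- is that each episode reveals information about exactly one (layer, state) pair, namely the rewarded one; a naive construction that gives an informative reward at every layer would let a single episode advance all $H$ sub-problems at once and would only prove $\Omega(MK/\epsilon^2)$. Given single-pair informativeness, I would apply the Auer-et-al.\ averaging argument to each sub-problem: compare the planted environment against a null environment in which every action has reward mean $1/2$, and use Pinsker's inequality (or Bretagnolle--Huber) together with the divergence decomposition for the induced trajectory measures to bound the change in the probability that the algorithm outputs $a^\star(s)$ at a pair $(h,s)$ by a function of $N_{h,s}$, the expected number of episodes whose reward is collected at $(h,s)$. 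Identifying the good arm among $K$ with gap $\gamma$ requires $N_{h,s}=\Omega(K/\gamma^2)$, yet $\sum_{(h,s)} N_{h,s}=n$ because each episode contributes exactly one pull. A counting argument then shows at most $O(n\gamma^2/K)$ pairs can be adequately sampled, and on the remaining pairs the algorithm's action is essentially independent of the planted $a^\star(s)$ and hence wrong with probability at least $1-1/K\ge 1/2$.

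\textbf{Assembly and the main obstacle.} Putting the pieces together, choosing $n\le cMKH/\epsilon^2$ with $c$ small and $\gamma=\Theta(\epsilon)$ forces only $O(cMH)$ adequately-sampled pairs, so in expectation a constant fraction of the $MH$ pairs are answered incorrectly; by the value identity this makes $V^{\hat\pi}\le V^\star-\epsilon$, and averaging over the random draw of the planted actions lower-bounds the worst-case failure probability of any fixed algorithm (an ``averaging implies existence'' step). Carefully tracking the Pinsker constants, the $\gamma/\epsilon$ ratio, and the fraction of necessarily-wrong pairs is what pins down the stated constants ($M\ge 4$, $\epsilon<\tfrac{1}{48\sqrt8}$, success probability at most $11/12$). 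I expect the main obstacle to be the adaptive nature of the data: since the algorithm chooses actions (and implicitly which pairs it probes) based on past observations, the $N_{h,s}$ cannot be treated as fixed, so the bound on total information must come from the chain-rule divergence decomposition for the adaptively induced measures rather than a per-sample union bound. A second delicate point is engineering the MDP to have exactly $M$ states per layer, to respect the reward normalization, and to make the value depend on \emph{global} correctness while keeping each episode informative about a single pair --- the uniform-random-stopping construction is precisely what reconciles these competing requirements.
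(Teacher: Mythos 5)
Your proposal is correct and takes essentially the same route as the paper's proof: embed $\Theta(MH)$ independent best-arm-identification instances, each visited with a policy-independent probability so that every episode informs exactly one instance, invoke the $\Omega(K/\gamma^2)$ per-instance lower bound of \citet{auer2002nonstochastic} with gap $\gamma=\Theta(\epsilon)$, and finish by a pigeonhole count plus a Markov/averaging step. The only differences are cosmetic: the paper implements the one-pair-per-episode bookkeeping with three reserved states per layer (waiting/good/bad, routing all reward through $g_H$) rather than your uniform-stopping construction, and it invokes the bandit lower bound as a black box (Proposition~\ref{prop:best_arm_id}) instead of re-deriving it via the KL/Pinsker argument you sketch.
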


The result precludes a $o(MKH/\epsilon^2)$ PAC-learning sample
complexity bound since in this case the algorithm must fail with
constant probability. The result is similar in spirit to other lower
bounds for PAC-learning
MDPs~\citep{dann2015sample,krishnamurthy2016contextual}, but we are
not aware of any lower bound that applies directly to the setting.
There are two main differences between this bound and the lower bound
due to~\citet{dann2015sample} for episodic MDPs. First, that bound
assumes that the total reward is in $[0,H]$, so the $H^2$ dependence
in the sample complexity is a consequence of scaling the
rewards. Second, that MDP is not layered, but instead has $M$ total
states shared across all layers. In contrast, the process is
layered with $M$ distinct states per layer and total reward bounded in
$[0,1]$. Intuitively, the additional $H$ dependence arises simply from
having $MH$ total states.

At a high level, the proof is based on embedding $\Theta(MH)$
independent multi-arm bandit instances into a MDP and requiring that
the algorithm identify the best action in $\Omega(MH)$ of them to
produce a near-optimal policy. By appealing to a sample complexity
lower bound for best arm identification, this implies that the
algorithm requires $\Omega(MHK/\epsilon^2)$ samples to identify a
near-optimal policy.

We rely on a fairly standard lower bound for best arm identification.
We reproduce the formal statement
from~\citet{krishnamurthy2016contextual}, although the proof is based
on earlier lower bounds due to~\citet{auer2002nonstochastic}.

\begin{proposition}
For any $K \ge 2$ and $\tau \le \sqrt{1/8}$ and any best arm
identification algorithm that produces an estimate $\hat{a}$, there
exists a multi-arm bandit problem for which the best arm $a^\star$ is
$\tau$ better than all others, but $\PP[\hat{a} \ne a^\star] \ge 1/3$
unless the number of samples $T$ is at least $\frac{K}{72\tau^2}$.
\label{prop:best_arm_id}
\end{proposition}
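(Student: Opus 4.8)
The plan is to prove this via a standard change-of-measure argument over a family of Bernoulli bandits that differ in only one arm. Take the reference instance $\nu_0$ in which every arm pays $\mathrm{Ber}(1/2)$, and for each $j\in[K]$ the alternative $\nu_j$ in which arm $j$ pays $\mathrm{Ber}(1/2+\tau)$ while all other arms still pay $\mathrm{Ber}(1/2)$. Under $\nu_j$ the unique best arm is $a^\star=j$ and its gap over every other arm is exactly $\tau$, as required, so it suffices to exhibit some $j$ for which $\PP_j[\hat a\ne j]\ge 1/3$ whenever $T<K/(72\tau^2)$. Write $\PP_0,\PP_j$ for the laws of the entire length-$T$ interaction (the observed reward sequence together with the algorithm's internal randomness), and $N_j$ for the number of pulls of arm $j$. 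The point of the single-arm perturbation is that $\nu_0$ and $\nu_j$ agree on every arm except $j$.

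\textbf{Divergence bound.} First I would record the divergence decomposition for sequential interaction: since the two instances differ only in arm $j$, the chain rule gives $\mathrm{KL}(\PP_0\,\|\,\PP_j)=\EE_0[N_j]\cdot\mathrm{KL}\!\big(\mathrm{Ber}(1/2)\,\|\,\mathrm{Ber}(1/2+\tau)\big)$, where $\EE_0$ is under the reference. I would then bound the per-pull divergence by a direct computation, $\mathrm{KL}(\mathrm{Ber}(1/2)\,\|\,\mathrm{Ber}(1/2+\tau))=\tfrac12\log\frac{1}{1-4\tau^2}$; the hypothesis $\tau\le\sqrt{1/8}$ is exactly what forces $4\tau^2\le 1/2$, hence $1-4\tau^2\ge 1/2$ and the per-pull divergence is at most $4\tau^2$. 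Combining, $\mathrm{KL}(\PP_0\,\|\,\PP_j)\le 4\tau^2\,\EE_0[N_j]$, so the data under $\nu_0$ and $\nu_j$ are statistically close precisely when arm $j$ is rarely pulled under the reference.

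\textbf{Adversary via pigeonhole.} Under $\nu_0$ the total number of pulls is $T$, so $\sum_j\EE_0[N_j]=T$ and at most half of the arms can have $\EE_0[N_j]>2T/K$; likewise $\sum_j\PP_0[\hat a=j]=1$, so at most half can have $\PP_0[\hat a=j]>2/K$. Hence some arm $j^\star$ simultaneously satisfies $\EE_0[N_{j^\star}]\le 2T/K$ and $\PP_0[\hat a=j^\star]\le 2/K$, and the adversary selects $\nu_{j^\star}$. Applying a change-of-measure inequality (Bretagnolle--Huber, $\PP_0[\hat a=j^\star]+\PP_{j^\star}[\hat a\ne j^\star]\ge\tfrac12\exp(-\mathrm{KL}(\PP_0\,\|\,\PP_{j^\star}))$, or Pinsker through total variation) to the event $\{\hat a=j^\star\}$ and plugging in $\mathrm{KL}(\PP_0\,\|\,\PP_{j^\star})\le 8\tau^2T/K$ gives $\PP_{j^\star}[\hat a\ne j^\star]\ge\tfrac12\exp(-8\tau^2T/K)-2/K$. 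Substituting the budget $T<K/(72\tau^2)$ makes the exponent a fixed small constant, and after subtracting the null-guessing term one reads off the claimed constant-probability lower bound.

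\textbf{Main obstacle.} I expect the difficulty to be the bookkeeping of constants rather than any conceptual step: the additive term $\PP_0[\hat a=j^\star]$ (the chance of guessing the planted arm even under the null) is what the second pigeonhole is designed to control, and one must choose between Pinsker (cleaner but a weaker constant) and the exponential Bretagnolle--Huber bound and then calibrate so the threshold is exactly $K/(72\tau^2)$ and the error clears $1/3$. Small $K$ is the tightest regime, handled separately by running the same argument between the two genuine single-best-arm instances $\nu_1,\nu_2$, whose interaction laws satisfy $\mathrm{KL}(\PP_1\,\|\,\PP_2)\le 4\tau^2 T$ since $\EE_1[N_1]+\EE_1[N_2]=T$. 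Throughout, the only structural input beyond elementary information theory is the constraint $\tau\le\sqrt{1/8}$, which keeps the Bernoulli means bounded away from $0$ and $1$ and yields the clean per-pull bound $4\tau^2$.
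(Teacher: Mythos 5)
Your overall plan---planting a $\mathrm{Ber}(1/2+\tau)$ arm among $\mathrm{Ber}(1/2)$ arms and running a change-of-measure argument against the all-$\mathrm{Ber}(1/2)$ reference---is exactly the standard argument behind this statement. Note that the paper itself supplies no proof: it restates the proposition from \citet{krishnamurthy2016contextual}, with the proof credited to \citet{auer2002nonstochastic}, and it even names your instance family explicitly. Your divergence decomposition and the per-pull bound $\mathrm{KL}(\mathrm{Ber}(1/2)\,\|\,\mathrm{Ber}(1/2+\tau)) = \tfrac12\log\tfrac{1}{1-4\tau^2} \le 4\tau^2$ for $\tau \le \sqrt{1/8}$ are both correct. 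However, there are two genuine gaps. First, a minor one: your double pigeonhole does not work as stated. Markov's inequality lets \emph{up to} $K/2$ arms violate each of the two conditions, so the two bad sets can jointly cover all $K$ arms and no arm is guaranteed to satisfy both. The conclusion is salvageable by averaging instead: since $\frac{1}{K}\sum_j \big(\tfrac{K}{T}\EE_0[N_j] + K\,\PP_0[\hat a = j]\big) = 2$, some $j^\star$ has the \emph{sum} at most $2$, hence both $\EE_0[N_{j^\star}] \le 2T/K$ and $\PP_0[\hat a = j^\star] \le 2/K$.

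The second gap is more serious: your calibration via Bretagnolle--Huber does not reach the stated constants. With $T < K/(72\tau^2)$ your bound reads $\tfrac12 e^{-1/9} - 2/K \approx 0.447 - 2/K$, which exceeds $1/3$ only for $K \ge 18$ (or $K\ge 8$ if you average the BH bound over all $j$ with Jensen, replacing $2/K$ by $1/K$). Worse, your proposed small-$K$ fallback also fails: running BH between $\nu_1$ and $\nu_2$ with $\mathrm{KL}(\PP_1\,\|\,\PP_2) \le 4\tau^2 T < 1/9$ (at $K=2$) yields only $\max_i \PP_i[\hat a \ne i] \ge \tfrac14 e^{-1/9} \approx 0.22 < 1/3$, and the slack cannot be recovered even using the exact KL, so BH is structurally the wrong tool here. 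The route that does deliver exactly $1/3$ at threshold $K/(72\tau^2)$---and is essentially the Auer-style argument underlying the citation---is Pinsker plus averaging: $\PP_j[\hat a = j] \le \PP_0[\hat a = j] + \sqrt{\mathrm{KL}(\PP_0\|\PP_j)/2} \le \PP_0[\hat a = j] + \sqrt{2\tau^2\,\EE_0[N_j]}$, and averaging over $j\in[K]$ with Jensen (concavity of $\sqrt{\cdot}$) gives $\frac1K\sum_j \PP_j[\hat a = j] \le \frac1K + \sqrt{2\tau^2 T/K} < \frac12 + \sqrt{2/72} = \frac12 + \frac16 = \frac23$, whence some $j$ has $\PP_j[\hat a \ne j] \ge 1/3$ for every $K \ge 2$. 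So your instinct that the difficulty is ``bookkeeping of constants'' was right, but the resolution is not a choice between Pinsker and BH with a patched small-$K$ case: it is Pinsker combined with averaging over all $K$ alternatives, which handles every $K$ uniformly.
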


In particular, the problem instance used in this lower bound is one
where the best arm $a^\star$ has reward $\textrm{Ber}(1/2+\epsilon)$,
while all other arms have reward $\textrm{Ber}(1/2)$.  The
construction embeds precisely these instances into the MDP.

\begin{proof}
We construct an MDP with $M$ states per level, $H$ levels, and $K$
actions per state.  At each level, we allocate three special states,
$w_h, g_h$, and $b_h$ for ``waiting", ``good", and ``bad." The remaining
$M-3$ ``bandit" states are denoted $s_{h,i}, i \in [M-3]$.  Each
bandit state has an unknown optimal action $a_{h,i}^\star$.

The dynamics are as follows.
\begin{itemize}
\item For waiting states $w_h$, all actions are equivalent and with
  probability $1-1/H$ they transition to the next waiting state
  $w_{h+1}$. With the remaining $1/H$ probability, they transition
  randomly to one of the bandit state $s_{h+1,i}$ so each subsequent
  bandit state is visited with probability $\frac{1}{H(M-3)}$.
\item For bandit states $s_{h,i}$, the optimal action $a^\star_{h,i}$
  transitions to the good state $g_{h+1}$ with probability $1/2+\tau$
  and otherwise to the bad state $b_{h+1}$. All other actions
  transition to $g_{h+1}$ and $b_{h+1}$ with probability $1/2$. Here
  $\tau$ is a parameter we set toward the end of the proof.
\item Good states always transition to the next good state and bad
  states always transition to bad states.
\item The starting state is $w_1$ with probability $1-1/H$ and
  $s_{1,i}$ with probability $\frac{1}{H(M-3)}$ for each $i \in
  [M-3]$.
\end{itemize}

The reward at all states except $g_H$ is zero, and the reward at
$g_H$ is one.  Clearly the optimal policy takes actions
$a_{h,i}^\star$ for each bandit state, and takes arbitrary actions at
the waiting, good, and bad states.

This construction embeds $H(M-3)$ best arm identification problems
that are identical to the one used in
Proposition~\ref{prop:best_arm_id} into the MDP. Moreover, these
problems are independent in the sense that samples collected from one
provides no information about any others.  Appealing to
Proposition~\ref{prop:best_arm_id}, for each bandit state $(h,i)$,
unless $\frac{K}{72\tau^2}$ samples are collected from that state, the
learning algorithm fails to identify the optimal action
$a^\star_{h,i}$ with probability at least $1/3$.

After the execution of the algorithm, let $B$ be the set of $(h,s)$
pairs for which the algorithm identifies the correct action.  Let $C$
be the set of $(h,s)$ pairs for which the algorithm collects fewer
than $\frac{K}{72\tau^2}$ samples. For a set $S$, we use $S^C$ to
denote the complement.
\begin{align*}
\EE[|B|] &= \EE\left[\sum_{(h,s)} \mathbf{1}[a_{h,s} = a^\star_{h,s}]\right] \\
&\le ((M-3)H - |C|)  + \sum_{(h,s) \in C} \EE \mathbf{1}[a_{h,s} = a^\star_{h,s}]\\
& \le ((M-3)H - |C|) + \frac{2}{3}|C| = (M-3)H - |C|/3
\end{align*}
The second inequality is based on Proposition~\ref{prop:best_arm_id}. 
Now, by the pigeonhole principle, if $n \le \frac{(M-3)H}{2} \times \frac{K}{72\tau^2}$, then the algorithm can collect $\frac{K}{72\tau^2}$ samples from at most half of the bandit problems.
Thus $|C| \ge (M-3)H/2$, which implies,
\begin{align*}
\EE[|B|] \le \frac{5}{6} (M-3)H
\end{align*}
By Markov's inequality,
\begin{align*}
\PP\left[|B| \ge \frac{11}{12} (M-3)H\right] \le \frac{\EE[|B|]}{\frac{11}{12}(M-3)H} \le \frac{5/6}{11/12} = 10/11
\end{align*}

Thus with probability at least $1/11$ we know that $|B| \le \frac{11}{12}
(M-3)H$, so the algorithm failed to identify the optimal action on
$1/12$ fraction of the bandit problems.
Under this event, the suboptimality of the policy produced by the algorithm is,
\begin{align*}
V^\star - V(\hat{\pi}) &= \PP[\textrm{visit } B^C] \times \tau = \PP[\bigcup_{(h,i) \in B^C} \textrm{visit } (h,i)]\times \tau = \sum_{(h,i) \in B^C} \PP[\textrm{visit } (h,i)]\times \tau\\
& = \sum_{(h,i) \in B^C} \frac{1}{H(M-3)} (1-1/H)^{h-1} \tau \ge \sum_{(h,i) \in B^C} \frac{1}{H(M-3)} (1 - 1/H)^{H} \tau\\
& \ge \sum_{(h,i) \in B^C} \frac{1}{H(M-3)} \frac{1}{4} \tau
\ge \frac{H(M-3)}{12}\frac{1}{H(M-3)} \frac{1}{4} \tau = \frac{\tau}{48}
\end{align*}
Here we use the fact that the probability of visiting a bandit state
is independent of the policy and that the policy can only visit one
bandit state per episode, so the events are disjoint.  Moreover, if we
visit a bandit state for which the algorithm failed to identify the
optimal action, the difference in value is $\tau$, since the optimal
action visits the good state with $\tau$ more probability than a
suboptimal one.  The remainder of the calculation uses the transition
model, the fact that $H \ge 2$, and finally the fact that $|B| \le
\frac{11}{12} (M-3)H$.  Setting $\tau = 48\epsilon$ and using the requirement
on $\tau$ gives a stricter requirement on $\epsilon$ and proves the
result.
\end{proof}

\section{Models with Low \brankbig}
\label{app:low_brank_proof}

\subsection{Proof of Propositon~\ref{prop:mdp_finite}}
\label{app:mdp_finite}
Let $M=|\Scal|$ and each element of $\pvec_h(\cdot)$ and
$\bvec_h(\cdot)$ be indexed by $s\in\Scal$. We explicitly construct
$\pvec_h$ and $\bvec_h$ as follows: let $[\pvec_h(f')]_s = \Pr\,[ x_h
  = (s, h) \,|\, a_{1:h-1} \sim \pi_{f'} ] $, and $[\bvec_h(f)]_s = \EE
\, \big[f(x_h, a_h) - r_h - f(x_{h+1},a_{h+1}) ~\big|~ x_h = (s,
  h),\, a_{h:h+1} \sim \pi_f \big]$. In other words, $\pvec_h(f')$ is
the distribution over states induced by $\pi_{f'}$ at time step $h$,
and the $s$-th element of $\bvec_h$ is the traditional notion of
Bellman error for state $s$. It is easy to verify that
Eq.~\eqref{eq:bellman_decomposition} holds. For the norm constraint,
since $\|\pvec_h(\cdot)\|_1 = 1$ and $\|\bvec_h(\cdot)\|_\infty \le
2$, we have $\|\pvec_h(\cdot)\|_2 \le 1$ and $\|\bvec_h(\cdot)\|_2 \le
2\sqrt{M}$, hence $\normbound= 2\sqrt{M}$ is a valid upper bound on
the product of vector norms.

\subsection{Generalization of \citet{li2009unifying}'s Setting} \label{app:li}
 \citet[Section 8.2.3]{li2009unifying}
 considers the setting where the learner is given an abstraction
 $\phi$ that maps the large state space $\Scal$ in an MDP to some finite
 abstract state space $\bar \Scal$ in an MDP. $|\bar \Scal|$ is potentially much
 smaller than $|\Scal|$, and it is guaranteed that $Q^\star$ can be
 expressed as a function of $(\phi(s), a)$.  \citeauthor{li2009unifying} shows that when
 delayed Q-learning is applied to this setting, the sample complexity
 has polynomial dependence on $|\bar \Scal|$ with no direct
 dependence on $|\Scal|$.

In the next proposition, we show that a similar setting for
finite-horizon problems admits \bfac with low \brank. In particular, we subsume
\citeauthor{li2009unifying}'s setting by viewing it as a POMDP, where
$\phi$ is a deterministic emission process that maps hidden state
$s\in\Scal$ to discrete observations $\phi(s) \in \bar\Scal = \Ocal$,
and the candidate value functions are reactive so they depend on
$\phi(s)$ but not directly on $s$ or any previous state. More
generally, Proposition~\ref{prop:qstar-irrelevance} claims that for
POMDPs with large hidden-state spaces and finite observation spaces,
the \brank is polynomial in the number of observations if the function
class is reactive.

\begin{proposition}[A generalization of \citep{li2009unifying}'s setting] \label{prop:qstar-irrelevance}
Consider a POMDP introduced in Example~\ref{exm:pomdp} with $|\Ocal| <
\infty$, and assume that rewards can only take $C_R$ different
discrete values.\footnote{The discrete reward assumption is made to
  simplify presentation and can be relaxed. For arbitrary rewards, we
  can always discretize the reward distribution onto a grid of
  resolution $C_R$, which incurs $\slackM = O(1/C_R)$ approximation
  error in Definition~\ref{def:approx_brank}.} The CDP $(
\Xcal, \Acal, H, P)$ induced by letting $\Xcal = \Ocal \times
     [H]$ and $x_h = (o_h, h)$, with any $\Fcal: \Xcal\times\Acal
     \to [0,1]$, admits a \bfac with $M = |\Ocal|^2 C_R K$ and
     $\normbound = 2|\Ocal| K \sqrt{C_R}$.
\end{proposition}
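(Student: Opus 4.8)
The plan is to follow the explicit-construction template of Propositions~\ref{prop:mdp_finite} and~\ref{prop:reactive_pomdp}, but to index the $M$-dimensional factorization space not by hidden states (which are now uncontrollably many) but by tuples $(o,a,r,o') \in \Ocal \times \Acal \times \mathcal{R} \times \Ocal$, where $\mathcal{R}$ is the set of the $C_R$ possible reward values. This gives exactly $M = |\Ocal|^2 C_R K$ coordinates. The conceptual obstacle relative to the reactive-POMDP proof is that the \emph{evaluated} function $f$ influences the level-$h$ dynamics: the action $a_h = \pi_f(o_h)$ determines both the reward $r_h$ and the next observation $o_{h+1}$, so the joint law of $(o_h, r_h, o_{h+1})$ depends on $f$ as well as on the roll-in $\pi_{f'}$. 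This is precisely why the naive factorization over observations alone fails, and the device I would use to break the coupling is to enumerate over the forced action at level $h$; this is where the factor $K$ enters.

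\textbf{Definition of the vectors.} I would define
\[
[\pvec_h(f')]_{(o,a,r,o')} = \Pr\big[o_h = o,\, r_h = r,\, o_{h+1} = o' \,\big|\, a_{1:h-1}\sim\pi_{f'},\ a_h = a\big],
\]
the joint probability of the level-$h$ observation, reward, and level-$(h{+}1)$ observation when the roll-in is $\pi_{f'}$ and the action at level $h$ is \emph{forced} to $a$; crucially this depends only on $f'$ and the enumerated indices, never on $f$. (Note the marginal over $o_h$ is unaffected by the forced action, since $o_h$ is emitted before $a_h$ is taken.) On the other side I would set
\[
[\bvec_h(f)]_{(o,a,r,o')} = \mathbf{1}[\pi_f(o)=a]\,\big(f((o,h),a) - r - f((o',h{+}1),\pi_f(o'))\big),
\]
which depends only on $f$, with the indicator selecting the single action consistent with $\pi_f$.

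\textbf{Verification of the inner product.} The crux, which I would spell out carefully, is a one-line collapse: summing the coordinatewise product over $a$ eliminates the indicator and restores $a_h=\pi_f(o_h)$, since for each fixed $(o,r,o')$ the only surviving term has $a=\pi_f(o)$. Thus $\Pr[\,\cdot \mid a_h=a\,]$ becomes the probability under the true process in which $\pi_f$ is followed at level $h$, and telescoping the residual $f((o,h),\pi_f(o)) - r - f((o',h{+}1),\pi_f(o'))$ against this law reproduces exactly $\berr{f,\pi_{f'},h}$, giving Eq.~\eqref{eq:bellman_decomposition}.

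\textbf{Norm bound and main difficulty.} For each fixed $a$, the entries $\{[\pvec_h(f')]_{(o,a,r,o')}\}_{o,r,o'}$ form a probability distribution (summing to $1$ over $(o,r,o')$), so $\|\pvec_h(f')\|_1 = K$ and hence $\|\pvec_h(f')\|_2 \le K$. The indicator in $\bvec_h(f)$ leaves exactly one nonzero action-coordinate per $(o,r,o')$, so $\bvec_h(f)$ has at most $|\Ocal|^2 C_R$ nonzero entries, each lying in $[-2,1]$ by Assumption~\ref{ass:bounded} and $f\in[0,1]$; this gives $\|\bvec_h(f)\|_2 \le 2|\Ocal|\sqrt{C_R}$. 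Multiplying yields $\normbound = 2|\Ocal|K\sqrt{C_R}$ as claimed (a tighter $\sqrt{K}$ bound on $\|\pvec_h(f')\|_2$ is available but unnecessary). Overall the main obstacle is conceptual rather than computational: recognizing that the $f$-dependence of the level-$h$ reward and transition must be lifted out by enumerating the forced action and re-selecting it via an indicator in $\bvec$. Once this decoupling is in place, the verification and the norm bounds are routine.
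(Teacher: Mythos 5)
Your proposal is correct and matches the paper's own proof essentially verbatim: the same indexing of the $M = |\Ocal|^2 C_R K$ coordinates by tuples $(o,a,r,o')$, the same $\pvec_h(f')$ given by the joint law of $(o_h, r_h, o_{h+1})$ under roll-in $\pi_{f'}$ with a forced action $a_h = a$, the same indicator-selected residual in $\bvec_h(f)$, and the same norm computations ($\|\pvec_h(f')\|_1 = K$ blockwise, $\bvec_h(f)$ having at most $|\Ocal|^2 C_R$ nonzero entries bounded by $2$). Your added commentary on why the forced-action enumeration decouples the $f$-dependence is a correct elaboration of the step the paper leaves as ``not hard to verify,'' not a different argument.
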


\begin{proof}
For any $f,f'\in\Fcal, h\in[H]$, let $\pvec_h(f')$ and $\bvec_h(f)$ be vectors of length $|\Ocal|^2 C_R K$. Let the entry of $\pvec_h(f')$  indexed by $(o_h, a_h, r_h, o_{h+1})$ be
$$
P [o_h, r_h, o_{h+1} ~|~ a_{1:h-1} \sim \pi_{f'}, a_h],
$$ 
interpreted as the following: conditioned on the fact that the
first $h-1$ actions are chosen according to $\pi_{f'}$, what is the
probability of seeing a particular tuple of $(o_h, r_h, o_{h+1})$ when
taking a particular action for $a_h$? For $\bvec_h(f)$, let the
corresponding entry be (with $x_h = (o_h,h)$ and $x_{h+1} =
(o_{h+1},h+1)$ as the corresponding contexts in the CDP)
$$
\mathbf{1}[a_h = \pi_f(x_h)] \big(f(x_h, a_h) - r_h - f(x_{h+1}, \pi_f(x_{h+1}))\big).
$$
It is not hard to verify that $\berr{f,\pi_{f'},h} = \langle \pvec_h(f'), \bvec_h(f)\rangle$. Since fixing $a_h$ to any non-adaptive choice of action induces a valid distribution over $(o_h, r_h, o_{h+1})$, we have $\|\pvec_h(f')\|_1 = K$ and $\|\pvec_h(f')\|_2 \le K$. On the other hand, $\|\bvec_h(f)\|_\infty \le 2$ but the vector only has $|\Ocal|^2 C_R$ non-zero entries, so $\|\bvec_h(f)\|_2 \le 2|\Ocal| \sqrt{C_R}$. Together the norm bound follows.
\end{proof}

\subsection{POMDP-like Models}
\label{app:pomdp_like}
Here we first state the formal version of Proposition~\ref{prop:mdp_low_rank_dynamics}, and prove Propositions~\ref{prop:mdp_low_rank_dynamics}
and~\ref{prop:reactive_pomdp} together by studying a slightly more general
model (See Figure~\ref{fig:ctx_pomdp}). 

\begin{proposition}[Formal version of Propposition~\ref{prop:mdp_low_rank_dynamics}]
\label{prop:mdp_low_rank_dynamics_formal}
Consider an MDP introduced in Example~\ref{exm:mdp}. With a slight
abuse of notation let $\Gamma$ denote its transition matrix of size
$|\Scal\times\Acal| \times |\Scal|$, whose element indexed by $((s,a),
s')$ is $\Gamma(s'|s,a)$. Assume that there are two row-stochastic
matrices $\Gamma^{(1)}$ and $\Gamma^{(2)}$ with sizes
$|\Scal\times\Acal|\times M$ and $M \times |\Scal|$ respectively, such
that
$
\Gamma = \Gamma^{(1)} \Gamma^{(2)}.
$ 
Recall that we convert an MDP into a CDP by letting $\Xcal = \Scal\times[H]$, $x_h = (s_h, h)$. For any $\Fcal \subset \Xcal\times \Acal\to [0,1]$, this model admits a \bfac with \brank $M$ and $\normbound=2\sqrt{M}$. 
\end{proposition}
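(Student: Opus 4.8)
The plan is to start from the state-wise decomposition of the average Bellman error that already underlies the proof of Proposition~\ref{prop:mdp_finite}, and then use the factorization $\Gamma = \Gamma^{(1)}\Gamma^{(2)}$ to compress the $|\Scal|$-dimensional representation down to $M$ dimensions. Writing $d_h^{f'}(s) = \Pr[s_h = s \mid a_{1:h-1}\sim\pi_{f'}]$ for the state occupancy at level $h$ under roll-in $\pi_{f'}$, and $\delta_h(f,s) = \EE[f(x_h,a_h) - r_h - f(x_{h+1},a_{h+1}) \mid s_h = s,\, a_{h:h+1}\sim\pi_f]$ for the per-state Bellman error (with $x_h=(s,h)$), one has $\berr{f,\pi_{f'},h} = \sum_{s} d_h^{f'}(s)\,\delta_h(f,s)$. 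The point is that $\delta_h(f,s)$ depends only on the dynamics from $s_h$ onward and on $\pi_f$, hence is independent of $f'$. This is precisely the $|\Scal|$-dimensional factorization of Appendix~\ref{app:mdp_finite}, and the task is to reduce its dimension to $M$.

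For $h\ge 2$ I would push $d_h^{f'}$ through the low-rank structure. Since $s_h$ is reached by a transition out of $s_{h-1}$, we have $d_h^{f'}(s) = \sum_{s_{h-1}} d_{h-1}^{f'}(s_{h-1})\,\Gamma(s\mid s_{h-1},\pi_{f'}(s_{h-1},h-1))$, and substituting $\Gamma = \Gamma^{(1)}\Gamma^{(2)}$ yields $d_h^{f'}(s) = \sum_{m=1}^{M} w_m^{f'}\,\Gamma^{(2)}(m,s)$, where $w_m^{f'} = \sum_{s_{h-1}} d_{h-1}^{f'}(s_{h-1})\,\Gamma^{(1)}\big((s_{h-1},\pi_{f'}(s_{h-1},h-1)),m\big)$. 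I then set $\pvec_h(f') = w^{f'}$ and $[\bvec_h(f)]_m = \sum_s \Gamma^{(2)}(m,s)\,\delta_h(f,s)$, so that reindexing the double sum gives $\berr{f,\pi_{f'},h} = \langle \pvec_h(f'),\bvec_h(f)\rangle$ as required by Definition~\ref{def:brank}. For $h=1$ the occupancy is just $\Gamma_1$, which does not depend on $f'$, so the level-$1$ Bellman-error matrix has identical rows (rank one); I would embed this into $\RR^M$ trivially, e.g. $\pvec_1(f') = e_1$ (the first standard basis vector) and $[\bvec_1(f)]_1 = \sum_s \Gamma_1(s)\,\delta_1(f,s)$ with the remaining coordinates zero.

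It remains to verify the two norm claims, and this is exactly where the row-stochasticity hypotheses enter. Row-stochasticity of $\Gamma^{(1)}$, together with $d_{h-1}^{f'}$ being a probability distribution, gives $w_m^{f'}\ge 0$ and $\sum_m w_m^{f'} = 1$, so $\pvec_h(f')$ lies in the simplex and $\|\pvec_h(f')\|_2 \le \|\pvec_h(f')\|_1 = 1$. Row-stochasticity of $\Gamma^{(2)}$, combined with the bound $|\delta_h(f,s)| \le 2$ (valid since $f\in[0,1]$ and $r_h\in[0,1]$ by Assumption~\ref{ass:bounded}), gives $|[\bvec_h(f)]_m| \le 2$ for each $m$, hence $\|\bvec_h(f)\|_2 \le 2\sqrt{M}$. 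Multiplying the two bounds yields the claimed $\normbound = 2\sqrt{M}$, and the level-$1$ case is dominated by the same estimate since there $\|\pvec_1(f')\|_2=1$ and $\|\bvec_1(f)\|_2\le 2$.

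I expect the only genuinely delicate step to be the bookkeeping that isolates level $h=1$ and the verification that $w^{f'}$ is a bona fide probability vector; the rest is a direct substitution of the factorization into the tabular decomposition. A useful side benefit of organizing the argument this way is that it makes transparent why both factors must be row-stochastic — one hypothesis controls the norm of $\pvec_h$ and the other the norm of $\bvec_h$.
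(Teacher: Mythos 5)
Your proof is correct: the factorization identity, the explicit treatment of $h=1$, and both norm bounds all check out. It differs in packaging from the paper's argument, which never proves Proposition~\ref{prop:mdp_low_rank_dynamics_formal} directly: in Appendix~\ref{app:pomdp_like} the paper instead embeds the low-rank MDP into a generalized POMDP-like model whose transitions and rewards may depend on the observation --- the MDP's observed states become observations and the low-rank factor becomes the hidden state --- and establishes a single \bfac for that unified model, taking $\pvec_h(f')$ to be the hidden-state marginal induced by $\pi_{f'}$ and $\bvec_h(f)$ the conditional expected Bellman error given the hidden state. Under this embedding your vectors coincide exactly with the paper's: your $w^{f'}$ is the marginal over the latent factor (the level-$(h-1)$ occupancy pushed through $\Gamma^{(1)}$), and your $[\bvec_h(f)]_m = \sum_s \Gamma^{(2)}(m,s)\,\delta_h(f,s)$ is the conditional Bellman error averaged over the emission row $\Gamma^{(2)}(m,\cdot)$. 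What your direct substitution buys is elementarity plus one piece of bookkeeping the paper's general proof glosses over: at $h=1$ there is no latent factor preceding the initial state, and your explicit rank-one embedding ($\pvec_1(f')=e_1$ with a single nonzero coordinate of $\bvec_1$) resolves this cleanly, whereas the unified-model proof implicitly presumes a hidden-state marginal at every level. What the paper's detour buys is generality: the same computation simultaneously yields Proposition~\ref{prop:reactive_pomdp} for reactive POMDPs and the unified model of Figure~\ref{fig:ctx_pomdp}(c), which your MDP-specific argument does not cover.
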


The model that we use to study Proposition~\ref{prop:mdp_low_rank_dynamics}
and~\ref{prop:reactive_pomdp} simultaneously behaves like a
POMDP except that both the transition function and the reward depends
also on the observation, that is $\Gamma: \Scal \times \Ocal \times
\Acal \rightarrow \Delta(\Scal)$ and $R: \Scal \times \Ocal \times
\Acal \rightarrow \Delta([0,1])$.
Clearly this model generalizes standard POMDPs, where the transition and reward are both assumed to be independent of the current observation.

This model also generalizes the MDP with low-rank dynamics described
in Proposition~\ref{prop:mdp_low_rank_dynamics}: if the future
hidden-state is independent of the current hidden-state conditioned on
the observation (i.e., $\Gamma(s'|s,o,a)$ does not depend on $s$), the observations themselves become Markovian, and we can treat $o$ as the observed state $s$ in Proposition~\ref{prop:mdp_low_rank_dynamics_formal}, and the hidden-state $s$ as the low-rank factor
in Proposition~\ref{prop:mdp_low_rank_dynamics_formal} (see
Figure~\ref{fig:reactive_pomdp}). 
Hence, Proposition~\ref{prop:mdp_low_rank_dynamics}  follows as a special case
of the analysis for this more general model.

As in Proposition~\ref{prop:reactive_pomdp}, we consider a class
$\Fcal$ reactive value functions. Observe that for the MDP with low
rank dynamics, this provides essentially no loss of generality, since
the optimal value function is reactive.

\begin{proposition}
Let $(\Xcal, \Acal, H, P)$ be the CDP induced by the above model which generalizes POMDPs, with $\Xcal = \Ocal \times [H]$ and $x_h = (o_h, h)$. Given any $\Fcal : \Xcal \times \Acal \to [0,1]$, the \brank $M \le |\Scal|$ with $\normbound = 2\sqrt{|\Scal|}$.
\end{proposition}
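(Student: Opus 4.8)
The plan is to construct the \bfac explicitly, indexing both factor vectors by the hidden states $s \in \Scal$, mirroring the construction in the proof of Proposition~\ref{prop:mdp_finite} but with the \emph{unobserved} hidden state playing the role that the observed state played there. Concretely, for each $h \in [H]$ I would define $\pvec_h(f') \in \RR^{|\Scal|}$ by letting its $s$-th coordinate be the probability that the hidden state at level $h$ equals $s$ when the first $h-1$ actions are rolled in with $\pi_{f'}$, i.e.\ $[\pvec_h(f')]_s = \Pr[s_h = s \mid a_{1:h-1}\sim \pi_{f'}]$. For $\bvec_h(f)$ I would set its $s$-th coordinate to the expected Bellman residual of $f$ conditioned on the hidden state being $s$, with the remaining actions taken greedily: $[\bvec_h(f)]_s = \EE[f(x_h,a_h) - r_h - f(x_{h+1},a_{h+1}) \mid s_h = s,\, a_{h:h+1}\sim\pi_f]$.

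The core of the argument is to show $\berr{f,\pi_{f'},h} = \langle \pvec_h(f'), \bvec_h(f)\rangle$, which I would obtain by conditioning the expectation in Definition~\ref{def:berr} on $s_h$ and applying the law of total expectation. The one fact that makes this decomposition valid --- and the step I expect to be the crux --- is that, conditioned on $s_h = s$, the joint distribution of $(o_h, r_h, s_{h+1}, o_{h+1})$, and hence the greedy residual $f(x_h,a_h)-r_h-f(x_{h+1},a_{h+1})$ with $a_{h:h+1}\sim\pi_f$, is completely determined by the model dynamics and is \emph{independent} of the roll-in policy $\pi_{f'}$ used to reach level $h$. Here I must use both structural features of the setting: (i) the generalized Markov property of this model, namely that given $s_h$ the emission $o_h$, the reward $r_h \sim R(s_h,o_h,a_h)$, the next hidden state $s_{h+1}\sim \Gamma(s_h,o_h,a_h)$, and the next emission $o_{h+1}$ depend on the past only through $s_h$; and (ii) the reactivity of $\Fcal$, so that the greedy actions $a_h = \pi_f(o_h,h)$ and $a_{h+1}=\pi_f(o_{h+1},h+1)$ are functions of the current observation alone and never reintroduce a dependence on the history. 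Granting this conditional independence, the roll-in enters only through the hidden-state marginal $\pvec_h(f')$ while $\bvec_h(f)$ carries the residual, so the inner product reproduces $\berr{f,\pi_{f'},h}$ exactly.

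Finally, I would verify the norm constraint. Since $\pvec_h(f')$ is a probability distribution over $\Scal$, it satisfies $\|\pvec_h(f')\|_1 = 1$ and hence $\|\pvec_h(f')\|_2 \le 1$. Each coordinate of $\bvec_h(f)$ is an expectation of $f(x_h,a_h) - r_h - f(x_{h+1},a_{h+1})$, which lies in $[-2,1]$ because $f\in[0,1]$ and $0 \le r_h \le 1$ under Assumption~\ref{ass:bounded}; thus $\|\bvec_h(f)\|_\infty \le 2$ and $\|\bvec_h(f)\|_2 \le 2\sqrt{|\Scal|}$. Multiplying the two bounds gives $\|\pvec_h(f')\|_2 \cdot \|\bvec_h(f)\|_2 \le 2\sqrt{|\Scal|}$, establishing $M \le |\Scal|$ with $\normbound = 2\sqrt{|\Scal|}$. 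The embedding of Proposition~\ref{prop:mdp_low_rank_dynamics} then follows via the reduction already described in the text (treating $o$ as the observed Markov state and $s$ as the low-rank factor), while Proposition~\ref{prop:reactive_pomdp} is recovered as the special case in which the transition and reward do not depend on the current observation.
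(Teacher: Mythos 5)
Your proposal is correct and follows essentially the same route as the paper's proof: you factor the average Bellman error through the hidden-state marginal induced by the roll-in policy $\pi_{f'}$ and the conditional Bellman residual given $s_h$, justify the decomposition by exactly the conditional-independence observation the paper makes (that once $s_h$ is fixed, the sampling of $(o_h, r_h, s_{h+1}, o_{h+1})$ and the greedy actions depends only on $f$, using the generalized Markov structure and reactivity of $\Fcal$), and obtain the norm bound from $\|\pvec_h(f')\|_1 = 1$ and $\|\bvec_h(f)\|_\infty \le 2$ just as the paper does. Nothing is missing, and your concluding remarks on recovering Propositions~\ref{prop:mdp_low_rank_dynamics} and~\ref{prop:reactive_pomdp} match the paper's reductions.
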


\begin{proof}
For any $f, f'\in\Fcal, h\in[H]$, consider
$$
a_{1:h-1} \sim \pi_{f'},~~ a_{h:h+1} \sim \pi_f,
$$
which is how actions are chosen in the definition of $\berr{f,\pi_{f'},h}$ (see Definition~\ref{def:berr}). Such a decision-making strategy induces a distribution over the following set of variables
$$
(s_h, o_h, a_h, r_h, o_{h+1}, a_{h+1}).
$$
We use $\mu_{f,f'}$ to denote this distribution, and the subscript emphasizes its dependence on both $f$ and $f'$. Note that the marginal distribution of $s_h$ only depends on $f'$ and has no dependence on $f$, which we denote as $\mu_{f'}$. Then, sampling from $\mu_{f,f'}$ is equivalent to the following sampling procedure:  (recall that $x_h = (o_h, h)$)
\begin{align*}
& s_h \sim \mu_{f'}, ~ o_h \sim D_{s_h},~ a_h = \pi_f(x_h),~ r_h \sim R(s_h, o_h, a_h),\\
& s_{h+1} \sim \Gamma(s_h, o_h, a_h), ~ o_{h+1} \sim D_{s_{h+1}}, a_{h+1} = \pi_f(x_h).
\end{align*}
That is, we first sample $s_h$ from the marginal $\mu_{f'}$, and then sample the remaining variables conditioned on $s_h$. Notice that once we condition on $s_h$, the sampling of the remaining variable has no dependence on $f'$, so we denote the joint distribution over the remaining variables (conditioned on the value of $s_h$) $\mu_{f|s_h}$.

Finally, we express the factorization of $\berr{f,\pi_{f'},h}$ as follows:
\begin{align*}
\berr{f,\pi_{f'},h}
= &~ \EE_{\mu_{f,f'}} [ f(x_h, a_h) - r_h - f(x_{h+1}, a_{h+1}) ] \\
= &~ \EE_{s_h \sim \mu_{f'}} \EE_{\mu_{f|s_h}} [ f(x_h, a_h) - r_h - f(x_h, a_{h+1}) ] \\
= &~ \sum_{s\in\Scal} \mu_{f'}(s) \cdot \EE_{\mu_{f|s}} [ f(x_h, a_h) - r_h - f(x_h, a_{h+1}) ].
\end{align*}
We define $\pvec_h(\cdot)$ and $\bvec_h(\cdot)$ explicitly with dimension $M = |\Scal|$: given $f$ and $f'$, we index the elements of $\pvec_h(f')$ and those of $\bvec_h(f)$ by $s\in\Scal$, and let $[\pvec_h(f')]_s = \mu_{f'}(s)$, $[\bvec_h(f)]_s = \EE_{\mu_{f|s}} [ f(x_h, a_h) - r_h - f(x_h, a_{h+1}) ]$. $\normbound = 2\sqrt{M}$ follows from the fact that $\|\pvec_h(f')\|_1 =1$ and $\|\bvec_h(f)\|_{\infty} \le 2$.
\end{proof}

\subsection{Predictive State Representations} \label{app:psr}
In this subsection we state and prove the formal version of
Proposition~\ref{prop:psr}. We first recall the definitions and some
basic properties of PSRs, which can be found in
\citet{singh2004predictive,boots2011closing}. Consider dynamical
systems with discrete and finite observation space $\Ocal$ and action
space $\Acal$. Such systems can be fully specified by moment matrices
$P_{\Tcal|\Hcal}$, where $\Hcal$ is a set of \emph{histories} (past
events) and $\Tcal$ is a set of \emph{tests} (future events). Elements
of $\Tcal$ and $\Hcal$ are sequences of alternating actions and
observations, and the entry of $P_{\Tcal|\Hcal}$ indexed by
$t\in\Tcal$ on the row and $\tau \in \Hcal$ on the column is
$P_{t|\tau}$, the probability that the test $t$ \emph{succeeds}
conditioned on a particular past $\tau$. For example, if $t=aoa'o'$,
success of $t$ means seeing $o$ and $o'$ in the next two steps after
$\tau$ is observed, if interventions $a$ and $a'$ \emph{were} to be
taken.

Among all such systems, we are concerned about those that have finite
\emph{linear dimension}, defined as $\sup_{\Tcal, \Hcal}
\textrm{rank}(P_{\Tcal|\Hcal})$. As an example, the linear dimension
of a POMDP is bounded by the number of hidden-states. Systems with
finite linear dimension have many nice properties, which allow them to
be expressed by compact models, namely PSRs. In particular, fixing any
$\Tcal$ and $\Hcal$ such that $\textrm{rank}(P_{\Tcal|\Hcal})$ is
equal to the linear dimension (such $(\Hcal, \Tcal)$ are called
\emph{core} histories and \emph{core} tests), we have:
\begin{enumerate}
\item For any history $\tau\in(\Acal\times\Ocal)^*$, the conditional predictions of core tests $P_{\Tcal | \{\tau\}}$ (we also write $P_{\Tcal | \tau}$) is always a \emph{state}, that is, a sufficient statistics of history. This gives rise to the name ``predictive state representation''.
\item Based on $P_{\Tcal |\tau}$, the conditional prediction of any test $t$ can be computed from a PSR model, parameterized by square matrices $\{B_{ao}\}$ and a vector $b_\infty$ with dimension $|\Tcal|$. Letting $t^{(i)}$ be the $i$-th (action, observation) pair in $t$, and $|t|$ be the number of such pairs, the prediction rule is
\begin{align} \label{eq:psr_pred}
P_{t | \tau} = b_{\infty}^\trans B_{t^{(|t|)}} \cdots B_{t^{(1)}} P_{\Tcal |\tau}.
\end{align}
And these parameters can be computed as
\begin{align} \label{eq:psr_learn}
B_{ao} = P_{\Tcal, ao, \Hcal} P_{\Tcal, \Hcal}^\dagger~, \qquad
b_{\infty}^\trans = P_{\Hcal}^\trans P_{\Tcal, \Hcal}^\dagger
\end{align}
where 
\begin{itemize}
\item $P_{\Tcal, \Hcal}$ is a matrix whose element indexed by $(t\in\Tcal, \tau\in\Hcal)$ is $P_{\tau t|\varnothing}$, where $\tau t$ is the concatenation of $\tau$ and $t$ and $\varnothing$ is the null history.
\item $P_{\Hcal} = P_{\{\varnothing\},\Hcal}$.
\item $P_{\Tcal, ao, \Hcal} = P_{\Tcal, \Hcal_{ao}}$, where $\Hcal_{ao} = \{\tau ao: \tau\in\Hcal\}$.
\end{itemize}
\end{enumerate}
Now we are ready to state and prove the formal version of Proposition~\ref{prop:psr}.

\begin{proposition}[Formal version of Proposition~\ref{prop:psr}] \label{prop:psr_formal}
Consider a partially observable system with observation space $\Ocal$, and the induced CDP $( \Xcal, \Acal, H, P)$  with $x_h = (o_h,h)$. To handle some subtleties, we assume that
\begin{enumerate}
\item $|\Ocal| <\infty$ (classical PSR results assume discrete observations).
\item $o_1$ is deterministic (PSR trajectories always start with an action), and $r_h$ is a deterministic function of $o_{h+1}$ (reward is usually omitted or assumed to be part of the observation).
\end{enumerate}
If the linear dimension of the original system is at most $L$, then with any $\Fcal: \Xcal\times\Acal \to [0,1]$, this model admits a \bfac with $M = LK$. Assuming further that the PSR's parameters are non-negative under some choice of core histories and tests $(\Hcal, \Tcal)$ of size $|\Hcal| = |\Tcal| = L$, then we have $\normbound \le 2 K^2 L^3\sqrt{L} / \sigma_{\min}^3$, where $\sigma_{\min}$ is the minimal non-zero singular value of $P_{\Tcal,\Hcal}$.
\end{proposition}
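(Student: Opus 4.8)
The plan is to exhibit explicit factors $\pvec_h,\bvec_h:\Fcal\to\RR^{LK}$ by propagating PSR predictive states through the reactive rollin, and then to bound their norms using the non-negativity assumption to eliminate any dependence on $|\Ocal|$. Throughout I fix core histories and tests $(\Hcal,\Tcal)$ with $|\Hcal|=|\Tcal|=L$, so that every predictive state $P_{\Tcal|\tau}\in\RR^L$ and the operators $B_{ao}=P_{\Tcal,ao,\Hcal}P_{\Tcal,\Hcal}^\dagger$, $b_\infty^\trans=P_\Hcal^\trans P_{\Tcal,\Hcal}^\dagger$ are well defined. Since the relevant policies are reactive and deterministic ($a_j=\pi_{f'}(o_j,j)$ during the rollin), a rollin of $\pi_{f'}$ induces a distribution over observation sequences $o_{1:h-1}$, and the unnormalized filtered state after such a sequence is $\tilde{s}_{h-1}=B_{\pi_{f'}(o_{h-2},h-2),o_{h-1}}\cdots B_{\pi_{f'}(o_1,1),o_2}\,P_{\Tcal|o_1}$, which by the PSR prediction rule satisfies $b_\infty^\trans\tilde{s}_{h-1}=\Pr_{\pi_{f'}}[o_{1:h-1}]$ and equals that probability times the true predictive state. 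I would then define the first factor as the expected unnormalized state at level $h-1$, split by the rollin action: for each $a\in\Acal$ the $a$-block of $\pvec_h(f')$ is $r_{f'}^{h-1}(a):=\sum_{o_{1:h-1}:\,\pi_{f'}(o_{h-1},h-1)=a}\tilde{s}_{h-1}\in\RR^L$.

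Next I would decompose the Bellman error. Because $f$ is reactive and $r_h$ is a deterministic function of $o_{h+1}$, the integrand in $\berr{f,\pi_{f'},h}$ equals $u_f^h(o_h)-v_f^h(o_{h+1})$ with $u_f^h(o):=f((o,h),\pi_f(o,h))$ and $v_f^h(o'):=r_h(o')+f((o',h+1),\pi_f(o',h+1))$. Conditioning on the level-$(h-1)$ state $s_{h-1}$ and applying the one-step prediction rule twice, both expectations become linear functionals of $s_{h-1}$ whose coefficient vectors depend on $f$ and on the rollin action $a_{h-1}$ only: $\EE[u_f^h(o_h)\mid s_{h-1},a_{h-1}]=\langle s_{h-1},\sum_{o}u_f^h(o)B_{a_{h-1},o}^\trans b_\infty\rangle$ and $\EE[v_f^h(o_{h+1})\mid s_{h-1},a_{h-1}]=\langle s_{h-1},\sum_{o}B_{a_{h-1},o}^\trans d_f^h(\pi_f(o,h))\rangle$ with $d_f^h(a):=\sum_{o'}v_f^h(o')B_{a,o'}^\trans b_\infty$. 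Summing over the rollin and grouping by $a_{h-1}=\pi_{f'}(o_{h-1},h-1)$ turns each expectation into a sum of inner products against the blocks $r_{f'}^{h-1}(a)$, so setting the $a$-block of $\bvec_h(f)$ to $\sum_o u_f^h(o)B_{a,o}^\trans b_\infty-\sum_o B_{a,o}^\trans d_f^h(\pi_f(o,h))$ yields $\berr{f,\pi_{f'},h}=\langle\pvec_h(f'),\bvec_h(f)\rangle$ in $\RR^{LK}$. Level $h=1$ has an empty rollin and deterministic $o_1$, so its factor is a constant vector, padded to dimension $LK$. The crucial design choice is to read off the state at level $h-1$ rather than $h$: the splitting variable is then the rollin action $a_{h-1}$, controlled by the $f'$-side, so the block index ranges over the $K$ actions rather than the $|\Ocal|$ observations, which is exactly what produces $M=LK$.

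Finally I would bound the norms. Every predictive state is a vector of test-success probabilities in $[0,1]$, so each $\tilde{s}_{h-1}$ has $\ell_2$-norm at most $\Pr_{\pi_{f'}}[o_{1:h-1}]\sqrt{L}$; summing the path probabilities within each action block and across blocks gives $\|\pvec_h(f')\|_2\le\sqrt{L}$, with no $\sigma_{\min}$ and no $|\Ocal|$. For $\bvec_h(f)$ I would use $\|P_{\Tcal,\Hcal}^\dagger\|_2=1/\sigma_{\min}$ together with the fact that the moment-matrix blocks $P_{\Tcal,a,\Hcal}$ and $P_\Hcal$ have entries in $[0,1]$, giving $\|b_\infty\|_2\le\sqrt{L}/\sigma_{\min}$ and $\|B_a\|_2\le L/\sigma_{\min}$ for the action-marginal operator $B_a=\sum_o B_{ao}$. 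Collecting the three $1/\sigma_{\min}$ factors (one from $b_\infty$ and one from each of the two $B$-operators in the reward/next-value term) and the $K,L$ factors from $\sum_a$ and the action-marginals yields the claimed bound $\normbound=O(K^2L^{3}\sqrt{L}/\sigma_{\min}^3)$.

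I expect the norm accounting to be the main obstacle, and specifically the point where non-negativity of the PSR parameters is indispensable: a priori the sums $\sum_{o}(\cdot)$ defining $d_f^h$ and the blocks of $\bvec_h(f)$ could each contribute a spurious factor $|\Ocal|$. Non-negativity removes this, since $0\le u_f^h\le 1$ and $0\le v_f^h\le 2$ together with entrywise-nonnegative operators let me dominate every observation-indexed sum by the corresponding action-marginal quantity, e.g. $0\le\sum_{o'}v_f^h(o')B_{a,o'}^\trans b_\infty\le 2B_a^\trans b_\infty$ entrywise. Once this domination is in place the factorization itself is a fairly direct consequence of the prediction rule, so the careful elimination of $|\Ocal|$ is where essentially all the technical care is needed.
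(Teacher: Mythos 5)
Your proposal is correct and follows essentially the same route as the paper's proof: the same $M=LK$ factorization with blocks indexed by the last roll-in action $a_{h-1}$, the same two-step PSR prediction rule applied from the predictive state at level $h-1$ (the paper packages this as $\berr{f,\pi_{f'},h} = \mu_{f',h}^\trans P_1 P_2 F_{f,h}$ rather than summing out the $(o_h,a_h,o_{h+1})$ indices directly), and the same norm accounting via non-negativity, $\|b_\infty\|_2 \le \sqrt{L}/\sigma_{\min}$, and the action-marginal bounds $\|\sum_o B_{ao}\|_2 \le L/\sigma_{\min}$, $\|\sum_{a,o} B_{ao}\|_2 \le KL/\sigma_{\min}$, collecting three $1/\sigma_{\min}$ factors. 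Your $\|\pvec_h(f')\|_2 \le \sqrt{L}$ is in fact slightly tighter than the paper's $\sqrt{LK}$, and your explicit treatment of $h=1$ is a minor refinement, but neither changes the argument in substance.
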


\begin{proof}
For any $f,f'\in\Fcal, h\in[H]$, define
\begin{enumerate}
\item $\mu_{f',h}$ as the distribution vector over $(a_1, o_2, \ldots, o_{h-1}, a_{h-1})\in (\Acal\times\Ocal)^{h-2} \times \Acal$ induced by $a_{1:h-1}\sim \pi_{f'}$. (Recall that $o_1$ is deterministic.)
\item $P_{2|h-1}$ as a moment matrix whose element with column index $(o_h, a_h, o_{h+1}) \in \Ocal\times\Acal\times\Ocal$ and \\ row index $(a_1, o_2, \ldots, o_{h-1}, a_{h-1}) \in (\Acal\times\Ocal)^{h-2} \times \Acal$ is 
$$
P [ o_h, o_{h+1} ~\|~ a_{h-1}, a_h ~|~ a_1, o_2, \ldots, o_{h-1}].\footnote{PSR literature often emphasizes the intervention aspect of the actions in tests via the uses ``$\|$'' symbol; mathematically they can be treated as the conditioning operator in most cases.}
$$
\item $F_{f,h}$ as a vector whose element indexed by $(o_h, a_h, o_{h+1}) \in \Ocal\times\Acal\times\Ocal$ is (recall that $x_h = (o_h, h)$ and $r_h$ is function of $o_{h+1}$)
$$
\mathbf{1}[a_h \ne \pi_f(x_h)] \,\big( f(x_h, a_h) - r_h - f(x_{h+1}, \pi_f(x_{h+1})) \big).
$$
\end{enumerate}
First we verify that 
$$
\berr{f, \pi_{f'}, h} = \mu_{f', h}^\trans P_{2|h-1} F_{f,h}.
$$
To show this, first observe that $\mu_{f', h}^\trans P_{2|h-1}$ is a row vector whose element indexed by $(o_h, a_h, o_{h+1})$ is
$$
P [ o_h, o_{h+1} ~\|~ a_h ~|~ a_{1:h-1} \sim \pi_{f'}].
$$
Multiplied by $F_{f,h}$, we further get
$$
\EE [ f(x_h,a_h) - r_h - f(x_{h+1}, \pi_f(x_{h+1})) ~|~ a_{1:h-1} \sim \pi_{f'}, a_h \sim \pi_f] = \berr{f, \pi_{f'}, h}.
$$

Next, we explicit construct $\bvec_h(f)$ and $\pvec_h(f')$ by  factorizing $P_{2|h-1} = P_1 \times P_2$, where both $P_1$ and $P_2$ have no dependence on either $f$ or $f'$. Recall that for PSRs, any history $(a_1, o_2, \ldots, o_{h-1})$ has a sufficient statistics $P_{\Tcal|a_1, o_2, \ldots, o_{h-1}}$, that is a vector of predictions over the selected core tests $\Tcal$ conditioned on the observed history. $P_1$ consists of row vectors of length $LK$, and for the row indexed by $(a_1, o_2, \ldots, o_{h-1}, a_{h-1})$ the vector is
$$
\textrm{Pad}_{a_{h-1}} \big( P_{\Tcal|a_1, o_2, \ldots, o_{h-1}}^\trans \big),
$$
where $\textrm{Pad}_{a}(\cdot)$ is a function that takes a $L$-dimensional vector, puts it in the $a$-th block of a vector of length $LK$, and fills the remaining entries with $0$.

We construct $P_2$ to be a matrix whose column vector indexed by $(o_h, a_h, o_{h+1})$ is 
$$
\begin{bmatrix}
B_{a^{(1)}, o_h}^\trans B_{a_h, o_{h+1}}^\trans b_{\infty} \\
\ldots \\
B_{a^{(K)}, o_h}^\trans B_{a_h, o_{h+1}}^\trans b_{\infty}
\end{bmatrix},
$$
where $\Acal = \{a^{(1)}, \ldots, a^{(K)}\}$. 
It is easy to verify that $P_{2|h-1} = P_1 \times P_2$ by recalling the prediction rules of PSRs in Eq.~\eqref{eq:psr_pred}:
\begin{align*}
P [ o_h, o_{h+1} ~\|~ a_{h-1}, a_h ~|~ a_1, o_2, \ldots, o_{h-1}]
= &~ b_{\infty}^\trans B_{a_h, o_{h+1}}  B_{a_{h-1}, o_h}  P_{\Tcal|a_1, o_2, \ldots, o_{h-1}} \\
= &~ P_{\Tcal|a_1, o_2, \ldots, o_{h-1}}^\trans  (B_{a_{h-1}, o_h}^\trans B_{a_h, o_{h+1}}^\trans b_{\infty}).
\end{align*}

Given this factorization, we can write
$$
\berr{f, \pi_{f'}, h} = (\mu_{f', h}^\trans P_1) \times (P_2 F_{f,h}).
$$
So we let $\pvec_h(f') = P_1^\trans \mu_{f', h}$ and $\bvec_h(f) =  P_2 F_{f,h}$. It remains to be shown that we can bound their norms. Notice that the entries of a state vector $P_{\Tcal|(\cdot)}$ are predictions of probabilities, so $\|P_1\|_\infty \le 1$. Since $\mu_{f',h}$ is a probability vector, its dot product with every column in $P_1$ is bounded by $1$, hence $\|\pvec_h(f')\|_2 \le \sqrt{LK}$.

At last, we consider bounding the norm of $P_2 F_{f,h}$.  We upper bound each entry of $P_2 F_{f,h}$ by providing an $\ell_1$ bound on the row vectors of $P_2$, and then applying the H\"older's inequality with $\|F_{f,h}\|_\infty \le 2$. 
Since we assumed that all model parameters of the PSRs are non-negative, $P_2$ is a non-negative matrix, and bounding the $\ell_1$ norm of its row vectors is equivalent to bounding each entry of the vector $P_2 \,\mathbf{1}$, where $\mathbf{1}$ is an all-1 vector. This vector is equal to
\begin{align} \label{eq:psr_p2_1norm}
P_2 \,\mathbf{1}
= &~ \begin{bmatrix} 
\sum_{(o_h, a_h, o_{h+1})} B_{a^{(1)}, o_h}^\trans B_{a_h, o_{h+1}}^\trans b_{\infty} \\
\ldots \\
\sum_{(o_h, a_h, o_{h+1})} B_{a^{(K)}, o_h}^\trans B_{a_h, o_{h+1}}^\trans b_{\infty} 
\end{bmatrix} 
=  \begin{bmatrix} 
\left(\sum_{o_h} B_{a^{(1)}, o_h}^\trans \right) \left(\sum_{(a_h, o_{h+1})}  B_{a_h, o_{h+1}}^\trans \right) b_{\infty} \\
\ldots \\
\left(\sum_{o_h} B_{a^{(K)}, o_h}^\trans \right) \left(\sum_{(a_h, o_{h+1})}  B_{a_h, o_{h+1}}^\trans \right) b_{\infty}.
\end{bmatrix}
\end{align}
Since we care about the $\ell_\infty$ norm of this vector, we can bound the $\ell_\infty$ norm of each component vector. 
Using the PSR learning equations, we have
\begin{align*}
\sum_{a,o} B_{ao} = \sum_{a,o} P_{\Tcal, ao, \Hcal} P_{\Tcal, \Hcal}^{\dagger} 
= \left(\sum_{a,o} P_{\Tcal, ao, \Hcal}\right) P_{\Tcal, \Hcal}^{\dagger}.
\end{align*}
Note that for any fixed $a = a^{(i)}$, every entry of $\sum_{o} P_{\Tcal, ao, \Hcal}$ is the probability that the event  $t\in\Tcal$ happens after $h\in\Hcal$ happens with a one step delay in the middle, where $a$ is intervened in that delayed time step. Such entries are predicted probabilities of events, hence lie in $[0,1]$. Consequently, $\|\sum_{a,o} P_{\Tcal, ao, \Hcal}\|_\infty \le K$, and we can upper bound the matrix $\ell_2$ norm by Frobenius norm: $\|\sum_{a,o} P_{\Tcal, ao, \Hcal}\|_2 \le \|\sum_{a,o} P_{\Tcal, ao, \Hcal}\|_F \le K L$. Hence,
\begin{align*}
\left\|\sum_{a,o} B_{ao} \right\|_2 \le 
\left\|\sum_{a,o} P_{\Tcal, ao, \Hcal}\right\|_2 \cdot \left\|P_{\Tcal,\Hcal}^\dagger \right\|_2 
\le KL / \sigma_{\min}.
\end{align*}
Using a similar argument, for any fixed $a=a^{(i)}$, $\left\|\sum_{o} B_{ao} \right\|_2 \le L / \sigma_{\min}$.
We also recall the definition of $b_{\infty}$ and bound its norm similarly:
\begin{align*}
\|b_{\infty}\|_2 = 
\left\|P_{\Hcal}^\trans P_{\Tcal, \Hcal}^\dagger \right\|_2 
\le \sqrt{L} / \sigma_{\min}.
\end{align*}
Finally, we have 
\begin{align*}
\|P_2 \,\mathbf{1}\|_\infty \le &~  
\max_{a\in\Acal} \left\| \left(\sum_{o_h} B_{a, o_h}^\trans \right) \left(\sum_{(a_h, o_{h+1})}  B_{a_h, o_{h+1}}^\trans \right) b_{\infty} \right\|_\infty  \tag{Eq.~\eqref{eq:psr_p2_1norm}}\\
\le &~ \max_{a\in\Acal} \left\| \left(\sum_{o_h} B_{a, o_h}^\trans \right) \left(\sum_{(a_h, o_{h+1})}  B_{a_h, o_{h+1}}^\trans \right) b_{\infty} \right\|_2 \\
\le &~ \left(\max_{a\in\Acal} \left\|\sum_{o} B_{ao} \right\|_2\right) \left\|\sum_{a,o} B_{ao} \right\|_2 \|b_{\infty}\|_2 \le K L^2\sqrt{L} / \sigma_{\min}^3.
\end{align*}
So each row of $P_2$ has $\ell_1$ norm bounded by the above expression. Applying H\"older's inequality we have each entry of $P_2 F_{f,h}$ bounded by $2 K L^2\sqrt{L} / \sigma_{\min}^3$, hence $\|\bvec_h(f) \|_2 = \|P_2 F_{f,h}\|_2 \le 2 L^3 K \sqrt{K} / \sigma_{\min}^3$. Combined with the bound on  $\|\pvec_h(f')\|_2$ the proposition follows.
\end{proof}

\subsection{Linear Quadratic Regulators}
\label{app:lqr}

In this subsection we prove that Linear Quadratic Regulators (LQR)
(See e.g.,~\citet{anderson2007optimal} for a standard reference) admit \bfac with 
low \brank.  We study a finite-horizon, discrete-time LQR, governed
by the equations:
\begin{align*}
x_1 &= \epsilon_0, \qquad
x_{h+1} = Ax_h + B a_h + \epsilon_h, \qquad \textrm{ and } \qquad c_h = x_h^\trans Qx_h + a_h^\trans a_h + \tau_h,
\end{align*}
where $x_h \in \RR^d$, $a_h \in \RR^K$ and the noise variables are
centered with $\EE[\epsilon_h \epsilon_h^\trans ] = \Sigma$, and
$\EE\tau_h^2 = \sigma^2$.  We operate with \emph{costs} $c_h$ and the
goal is to minimize cumulative cost.  We assume that all parameters
$A,B,\Sigma, Q, \sigma^2$ are bounded in spectral norm by some $\Theta
\geq 1$, that $\lambda_{\min}(B^\trans B) \ge \kappa > 0$, and that
$Q$ is strictly positive definite. Other formulations of LQR replace
$a_h^\trans a_h$ in the cost with $a_h^\trans R a_h$ for a positive
definite matrix $R$, which can be accounted for by a change of
variables. Generalization to non-stationary parameters is
straightforward.

This model describes an MDP with continuous state and action spaces,
and the corresponding CDP has context space $\RR^d \times [H]$,
although we always explicitly write both parts of the context in
this section.  It is well known that in a discrete time LQR, the
optimal policy is a non-stationary linear policy $\pi^\star(x,h) =
P_{\star,h}x$ \citep{anderson2007optimal}, where $P_{\star,h} \in \RR^{K\times d}$ is
a $h$-dependent control matrix.  Moreover, if all of the parameters
are known to have spectral norm bounded by $\Theta$ then the optimal
policy has matrices with bounded spectral norm as well, as we see
in the proof.

The arguments for LQR use decoupled policy and value function classes as in Section~\ref{sec:policy_vval}.
We use a policy class and value function class defined below for
parameters $B_1, B_2, B_3$ that we set in the proof.
\begin{align*}
  \Pi &= \{\pi_{\vec{P}}: \pi_{\vec{P}}(x,h) = P_h x,  \vec{P} \in\prod_{i=1}^H \RR^{K\times d}, \|P_h\|_2 \le B_1 \}\\
  \Gcal &= \{f_{\vec{\Lambda},\vec{O}}: f_{\vec{\Lambda},\vec{O}}(x,h) = x^\trans \Lambda_hx + O_h, \vec{\Lambda} \in \prod_{i=1}^H\RR^{d\times d}, \|\Lambda_h\|_2 \le B_2, \vec{O}\in \RR^{H}, |O_h| \le B_3\}
\end{align*}
The policy class consists of linear non-stationary policies, while the
value functions are nonstationary quadratics with constant offset.

\begin{proposition}[Formal version of Proposition~\ref{prop:lqr}]
  \label{prop:lqr_detailed}
  Consider an LQR under the assumptions outlined above.

  Let $\Gcal$ be a class of non-stationary quadratic value functions
  with offsets and let $\Pi$ be a class of linear non-stationary
  policies, defined above.  Then, at level $h$, for any $(\pi,g)$ pair and any roll-in policy $\pi' \in \Pi$, the average Bellman error can be written as
  \begin{align*}
    \berr{g,\pi,\pi',h} = \langle \bvec_h(\pi,g), \pvec_h(\pi') \rangle,
  \end{align*}
  where $\pvec,\bvec \in \RR^{d^2+1}$.  
  If $\Pi,\Gcal$ are defined as above
  with bounds $B_1, B_2,B_3$ and if all problem parameters have
  spectral norm at most $\Theta$, then
  \begin{align*}
    \|\bvec_h(\pi,g)\|_2^2 &\le d\left(B_2 + \Theta + B_1^2 - (\Theta + \Theta B_1)^2B_2\right) + 4B^2_3 + d^2\Theta^2 B^2_2\\
    \|\pvec_h(\pi')\|_2^2 &\le d^{H+1} \Theta (\Theta B_1)^{2H} + 1.
  \end{align*}
  Hence, the problem admits \bfac with \brank at most $d^2+1$ and
  $\normbound$ that is exponential in $H$ but polynomial in all other
  parameters.  Moreover, if we set $B_1, B_2, B_3$ as,
  \begin{align*}
  B_1 = \Theta^2/\kappa, B_2 =
  \left(\frac{6\Theta^6}{\kappa^2}\right)^H\Theta, B_3 =
  \left(\frac{6\Theta^6}{\kappa^2}\right)^HdH\Theta^2,
  \end{align*}
  then the optimal policy and value function belong to $\Pi,\Gcal$
  respectively.
\end{proposition}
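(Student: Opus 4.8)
The plan is to exploit the fact that composing a non-stationary quadratic value function with a linear policy makes the one-step Bellman residual a quadratic form in the state whose coefficients depend only on the $(\pi,g)$ pair and not on the roll-in. Fix a level $h$, a pair $(\pi,g)\in\Pi\times\Gcal$ with $\pi(x,h)=P_hx$ and $g(x,h)=x^\trans\Lambda_hx+O_h$, and a roll-in $\pi'\in\Pi$. Writing $\tilde{A}_h=A+BP_h$ so that $x_{h+1}=\tilde{A}_hx_h+\epsilon_h$, I would first take the conditional expectation of the Bellman residual $g(x_h,h)-c_h-g(x_{h+1},h+1)$ given $x_h$ (the framework's reward being an affine image of $-c_h$, which does not alter the quadratic structure). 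Since $\epsilon_h,\tau_h$ are centered and $a_h=P_hx_h$, this collapses to
\begin{align*}
x_h^\trans W_hx_h+\omega_h,\quad W_h=\Lambda_h-Q-P_h^\trans P_h-\tilde{A}_h^\trans\Lambda_{h+1}\tilde{A}_h,\quad \omega_h=O_h-O_{h+1}-\textrm{tr}(\Lambda_{h+1}\Sigma),
\end{align*}
where crucially both $W_h$ and $\omega_h$ depend only on $(\pi,g)$ and the fixed problem parameters.

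Taking the outer expectation over $x_h$ drawn under roll-in $\pi'$ and using $\EE[x_h^\trans W_hx_h]=\langle W_h,\Sigma_h^{\pi'}\rangle$ with $\Sigma_h^{\pi'}=\EE_{\pi'}[x_hx_h^\trans]$ the uncentered second-moment matrix, I obtain $\berr{g,\pi,\pi',h}=\langle W_h,\Sigma_h^{\pi'}\rangle+\omega_h$. Setting $\bvec_h(\pi,g)=(\textrm{vec}(W_h),\omega_h)$ and $\pvec_h(\pi')=(\textrm{vec}(\Sigma_h^{\pi'}),1)$, both in $\RR^{d^2+1}$, yields the claimed inner-product factorization and the bound $M\le d^2+1$: the roll-in enters \emph{only} through the $(d^2+1)$-dimensional statistic $\pvec_h(\pi')$. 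For the two norm factors I would estimate separately. The bound on $\|\bvec_h(\pi,g)\|_2$ follows by triangle inequality on the explicit $W_h$ using $\|\Lambda_h\|_2\le B_2$, $\|P_h\|_2\le B_1$, $\|Q\|_2\le\Theta$, $\|\tilde{A}_h\|_2\le\Theta(1+B_1)$, then converting spectral to Frobenius via $\|W_h\|_F\le\sqrt{d}\,\|W_h\|_2$, together with $|\omega_h|\le 2B_3+dB_2\Theta$. The bound on $\|\pvec_h(\pi')\|_2$ requires controlling $\|\Sigma_h^{\pi'}\|_F\le\textrm{tr}(\Sigma_h^{\pi'})$ through the forward recursion $\Sigma_{j+1}^{\pi'}=(A+BP'_j)\Sigma_j^{\pi'}(A+BP'_j)^\trans+\Sigma$ with $\Sigma_1=\Sigma$; taking traces and unrolling over $h\le H$ levels produces a bound geometric in the closed-loop gain $\Theta(1+B_1)$, hence exponential in $H$, which is exactly the source of the exponential-in-$H$ term in $\normbound$.

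Finally, realizability follows from classical finite-horizon LQR theory: the optimal value function is exactly a non-stationary quadratic $V^\star(x,h)=x^\trans\Lambda_{\star,h}x+O_{\star,h}$ and the optimal policy is the linear law $\pi^\star(x,h)=P_{\star,h}x$, where $\{\Lambda_{\star,h},P_{\star,h},O_{\star,h}\}$ are generated by the backward Riccati recursion with the offsets $O_{\star,h}$ accumulating the noise-trace terms $\textrm{tr}(\Lambda_{\star,h+1}\Sigma)$. I would bound $\|\Lambda_{\star,h}\|_2$, $\|P_{\star,h}\|_2$, and $|O_{\star,h}|$ by induction backward over the recursion, using $\|A\|_2,\|B\|_2,\|Q\|_2,\|\Sigma\|_2,\sigma^2\le\Theta$ and $\lambda_{\min}(B^\trans B)\ge\kappa$ to control the inverse $(I+B^\trans\Lambda_{\star,h+1}B)^{-1}$ appearing in the optimal gain, and then verify the resulting magnitudes fall within the prescribed radii $B_1=\Theta^2/\kappa$, $B_2=(6\Theta^6/\kappa^2)^H\Theta$, and $B_3=(6\Theta^6/\kappa^2)^HdH\Theta^2$.

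I expect the main obstacle to be the two norm-tracking arguments rather than the factorization, which is an immediate consequence of the quadratic/linear structure. Propagating spectral bounds through the forward moment recursion (for $\pvec_h$) and the backward Riccati recursion (for realizability) while matching the stated constants is delicate; in particular, isolating the correct $\kappa$-dependence emerging from the matrix inverse in the optimal control law, and confirming that it is dominated by the prescribed $B_1,B_2,B_3$, is where the careful bookkeeping concentrates.
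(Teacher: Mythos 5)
Your proposal is correct and follows essentially the same route as the paper's proof: the identical factorization with $\bvec_h(\pi,g) = (\textrm{vec}(W_h), \omega_h)$ and $\pvec_h(\pi') = (\textrm{vec}(\EE_{x \sim (\pi',h)}\, xx^\trans), 1)$, the same forward second-moment recursion (unrolled to a geometric bound in the closed-loop gain) for $\|\pvec_h\|_2$, and the same backward Riccati induction for realizability. One concrete note on the step you flagged as delicate: the uniform bound $\|P_{\star,h}\|_2 \le \Theta^2/\kappa$ does \emph{not} follow from simply bounding $\|(I + B^\trans \Lambda_{\star,h+1} B)^{-1}\|_2 \le 1$, which would leave a factor of $\|\Lambda_{\star,h+1}\|_2$ (exponential in $H$); the paper removes the $\Lambda_{\star,h+1}$-dependence by writing $B^\trans \Lambda_{\star,h+1} A = B^\trans \Lambda_{\star,h+1} \Pi_B A$ with $\Pi_B = B(B^\trans B)^{-1}B^\trans$, then using $\|(I+X)^{-1}X\|_2 \le 1$ for positive semidefinite $X = B^\trans \Lambda_{\star,h+1} B$ together with $\|(B^\trans B)^{-1}B^\trans A\|_2 \le \Theta^2/\kappa$ --- this projection identity is the missing ingredient your bookkeeping would need.
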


We prove the proposition in several components.  First, we study the
relationship between policies and value functions, showing that linear
policies induce quadratic value functions.  Then, we turn to the
structure of the optimal policy, showing that it is linear.  Next, we
derive bounds on the parameters $B_1, B_2, B_3$ which ensure that the
optimal policy and value function belong to $\Pi, \Gcal$.
Lastly, we demonstrate the \bfac. 

The next lemma derives a relationship between linear policies and
quadratic value functions.
\begin{lemma}
\label{lem:lqr_values}
If $\pi$ is a linear non-stationary policy, $\pi_h(x) = P_{\pi,h}x$,
then $\Vpi{\pi}(x,h) = x^\trans \Lambda_{\pi,h}x + O_{\pi,h}$ where
$\Lambda_{\pi,h} \in \RR^{d\times d}$ depends only on $\pi$ and $h$
and $O_{\pi,h} \in \RR$.
These parameters are defined inductively by,
\begin{align*}
  \Lambda_{\pi,H} &= Q + P_{\pi,H}^\trans P_{\pi,H}, \qquad O_{\pi,H} = 0\\
  \Lambda_{\pi,h} &= Q + P_{\pi,h}^\trans P_{\pi,h} + (A+BP_{\pi,h})^\trans \Lambda_{\pi,h+1}(A + BP_{\pi,h})\\
  O_{\pi,h} & = \tr(\Lambda_{\pi,h+1} \Sigma) + O_{\pi,h+1},
\end{align*}
where we recall that $\Sigma$ is the covariance matrix of the
$\epsilon_h$ random variables.
\end{lemma}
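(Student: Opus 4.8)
The plan is to prove the claim by backward induction on the level $h$, using the Bellman equation for policy evaluation: for the fixed linear policy $\pi$, the cost-to-go satisfies $\Vpi{\pi}(x,h) = \EE[c_h + \Vpi{\pi}(x_{h+1}, h+1) \mid x_h = x,\, a_h = P_{\pi,h}x]$, with the terminal convention $\Vpi{\pi}(\cdot, H+1) \equiv 0$. The inductive hypothesis is exactly the asserted quadratic-plus-offset form $\Vpi{\pi}(x,h) = x^\trans \Lambda_{\pi,h} x + O_{\pi,h}$, and the goal at each step is to show that substituting this form at level $h+1$ into the recursion reproduces the same form at level $h$, with the stated updates for $\Lambda_{\pi,h}$ and $O_{\pi,h}$.

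For the base case $h=H$, since the process terminates after level $H$, we have $\Vpi{\pi}(x,H) = \EE[c_H \mid x_H = x,\, a_H = P_{\pi,H}x]$. Plugging in $c_H = x^\trans Q x + a_H^\trans a_H + \tau_H$ and using $\EE[\tau_H]=0$ and $a_H = P_{\pi,H}x$ gives $x^\trans(Q + P_{\pi,H}^\trans P_{\pi,H})x$, matching $\Lambda_{\pi,H} = Q + P_{\pi,H}^\trans P_{\pi,H}$ and $O_{\pi,H}=0$.

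For the inductive step I would substitute the closed-form transition $x_{h+1} = (A + BP_{\pi,h})x + \epsilon_h$ into the recursion. Writing $\bar x = (A + BP_{\pi,h})x$, the immediate cost again contributes $x^\trans(Q + P_{\pi,h}^\trans P_{\pi,h})x$, while the continuation term is $\EE[(\bar x + \epsilon_h)^\trans \Lambda_{\pi,h+1}(\bar x + \epsilon_h)] + O_{\pi,h+1}$. Expanding the quadratic, the cross term vanishes because $\epsilon_h$ is centered and independent of $x_h$, leaving $\bar x^\trans \Lambda_{\pi,h+1} \bar x$ plus a noise term. Collecting the terms quadratic in $x$ yields $\Lambda_{\pi,h} = Q + P_{\pi,h}^\trans P_{\pi,h} + (A+BP_{\pi,h})^\trans \Lambda_{\pi,h+1}(A+BP_{\pi,h})$, and the scalar terms yield the offset update.

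The only real subtlety — and the one step I would treat most carefully — is the handling of the transition noise: I must use $\EE[\epsilon_h^\trans \Lambda_{\pi,h+1}\epsilon_h] = \tr(\Lambda_{\pi,h+1}\EE[\epsilon_h \epsilon_h^\trans]) = \tr(\Lambda_{\pi,h+1}\Sigma)$, which follows from the cyclic property of the trace together with $\EE[\epsilon_h\epsilon_h^\trans] = \Sigma$, to recover $O_{\pi,h} = \tr(\Lambda_{\pi,h+1}\Sigma) + O_{\pi,h+1}$. Everything else is routine matrix algebra; because the policy is fixed and linear and the dynamics are linear-Gaussian, the quadratic form is preserved exactly, so no optimization or deeper structural argument is needed here.
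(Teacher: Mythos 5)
Your proposal is correct and takes essentially the same route as the paper's proof: backward induction from $h=H$, substituting $x_{h+1} = (A+BP_{\pi,h})x + \epsilon_h$ into the policy-evaluation recursion, killing the cross term by centeredness of the noise, and using $\EE[\epsilon_h^\trans \Lambda_{\pi,h+1}\epsilon_h] = \tr(\Lambda_{\pi,h+1}\Sigma)$ for the offset update, exactly as the paper does. One cosmetic note: your phrase ``linear-Gaussian'' overstates the hypothesis --- neither the paper nor your actual argument needs Gaussianity, only that $\epsilon_h$ is centered with $\EE[\epsilon_h\epsilon_h^\trans] = \Sigma$.
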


\begin{proof}
  The proof is by backward induction on $h$, starting from level $H$. Clearly,
  \begin{align*}
    \Vpi{\pi}(x,H) = x^\trans Qx + \pi_H(x)^\trans \pi_H(x) = x^\trans Qx + x^\trans P_{\pi,H}^\trans P_{\pi,H}x \triangleq x^\trans \Lambda_{\pi,H}x
  \end{align*}
  so $\Vpi{\pi}(\cdot,H)$ is a quadratic function. 

  For the inductive step, consider level $h$ and assume that for all $x, \Vpi{\pi}(x,h+1) = x^\trans \Lambda_{\pi,h+1}x + O_{\pi,h+1}$.
  Then, expanding definitions,
  \begin{align*}
    \Vpi{\pi}(x,h) &= x^\trans Qx + \pi_h(x)^\trans \pi_h(x) +
    \EE_{x'\sim (x,\pi_h(x))}\Vpi{\pi}(x',h+1)\\ 
    & = x^\trans Qx + x^\trans P_{\pi,h}^\trans P_{\pi,h}x +
    \EE_{x'\sim (x,\pi_h(x))} \left[(x')^\trans \Lambda_{\pi,h+1}(x') +
      O_{\pi,h+1}\right]\\ 
    & = x^\trans Qx + x^\trans P_{\pi,h}^\trans P_{\pi,h}x +
    \EE_{\epsilon_h} \left[(Ax + B\pi_h(x) + \epsilon_h)^\trans
      \Lambda_{\pi,h+1}(Ax + B\pi_h(x) + \epsilon_h) +
      O_{\pi,h+1}\right] \\ 
    & = x^\trans Qx + x^\trans P_{\pi,h}^\trans P_{\pi,h}x +
      \EE_{\epsilon_h} \left[(Ax + BP_{\pi,h}x + \epsilon_h)^\trans
        \Lambda_{\pi,h+1}(Ax + BP_{\pi,h}x + \epsilon_h) +
        O_{\pi,h+1}\right] \\ 
    & = x^\trans Qx + x^\trans P_{\pi,h}^\trans P_{\pi,h}x + x^\trans
        (A + BP_{\pi,h})^\trans \Lambda_{\pi,h+1}(A + BP_{\pi,h})x +
        \EE_{\epsilon_h}\epsilon_h^\trans \Lambda_{\pi,h+1}\epsilon_h
        + O_{\pi,h+1}\\ 
    & = x^\trans Qx + x^\trans P_{\pi,h}^\trans P_{\pi,h}x + x^\trans
        (A + BP_{\pi,h})^\trans \Lambda_{\pi,h+1}(A + BP_{\pi,h})x +
        \tr(\Lambda_{\pi,h+1}\Sigma) + O_{\pi,h+1} 
  \end{align*}
  Thus, setting,
  \begin{align*}
    \Lambda_{\pi,h} &= Q + P_{\pi,h}^\trans P_{\pi,h} + (A+BP_{\pi,h})^\trans \Lambda_{\pi,h+1}(A + BP_{\pi,h})\\
    O_{\pi,h} & = \tr(\Lambda_{\pi,h+1} \Sigma) + O_{\pi,h+1}
  \end{align*}
  We have shown that $\Vpi{\pi}(x,h)$ is a quadratic function of $x$.
\end{proof}

The next lemma shows that the optimal policy is linear.
\begin{lemma}
  \label{lem:lqr_opt_params}
  In an LQR, the optimal policy $\pi^\star$ is a non-stationary linear
  policy given by $\pi^\star(x,h) = P_{\star,h}x$, with parameter
  matrices $P_{\star,h} \in \RR^{K \times d}$ at each level $h$.
  The optimal value function $V^\star$ is a non-stationary quadratic
  function given by $V^\star(x,h)= x^\trans \Lambda_{\star,h}x +
  O_{\star,h}$ with parameter matrix $\Lambda_{\star,h} \in
  \RR^{d\times d}$ and offset $O_{\star,h} \in \RR$.  The optimal
  parameters are defined recursively by,
  \begin{align*}
    P_{\star,H} &= 0 \qquad \Lambda_{\star,H} = Q \qquad O_{\star,H} = 0\\
    P_{\star,h} &= (I +B^\trans \Lambda_{\star,h+1}B)^{-1}B^\trans \Lambda_{\star,h+1}A\\
    \Lambda_{\star,h} & = Q + P_{\star,h}^\trans P_{\star,h} + (A+BP_{\star,h})^\trans \Lambda_{\star,h+1}(A+BP_{\star,h})\\
    O_{\star,h} &= \tr(\Lambda_{\star,h+1}\Sigma) + O_{\star,h+1}.
  \end{align*}
\end{lemma}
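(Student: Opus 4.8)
The plan is to prove Lemma~\ref{lem:lqr_opt_params} by backward induction on the level $h$, i.e.\ by dynamic programming through the Bellman optimality equation, establishing simultaneously that $V^\star(\cdot,h)$ is a quadratic-plus-offset and that its greedy action is linear in the state. For the base case $h=H$, transitions from the last level are terminal, so $V^\star(x,H) = \min_{a}\big(x^\trans Qx + a^\trans a\big)$, which is minimized at $a=0$; this gives $P_{\star,H}=0$, $\Lambda_{\star,H}=Q$, and $O_{\star,H}=0$, matching the claimed initialization (and agreeing with Lemma~\ref{lem:lqr_values} at $P_{\star,H}=0$).

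For the inductive step, I would assume $V^\star(x,h+1) = x^\trans\Lambda_{\star,h+1}x + O_{\star,h+1}$ with $\Lambda_{\star,h+1}\succeq 0$. Plugging this ansatz into the Bellman optimality equation and taking the expectation over $\epsilon_h$ using $\EE[\epsilon_h]=0$ and $\EE[\epsilon_h\epsilon_h^\trans]=\Sigma$ reduces the problem to
\begin{align*}
V^\star(x,h) = \min_a \Big[ x^\trans Qx + a^\trans a + (Ax+Ba)^\trans \Lambda_{\star,h+1}(Ax+Ba)\Big] + \tr(\Lambda_{\star,h+1}\Sigma) + O_{\star,h+1}.
\end{align*}
The noise thus contributes only the additive constant $\tr(\Lambda_{\star,h+1}\Sigma)$ (certainty equivalence): the minimizer is unaffected by $\Sigma$. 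The bracketed objective is a strictly convex quadratic in $a$ since its Hessian $I + B^\trans\Lambda_{\star,h+1}B \succ 0$ (here $\Lambda_{\star,h+1}\succeq 0$ from the induction hypothesis is exactly what is needed). Setting the gradient to zero yields a unique minimizer that is linear in $x$, of the form $a = P_{\star,h}x$ with $P_{\star,h} = (I + B^\trans\Lambda_{\star,h+1}B)^{-1}B^\trans\Lambda_{\star,h+1}A$ as claimed, tracking the sign convention fixed by the dynamics $x_{h+1}=Ax_h+Ba_h$.

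It then remains to verify that substituting this minimizer back produces the stated quadratic form with the recursions for $\Lambda_{\star,h}$ and $O_{\star,h}$. The cleanest route avoids re-deriving the recursion by hand: having shown that the greedy policy at level $h$ is the linear map $x\mapsto P_{\star,h}x$, one can invoke Lemma~\ref{lem:lqr_values} with $\pi=\pi^\star$ and $P_{\pi,h}=P_{\star,h}$ to conclude immediately that $\Vpi{\pi^\star}(x,h) = x^\trans\Lambda_{\star,h}x + O_{\star,h}$ with precisely the recursions $\Lambda_{\star,h} = Q + P_{\star,h}^\trans P_{\star,h} + (A+BP_{\star,h})^\trans\Lambda_{\star,h+1}(A+BP_{\star,h})$ and $O_{\star,h} = \tr(\Lambda_{\star,h+1}\Sigma) + O_{\star,h+1}$. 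Standard dynamic-programming optimality then gives $V^\star=\Vpi{\pi^\star}$, closing the induction. Moreover, $\Lambda_{\star,h}\succeq 0$ is preserved because it is a sum of $Q\succ 0$, of $P_{\star,h}^\trans P_{\star,h}\succeq 0$, and of a congruence of $\Lambda_{\star,h+1}\succeq 0$, which keeps the induction hypothesis intact.

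The main obstacle is bookkeeping rather than conceptual: one must (i) maintain the positive-semidefiniteness of $\Lambda_{\star,h+1}$ throughout, since invertibility of $I+B^\trans\Lambda_{\star,h+1}B$ underlies the existence and uniqueness of the linear minimizer, and (ii) carefully track sign and convention choices so that the stationarity condition reproduces the exact expression for $P_{\star,h}$ and the completed square matches the $\Lambda_{\star,h}$ recursion. Using Lemma~\ref{lem:lqr_values} to supply the value recursion sidesteps most of the algebra, so the only genuinely new computation is the first-order minimization over $a$ together with the verification of certainty equivalence.
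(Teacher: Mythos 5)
Your proposal follows essentially the same route as the paper's proof: backward induction with the base case $P_{\star,H}=0$, analytic minimization of the convex quadratic in $a$ (whose Hessian $I+B^\trans\Lambda_{\star,h+1}B$ is invertible by positive (semi)definiteness of $\Lambda_{\star,h+1}$), and then instantiating Lemma~\ref{lem:lqr_values} with the greedy linear policy to read off the $\Lambda_{\star,h}$ and $O_{\star,h}$ recursions. The only cosmetic remark is that, under $x_{h+1}=Ax_h+Ba_h$, stationarity actually yields $a=-(I+B^\trans\Lambda_{\star,h+1}B)^{-1}B^\trans\Lambda_{\star,h+1}Ax$; you reproduce the sign-free form exactly as the paper does, and this convention is immaterial to the norm bounds that follow.
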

\begin{proof}
We explicitly calculate the optimal policy $\pi_\star$ and
demonstrate that it is linear.  Then we instantiate these
matrices in Lemma~\ref{lem:lqr_values} to compute the optimal value
function.

For the optimal policy, we use backward induction on $H$.  At the last
level, we have,
\begin{align*}
  \pi^\star(x,H) = \argmin_{a} x^\trans Qx + a^\trans a = 0.
\end{align*}
Recall that we are working with costs, so the optimal policy minimizes
the expected cost.  Thus $P_{\star,H} = 0 \in \RR^{K\times d}$ and
$\pi^\star(x,H)$ is a linear function of $x$.

Plugging into Lemma~\ref{lem:lqr_values} the value function has parameters,
\begin{align*}
  \Lambda_{\star,H} = Q, \qquad O_{\star,H} = 0
\end{align*}

For the induction step, assume that $\pi^\star(x,h+1) =
P_{\star,h+1}x$ is linear and $V^\star(x,h+1)$ is quadratic with
parameter $\Lambda_{\star,h+1} \succ 0$ and $O_{\star,h+1}$.
We then have,
\begin{align*}
  \pi^\star(x,h) &= \argmin_a x^\trans Qx + a^\trans a + \EE_{x' \sim (x,a)} V^\star(x',h+1)\\
  &= \argmin_a x^\trans Qx + a^\trans a + \EE_{\epsilon_h}(Ax+Ba+\epsilon_h)^\trans \Lambda_{\star,h+1}(Ax+Ba+\epsilon_h) + O_{\star,h+1}\\
  &= \argmin_a a^\trans (I + B^\trans \Lambda_{\star,h+1}B)a + 2\langle \Lambda_{\star,h+1}Ax, Ba\rangle
\end{align*}
This follows by applying definitions and eliminating terms that are independent of $a$. 
Since $R,\Lambda_{\star,h+1} \succ 0$ by assumption and using the inductive hypothesis we can analytically minimize. 
Setting the derivative equal to zero gives,
\begin{align*}
  a = (I +B^\trans \Lambda_{\star,h+1}B)^{-1}B^\trans \Lambda_{\star,h+1}Ax
\end{align*}
Thus $P_{\star,h} = (I +B^\trans \Lambda_{\star,h+1}B)^{-1}B^\trans \Lambda_{\star,h+1}A$. 
\end{proof}

As a consequence, we can now derive bounds on the policy and value
function parameters.  Recall that we assume that all system parameters
are bounded in spectral norm by $\Theta \geq 1$ and that $(B^\trans B)^{-1}$
has minimum eigenvalue at least $\kappa$.
\begin{corollary}
  \label{cor:lqr_norm_bounds}
  With $\Theta$ and $\kappa$ defined above, we have
  \begin{align*}
    \|P_{\star,h}\|_f \le \frac{\Theta^2}{\kappa}, \qquad \|\Lambda_{\star,h}\| \le \left(\frac{6\Theta^6}{\kappa^2}\right)^{H-h}\Theta, \qquad  |O_{\star,h}| \le (H-h)\left(\frac{6\Theta^6}{\kappa^2}\right)^{H-h}d\Theta^2.
  \end{align*}
\end{corollary}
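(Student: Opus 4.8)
The plan is to prove all three bounds by a single backward induction on the level $h$, running from $h=H$ down to $h=1$, and to establish them in the order $P_{\star,h}$, then $\Lambda_{\star,h}$, then $O_{\star,h}$, since the recursion for $\Lambda_{\star,h}$ in Lemma~\ref{lem:lqr_opt_params} depends on $P_{\star,h}$ and that for $O_{\star,h}$ depends on $\Lambda_{\star,h+1}$. The base case $h=H$ is immediate: $P_{\star,H}=0$, $\Lambda_{\star,H}=Q$ so $\|\Lambda_{\star,H}\|\le\Theta$, and $O_{\star,H}=0$, all matching the claimed expressions at $H-h=0$. Throughout I would use $\Theta\ge 1$ and $\kappa\le\lambda_{\min}(B^\trans B)\le\|B\|^2\le\Theta^2$, so that $\sqrt\kappa\le\Theta$ and the geometric ratio $\rho:=6\Theta^6/\kappa^2$ is at least $1$.

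The crux, and the step I expect to be the main obstacle, is the uniform gain bound $\|P_{\star,h}\|_f\le\Theta^2/\kappa$. The difficulty is that the explicit form $P_{\star,h}=(I+B^\trans\Lambda B)^{-1}B^\trans\Lambda A$ (writing $\Lambda:=\Lambda_{\star,h+1}$) involves $\Lambda$, which itself grows exponentially in $H$; the whole point is to show that the factor $(I+B^\trans\Lambda B)^{-1}B^\trans\Lambda$ \emph{saturates} and does not inherit this growth. I would first bound the closed-loop image $\|BP_{\star,h}\|$ and then pass to $\|P_{\star,h}\|$ using $\lambda_{\min}(B^\trans B)\ge\kappa$, which gives $\|BP_{\star,h}z\|^2=(P_{\star,h}z)^\trans B^\trans B(P_{\star,h}z)\ge\kappa\|P_{\star,h}z\|^2$ and hence $\|P_{\star,h}\|\le\|BP_{\star,h}\|/\sqrt\kappa$ (converting to Frobenius norm costs at most a further $\sqrt d$ factor). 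For $\|BP_{\star,h}\|$ I would apply the push-through identity to rewrite the expression through the symmetric PSD matrix $W=B^\trans\Lambda B$ and exploit that $(I+W)^{-1}W$ has eigenvalues $\mu/(1+\mu)\in[0,1)$; the delicate part is combining this eigenvalue saturation with the smallest singular value of $B$ so that the residual dependence on $\Lambda$ cancels and only $\Theta$ and $\kappa$ survive. All the real work sits here; the remaining two estimates are bookkeeping.

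Given the gain bound, the spectral bound on $\Lambda_{\star,h}$ follows by unrolling its recursion. I would bound the closed loop $\|A+BP_{\star,h}\|\le\|A\|+\|B\|\,\|P_{\star,h}\|\le\Theta+\Theta^3/\kappa\le 2\Theta^3/\kappa$ (using $\kappa\le\Theta^2$), so $\|A+BP_{\star,h}\|^2\le 4\Theta^6/\kappa^2$, together with the additive part $\|Q+P_{\star,h}^\trans P_{\star,h}\|\le\Theta+\Theta^4/\kappa^2$. Plugging these into $\Lambda_{\star,h}=Q+P_{\star,h}^\trans P_{\star,h}+(A+BP_{\star,h})^\trans\Lambda_{\star,h+1}(A+BP_{\star,h})$ gives $\|\Lambda_{\star,h}\|\le(\Theta+\Theta^4/\kappa^2)+(4\Theta^6/\kappa^2)\,\|\Lambda_{\star,h+1}\|$. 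Feeding in the inductive hypothesis $\|\Lambda_{\star,h+1}\|\le\rho^{H-h-1}\Theta$ and noting that the gap between the ratio $4$ and the claimed $6$ leaves slack $2\Theta^6/\kappa^2\cdot\rho^{H-h-1}\Theta$, which dominates the additive term $\Theta+\Theta^4/\kappa^2\le 2\Theta^7/\kappa^2$ (since $\kappa\le\Theta^2$ and $\rho\ge1$), closes the induction at $\|\Lambda_{\star,h}\|\le\rho^{H-h}\Theta$.

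Finally, for the offset I would unroll $O_{\star,h}=\tr(\Lambda_{\star,h+1}\Sigma)+O_{\star,h+1}$ with $O_{\star,H}=0$ to get $O_{\star,h}=\sum_{j=h+1}^{H}\tr(\Lambda_{\star,j}\Sigma)$. Bounding each summand by $\tr(\Lambda_{\star,j}\Sigma)\le d\,\|\Lambda_{\star,j}\|\,\|\Sigma\|\le d\Theta\,\|\Lambda_{\star,j}\|$ and using that the spectral bound $\rho^{H-j}\Theta$ is maximized at the smallest index $j=h+1$, the sum of its $H-h$ terms is at most $(H-h)\,d\Theta\cdot\rho^{H-h-1}\Theta\le(H-h)\,\rho^{H-h}\,d\Theta^2$, which is exactly the claimed bound. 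All three estimates then hold simultaneously at level $h$, completing the induction.
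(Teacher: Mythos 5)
Your base case, your $\Lambda_{\star,h}$ recursion, and your offset bound all match the paper's calculations exactly: the closed-loop estimate $\|A+BP_{\star,h}\|\le\Theta+\Theta^3/\kappa\le 2\Theta^3/\kappa$, the ratio $4\Theta^6/\kappa^2$ plus additive slack $\Theta+\Theta^4/\kappa^2\le 2\Theta^6/\kappa^2$ closing the induction at $6\Theta^6/\kappa^2$, and the unrolled sum $O_{\star,h}=\sum_{j>h}\tr(\Lambda_{\star,j}\Sigma)$ are the same bookkeeping the paper does. The problem is the step you yourself flag as carrying ``all the real work'': you never actually prove $\|P_{\star,h}\|\le\Theta^2/\kappa$. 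Asserting that ``the residual dependence on $\Lambda$ cancels'' is the conclusion, not an argument, and the scaffolding you set up does not assemble into one. Following your route, the push-through identity gives $BP_{\star,h}=(I+BB^\trans\Lambda)^{-1}BB^\trans\Lambda A$ with $\Lambda=\Lambda_{\star,h+1}$, and the matrix $(I+BB^\trans\Lambda)^{-1}BB^\trans\Lambda$ is \emph{not} a spectral contraction: $BB^\trans\Lambda$ is non-symmetric, and symmetrizing it by conjugating with $\Lambda^{1/2}$ costs a factor of order $\sqrt{\|\Lambda\|\,\|\Lambda^{-1}\|}$ --- precisely the exponential-in-$H$ growth you are trying to kill. (Two smaller issues: your reduction $\|P_{\star,h}\|\le\|BP_{\star,h}\|/\sqrt{\kappa}$ would require $\|BP_{\star,h}\|\le\Theta^2/\sqrt{\kappa}$, which you never derive; and the extra $\sqrt{d}$ you concede when passing to Frobenius norm would already overshoot the stated bound.)

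The missing idea --- the paper's key trick --- is a factorization of $B^\trans\Lambda A$ \emph{through} $W=B^\trans\Lambda B$ with a $\Lambda$-free cofactor. Diagonalizing $\Lambda=U^\trans DU$, the paper inserts the orthogonal projector $\Pi_B=B(B^\trans B)^{-1}B^\trans$ to write $B^\trans\Lambda A=B^\trans\Lambda\Pi_B A=W\,(B^\trans B)^{-1}B^\trans A$, so that
\begin{align*}
P_{\star,h}=\left[(I+W)^{-1}W\right](B^\trans B)^{-1}B^\trans A.
\end{align*}
Now the eigenvalue saturation you correctly identify applies to the genuinely symmetric PSD matrix $W$, giving $\|(I+W)^{-1}W\|_2\le 1$, while the remaining factor carries no $\Lambda$ at all and satisfies $\|(B^\trans B)^{-1}B^\trans A\|_2\le\Theta/\sqrt{\kappa}\le\Theta^2/\kappa$. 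Without exhibiting such a factorization --- pulling an exact copy of $W$ out of $B^\trans\Lambda A$ so the contraction has something to act on --- the saturation observation is inert, the gain bound is unproved, and your entire induction (which feeds $\|P_{\star,h}\|\le\Theta^2/\kappa$ into both subsequent estimates) cannot start. This is a genuine gap, not a presentational one.
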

\begin{proof}
  Again we proceed by backward induction, using
  Lemma~\ref{lem:lqr_opt_params}.  Clearly $\|P_{\star,H}\|_F = 0$,
  $\|\Lambda_{\star,H}\|_F \le \Theta$, $|O_{\star,H}| = 0$.

 For the inductive step we can actually compute $P_{\star,h}$ without
 any assumption on $\Lambda_{\star,h+1}$, except for the fact that it
 is symmetric positive definite, which follows from
 Lemma~\ref{lem:lqr_opt_params}.  First, we consider just the matrix
 $B^\trans \Lambda_{\star,h+1}A$. Diagonalizing $\Lambda_{\star,h+1} =
 U^\trans DU$ where $U$ is orthonormal and $D$ is diagonal, gives,
  \begin{align*}
    B^\trans \Lambda_{\star,h+1}A &= (UB)^\trans D (UA) = (UB)^\trans D
    (UB)(B^\trans U^\trans UB)^{-1}(UB)^\trans (UA)\\ & = (UB)^\trans D(UB)(B^\trans B)^{-1}B^\trans A =
    B^\trans \Lambda_{\star,h+1} \Pi_B A
  \end{align*}
  Here $\Pi_B = B(B^\trans B)^{-1}B^\trans $ is an orthogonal projection operator.
  This derivation uses the fact that since $(UB)^\trans D$ has rows in the
  column space of $UB$, we can right multiply by the projector onto
  $UB$.  We also use that $U^\trans U = I$ since $U$ has orthonormal rows
  and columns.

  Thus, by the submultiplicative property of spectral norm, we obtain
  \begin{align*}
    \|(I + B^\trans \Lambda_{\star,h+1} B)^{-1}B^\trans \Lambda_{\star,h+1} A\|_2 & \le \|(I+B^\trans \Lambda_{\star,h+1} B)^{-1}B^\trans \Lambda_{\star,h+1} B\|_2\|(B^\trans B)^{-1}B^\trans A\|_2\\
    & \le \|(B^\trans B)^{-1}B^\trans A\|_2 \le \Theta^2/\kappa
  \end{align*}
  Here $\kappa$ is a lower bound on the minimum eigenvalue of $B^\trans B$. 

  Using this bound on $\|P_{\star,h}\|$, we can now bound the optimal value function,
  \begin{align*}
    \|\Lambda_{\star,h}\| \le \Theta + \Theta^4/\kappa^2 + (\Theta + \Theta^3/\kappa)^2\|\Lambda_{\star,h+1}\|
    & \le 6 \Theta^6/\kappa^2 \|\Lambda_{\star,h+1}\|
  \end{align*}
  The last bound uses the fact we apply a bound for
  $\|\Lambda_{\star,h+1}\|_2$ that is larger than one, so the last
  term dominates.  We also use the inequalities $\Theta^2/\kappa \geq 1$
  and $\Theta \geq 1$. This recurrence yields,
  \begin{align*}
    \|\Lambda_{\star,h}\|_2 \le \left(\frac{6\Theta^6}{\kappa^2}\right)^{H-h}\Theta.
  \end{align*}
  A naive upper bound on $O_{\star,h}$ gives,
  \begin{align*}
    O_{\star,h} \le \|\Lambda_{\star,h+1}\|\tr(\Sigma) + |O_{\star,h+1}| \le (H-h)\left(\frac{6\Theta^6}{\kappa^2}\right)^{H-h}d\Theta^2. \tag*{\qedhere}
  \end{align*}
\end{proof}

The final component of the proposition is to demonstrate the \bfac.

\begin{proof}[Proof of Proposition~\ref{prop:lqr_detailed}]
Fix $h$ and a value function $g$ parametrized by matrices $\Lambda$
and offset $O$ at time $h$ and $\Lambda',O'$ at time $h+1$. Also fix
$\pi$ which uses operator $P_\pi$ at time $h$.
\begin{align*}
  \berr{\pi,g,\pi',h} &= \EE_{x\sim (\pi',h)} x^\trans \Lambda x + O - x^\trans Qx - x^\trans P_\pi^\trans P_\pi x - \EE_{x' \sim (x,\pi(x))} (x')^\trans \Lambda'x' + O'\\
  & = \EE_{x \sim (\pi',h)}x^\trans \Lambda x + O - x^\trans Qx - x^\trans P_\pi^\trans P_\pi x - \EE_{\epsilon} (Ax + BP_\pi x + \epsilon)^\trans \Lambda'(Ax + BP_\pi x + \epsilon) + O'\\
  & = \tr\left[\left(\Lambda - Q - P_\pi^\trans P_\pi - (A+BP_\pi)^\trans \Lambda'(A+BP_\pi)\right) \EE_{x \sim (\pi',h)} xx^\trans \right] + O-O'-\tr(\Lambda\Sigma)
\end{align*}
Thus we may write $\bvec_h(\pi,g) = \textrm{vec}(\Lambda - Q -
P_\pi^\trans P_\pi - (A+BP_\pi)^\trans \Lambda'(A+BP_\pi))$ in the
first $d^2$ coordinates and $O-O'-\tr(\Lambda\Sigma)$ in the last
coordinate. We also write $\pvec_h(\pi') = \textrm{vec}(\EE_{x\sim
  (\pi',h)} xx^\trans )$ in the first $d^2$ coordinates and $1$ in the
last coordinate.

The norm bound on $\bvec$ is straightforward, since all terms in its
decomposition have an exponential in $H$ bound.

For $\pvec$, since the distribution is based on applying a bounded
policy $\pi'$ at level $h-1$ iteration, we can write $x =
A\tilde{x}+BP_{\pi'}\tilde{x} + \epsilon$ where $\tilde{x}$ is
obtained by rolling in with $\pi'$ for $h-1$ steps.  If $(\pi',h-1)$
denotes the distribution at the previous level, this gives,
\begin{align*}
  \|\EE_{x\sim (\pi',h)} xx^\trans \|_F &\le \|\Sigma\|_F + \tr\left((A+BP)^\trans (A+BP)\EE_{\tilde{x} \sim (\pi',h-1)} \tilde{x}\tilde{x}^\trans \right)\\
  & \le \|\Sigma\|_F + d(\Theta + \Theta B_1)^2 \|\EE_{\tilde{x} \sim (\pi',h-1)}\tilde{x}\tilde{x}^\trans \|_F
\end{align*}
Since at level one we have that the norm is at most $\|\Sigma\|_F$, we obtain a recurrence which produces a bound, at level $h$ of,
\begin{align*}
  \|\EE_{x \sim (\pi',h)} xx^\trans \|_F &\le \|\Sigma\|_F\sum_{i=1}^h d^{i-1}(\Theta + \Theta B_1)^{2(i-1)}
  \le \|\Sigma\|_F Hd^H(\Theta B_1)^{2H}
\end{align*}
if $\Theta, B_1 \ge 1$, which is the regime of interest. 
\end{proof}

\section{Auxiliary Proofs of the Main Lemmas}
\label{app:main-lemmas}

In this appendix we give the full proofs of the lemmas sketched in
Section~\ref{sec:sketch}. Rather than directly analyze \cdplearn and
prove Theorem~\ref{thm:cdp_complexity}, we instead focus the analysis
on the robust variant, \cdplearnslack, introduced in
Section~\ref{sec:robust}. \cdplearnslack (Algorithm~\ref{alg:slack})
with parameters $\slack = 0$ and $\slackM = 0$ is precisely \cdplearn,
and the two analyses are identical. To avoid repetition, in this
appendix we analyze \cdplearnslack (Algorithm~\ref{alg:slack}) and
prove the versions of the lemmas that can be used for
Theorem~\ref{thm:slack}. Readers can easily recover the detailed
proofs of the lemmas in Section~\ref{sec:sketch} for \cdplearn by
letting $\slack=0,~ \slackM = 0,~ \epsilon' = \epsilon,~
\fstarslack=f^\star,~ \vstarslack = \vstar$.


To facilitate understanding we break up the proofs into 3 parts. The
main proofs appear in \ref{app:main-proofs}, and two types of
technical lemmas are invoked from there: (1) a series of lemmas
that adapt the work of \citet{todd1982minimum} for the purpose, which
are given in \ref{app:todd}; (2) deviation bounds, which are given in
\ref{app:dev_bounds}.

\subsection{Main Proofs} \label{app:main-proofs}
\textbf{Lemma}~~(\emph{Restatement of Lemma~\ref{lem:bellman_decomposition}
  from main text for convenience}) With $\Vf{f} = \EE [f(x_1,\pi_f(x_1))]$, we have
\begin{align}
\Vf{f} - \Vpi{\pi_f} = \sum_{h=1}^H \berr{f,\pi_f,h}.
\label{eq:bellman_decomposition_app}
\end{align}

\begin{proof}
Recall from Definition~\ref{def:berr} that the average Bellman errors are defined as
\begin{align*}
\berr{f,\pi,h} = \EE \, \big[f(x_h, a_h) - r_h - f(x_{h+1},a_{h+1}) ~\big|~ a_{1:h-1} \sim \pi,~ a_{h:h+1} \sim \pi_f \big].
\end{align*}
Expanding RHS of Eq.~\eqref{eq:bellman_decomposition_app}, we get
\[
\sum_{h=1}^H \EE \, \big[f(x_h, a_h) - r_h - f(x_{h+1},a_{h+1}) ~\big|~ a_{1:h-1} \sim \pi_f,~ a_{h:h+1} \sim \pi_f \big].
\]
Since all $H$ expected values share the same distribution over trajectories, which is the one induced by $a_{1:H} \sim \pi_f$,  the above expression is equal to
\begin{align*}
&~ \sum_{h=1}^H \EE \, \big[f(x_h, a_h) - r_h - f(x_{h+1},a_{h+1}) ~\big|~ a_{1:H} \sim \pi_f \big] \\
= &~ \EE \, \left[ \sum_{h=1}^H  \Big( f(x_h, a_h) - r_h - f(x_{h+1},a_{h+1}) \Big) ~\big|~ a_{1:H} \sim \pi_f \right] \\
= &~ \EE \, \big[ f(x_1, \pi_f(x_1)) \big]  -  \EE \, \big[ r_h  ~\big|~ a_{1:H} \sim \pi_f \big] = \Vf{f} - \Vpi{\pi_f}. \qedhere
\end{align*}
\end{proof}

\begin{lemma}[Optimism drives exploration, analog of Lemma~\ref{lem:optimism_explore}]
\label{lem:optimism_explore_app}
If the estimates $\empVf{f}$ and $\dtctberr{\ft{t}, \pit{t}, h}$ in Line~\ref{lin:vf_slack} and \ref{lin:detect_slack} of Algorithm~\ref{alg:slack} always satisfy 
\begin{align}
\label{eq:vf_vpi_app}
|\empVf{f} - \Vf{f}| \le \epsilon'/8, \qquad |\dtctberr{\ft{t},  \pit{t}, h} - \berr{\ft{t}, \pit{t}, h}| \le \frac{\epsilon'}{8H}
\end{align}
throughout the execution of the algorithm (recall that $\epsilon'$ is defined on Line~\ref{lin:def_eps_prime_slack}), and $\fstarslack$ is never eliminated, then in any iteration $t$, either the algorithm does not terminate and
\begin{align}
\label{eq:ht_app}
\berr{\ft{t}, \pit{t}, h_t} \ge \frac{\epsilon'}{2H},
\end{align}
or the algorithm terminates and the output policy $\pit{t}$ satisfies $\Vpi{\pit{t}}  \ge \vstarslack - \epsilon' - H\slack$.
\end{lemma}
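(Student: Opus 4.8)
The plan is to split on whether the termination test on Line~\ref{lin:check_epsilon_prime_slack} fires and, in each branch, convert the empirical Bellman-error quantities into population statements using the two deviation preconditions in Eq.~\eqref{eq:vf_vpi_app}. The workhorse throughout is the policy-loss decomposition of Lemma~\ref{lem:bellman_decomposition}, which I expect to invoke twice: once for $\ft{t}$, to turn the terminating condition into a guarantee on $\Vpi{\pit{t}}$, and once for $\fstarslack$, to relate its predicted value $\Vf{\fstarslack}$ to its true value $\vstarslack$ via $\slack$-validity.

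For the non-terminating branch the argument is immediate. Failure of the test means $\sum_h \dtctberr{\ft{t},\pit{t},h} > 5\epsilon'/8$, so the level $h_t$ chosen on Line~\ref{lin:detected_slack} satisfies $\dtctberr{\ft{t},\pit{t},h_t} \ge 5\epsilon'/(8H)$. Applying the second precondition in Eq.~\eqref{eq:vf_vpi_app} at level $h_t$ costs at most $\epsilon'/(8H)$, yielding $\berr{\ft{t},\pit{t},h_t} \ge \epsilon'/(2H)$, which is exactly Eq.~\eqref{eq:ht_app}.

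The terminating branch carries the real content. When the test fires, $\sum_h \dtctberr{\ft{t},\pit{t},h}\le 5\epsilon'/8$; summing the $H$ per-level deviation bounds (each costing $\epsilon'/(8H)$, hence $\epsilon'/8$ in total) gives $\sum_h \berr{\ft{t},\pit{t},h}\le 3\epsilon'/4$. Since $\pit{t}=\pi_{\ft{t}}$, Lemma~\ref{lem:bellman_decomposition} identifies this sum with $\Vf{\ft{t}}-\Vpi{\pit{t}}$, so $\Vpi{\pit{t}}\ge \Vf{\ft{t}}-3\epsilon'/4$. I would then lower-bound $\Vf{\ft{t}}$ in two moves: optimism (the choice $\ft{t}=\argmax_{f\in\Fcal_{t-1}}\empVf{f}$) together with $\fstarslack\in\Fcal_{t-1}$ (the hypothesis that $\fstarslack$ is never eliminated) gives $\empVf{\ft{t}}\ge\empVf{\fstarslack}$, and two applications of the first precondition in Eq.~\eqref{eq:vf_vpi_app} turn this into $\Vf{\ft{t}}\ge \Vf{\fstarslack}-\epsilon'/4$. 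Finally, applying Lemma~\ref{lem:bellman_decomposition} to $\fstarslack$ and using that $\slack$-validity (Definition~\ref{def:approx_valid}) bounds each term $|\berr{\fstarslack,\pi_{\fstarslack},h}|\le\slack$ gives $\Vf{\fstarslack}\ge \vstarslack - H\slack$. Chaining the three inequalities produces $\Vpi{\pit{t}}\ge \vstarslack - \epsilon' - H\slack$, as claimed.

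I expect no genuine obstacle here; the proof is essentially bookkeeping, so the only care needed is the constant accounting. The $3\epsilon'/4$ from the termination side and the $\epsilon'/4$ from the optimism side must combine to exactly $\epsilon'$, and one must track that the per-level deviation is $\epsilon'/(8H)$ so its $H$-fold sum contributes only $\epsilon'/8$. The additive $H\slack$ term is the price of competing against an only approximately valid benchmark $\fstarslack$, and it stays cleanly isolated in the last step; setting $\slack=0$ (and $\epsilon'=\epsilon$) recovers the exact statement of Lemma~\ref{lem:optimism_explore}.
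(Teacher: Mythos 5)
Your proof is correct and follows essentially the same route as the paper's: in both arguments the non-terminating branch converts the test on Line~\ref{lin:detected_slack} into Eq.~\eqref{eq:ht_app} at a cost of $\epsilon'/(8H)$, and the terminating branch chains the termination criterion, optimism over $\Fcal_{t-1}$, the non-elimination of $\fstarslack$, and two invocations of Lemma~\ref{lem:bellman_decomposition} (once for $\ft{t}$, once for $\fstarslack$ with $\slack$-validity contributing the $H\slack$ term), with identical constant accounting summing to $\epsilon'$. The only immaterial difference is that you compare $\empVf{\ft{t}}$ directly with $\empVf{\fstarslack}$, whereas the paper routes through the intermediate population maximizer $f_{\max} = \argmax_{f\in\Fcal_{t-1}}\Vf{f}$ before dropping to $\Vf{\fstarslack}$; both yield the same bound.
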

\begin{proof}
Eq.~\eqref{eq:ht_app} follows directly from the termination criterion
and Eq.~\eqref{eq:vf_vpi_app}. Suppose the algorithm terminates in
iteration $t$. Let $f_{\max}:= \argmax_{f\in\Fcal_{t-1}} \Vf{f}$, and we
have
\begin{align*}
\Vpi{\pit{t}}  = &~ \Vf{\ft{t}} - \sum_{h=1}^H \berr{\ft{t}, \pit{t}, h} \tag{Lemma~\ref{lem:bellman_decomposition}} \\
\ge &~ \empVf{\ft{t}} - \sum_{h=1}^H \dtctberr{\ft{t}, \pit{t}, h} - \epsilon'/4  \tag{Eq.~\eqref{eq:vf_vpi_app}} \\
\ge &~ \empVf{\ft{t}} -  7\epsilon'/8  \tag{termination criterion} \\
\ge &~ \empVf{f_{\max}} - 7\epsilon'/8 \tag{$\ft{t}$ is the maximizer of $\empVf{f}$} \\
\ge &~ \Vf{f_{\max}} - \epsilon' \ge \Vf{\fstarslack} - \epsilon' \tag{$\fstarslack$ is not eliminated} \\
\ge &~ \vstarslack - H\slack - \epsilon'. \tag*{(Lemma~\ref{lem:bellman_decomposition}) \qedhere}
\end{align*}
The last inequality uses Lemma~\ref{lem:bellman_decomposition} on
$\Vf{\fstarslack}$ and the definition of $\vstarslack$, which is the
reward for policy
$\pi_{\fstarslack}$. Lemma~\ref{lem:bellman_decomposition} relates
these two quantities to the average Bellman errors, which, since
$\fstarslack$ is $\slack$-valid are each upper bounded by $\slack$.
\end{proof}

\begin{lemma}[Volumetric argument, analog of Lemma~\ref{lem:vol}]
\label{lem:vol_app}
If $\empberr{f, \pit{t}, h_t}$ in Eq.~\eqref{eq:estm_berr_slack} always satisfies
\begin{align}
|\empberr{f, \pit{t}, h_t} - \berr{f,\pit{t}, h_t}| \le \phi
\label{eq:learning_works_app}
\end{align}
throughout the execution of the algorithm ($\phi$ is the threshold in the elimination criterion), then $\fstarslack$ is never eliminated. Furthermore, for any particular level $h$, if whenever $h_t=h$, we have
\begin{align}
|\berr{\ft{t}, \pit{t}, h_t }| \ge 3\sqrt{M}(2\phi + \slack+\slackM) + \slackM, \label{eq:large_violation_app},
\end{align}
then the number of iterations that $h_t=h$ is at most
\begin{align}
M\log \frac{\normbound}{2\phi} / \log\frac{5}{3}.
\end{align}
\end{lemma}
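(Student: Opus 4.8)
My plan is to prove the two assertions separately. The non-elimination of $\fstarslack$ is immediate from approximate validity: since $\fstarslack$ is $\slack$-valid, $|\berr{\fstarslack,\pit{t},h_t}| \le \slack$ at every iteration, and combining this with the deviation precondition Eq.~\eqref{eq:learning_works_app} gives $|\empberr{\fstarslack,\pit{t},h_t}| \le \slack + \phi$, which is exactly the retention threshold in the elimination rule Eq.~\eqref{eq:learning_step_slack}. Hence $\fstarslack \in \Fcal_t$ for all $t$, so it is never eliminated.

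For the iteration bound, fix a level $h$ and let $t_1 < \cdots < t_m$ be the iterations with $h_{t_j} = h$. Write $p_j := \pvec_h(\ft{t_j})$ and $w_j := \bvec_h(\ft{t_j})$, and set $s := 2\phi + \slack + \slackM$. First I would extract two scalar inequalities from the hypotheses via the approximate \bfac (Definition~\ref{def:approx_brank}). (i) A \emph{deep} inequality: the selection criterion Eq.~\eqref{eq:large_violation_app}, together with the factorization error $\slackM$, yields $|\langle p_j, w_j\rangle| \ge 3\sqrt{M}\,s$. (ii) A \emph{shallow} inequality: for $i < j$ the function $\ft{t_j}$ survived the elimination at iteration $t_i$, so $|\empberr{\ft{t_j},\pit{t_i},h}| \le \phi+\slack$; the deviation precondition and the factorization error then give $|\langle p_i, w_j\rangle| \le s$. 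Thus each newly selected embedding $w_j$ is strongly aligned with its own direction $p_j$ while nearly orthogonal to all earlier directions, and crucially the ratio of the two scales is exactly $3\sqrt{M}$, independent of $\slack,\slackM,\phi$.

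The heart of the argument is then a volumetric, ellipsoid-method style count, for which I would invoke the geometric lemmas adapted from \citet{todd1982minimum} in Appendix~\ref{app:todd}. After a normalization of the factorization (using $\|p_j\|_2\,\|w\|_2 \le \normbound$ so that, with unit roll-in directions, the vectors $\{\bvec_h(f)\}$ lie in a Euclidean ball of radius $O(\normbound)$), I track the volume of the minimum-volume ellipsoid $E_j$ enclosing the surviving set $\{\bvec_h(f) : f \in \Fcal_{t_j}\}$. By the shallow inequality all survivors satisfy $|\langle p_j, \cdot\rangle| \le s$, so passing from $E_{j-1}$ to $E_j$ intersects the body with the central slab of half-width $s/\|p_j\|_2$ about the hyperplane orthogonal to $p_j$; by the deep inequality $E_{j-1}$ contains the point $w_j$ at depth at least $3\sqrt{M}$ times that half-width in direction $p_j$. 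The Todd-type lemma shows that re-enclosing the retained slab in an ellipsoid shrinks the volume by a factor of at most $3/5$, a bound depending only on the depth-to-width ratio $3\sqrt{M}$ and the dimension $M$, hence scale-invariant across the differing slab widths $s/\|p_j\|_2$.

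Finally I would sandwich the volume: $\mathrm{vol}(E_0) = O(\normbound^M)$ from the normalization, while $\mathrm{vol}(E_m)$ is bounded below by a quantity of order $(2\phi)^M$, the effective resolution at which the learning step operates, since the threshold $\phi$ prevents the feasible body from collapsing further. As each of the $m$ steps contracts the volume by a factor $3/5$, we get $(3/5)^m\,\mathrm{vol}(E_0) \ge \mathrm{vol}(E_m)$, i.e. $(5/3)^m \le (\normbound/2\phi)^M$, which rearranges to $m \le M\log(\normbound/2\phi)/\log(5/3)$, as claimed. I expect the main obstacle to be precisely this geometric step: establishing the uniform $3/5$ per-cut volume reduction and the matching upper and lower volume bounds from Todd's minimum-volume-ellipsoid estimates, and handling the normalization cleanly given that Definition~\ref{def:brank} bounds only the \emph{product} of the two norms rather than each factor individually.
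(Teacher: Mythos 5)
Your skeleton coincides with the paper's own proof. The non-elimination of $\fstarslack$ is argued identically. For the iteration bound, your two scalar inequalities are exactly the paper's: the deep inequality $|\langle p_j, w_j\rangle| \ge 3\sqrt{M}\,s$ and the shallow inequality $|\langle p_i, w_j\rangle| \le s$ for $i<j$, with $s = 2\phi+\slack+\slackM$, derived the same way from the elimination threshold $\phi+\slack$, the deviation precondition, and the $\slackM$ factorization slack; the cut ratio $\cutto/\kappa = 1/(3\sqrt{M})$ feeding into Lemma~\ref{lem:todd_use} and Fact~\ref{lem:0.6} to get a per-cut volume contraction of $3/5$; and the final volume sandwich yielding $m \le M\log(\normbound/2\phi)/\log(5/3)$. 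Your handling of the norm product via a per-level rescaling is equivalent to the paper's bookkeeping with $\primalbound = \sup_f\|\pvec_h(f)\|_2$ and $\dualbound = \sup_f\|\bvec_h(f)\|_2$, $\primalbound\dualbound\le\normbound$.

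There is, however, one genuine gap: your choice of tracked body. You take $E_j$ to be the MVEE of the surviving embedding set $\{\bvec_h(f): f\in\Fcal_{t_j}\}$, and this breaks the argument in two places. First, the volume lower bound fails: the surviving embeddings are an arbitrary (for finite $\Fcal$, finite) subset of $\RR^M$ whose MVEE can have arbitrarily small or even zero volume --- nothing about the threshold $\phi$ prevents the \emph{point set} from collapsing --- so $(3/5)^m\,\mathrm{vol}(E_0) \ge \mathrm{vol}(E_m)$ yields no bound on $m$. Your intuition that ``the threshold $\phi$ prevents the feasible body from collapsing'' is correct only for the body the paper actually tracks: $V_\tau$, defined as the origin-centered ball of radius $\dualbound$ intersected with the slabs $\{v: |p_i^\trans v| \le s\}$ for $i\le\tau$. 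Since every cut direction satisfies $\|p_i\|_2 \le \primalbound$, H\"older's inequality shows $V_\tau$ always contains the ball of radius $2\phi/\primalbound$, giving the lower bound $c_M(2\phi/\primalbound)^M$ with $c_M$ the unit-ball volume in $\RR^M$; the surviving embeddings form a \emph{subset} of $V_\tau$, which still supplies the deep point, since $\ft{t_j}\in\Fcal_{t_j-1}\subseteq\Fcal_{t_{j-1}}$ by monotonicity gives $w_j \in V_{j-1} \subseteq B_{j-1}$. Second, Lemma~\ref{lem:todd_use} requires the enclosing ellipsoid to be centered at the origin; the paper gets this because $V_\tau$ is centrally symmetric and hence so is its MVEE \citep{todd2007khachiyan}, whereas the MVEE of your surviving set enjoys no such symmetry. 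The repair is a one-line change --- track the ball-and-slabs intersection rather than the surviving set itself --- after which your argument is the paper's proof essentially verbatim.
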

\begin{proof}
The first claim that $\fstarslack$ is never eliminated follows
directly from the fact $|\berr{\fstarslack,\pit{t}, h_t}| \le \slack$
(Definition~\ref{def:approx_valid}), Eq.~\eqref{eq:learning_works_app},
and the elimination threshold $\phi + \slack$. Below we prove the
second claim.

For any particular level $h$, suppose $i_1 < \cdots < i_{\tick}
<\cdots < i_{\Th}$ are the iteration indices with $h_t=h$, $\{t: h_t =
h\}$ ordered from first to last, and $\Th = |\{t: h_t = h\}|$. For
convenience define $i_0 = 0$. The goal is to prove an upper bound on
$\Th$.

Define notations:
\begin{itemize}
\item $p_1, \ldots, p_{\Th}$. $p_{\tick} := \pvec_h(\ft{i_{\tick}})$ where $\pvec_h(\cdot)$ is given in Definition~\ref{def:approx_brank}. Recall that $\ft{i_{\tick}}$ is the optimistic function used for exploration in iteration $t=i_{\tick}$.
\item $\qFht{0}, \qFht{1}, \ldots, \qFht{\Th}$. $\qFht{\tick} = \{\bvec_h(f): f\in\Fht{\tick}\}$ where $\bvec_h(f) \in \RR^M$ is given in Definition~\ref{def:approx_brank}. 
\item $\primalbound = \sup_{f\in\Fcal} \|\pvec_h(f)\|_2$, and $\dualbound = \sup_{f\in\Fcal} \|\bvec_h(f)\|_2$. By Definition~\ref{def:approx_brank}, $\primalbound \cdot \dualbound \le \normbound$.
\item $V_0, V_1, \ldots, V_{\Th}$. $V_0:= \{v: \|v\|_2 \le \dualbound\}$, and 
$
V_{\tick} := \{v\in V_{\tick-1}: |p_{\tick}^\trans v| \le 2\phi + \slack + \slackM\}.
$
\item $B_0, B_1, \ldots, B_{\Th}$. $B_{\tick}$ is a minimum volume enclosing ellipsoid (MVEE) of $V_{\tick}$.
\end{itemize} 
For every $\tick=0, \ldots, \Th$, we first show that $\qFht{\tick} \subseteq V_{\tick}$. When $\tick = 0$ this is obvious. For $\tick \ge 1$,  we have $\forall f\in\Fht{\tick}$, 
$$
|\berr{f, \pi_{\ft{i_{\tick}}}, h}| \le 2\phi + \slack.
$$
by the elimination criterion and Eq.~\eqref{eq:learning_works_app}. 
By Definition~\ref{def:approx_brank}, this implies that, $\forall v\in\qFht{\tick}$, 
\[
|p_{\tick}^\trans v| \le 2\phi + \slack +\slackM,
\]
so $\qFht{\tick} \subseteq V_{\tick}$.  

Next we show that $\exists v\in V_{\tick-1}$ such that $|p_{\tick}^\trans v |\ge 3\sqrt{M}(2\phi + \slack+\slackM)$. In fact, Eq.~\eqref{eq:large_violation_app} and the fact that $\ft{i_t}$ was chosen (implying that it survived) implies that this $v$ can be chosen as
$$
v = \bvec_h(\ft{i_{\tick}}) \in\Ucal(\Fcal_{i_{\tick}-1}) \subseteq \qFht{\tick-1} \subseteq V_{\tick-1}.
$$
(The first ``$\subseteq$'' follows from the fact that $\Fcal_t$ shrinks monotonically in Algorithm~\ref{alg:slack}, since the learning steps between $t=i_{\tick-1}+1$ and $t=i_{\tick}-1$ on other levels can only eliminate functions.) We verify that this $v$ satisfies the desired property, given by Definition~\ref{def:approx_brank} and Eq.~\eqref{eq:large_violation_app}:
\begin{align*}
|p_{\tick}^\trans v | = |\langle \pvec_h(f_{i_{\tick}}), \bvec_h(\ft{i_{\tick}}) \rangle|
\ge |\berr{f_{i_{\tick}}, \pi_{i_{\tick}}, h} | -\slackM \ge 3\sqrt{M}(2\phi + \slack+\slackM).
\end{align*}

Observing that $V_t$ is centrally symmetric and consequently so is $B_t$ \citep{todd2007khachiyan}, we apply Lemma~\ref{lem:todd_use} and Fact~\ref{lem:0.6} with the variables set to $d:= M, B:= B_{\tick-1}, \kappa:=3\sqrt{M}(2\phi + \slack + \slackM), \cutto:=2\phi + \slack + \slackM$. We obtain that
$$
\frac{vol(B_+)}{vol(B_{t-1})} \le 0.6,
$$
where $B^+$ is the MVEE of $V_{\tick}':=\{v \in B_{{\tick}-1}: |p_{\tick}^\trans v | \le 2\phi+\slack+\slackM\}$. Note that $V_{\tick} = \{v\in V_{{\tick}-1}: |p_{\tick}^\trans v| \le 2\phi + \slack+ \slackM\} \subseteq V_{\tick}'$ given that $V_{{\tick}-1} \subseteq B_{{\tick}-1}$.
Since $B_+$ is \emph{an} enclosing ellipsoid of $V_{\tick}$, and $B_{\tick}$ is the MVEE of $V_{\tick}$, we have $vol(B_{\tick}) \le vol(B_+)$. Altogether we claim that
$$
\frac{vol(B_{\tick})}{vol(B_{\tick-1})} \le 0.6.
$$

This result shows that the volume of $B_{\tick}$ shrinks exponentially with $\tau$. To prove that $\Th$ is small, it suffices to show that the volume of $B_0$ is not too large, and that of $B_{\Th}$ is not too small. Let $c_M$ be the volume of Euclidean sphere with unit radius in $\mathbb{R}^M$. 
By definition, $vol(B_0) = c_M (\dualbound)^M$. 

For $vol(B_{\Th})$, since $\|p_{\tick}\|_2 \le \primalbound$ always holds, we can guarantee that 
\begin{align*}
V_T \supseteq &~ \left\{q \in \RR^M: \bigcap_{p \in \RR^M: \|p\|_2 \le \primalbound} |\langle p,q\rangle| \le 2\phi + \slack + \slackM \right\} \\
\supseteq &~ \left\{q \in \RR^M: \|q\|_2 \le (2\phi + \slack + \slackM) / \primalbound \right\}  \tag{H\"older's inequality}\\
\supseteq &~ \left\{q \in \RR^M: \|q\|_2 \le 2\phi / \primalbound \right\}.
\end{align*}
Hence,  $vol(B_{\Th}) \ge c_M \left(2\phi/\primalbound\right)^M$, and
$$
\frac{c_M (2\phi /\primalbound)^M}{c_M (\dualbound)^M} \le \frac{vol(B_{\Th})}{vol(B_0)} = \prod_{t=1}^{\Th} \frac{vol(B_t)}{vol(B_{t-1})} \le 0.6^{\Th}.
$$
Algebraic manipulations give
\begin{align*}
M \log\left( \frac{\primalbound\dualbound}{2\phi}\right) \ge {\Th}\log\frac{5}{3}.
\end{align*}
The second claim of the lemma statement follows by recalling that $\primalbound \dualbound \le \normbound$.
\end{proof}

\subsection{Lemmas for the Volumetric Argument} \label{app:todd}

We adapt the work of \citet{todd1982minimum} to derive lemmas that we
use in \ref{app:main-proofs}. The main result of this section is
Lemma~\ref{lem:todd_use}. As this section focuses on generic geometric
results, we adopt notation more standard for these arguments unlike
the notation used in the rest of the paper.

\begin{theorem}[Theorem 2 of \citet{todd1982minimum}]
\label{thm:todd}
Define $E = \{w \in \RR^d : w^\trans w \le 1\}$ and $E_{\beta} = \{w \in E: |e_1^\trans w| \le \beta\}$ for $0 < \beta \le d^{-1/2}$.
The ellipsoid,
\begin{align}
E_+ = \{w \in \RR^d \,|\, w^\trans  (\rho(I - \sigma e_1e_1^\trans ))^{-1} w \le 1\},
\end{align}
is a minimum volume enclosing ellipsoid (MVEE) for $E_\beta$ if
\begin{align*}
\sigma = \frac{1-d\beta^2}{1-\beta^2} \qquad \textrm{and} \qquad \rho = \frac{d(1-\beta^2)}{d-1}.
\end{align*}
\end{theorem}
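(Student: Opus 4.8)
The plan is to avoid reproducing Todd's original argument and instead give a self-contained proof that leverages the uniqueness and symmetry of the minimum volume enclosing ellipsoid (the L\"owner--John ellipsoid). First I would record the two symmetries of the body $E_\beta$: it is centrally symmetric ($w \mapsto -w$), and it is invariant under every orthogonal map that fixes the first coordinate and acts arbitrarily on the remaining $d-1$ coordinates. Since the MVEE of a convex body is unique, and the image of the MVEE under an orthogonal symmetry of the body is again an enclosing ellipsoid of the same (minimum) volume, the MVEE must inherit all of these symmetries. Writing the MVEE as $\{w : w^\trans A w \le 1\}$ with $A \succ 0$, central symmetry forces it to be centered at the origin, and invariance under the $O(d-1)$ action on the last $d-1$ coordinates forces $A = \mathrm{diag}(a, b, \ldots, b)$ for scalars $a, b > 0$. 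Thus it suffices to optimize over the two-parameter family $\{w : a w_1^2 + b(w_2^2 + \cdots + w_d^2) \le 1\}$.

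Next I would turn this into a concrete optimization. The volume of such an ellipsoid is proportional to $(a\, b^{\,d-1})^{-1/2}$, so minimizing volume is equivalent to maximizing $a\, b^{\,d-1}$. The containment constraint $E_\beta \subseteq E_+$ asks that $a s + b t \le 1$ whenever $0 \le s \le \beta^2$, $t \ge 0$, and $s + t \le 1$; this is a linear program in $(s,t)$ whose maximum is attained at a vertex, and checking the four vertices shows containment is equivalent to the two inequalities $b \le 1$ and $a\beta^2 + b(1-\beta^2) \le 1$. Maximizing the objective $a\, b^{\,d-1}$, which is increasing in both variables, forces the second constraint to be active, so I would substitute $a = (1 - b(1-\beta^2))/\beta^2$ and maximize the resulting one-variable function of $b$.

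Finally I would solve this scalar problem by setting the derivative of $\log a + (d-1)\log b$ to zero (equivalently by AM--GM on $u = a\beta^2$ and $v = b(1-\beta^2)$ subject to $u + v = 1$, which is maximized at $u = 1/d$, $v = (d-1)/d$), obtaining $b = \tfrac{d-1}{d(1-\beta^2)}$ and $a = \tfrac{1}{d\beta^2}$. A short computation then verifies that these match the claimed ellipsoid, namely $1/\rho = b$ and $\rho(1-\sigma) = d\beta^2 = 1/a$. The hypothesis $\beta \le d^{-1/2}$ enters precisely here: it guarantees $\sigma \ge 0$ (so that $A \succ 0$ and $a \ge b$), and it guarantees that the optimal $b$ satisfies $b \le 1$, i.e.\ that the slack constraint $b \le 1$ is indeed inactive and the solution is feasible. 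I expect the main obstacle to be the symmetry-reduction step: one must argue carefully that uniqueness of the L\"owner--John ellipsoid, combined with invariance of $E_\beta$ under the full stabilizer of $e_1$ in $O(d)$, pins down $A$ to the diagonal form $\mathrm{diag}(a, b, \ldots, b)$; once this reduction is in hand the remaining optimization is routine.
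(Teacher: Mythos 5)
Your proof is correct, but note that it differs from the paper's treatment in a basic way: the paper never proves this statement at all --- it is imported verbatim as Theorem 2 of \citet{todd1982minimum}, and only its downstream consequences (Fact~\ref{fact:todd} and Lemma~\ref{lem:todd_use}) are proved in the appendix. So what you have supplied is a self-contained replacement for the citation, and your route is also genuinely different from Todd's original argument, which handles a more general problem (minimum volume ellipsoids containing the part of a given ellipsoid between parallel hyperplanes) and verifies the candidate against a John-type optimality characterization of the MVEE. Your symmetry-plus-optimization argument is more elementary and exploits the special structure of this instance, and each step checks out: uniqueness of the L\"owner--John ellipsoid of the compact convex body $E_\beta$ (which has nonempty interior since $\beta > 0$) forces the MVEE to inherit invariance under $w \mapsto -w$ and under the stabilizer of $e_1$ in the orthogonal group, which pins the form to $\mathrm{diag}(a,b,\ldots,b)$ centered at the origin (central symmetry gives the center, invariance under $-I_{d-1}$ on the last coordinates kills the cross terms, and Schur-type rigidity under $O(d-1)$ forces the lower block to be scalar); the substitution $s = w_1^2$, $t = w_2^2 + \cdots + w_d^2$ makes containment equivalent to a linear condition over the polytope with vertices $(0,0)$, $(\beta^2,0)$, $(0,1)$, $(\beta^2,1-\beta^2)$, yielding exactly $b \le 1$ and $a\beta^2 + b(1-\beta^2) \le 1$ (the vertex condition $a\beta^2 \le 1$ being implied); and the AM--GM step gives $a = 1/(d\beta^2)$ and $b = (d-1)/(d(1-\beta^2))$, which match the theorem since $\rho(1-\sigma) = d\beta^2 = 1/a$ and $1/\rho = b$. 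Two cosmetic caveats: at the boundary $\beta = d^{-1/2}$ your optimizer has $b = 1$, so the constraint $b \le 1$ is active with equality rather than ``inactive'' --- feasibility, which is all your argument needs, still holds; and the statement implicitly assumes $d \ge 2$ (the formula for $\rho$ degenerates at $d = 1$), an assumption your proof also quietly uses when invoking the $O(d-1)$ symmetry.
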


\begin{fact}
\label{fact:todd}
With $E,E_+,\sigma,\rho$ as in Theorem~\ref{thm:todd}, we have
\begin{align}
\frac{\textrm{Vol}(E_+)}{\textrm{Vol}(E)} = \sqrt{d} \beta\left(\frac{d}{d-1}\right)^{(d-1)/2}\left(1 - \beta^2\right)^{(d-1)/2}.
\end{align}
\end{fact}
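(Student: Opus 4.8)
The plan is to reduce the volume ratio of the two ellipsoids to a single determinant and then simplify the resulting algebraic expression. Recall that for any positive definite $A \in \RR^{d\times d}$, the ellipsoid $\{w : w^\trans A^{-1} w \le 1\}$ has volume $c_d\sqrt{\det A}$, where $c_d = \textrm{Vol}(E)$ is the volume of the unit Euclidean ball; indeed, diagonalizing $A$ shows its semi-axes are the square roots of the eigenvalues of $A$. Since $E$ itself corresponds to $A = I$, the ratio $\textrm{Vol}(E_+)/\textrm{Vol}(E)$ is exactly $\sqrt{\det A}$ with $A = \rho(I - \sigma e_1 e_1^\trans)$, the two factors of $c_d$ cancelling.

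Next I would evaluate this determinant. Using $\det(\rho M) = \rho^d\det M$ together with the fact that $I - \sigma e_1 e_1^\trans$ is a rank-one perturbation of the identity (its eigenvalues are $1-\sigma$ along $e_1$ and $1$ with multiplicity $d-1$) yields $\det A = \rho^d(1-\sigma)$. Hence the target ratio equals $\rho^{d/2}(1-\sigma)^{1/2}$.

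The final step is to substitute the values $\sigma = (1 - d\beta^2)/(1-\beta^2)$ and $\rho = d(1-\beta^2)/(d-1)$ from Theorem~\ref{thm:todd} and collect exponents. The one intermediate simplification worth recording is $1 - \sigma = (d-1)\beta^2/(1-\beta^2)$, so that $(1-\sigma)^{1/2} = \beta\sqrt{(d-1)/(1-\beta^2)}$ using $\beta > 0$. Multiplying this by $\rho^{d/2} = d^{d/2}(1-\beta^2)^{d/2}(d-1)^{-d/2}$ and splitting $d^{d/2} = d^{1/2}\,d^{(d-1)/2}$ regroups the powers of $d$, $d-1$, and $1-\beta^2$ into the claimed form $\sqrt{d}\,\beta\,(d/(d-1))^{(d-1)/2}(1-\beta^2)^{(d-1)/2}$.

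There is no serious obstacle here: the argument is a determinant identity followed by bookkeeping of exponents. The only place to be careful is the final exponent accounting—tracking the powers of $(d-1)$ and $(1-\beta^2)$ through the product—so I would double-check that the $d/2$ power coming from $\rho$ and the $1/2$ power coming from $(1-\sigma)$ combine to leave precisely the $(d-1)/2$ exponents appearing in the statement.
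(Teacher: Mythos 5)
Your proposal is correct and follows essentially the same route as the paper: both reduce the volume ratio to $\sqrt{\det\bigl(\rho(I-\sigma e_1e_1^\trans)\bigr)} = \rho^{d/2}(1-\sigma)^{1/2}$ (the paper phrases this via the affine map $v = \elip^{-1/2}w$, you via the standard ellipsoid volume formula, which is the same calculation) and then substitute the values of $\rho$ and $\sigma$. Your intermediate identity $1-\sigma = (d-1)\beta^2/(1-\beta^2)$ and the exponent bookkeeping are accurate and reproduce the claimed ratio exactly.
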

\begin{proof}
For convenience, let us define $\elip = \rho(I - \sigma e_1e_1^\trans )$ so that $E_+ = \{w \in \RR^d : w^\trans  \elip^{-1} w \le 1\}$. 
Notice that $E$ can be obtained from $E_+$ by the affine transformation $v = \elip^{-1/2} w$, which means that if $w \in E_+$ then $v = \elip^{-1/2} w \in E$. 
Via change of variables this implies that
\begin{align*}
\frac{\textrm{Vol}(E_+)}{\textrm{Vol}(E)} = \det(\elip^{1/2}).
\end{align*}
The determinant is simply the product of the eigenvalues, which is easy to calculate since $\elip$ is diagonal,
\begin{align*}
\det(\elip^{1/2}) = \rho^{(d-1)/2} (\rho(1-\sigma))^{1/2}.
\end{align*}
Plugging in the definitions of $\rho, \sigma$ from Theorem~\ref{thm:todd} proves the statement. 
\end{proof}

\begin{lemma}
\label{lem:todd_use}
Consider a closed and bounded set $V \subset \RR^d$ and a vector $p
\in \RR^d$. Let $B$ be any enclosing ellipsoid of $V$ that is centered at the origin, and we abuse the same symbol for the symmetric positive definite matrix that defines the ellipsoid, i.e., $B = \{v \in \RR^d: v^\top B^{-1} v \le 1\}$.  Suppose there
exists $v \in V$ with $|p^\trans v| \ge \kappa$ and define $B_+$ as
the minimum volume enclosing ellipsoid of $\{v\in B : |p^\trans v| \le
\cutto\}$.  If $\cutto/\kappa \le 1/\sqrt{d}$, we have
\begin{align}
\frac{\textrm{vol}(B_+)}{\textrm{vol}(B)} \le \sqrt{d} \frac{\cutto}{\kappa}\left(\frac{d}{d-1}\right)^{(d-1)/2}\left(1 - \frac{\cutto^2}{\kappa^2}\right)^{(d-1)/2}.
\label{eq:todd_use}
\end{align}
\end{lemma}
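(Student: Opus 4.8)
The plan is to reduce the statement to the canonical configuration of Theorem~\ref{thm:todd} by an affine change of variables, and then read off the volume ratio from Fact~\ref{fact:todd} after a short monotonicity argument. Since $B$ is symmetric positive definite, I would first apply the linear map $w = B^{-1/2} v$, which carries the ellipsoid $B = \{v : v^\top B^{-1} v \le 1\}$ onto the unit ball $E = \{w : w^\top w \le 1\}$. Under this map the slab $\{v : |p^\top v| \le \cutto\}$ becomes $\{w : |\tilde p^\top w| \le \cutto\}$ with $\tilde p = B^{1/2} p$. Setting $\beta = \cutto/\|\tilde p\|_2$ and composing with a rotation so that $\tilde p/\|\tilde p\|_2 = e_1$ (rotations fix the unit ball), the set $\{v \in B : |p^\top v| \le \cutto\}$ is carried exactly onto $E_\beta = \{w \in E : |e_1^\top w| \le \beta\}$, the set appearing in Theorem~\ref{thm:todd}.

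Next I would control $\beta$ using the hypothesis. There is some $v_0 \in V \subseteq B$ with $|p^\top v_0| \ge \kappa$; writing $w_0 = B^{-1/2} v_0$, we have $\|w_0\|_2 \le 1$ (because $v_0 \in B$) and $p^\top v_0 = \tilde p^\top w_0$, so Cauchy--Schwarz gives $\kappa \le |\tilde p^\top w_0| \le \|\tilde p\|_2$. Hence $\beta = \cutto/\|\tilde p\|_2 \le \cutto/\kappa \le 1/\sqrt{d}$, which places $\beta$ precisely in the admissible range of Theorem~\ref{thm:todd}. Because any affine map scales all volumes by the common factor $|\det B^{1/2}|$, volume ratios are preserved and the minimum volume enclosing ellipsoid of the image equals the image of the minimum volume enclosing ellipsoid; consequently $B_+$ is carried onto the ellipsoid $E_+$ of Theorem~\ref{thm:todd} and $\mathrm{vol}(B_+)/\mathrm{vol}(B) = \mathrm{vol}(E_+)/\mathrm{vol}(E)$. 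Fact~\ref{fact:todd} then evaluates this ratio to $\sqrt{d}\,\beta\,(d/(d-1))^{(d-1)/2}(1-\beta^2)^{(d-1)/2}$.

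The only step requiring genuine care—and the one I expect to be the main obstacle—is replacing $\beta$ by its upper bound $\cutto/\kappa$ in this expression, which is legitimate only if $\beta \mapsto \beta(1-\beta^2)^{(d-1)/2}$ is nondecreasing on $[0, 1/\sqrt{d}]$. Differentiating yields a factor $(1-\beta^2)^{(d-3)/2}(1 - d\beta^2)$, which is nonnegative exactly when $\beta \le 1/\sqrt{d}$; since both $\beta$ and $\cutto/\kappa$ lie in this interval and $\beta \le \cutto/\kappa$, monotonicity upgrades the equality above to the claimed bound~\eqref{eq:todd_use}. The affine reduction itself is routine, so the substance is in verifying this monotonicity and in the bookkeeping that the change of variables truly sends $B_+$ to $E_+$ while preserving the volume ratio.
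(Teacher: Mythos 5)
Your proof is correct and follows essentially the same route as the paper's: map $B$ to the unit ball via $B^{-1/2}$, rotate $B^{1/2}p$ onto a multiple of $e_1$, lower-bound $\|B^{1/2}p\|_2 = \sqrt{p^\trans B p} \ge \kappa$ via Cauchy--Schwarz, and invoke Theorem~\ref{thm:todd} and Fact~\ref{fact:todd}. If anything, you are more careful than the paper: its final display silently replaces $\alpha$ by $\kappa$ inside the factor $\left(1 - \cutto^2/\alpha^2\right)^{(d-1)/2}$, a substitution that is only legitimate because of exactly the monotonicity of $\beta \mapsto \beta(1-\beta^2)^{(d-1)/2}$ on $[0,1/\sqrt{d}]$ that you verify explicitly.
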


\begin{proof}
The first claim is to prove a bound on $p^\trans Bp$.
\begin{align*}
\kappa \le |p^\trans v| = |p^\trans B^{1/2}B^{-1/2}v| \le \sqrt{p^\trans Bp} \sqrt{v^\trans B^{-1}v} \le \sqrt{p^\trans Bp}.
\end{align*}
The last inequality applies since $v \in B$ so that $v^\trans B^{-1}v \le 1$. 
Now we proceed to work with the ellipsoids, let $L = \{v: |v^\trans p| \le \cutto\}$.
Set $B_{+} = MVEE(B \bigcap L)$.
We apply two translations of the coordinate system so that $B$ gets mapped to the unit ball and so that $p$ gets mapped to $\alpha e_1$ (i.e. a scaled multiple of the first standard basis vector). 
The first translation is done by setting $w = B^{-1/2}v$ where $w$ is in the new coordinate system and $v$ is in the old coordinate system. 
Let $p_1 = B^{1/2}p$ so that we can equivalently write $L = \{w: |w^\trans p_1| \le \cutto\}$.
The second translation maps $p_1$ to $\alpha e_1$ via a rotation matrix $R$ such that $RB^{1/2}p = Rp_1 = \alpha e_1$. 
We also translate $w$ to $Rw$ but this doesn't affect the now spherically symmetric ellipsoid, so we do not change the variable names. 

To summarize, after applying the scaling and the rotation, we are interested in $MVEE(I \bigcap \{w: |w^\trans e_1| \le \cutto/\alpha\})$ and specifically, since volume ratios are invariant under affine transformation, we have
\begin{align*}
\frac{\textrm{Vol}(B_+)}{\textrm{Vol}(B)} = \frac{\textrm{Vol}(MVEE(I \bigcap \{w: |w^\trans e_1| \le \cutto/\alpha\}))}{\textrm{Vol}(I)}.
\end{align*}
Here $I$ is the unit ball (i.e. the ellipsoid with identity
matrix). Further applying Fact~\ref{fact:todd}, we obtain  
\begin{align*}
\frac{\textrm{Vol}(B_+)}{\textrm{Vol}(B)} &=
\sqrt{d}\frac{\cutto}{\alpha} \left(\frac{d}{d-1}\right)^{(d-1)/2}
\left(1 - \frac{\cutto^2}{\kappa^2}\right)^{(d-1)/2}.
\end{align*}
It remains to lower bound $\alpha$, which is immediate since
\begin{align*}
\alpha = \|RB^{1/2}p\|_2 = \|B^{1/2}p\|_2 \ge\kappa.
\end{align*}
Substituting this lower bound on $\alpha$ completes the proof.
\end{proof}

\begin{fact}
\label{lem:0.6}
When $\cutto/\kappa=\frac{1}{3\sqrt{d}}$, the RHS of Eq.~\eqref{eq:todd_use} is less than $0.6$.
\end{fact}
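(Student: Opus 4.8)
The plan is to substitute the prescribed ratio $\cutto/\kappa = \tfrac{1}{3\sqrt{d}}$ directly into Eq.~\eqref{eq:todd_use} and then bound each of its three factors in turn. First I would note that the leading factor collapses, since
\[
\sqrt{d}\,\frac{\cutto}{\kappa} = \sqrt{d}\cdot\frac{1}{3\sqrt{d}} = \frac{1}{3}.
\]
After this substitution the RHS becomes $\tfrac{1}{3}\left(\tfrac{d}{d-1}\right)^{(d-1)/2}\left(1 - \tfrac{1}{9d}\right)^{(d-1)/2}$, so it suffices to show the remaining product of two factors stays below $1.8$.

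Next I would dispose of the last factor trivially: since $0 \le 1 - \tfrac{1}{9d} \le 1$ and the exponent $(d-1)/2$ is nonnegative, we have $\left(1 - \tfrac{1}{9d}\right)^{(d-1)/2} \le 1$. The only real content is a \emph{uniform} bound on the middle factor $\left(\tfrac{d}{d-1}\right)^{(d-1)/2}$. Writing $\tfrac{d}{d-1} = 1 + \tfrac{1}{d-1}$, I would invoke the classical fact that $\left(1 + \tfrac{1}{n}\right)^{n}$ is increasing in $n$ and converges to $e$ from below, so that $\left(1 + \tfrac{1}{d-1}\right)^{d-1} < e$ for every integer $d \ge 2$; taking square roots gives $\left(\tfrac{d}{d-1}\right)^{(d-1)/2} < \sqrt{e}$.

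Combining the three bounds then yields
\[
\sqrt{d}\,\frac{\cutto}{\kappa}\left(\frac{d}{d-1}\right)^{(d-1)/2}\left(1 - \frac{\cutto^2}{\kappa^2}\right)^{(d-1)/2} < \frac{1}{3}\cdot\sqrt{e}\cdot 1 = \frac{\sqrt{e}}{3} \approx 0.5496 < 0.6,
\]
as desired. For the degenerate case $d=1$ the exponent $(d-1)/2$ vanishes, so the middle and last factors are both $1$ and the expression is simply $\tfrac{1}{3} < 0.6$, covering all dimensions of interest.

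I expect essentially no obstacle here, since this reduces to a short inequality chase. The one subtlety worth flagging is that the bound on the middle factor must hold uniformly in $d$ rather than at a single value, which is exactly why the argument leans on the monotone convergence $\left(1+\tfrac{1}{n}\right)^n < e$ instead of evaluating at a particular dimension; the margin $\sqrt{e}/3 \approx 0.55$ against the target $0.6$ is comfortable, so no sharper estimate is needed.
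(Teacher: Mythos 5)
Your proof is correct and takes essentially the same route as the paper's: substitute $\cutto/\kappa = \frac{1}{3\sqrt{d}}$ and invoke the monotone bound $(1+\tfrac{1}{x})^x < e$. The only difference is bookkeeping---the paper merges the factor $\left(1-\tfrac{1}{9d}\right)^{(d-1)/2}$ with the middle factor to get $\tfrac{1}{3}e^{4/9} \le 0.52$, whereas you discard it and bound the middle factor alone by $\sqrt{e}$, giving the slightly looser but still sufficient $\sqrt{e}/3 \approx 0.55 < 0.6$.
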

\begin{proof}
Plugging in the numbers, we have the RHS of Eq.~\eqref{eq:todd_use} equal to
$$
\frac{1}{3}\left(\frac{d}{d-1} \frac{9d-1}{9d}\right)^{(d-1)/2}
= \frac{1}{3}\left(1 + \frac{8}{9(d-1)}\right)^{9(d-1)/8 \, \cdot \, 4/9} \le \frac{1}{3} \exp(4/9) \le 0.52.
$$
Here we used the fact that $(1+\frac{1}{x})^x$ is monotonically increasing towards $e$ on $x\in [1, \infty)$.
\end{proof}

\subsection{Deviation Bounds} \label{app:dev_bounds}
\label{sec:deviation}
In this section we prove the deviation bounds. Note that the statement of the lemmas in this section, which are for \cdplearnslack, coincide with those stated in Section~\ref{sec:sketch} for \cdplearn. This is not surprising as the two algorithms draw data and estimate quantities in the same way.

\begin{lemma}[Deviation Bound for $\empVf{f}$]
\label{lem:devVf_app}
With probability at least $1-\delta$, 
$$
|\empVf{f} - \Vf{f}| \le \sqrt{\frac{1}{2\nest}\log\frac{2N}{\delta}}
$$
holds for all $f\in\Fcal$ simultaneously. 
Hence, we can set 
$
\nest \ge \frac{32}{\epsilon^2}\log\frac{2N}{\delta}
$ 
to guarantee that $|\empVf{f} - \Vf{f}|\le \epsilon/8$.
\end{lemma}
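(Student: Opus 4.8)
The plan is to treat this as a standard concentration-plus-union-bound argument, since $\empVf{f}$ is simply an empirical average of bounded i.i.d.\ random variables. First I would observe that the $\nest$ initial contexts $\{x_1^{(i)}\}$ saved in Line~\ref{lin:init_eval} are i.i.d.\ draws from the initial context distribution $P_{\treeroot}$: although those trajectories are collected with arbitrary actions, the first context $x_1 \sim P_{\treeroot}$ is generated before any action is taken, so its law does not depend on how the actions were chosen. Consequently, for each \emph{fixed} $f \in \Fcal$, the quantity $\empVf{f} = \frac{1}{\nest}\sum_{i=1}^{\nest} f(x_1^{(i)}, \pi_f(x_1^{(i)}))$ is the empirical mean of $\nest$ i.i.d.\ copies of the random variable $f(x_1, \pi_f(x_1))$, each lying in $[0,1]$ by the assumption $f: \Xcal\times\Acal \to [0,1]$, with expectation exactly $\Vf{f} = \EE[f(x_1,\pi_f(x_1))]$.

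Next I would apply Hoeffding's inequality to this fixed $f$: since the summands are bounded in $[0,1]$, for any $t>0$ we have $\PP\big[|\empVf{f} - \Vf{f}| > t\big] \le 2\exp(-2\nest t^2)$. Choosing $t = \sqrt{\frac{1}{2\nest}\log\frac{2N}{\delta}}$ makes the right-hand side equal to $\delta/N$. A union bound over the $N = |\Fcal|$ functions then yields that with probability at least $1-\delta$ the stated inequality holds for all $f \in \Fcal$ simultaneously. Note that the union bound, rather than any independence across functions, is what handles the fact that every $\empVf{f}$ reuses the same set of initial contexts, so the estimates are coupled across $f$.

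Finally, for the concrete sample-size prescription, I would solve $\sqrt{\frac{1}{2\nest}\log\frac{2N}{\delta}} \le \epsilon/8$ for $\nest$: squaring and rearranging gives $\nest \ge \frac{32}{\epsilon^2}\log\frac{2N}{\delta}$, which matches the claim and guarantees the precondition $|\empVf{f}-\Vf{f}|\le\epsilon/8$ required by Lemma~\ref{lem:optimism_explore}. There is essentially no hard step here; the only point worth verifying carefully is the first one---that the saved initial contexts are genuinely i.i.d.\ from $P_{\treeroot}$ regardless of the arbitrary action choices in Line~\ref{lin:init_eval}---after which the conclusion is immediate from Hoeffding's inequality and the union bound.
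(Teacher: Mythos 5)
Your proposal is correct and follows exactly the paper's own route: the paper likewise notes that the proof is a straightforward application of Hoeffding's inequality plus a union bound over the $N$ functions, verifying only that $\Vf{f}$ is the expectation of $\empVf{f}$ and that the summands lie in $[0,1]$. Your additional observation---that the saved initial contexts are i.i.d.\ from $P_{\treeroot}$ regardless of the arbitrary action choices---is a correct and worthwhile elaboration of a point the paper leaves implicit.
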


\begin{proof}
The bound follows from a straight-forward application of Hoeffding's inequality and the union bound, and we only need to verify that the $\Vf{f}$ is the expected value of the $\empVf{f}$, and the range of the random variables is $[0,1]$.
\end{proof}

\begin{lemma}[Deviation Bound for $\dtctberr{\ft{t}, \pit{t}, h}$]
\label{lem:devVht_app}
For any fixed $\ft{t}$, with probability at least $1-\delta$, 
$$ |\dtctberr{\ft{t}, \pit{t}, h} - \berr{\ft{t}, \pit{t}, h}| \le
3\sqrt{\frac{1}{2\neval}\log\frac{2H}{\delta}}
$$
holds for all $h\in[H]$ simultaneously. Hence, for any
$
\neval \ge \frac{288 H^2}{\epsilon^2}\log\frac{2H}{\delta},
$ 
with probability at least $1-\delta$ we have $|\dtctberr{\ft{t},
  \pit{t}, h} - \berr{\ft{t}, \pit{t}, h}|\le \frac{\epsilon}{8H}$. 
\end{lemma}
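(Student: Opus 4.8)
The plan is a direct application of Hoeffding's inequality followed by a union bound over the $H$ levels; there is no deep obstacle, but two points require care. First I would observe that, conditioned on $\ft{t}$ and all randomness prior to the trajectory collection in Line~\ref{lin:eval_slack}, the $\neval$ trajectories are drawn i.i.d. by rolling out $\pit{t} = \pi_{\ft{t}}$. Hence, for each fixed $h$, the per-trajectory summands
\begin{align*}
Y_h^{(i)} = \ft{t}(x_h^{(i)}, a_h^{(i)}) - r_h^{(i)} - \ft{t}(x_{h+1}^{(i)}, a_{h+1}^{(i)})
\end{align*}
are i.i.d. Their common mean is exactly $\berr{\ft{t}, \pit{t}, h}$: since the whole trajectory follows $\pit{t} = \pi_{\ft{t}}$, the roll-in is $\pit{t}$ and $a_h, a_{h+1} \sim \pi_{\ft{t}}$, which matches Definition~\ref{def:berr} with $f = \ft{t}$. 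The key subtlety is that \emph{no} union bound over $\Fcal$ is needed here, precisely because $\ft{t}$ is fixed before this data is sampled; this is what keeps the deviation term at $\log(2H/\delta)$ rather than $\log(2N/\delta)$.

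Next I would bound the range of each summand. Using $\ft{t} \in [0,1]$ and $r_h \in [0,1]$ (the latter from Assumption~\ref{ass:bounded}), each $Y_h^{(i)} \in [-2,1]$, an interval of width $3$; this width is the source of the factor $3$ in the bound. Applying Hoeffding's inequality to $\dtctberr{\ft{t},\pit{t},h} = \tfrac{1}{\neval}\sum_i Y_h^{(i)}$ for a single $h$ then gives, with probability at least $1 - \delta/H$,
\begin{align*}
\left|\dtctberr{\ft{t},\pit{t},h} - \berr{\ft{t},\pit{t},h}\right| \le 3\sqrt{\frac{1}{2\neval}\log\frac{2H}{\delta}}.
\end{align*}
A union bound over all $h \in [H]$ (replacing $\delta/H$ inside the logarithm) yields the stated simultaneous guarantee at confidence $1-\delta$.

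Finally, for the ``hence'' clause I would solve for $\neval$ so that the right-hand side is at most $\epsilon/(8H)$: squaring the requirement $3\sqrt{\tfrac{1}{2\neval}\log\tfrac{2H}{\delta}} \le \tfrac{\epsilon}{8H}$ and rearranging gives $\neval \ge \tfrac{288 H^2}{\epsilon^2}\log\tfrac{2H}{\delta}$, matching the lemma. The only places where a misstep is possible are the expectation identification and the range computation, both of which rely on the rewards and value predictions lying in $[0,1]$; everything else is a routine concentration argument.
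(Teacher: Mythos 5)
Your proposal is correct and follows essentially the same route as the paper's (much terser) proof: Hoeffding's inequality for each fixed $h$ using the width-$3$ range of the summand, a union bound over the $H$ levels, the observation that $\dtctberr{\ft{t},\pit{t},h}$ is unbiased for $\berr{\ft{t},\pit{t},h}$ because the entire trajectory follows $\pit{t}=\pi_{\ft{t}}$, and no union bound over $\Fcal$ since $\ft{t}$ is fixed before the data is drawn. As a minor aside, the paper states the range as $[-1,2]$ while you derive $[-2,1]$; yours is the correct interval for the summand as written, and since only the width matters for Hoeffding the conclusion is unaffected.
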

\begin{proof}
This bound is another straight-forward application of Hoeffding's
inequality and the union bound, except that the random variables that
go into the average have range $[-1, 2]$, and we have to realize that
$\dtctberr{\ft{t}, \pit{t},h}$ is an unbiased estimate of $\berr{\ft{t},
  \pit{t}, h}$.
\end{proof}

\begin{lemma}[Deviation Bound for $\empberr{f, \pit{t}, h_t }$]
\label{lem:phi_app}
For any fixed $\pit{t}$ and $h_t$, with probability at least $1-\delta$, 
$$
|\empberr{f,\pit{t}, h_t} - \berr{f,\pit{t}, h_t}| \le \sqrt{\frac{8K\log\frac{2N}{\delta}}{\ntrain}} + \frac{2K\log\frac{2N}{\delta}}{\ntrain}
$$
holds for all $f\in\Fcal$ simultaneously. 
Hence, for any
$
\ntrain \ge \frac{32K}{\phi^2}\log\frac{2N}{\delta}
$ 
and $\phi \le 4$, with probability at least $1-\delta$ we have $|\empberr{f,\pit{t}, h_t} - \berr{f,\pit{t}, h_t}|\le \phi$.
\end{lemma}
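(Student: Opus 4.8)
The plan is to read Eq.~\eqref{eq:estm_berr_slack} as the empirical mean of $\ntrain$ i.i.d.\ importance-weighted terms and to control it with \emph{Bernstein's} inequality rather than Hoeffding's; the whole point is that the magnitude of each term scales with $K$ while its variance does not, so Bernstein buys a $\sqrt{K}$ in place of a $K$ in the required sample size.

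First I would verify that each summand is an unbiased estimate of $\berr{f,\pit{t},h_t}$. Fix a sample $i$ and condition on the roll-in under $\pit{t}$ through level $h_t$, so that $x_{h_t}^{(i)}$, and hence $\pi_f(x_{h_t}^{(i)})$, is determined. Taking the expectation over the uniformly-random action $a_{h_t}^{(i)}$ and the subsequent transition, the weight $\mathbf{1}[a_{h_t}^{(i)}=\pi_f(x_{h_t}^{(i)})]/(1/K)$ has conditional mean $1$, and on the event that the indicator fires $a_{h_t}^{(i)}$ is distributed exactly as $\pi_f(x_{h_t}^{(i)})$. Combined with the greedy term $f(x_{h_t+1}^{(i)},\pi_f(x_{h_t+1}^{(i)}))$, this reproduces the expectation in Definition~\ref{def:berr} with $a_{1:h_t-1}\sim\pit{t}$ and $a_{h_t:h_t+1}\sim\pi_f$, so each term is unbiased.

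Next I would bound the two quantities that Bernstein needs. The importance weight takes only the values $0$ or $K$, and by Assumption~\ref{ass:bounded} together with $f\in[0,1]$ the bracketed difference $f(x_{h_t},a_{h_t})-r_{h_t}-f(x_{h_t+1},\pi_f(x_{h_t+1}))$ lies in $[-2,1]$; hence each summand lies in $[-2K,K]$, giving an almost-sure bound $b\le 3K$ on its centered range (for $K\ge 2$). The key step, and the one the whole improvement rests on, is the variance: because the indicator fires with probability exactly $1/K$, the second moment equals $K^2\cdot\tfrac1K\cdot\EE\!\big[(\cdot)^2\,\big|\,a_{h_t}=\pi_f(x_{h_t})\big]\le 4K$, so one factor of $K$ cancels and $\sigma^2=O(K)$ rather than $O(K^2)$. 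A Hoeffding bound using only the range $3K$ would instead force $\ntrain=\Omega(K^2/\phi^2)$; the reduced variance is exactly what recovers the linear dependence on $K$ (cf.\ the discussion in Section~\ref{sec:inf_hyp_class}). I expect this variance computation to be the only substantive step — the rest is bookkeeping.

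Finally I would apply the standard Bernstein tail bound for a single fixed $f$,
$$
|\empberr{f,\pit{t},h_t}-\berr{f,\pit{t},h_t}|\le \sqrt{\frac{2\sigma^2\log(2/\delta)}{\ntrain}}+\frac{2b\log(2/\delta)}{3\ntrain},
$$
substitute $\sigma^2\le 4K$ and $b\le 3K$, and union-bound over the $N$ functions in $\Fcal$ (replacing $\delta$ by $\delta/N$) to obtain the stated two-term bound. Plugging in $\ntrain\ge\frac{32K}{\phi^2}\log(2N/\delta)$ makes the first term at most $\phi/2$ and, whenever $\phi\le 4$, the second at most $\phi^2/16\le\phi/4$, for a total at most $\phi$. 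The only real subtlety is that $\pit{t}$ and $h_t$ are data-dependent, which is why the lemma is stated for \emph{fixed} $\pit{t},h_t$: in the algorithm, fresh trajectories are drawn on Line~\ref{lin:explore_slack} after these choices are made, so conditioned on them the summands are genuinely i.i.d.\ and no union bound over possible roll-in distributions is required.
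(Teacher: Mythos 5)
Your proposal is correct and takes essentially the same approach as the paper's proof: you establish unbiasedness of the importance-weighted summands, apply Bernstein's inequality with the same second-moment bound of $4K$ and centered range $3K$, and union-bound over the $N$ functions in $\Fcal$, exactly as the paper does. Your final accounting ($\phi/2 + \phi^2/16 \le 3\phi/4$ when $\phi \le 4$) reaches the same choice of $\ntrain$ as the paper up to a trivial rearrangement of the last step.
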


\begin{proof}
We first show that $\empberr{f, \pit{t}, h_t}$ is an average of i.i.d.~random variables with mean $\berr{f, \pit{t}, h_t}$. We use $\mu$ as a shorthand for the distribution over trajectories induced by 
$
a_1, \ldots, a_{h_t-1} \sim \pit{t}, a_h \sim \textrm{unif}(\Acal)
$, 
which is the distribution of data used to estimate $\empberr{f, \pit{t}, h_t}$. On the other hand, let $\mu'$  denote the distribution over trajectories induced by 
$
a_1, \ldots, a_{h_t-1} \sim \pit{t}, a_h \sim \pi_f.
$ 
The importance weight used in Eq.~\eqref{eq:estm_berr_slack} essentially converts the distribution from $\mu$ to $\mu'$, hence the expected value of $\empberr{f, \pit{t}, h_t}$ can be written as
\begin{align*}
&~ \EE_{\mu} \left[ K\mathbf{1}[a_h = \pi_f(x_h)]\left(f(x_h,a_h) - r_h - f(x_{h+1},\pi_f(x_{h+1}))\right) \right] \\
= &~ \EE_{\mu'} \left[ f(x_h, a_h) - r_h - f(x_{h+1},\pi_f(x_{h+1})) \right]
= \berr{f, \pit{t}, h_t}.
\end{align*}
Now, we apply Bernstein's inequality. We first analyze the 2nd-moment of the random variable. Defining $y(x_h,a_h,r_h,x_{h+1}) = f(x_h,a_h) - r_h - f(x_{h+1},\pi_f(x_{h+1})) \in [-2,1]$, the 2nd-moment is
\begin{align*}
&~ \EE_{\mu} \left[ \left(K \mathbf{1}[a_h = \pi_f(x_h)] y(x_h,a_h,r_h,x_{h+1}) \right)^2 \right]\\
= &~ \Pr_{\mu}[a_h = \pi_f(x_h)] \cdot \EE_{\mu} \left[ \left(K y(x_h,a_h,r_h,x_{h+1}) \right)^2  \,\big|\,  a_h = \pi_f(x_h) \right] + \Pr_{\mu}[a_h \ne \pi_f(x_h)] \cdot 0\\
\le &~ \frac{1}{K} \EE_{\mu} \left[ K^2 \cdot 4  \,\big|\,  a_h = \pi_f(x_h) \right]  = 4K.
\end{align*}
Next we check the range of the centered random variable. The uncentered variable lies in $[-2K,K]$, and the expected value is in $[-2,1]$, so the centered variable lies in $[-2K-1, K+2] \subseteq [-3K, 3K]$.  Applying Bernstein's inequality, we have with probability at least $1-\delta$,
\begin{align*}
|\empberr{f, \pit{t}, h_t } - \berr{f, \pit{t}, h_t }|
\le &~ \sqrt{\frac{2\,\text{Var}\left[ K \mathbf{1}[a_h = \pi_f(x_h)] y(x_h,a_h,r_h,x_{h+1}) \right] \log\frac{2N}{\delta}}{\ntrain}} + \frac{6K\log\frac{2N}{\delta}}{3\ntrain} \\
\le &~ \sqrt{\frac{8K\log\frac{2N}{\delta}}{\ntrain}} + \frac{2K\log\frac{2N}{\delta}}{\ntrain} \tag{variance is bounded by 2nd-moment}.
\end{align*}
As long as $\frac{2K\log\frac{2N}{\delta}}{\ntrain} \le 1$, the above is bounded by $2 \sqrt{\frac{8K\log\frac{2N}{\delta}}{\ntrain}}$. The choice of $\ntrain$ follows from solving $2 \sqrt{\frac{8K\log\frac{2N}{\delta}}{\ntrain}} = \phi$ for $\ntrain$, which indeed guarantees that $\frac{2K\log\frac{2N}{\delta}}{\ntrain} \le 1$ as $\phi \le 4$.
\end{proof}

\section{Proofs of Extensions}
\label{app:extensions}

\subsection{Proof for Unknown \brankbig (Theorem~\ref{thm:guessM})}

Since we assign $\frac{\delta}{i(i+1)}$ failure probability to the $i$-th call of Algorithm~\ref{alg:guessM}, the total failure probability is at most
\begin{align*}
\sum_{i=1}^\infty \frac{\delta }{i(i+1)} = 
\delta \sum_{i=1}^\infty \left(\frac{1}{i} - \frac{1}{i+1}\right)  = \delta.
\end{align*}
So with probability at least $1-\delta$, all high probability events
in the analysis of \cdplearn occur for every $i=1,2,\ldots$. Note that
regardless of whether $M'<M$, we never eliminate $f^\star$ according
to Lemma~\ref{lem:vol}. Hence Lemma~\ref{lem:optimism_explore} holds
and whenever the algorithm returns a policy it is near-optimal.

While the algorithm returns a near-optimal policy if it terminates, we
still must prove that the algorithm terminates. Since when $M' <
M$ Eq.~\eqref{eq:large_violation} and Lemma~\ref{lem:vol_app} do not
apply, we cannot naively use arguments from the analysis of
\cdplearn. However, we monitor the number of iterations that have passed
in each execution to \cdplearn and stop the subroutine when the actual
number of iterations exceeds the iteration complexity bound
(Lemma~\ref{lem:vol}) to prevent wasting more samples on the wrong
$M'$.

\cdplearn is guaranteed to terminate within the sample complexity
bound and output near-optimal policy when $M\le M'$. Since $M'$ grows
on a doubling schedule, for the first $M'$ that satisfies $M \le M'$,
we have $M' \le 2M$ and $i \le \log_2 M + 1$. Hence, the total number
of calls is bounded by $\log_2 M + 1$.

Finally, since the sample complexity bound in
Theorem~\ref{thm:cdp_complexity} is monotonically increasing in $M$
and $1/\delta$ and the schedule for $\delta'$ is increasing, we can
bound the total sample complexity by that of the last call to
\cdplearn multiplied by the number of calls.  The last call to
\cdplearn has $M' \le 2M$, and $\frac{i(i+1)}{\delta} \le
\frac{(\log_2 M + 2)(\log_2 M + 1)}{\delta}$, so the sample complexity
bound is only affected by factors that are at most logarithmic in the
relevant parameters.

\subsection{Proofs for Infinite Hypothesis Classes} \label{app:inf_hyp_class}

In this section we prove sample complexity guarantee for using
infinite hypothesis classes in Section~\ref{sec:inf_hyp_class}. Recall
that we are working with separated policy class $\Pi$ and \vval
function class $\Gcal$, and when running \cdplearn any occurrence of
$f\in\Fcal$ is replaced appropriately by $(\pi, g) \in
\Pi\times \Gcal$. For clarity, we use $(\pi,g)$ instead of $f$ in the
derivations in this section. We assume that the two function classes
have finite Natarajan dimension and pseudo dimension respectively.

The key technical step for the sample complexity guarantee is to
establish the necessary deviation bounds for infinite classes. Among
these deviation bounds, the bound on $\dtctberr{(\pit{t},g_t), \pit{t}, h}$
(Lemma \ref{lem:devVht}) does not involve union bound over $\Fcal$, so
it can be reused without modification. The other two bounds need to be
replaced by Lemma~\ref{lem:devVfDim} and \ref{lem:devBerrDim}, stated
below. With these lemmas, Theorem~\ref{thm:inf_hyp_class} immediately
follows simply by replacing the deviation bounds.

\begin{definition}
Define $d_{\Pi} = \max(\textrm{Ndim}(\Pi), 6), d_{\Gcal} = \max(\textrm{Pdim}(\Gcal), 6)$, and $d = d_\Pi + d_\Gcal$. 
\end{definition}

\begin{lemma}
\label{lem:devVfDim}
If 
\begin{align} \label{eq:devVfDim}
\nest \ge \frac{8192}{\epsilon^2}\left( d_{\Gcal}\log\frac{128e}{\epsilon} + \log(8e(d_{\Gcal}+1)) + \log\frac{1}{\delta}\right), 
\end{align}
then with probability at least $1-\delta$, $|\empVf{(\pi,g)} - \Vf{(\pi,g)}|\le \epsilon/8, ~\forall (\pi,g)\in\Pi \times \Gcal$. 
\end{lemma}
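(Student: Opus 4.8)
The plan is to reduce this to a standard uniform-convergence (Glivenko--Cantelli) statement over $\Gcal$ alone. Under the policy/\vval separation of Section~\ref{sec:policy_vval}, the value prediction in Line~\ref{lin:vf} becomes $\empVf{(\pi,g)} = \frac{1}{\nest}\sum_{i=1}^{\nest} g(x_1^{(i)})$, with population counterpart $\Vf{(\pi,g)} = \EE[g(x_1)]$. Crucially, both depend only on the \vval function $g$ and on the initial contexts $\{x_1^{(i)}\}_{i=1}^{\nest}$, which are i.i.d.\ draws from $P_{\treeroot}$; the policy $\pi$ plays no role. This is exactly why only $d_{\Gcal}$ (and not $d_{\Pi}$) appears in Eq.~\eqref{eq:devVfDim}. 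Hence it suffices to control
\begin{align*}
\sup_{g\in\Gcal}\left|\frac{1}{\nest}\sum_{i=1}^{\nest} g(x_1^{(i)}) - \EE[g(x_1)]\right| \le \frac{\epsilon}{8}
\end{align*}
with probability at least $1-\delta$, a uniform deviation bound for the $[0,1]$-valued class $\Gcal$.

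Next I would invoke the uniform-deviation machinery for real-valued classes of bounded pseudo dimension. Since $\textrm{Pdim}(\Gcal) \le d_{\Gcal}$ (Definition~\ref{def:pdim}), Haussler's covering-number bound controls the $L_1$ covering number of $\Gcal$ on any sample at scale $\alpha$ by roughly $(\textrm{const}/\alpha)^{d_{\Gcal}}$. Combining this with a standard symmetrization argument and a union bound over an $\alpha$-cover (equivalently, directly citing the corresponding uniform-convergence theorem in the style of Pollard or Anthony--Bartlett) yields a tail bound of the form
\begin{align*}
\PP\left[\sup_{g\in\Gcal}\left|\tfrac{1}{\nest}\textstyle\sum_i g(x_1^{(i)}) - \EE[g(x_1)]\right| > \alpha\right] \le 8e(d_{\Gcal}+1)\left(\frac{16e}{\alpha}\right)^{d_{\Gcal}}\exp\!\left(-\frac{\nest\,\alpha^2}{c}\right)
\end{align*}
for an absolute constant $c$. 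This step replaces the Hoeffding-plus-union-bound argument of Lemma~\ref{lem:devVf_app}, with the factor $N=|\Fcal|$ now superseded by the covering number of $\Gcal$.

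Finally I would set $\alpha = \epsilon/8$, so that $16e/\alpha = 128e/\epsilon$ (matching the logarithmic term in Eq.~\eqref{eq:devVfDim}), bound the right-hand side by $\delta$, take logarithms, and solve for $\nest$. Rearranging gives
\begin{align*}
\nest \ge \frac{c}{\alpha^2}\left(d_{\Gcal}\log\frac{128e}{\epsilon} + \log\bigl(8e(d_{\Gcal}+1)\bigr) + \log\frac{1}{\delta}\right),
\end{align*}
which, after substituting $\alpha=\epsilon/8$ and the explicit value $c=128$, is precisely Eq.~\eqref{eq:devVfDim}. I expect the only genuine work to be bookkeeping: pinning down the absolute constant $c$ and the covering-number prefactor so the final expression matches the stated inequality exactly. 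The conceptual content—recognizing that $\empVf{}$ depends only on $g$, and that finite pseudo dimension suffices to bound the relevant covering numbers—is the substance, while the exponential tail and the final inversion are routine consequences of the empirical-process toolkit.
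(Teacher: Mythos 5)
Your proposal is correct and matches the paper's argument: the paper proves this lemma by directly invoking Corollary~\ref{cor:devPdim}, which is exactly the Pollard-style uniform deviation bound (Lemma~\ref{lem:udb_covering}) combined with Haussler's pseudo-dimension covering bound (Lemma~\ref{lem:pdim_covering}) that you describe, applied with $b=1$ and $\alpha=\epsilon/8$ so that $128/\alpha^2 = 8192/\epsilon^2$ and $16e/\alpha = 128e/\epsilon$. Your opening observation that $\empVf{(\pi,g)}$ depends only on $g$, so that $d_{\Pi}$ drops out, is also stated explicitly in the paper's remark following the lemma.
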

We remark that both the estimate $\empVf{(\pi,g)}$ and population
quantity $\Vf{(\pi,g)}$ are independent of $\pi$ in the separable
case, and hence the sample complexity is independent of $d_{\Pi}$.

\begin{lemma}
\label{lem:devBerrDim}
If
\begin{align} 
\ntrain \ge & \frac{1152K^2}{\phi^2}\left( 6d \log\left(2eKd\right) \log\frac{48eK}{\phi} + \log\Big(8e (6d \log\left(2eKd\right)+1)\Big) + \log\frac{3}{\delta}\right),  \label{eq:devBerrDim}
\end{align}
then for any fixed $\pi_t$ and $h_t$, with probability at least $1-\delta$,
$$
|\empberr{(\pi,g),\pit{t}, h_t} - \berr{(\pi,g),\pit{t}, h_t}| \le \phi, ~\forall (\pi,g)\in\Pi \times \Gcal.
$$
\end{lemma}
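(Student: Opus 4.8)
The plan is to cast $\empberr{(\pi,g),\pit{t}, h_t}$ as an empirical average over a bounded, real-valued function class and then invoke a uniform convergence bound expressed through the pseudo dimension of that class. Following the computation in the proof of Lemma~\ref{lem:phi_app}, for fixed $\pit{t}$ and $h_t$ this estimate is the empirical mean of $\ntrain$ i.i.d.\ copies of
\[
Z_{(\pi,g)} = K\,\mathbf{1}[a_{h_t} = \pi(x_{h_t})]\big(g(x_{h_t}) - r_{h_t} - g(x_{h_t+1})\big),
\]
whose expectation equals $\berr{(\pi,g),\pit{t}, h_t}$ and whose range lies in $[-2K,K]$. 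First I would rescale the induced class to $[0,1]$ via $Z\mapsto (Z+2K)/(3K)$; the price of this rescaling is a factor $(3K)^2 = 9K^2$ in the sample size, which is precisely the source of the quadratic dependence on $K$ and the reason the Bernstein-based linear dependence of Lemma~\ref{lem:phi_app} cannot be recovered uniformly over an infinite class.

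The core of the argument is to bound the pseudo dimension of the rescaled class $\mathcal{Z} = \{Z_{(\pi,g)} : \pi\in\Pi, g\in\Gcal\}$. Each element is the product of a $\{0,1\}$-valued factor $\mathbf{1}[a=\pi(x)]$, governed by $\Pi$, and a real-valued residual $g(x_{h_t}) - r_{h_t} - g(x_{h_t+1})$, governed by $\Gcal$. I would control the first factor through the growth function of $\Pi$: by Natarajan's lemma a class of Natarajan dimension $d_{\Pi}$ realizes at most $(nK^2)^{d_{\Pi}}$ distinct labellings on $n$ points, so the indicator patterns contribute an effective dimension of order $d_{\Pi}$ together with a $K^{2d_{\Pi}}$ factor. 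The second factor is controlled by $\textrm{Pdim}(\Gcal)=d_{\Gcal}$, observing that forming the paired difference $g(\cdot)-g(\cdot)$ and subtracting the fixed reward inflates the pseudo dimension only by a constant factor. Combining the two components via a Sauer--Shelah count on the number of threshold patterns $\mathbf{1}[Z_{(\pi,g)}>\xi]$ the joint class can induce yields $\textrm{Pdim}(\mathcal{Z}) \le 6d\log(2eKd)$ with $d=d_{\Pi}+d_{\Gcal}$, where the logarithmic inflation $\log(2eKd)$ is the standard overhead incurred when a product of two combinatorial classes is re-expressed as a single threshold class.

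With $p := 6d\log(2eKd)$ in hand, I would apply the classical uniform deviation bound for $[0,1]$-valued classes (of the form $\Pr[\sup_f|\hat Ef - Ef|>\epsilon] \le 8e(p+1)(16e/\epsilon)^p\exp(-\epsilon^2 n/128)$). Setting the accuracy to $\epsilon = \phi/(3K)$ so that $16e/\epsilon = 48eK/\phi$, requiring the right-hand side to be at most $\delta$ (absorbing a small constant into the $\log(1/\delta)$ term), and solving for $n$ recovers the stated sample size~\eqref{eq:devBerrDim}, with the constant $1152 = 128\cdot 9$ arising from $128/\epsilon^2 = 1152K^2/\phi^2$. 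The main obstacle is the second step: obtaining an explicit pseudo-dimension bound for the product class $\mathcal{Z}$ with the correct constants and the $\log(2eKd)$ factor, since the interaction between the multi-class policy indicators (whose complexity enters through the Natarajan dimension and the $K^2$ growth-function overhead) and the real-valued residual (pseudo dimension) must be combined through a Sauer-type counting argument rather than a naive union bound.
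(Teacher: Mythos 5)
Your overall architecture matches the paper's: bound the pseudo dimension of a suitable hybrid class built from $\Pi$ (via the Natarajan/Sauer count) and $\Gcal$ (via $\textrm{Pdim}$), then invoke the Pollard--Haussler uniform deviation bound at accuracy $\phi/(3K)$, which is exactly where $1152 = 128\cdot 9$ and $48eK/\phi = 16e\cdot 3K/\phi$ come from. But there is a genuine gap in your key step. You treat the entire residual $g(x_{h_t}) - r_{h_t} - g(x_{h_t+1})$ as a single real-valued factor and assert that ``forming the paired difference $g(\cdot)-g(\cdot)$ and subtracting the fixed reward inflates the pseudo dimension only by a constant factor.'' That assertion is unjustified: pseudo dimension is not stable under sums or differences of a class with itself evaluated at different arguments. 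Thresholding $g(x_i) - g(x_i') - (r_i+\xi_i)$ is not a Boolean combination of thresholds of $g(x_i)$ and $g(x_i')$ separately, so no Sauer-type product count applies to the difference class directly; the standard composition results for sums go through $\mathcal{N}_1$ covering numbers (which multiply), not through $\textrm{Pdim}$. Moreover, even if you had a $\textrm{Pdim}$ bound for the residual class, your product count with the policy indicators would still break: the factorization $\mathbf{1}\big[\mathbf{1}[a = \pi(x)]\,v > \xi\big] = \mathbf{1}[a=\pi(x)]\cdot \mathbf{1}[v > \xi]$ used to multiply the $\Pi$-patterns with the $\Vcal^+$-patterns is valid only after reducing to $\xi \ge 0$, which requires $v \ge 0$; your residual takes values in $[-2,1]$, so the threshold of the product does not decompose into (indicator pattern) $\times$ (threshold pattern).

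The paper avoids both problems by splitting the estimator into three terms, $\mathbf{1}[a_{h_t}=\pi(x_{h_t})]\,g(x_{h_t})$, $\mathbf{1}[a_{h_t}=\pi(x_{h_t})]\,r_{h_t}$, and $\mathbf{1}[a_{h_t}=\pi(x_{h_t})]\,g(x_{h_t+1})$, each an indicator times a \emph{single} $[0,1]$-valued evaluation. For each term its Lemma~\ref{lem:pdim} gives $\textrm{Pdim} \le 6(d_\Pi + d_\Vcal)\log\big(2eK(d_\Pi+d_\Vcal)\big)$ (with $\Vcal = \Gcal$ for the two value terms and $\Vcal$ the singleton identity for the reward term), precisely because the nonnegativity of the second factor lets the shattering count factor as $|\Hcal^+_X| \le |\Pi_X|\,|\Vcal^+_X|$. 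Each term is then estimated to accuracy $\phi/(3K)$ with failure probability $\delta/3$ --- which is also where the $\log(3/\delta)$ in the statement comes from, a term your single-class version would not produce. Your approach could likely be repaired by working with covering numbers of the composite class throughout (the paper's Lemma~\ref{lem:udb_covering} is stated in covering numbers, and $\mathcal{N}_1$ covers of sums and products of bounded classes compose multiplicatively), but as written the pseudo-dimension claim for the composite class is a missing idea, not a routine step.
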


\begin{proof}[Proof of Theorem~\ref{thm:inf_hyp_class}]
Set the algorithm parameters to:
\begin{align*}
\phi &= \frac{\epsilon}{12H\sqrt{M}}, \qquad \nest = \frac{8192}{\epsilon^2}\left( d_{\Gcal}\log\frac{128e}{\epsilon} + \log(8e(d_{\Gcal}+1)) + \log\frac{3}{\delta}\right), \\
\neval &= \frac{288H^2}{\epsilon^2}\log\left(\frac{12 H^2 M \log(6H\sqrt{M}\normbound/\epsilon)}{\delta}\right),\\
\ntrain &= \frac{1152K^2}{\phi^2}\left( 6d \log\Big(2eK\right) \log\frac{48eK}{\phi} + \log\Big(8e(6d \log\left(2eKd\right)+1)\Big) \\
& \qquad \qquad \qquad + \log\frac{18H M\log (6H\sqrt{M} \normbound/\epsilon) }{\delta}\Big).
\end{align*}
The rest of the proof is essentially the same as the proof of Theorem~\ref{thm:cdp_complexity}, and the sample complexity follows by noticing that $\nest = \otil(\frac{d_{\Gcal} + \log(1/\delta)}{\epsilon^2})$ and $\ntrain = \otil(K^2  (d_{\Pi}+d_{\Gcal}+\log(1/\delta)) / \phi^2 )$.
\end{proof}

Lemma~\ref{lem:devVfDim} is a straight-forward application of Corollary~\ref{cor:devPdim} introduced in \ref{app:inf_hyp_def} and are not proved separately. The remainder of this section, we prove Lemma~\ref{lem:devBerrDim}. Before that, we review some standard definitions and results from statistical learning theory.

\subsubsection{Definitions and Basic Lemmas}
\label{app:inf_hyp_def}
Notations $\Xcal, x, n, d, \xi$ in this section are used according to conventions in the literature and may not share semantics with the same symbols used elsewhere in this paper.

\begin{definition}[VC-Dimension]
Given hypothesis class $\Hcal \subset \Xcal \to \{0,1\}$, its VC-dimension $\textrm{VC-dim}(\Hcal)$ is defined as the maximal cardinality of a set $X = \{x_1, \ldots, x_{|X|}\} \subset \Xcal$ that satisfies $|\Hcal_{X}| = 2^{|X|}$ (or $X$ is \emph{shattered} by $\Hcal$), where $\Hcal_{X}$ is the restriction of $\Hcal$ to $X$, namely $\{(h(x_1), \ldots, h(x_{|X|})): h\in\Hcal \}$. 
\end{definition}

\begin{lemma}[Sauer's Lemma]
\label{lem:sauer}
Given hypothesis class $\Hcal \subset \Xcal \to \{0,1\}$ with $d = \textrm{VC-dim}(\Hcal) < \infty$, we have $\forall X = (x_1, \ldots, x_n) \in \Xcal^n$, 
$$
|\Hcal_{X}| \le (n+1)^d.
$$
\end{lemma}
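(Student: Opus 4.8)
The plan is to establish the standard Sauer--Shelah bound $|\Hcal_X| \le \sum_{i=0}^{d}\binom{n}{i}$ and then finish with the elementary estimate $\sum_{i=0}^{d}\binom{n}{i} \le (n+1)^d$, which together give the claim. First I would discard all structure of $\Hcal$ beyond its behavior on $X=(x_1,\dots,x_n)$: identify the restriction $\Hcal_X$ with a family $\mathcal{F}$ of subsets of $[n]=\{1,\dots,n\}$, where each pattern $(h(x_1),\dots,h(x_n))\in\{0,1\}^n$ corresponds to its support. Saying that $\mathcal{F}$ \emph{shatters} a set $B\subseteq[n]$ means $\{A\cap B : A\in\mathcal{F}\}=2^{B}$, and the hypothesis $\textrm{VC-dim}(\Hcal)=d$ guarantees that $\mathcal{F}$ shatters no $B$ with $|B|>d$. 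The problem thus reduces to a purely combinatorial statement: any $\mathcal{F}\subseteq 2^{[n]}$ that shatters no set of size $d+1$ has $|\mathcal{F}|\le\sum_{i=0}^{d}\binom{n}{i}$.

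I would prove this by double induction on $n$ and $d$, isolating the last coordinate. Split $\mathcal{F}$ into its projection onto $[n-1]$,
\[
\mathcal{F}_1 = \{A\setminus\{n\} : A\in\mathcal{F}\},
\]
and the ``doubled'' part
\[
\mathcal{F}_2 = \{A\subseteq[n-1] : A\in\mathcal{F} \text{ and } A\cup\{n\}\in\mathcal{F}\}.
\]
A direct count gives $|\mathcal{F}|=|\mathcal{F}_1|+|\mathcal{F}_2|$, since the projection $A\mapsto A\setminus\{n\}$ is two-to-one exactly on the sets contributing to $\mathcal{F}_2$ and one-to-one elsewhere. Now $\mathcal{F}_1$ lives on $n-1$ points and still shatters no set of size $d+1$, while $\mathcal{F}_2$ lives on $n-1$ points and shatters no set of size $d$. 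Applying the induction hypothesis to each and invoking Pascal's identity $\binom{n-1}{i}+\binom{n-1}{i-1}=\binom{n}{i}$ telescopes the two bounds into $\sum_{i=0}^{d}\binom{n}{i}$. The base cases ($n=0$, or $d=0$, where no free coordinate forces $|\mathcal{F}|\le 1$) are immediate.

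The crux, and the only step needing genuine care, is verifying that $\mathcal{F}_2$ has strictly smaller VC dimension: if $\mathcal{F}_2$ shattered some $B\subseteq[n-1]$ of size $d$, then for every $C\subseteq B$ there is $A\in\mathcal{F}_2$ with $A\cap B=C$, and by definition \emph{both} $A$ and $A\cup\{n\}$ lie in $\mathcal{F}$; pairing these realizes every subset of $B\cup\{n\}$, so $\mathcal{F}$ would shatter a set of size $d+1$, a contradiction. Once this ``$+\{n\}$'' argument is in place the rest is bookkeeping. Finally I would close with the elementary inequality $\sum_{i=0}^{d}\binom{n}{i}\le(n+1)^d$, which is trivial for $d=0$ and follows for $d\ge 1$ from the crude bound $\binom{n}{i}\le n^{i}$ summed as a geometric-type series (or a one-line induction on $d$), completing the proof.
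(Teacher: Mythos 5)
Your proof is correct. The paper itself offers no proof of Lemma~\ref{lem:sauer} --- it is invoked as the classical Sauer--Shelah result --- so there is no in-paper argument to compare against; what you supply is the canonical one. The double induction with the split $\mathcal{F} = \mathcal{F}_1 \sqcup \mathcal{F}_2$ (projection plus ``doubled'' part), the count $|\mathcal{F}| = |\mathcal{F}_1| + |\mathcal{F}_2|$, and the crux verification that shattering a size-$d$ set by $\mathcal{F}_2$ forces $\mathcal{F}$ to shatter a size-$(d+1)$ set via the ``$+\{n\}$'' pairing are all stated accurately, and Pascal's identity correctly telescopes the two inductive bounds into $\sum_{i=0}^{d}\binom{n}{i}$. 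One small caveat on the closing step: the phrase ``summed as a geometric-type series'' is looser than it should be, since $\sum_{i=0}^{d} n^i \le (n+1)^d$ is itself the fact needing justification (and holds with equality at $d=1$, so crude overestimates can fail); the cleanest fix is the termwise comparison $\sum_{i=0}^{d} n^i \le \sum_{i=0}^{d} \binom{d}{i} n^i = (n+1)^d$, or the one-line induction on $d$ you also mention, which uses $\binom{n}{d+1} \le n^{d+1} \le n(n+1)^d$. With either of those spelled out, the proof is complete, including the trivial base cases $n=0$ and $d=0$.
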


\begin{lemma}[Sauer's Lemma for Natarajan dimension \citep{ben1992characterizations, haussler1995generalization}]
\label{lem:natarajan}
Given hypothesis class $\Hcal \subset \Xcal \to \Ycal$ with $\textrm{Ndim}(\Hcal) \le d$, we have $\forall X = (x_1, \ldots, x_n) \in \Xcal^n$, 
$$
|\Hcal_{X}| \le \left(\frac{ne(K+1)^2}{2d}\right)^d,
$$
where $K = |\Ycal|$.
\end{lemma}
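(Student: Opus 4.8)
The plan is to prove this via a multiclass analogue of the Sauer--Shelah--Pajor argument, reducing $|\Hcal_X|$ to a count of ``decorated'' shattered subsets and then performing a standard binomial estimate. Throughout I write $n = |X|$, $K = |\Ycal|$, and $d = \textrm{Ndim}(\Hcal)$, and I identify $\Hcal_X$ with a subset of $\Ycal^n$. Call a pair $(A,\pi)$, where $A \subseteq X$ and $\pi$ assigns to each $x \in A$ an unordered pair of distinct labels $\pi(x) = \{a_x,b_x\} \subseteq \Ycal$, an \emph{N-shattered configuration} if the witness functions $h_1(x)=a_x$, $h_2(x)=b_x$ certify that $A$ is Natarajan-shattered in the sense of Definition~\ref{def:natarajan} (i.e.\ every sign pattern on $A$ is realized). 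The first step is the combinatorial bound
$$|\Hcal_X| \le \#\{\text{N-shattered configurations of } \Hcal_X\}.$$

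I would establish this by induction on $n$, generalizing Pajor's proof of the binary Sauer lemma (Lemma~\ref{lem:sauer}). Fix $x \in X$, set $X' = X \setminus \{x\}$, and for each $g$ in the projection $\Hcal_{X'}$ let $F_g \subseteq \Ycal$ be its fiber, the set of labels $h(x)$ realized by extensions $h \in \Hcal_X$ with $h|_{X'} = g$. Then $|\Hcal_X| = |\Hcal_{X'}| + \sum_g (|F_g|-1)$. The first summand is controlled by the inductive hypothesis applied to $\Hcal_{X'}$, accounting for configurations whose set $A$ avoids $x$. For the second, I use $|F_g|-1 \le \binom{|F_g|}{2}$ and regroup by label pairs: for each unordered pair $\{a,b\}$, the subfamily $\Hcal^{a,b} = \{g \in \Hcal_{X'} : \{a,b\} \subseteq F_g\}$ has, by induction, at least $|\Hcal^{a,b}|$ N-shattered configurations $(A,\pi)$ on $X'$, and each extends to $(A \cup \{x\},\, \pi \cup \{x \mapsto \{a,b\}\})$ on $\Hcal_X$ (both $a,b$ lie in the relevant fibers, so the new coordinate can take either witness value under any sign pattern). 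Summing over pairs accounts for configurations containing $x$, closing the induction.

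With the combinatorial bound in hand, the second step is routine. Every N-shattered set has size at most $\textrm{Ndim}(\Hcal) \le d$, so a configuration is a choice of at most $d$ coordinates together with one of $\binom{K}{2}$ label pairs per chosen coordinate. Hence
$$|\Hcal_X| \le \sum_{i=0}^{d} \binom{n}{i} \binom{K}{2}^i \le \sum_{i=0}^{d} \frac{1}{i!}\left(\frac{nK(K-1)}{2}\right)^{i}.$$
Applying the standard estimate $\sum_{i=0}^{d} x^i/i! \le (ex/d)^d$ (valid for $x \ge d$) with $x = nK(K-1)/2$, and bounding $K(K-1) \le (K+1)^2$, yields $|\Hcal_X| \le \left(\frac{ne(K+1)^2}{2d}\right)^d$, as claimed.

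I expect the main obstacle to be the inductive step of the combinatorial bound: correctly handling fibers of size larger than two (the feature absent in the binary case), in particular combining the inequality $|F_g|-1 \le \binom{|F_g|}{2}$ with the claim that each configuration of the pair-subfamily $\Hcal^{a,b}$ genuinely extends to an N-shattered configuration of $\Hcal_X$ after appending $x$ with witness pair $\{a,b\}$. This is where the distinct-label requirement of Definition~\ref{def:natarajan} and the realizability of all sign patterns must be verified simultaneously. The final arithmetic is the same binomial bound used to prove the binary Sauer lemma and presents no difficulty.
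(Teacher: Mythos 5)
The paper does not actually prove this lemma: it is quoted as a known result, with citations to \citet{ben1992characterizations} and \citet{haussler1995generalization}, so there is no internal proof to compare against. Your argument is correct, and it is in essence the standard proof from the cited literature: a Pajor-style induction that injects $\Hcal_X$ into the set of ``decorated'' Natarajan-shattered configurations $(A,\pi)$, followed by the usual binomial estimate. The key steps all check out: the fiber decomposition $|\Hcal_X| = |\Hcal_{X'}| + \sum_g (|F_g|-1)$; the inequality $|F_g|-1 \le \binom{|F_g|}{2}$ (valid for all $|F_g|\ge 1$); the regrouping $\sum_g \binom{|F_g|}{2} = \sum_{\{a,b\}} |\Hcal^{a,b}|$; and, crucially, the extension step, which works exactly as you say --- given a sign pattern on $A \cup \{x\}$, realize its restriction to $A$ by some $g \in \Hcal^{a,b}$ and lift $g$ to an element of $\Hcal_X$ taking value $a$ or $b$ at $x$ as demanded, which is possible precisely because $\{a,b\} \subseteq F_g$, with the distinct-label requirement of Definition~\ref{def:natarajan} guaranteed since $a \ne b$. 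The count is valid because configurations containing $x$ (distinguished from one another by the decoration at $x$) are disjoint from those avoiding $x$. One caveat that you flag parenthetically but should state explicitly: the final estimate $\sum_{i=0}^{d} x^i/i! \le (ex/d)^d$ requires $x = nK(K-1)/2 \ge d$, and the lemma as stated is in fact false when $n$ is much smaller than $d$ (the right-hand side drops below $1$ while $|\Hcal_X| \ge 1$); like the cited sources, the statement implicitly assumes this regime, and the paper's only use of the lemma (inside the proof of Lemma~\ref{lem:pdim}, where $n \ge d_\Pi$) satisfies it. Incidentally, your intermediate bound $\sum_{i=0}^{d}\binom{n}{i}\binom{K}{2}^i$ is marginally sharper than what the stated constant requires, and the crude relaxation $K(K-1) \le (K+1)^2$ recovers the lemma exactly as printed.
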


\begin{definition}[Covering number]
\label{def:covering_number}
Given hypothesis class $\Hcal \subset \Xcal \to \mathbb{R}$, $\epsilon > 0$, $X =(x_1, \ldots, x_n) \in \Xcal^n$, the covering number $\Ncal_1(\alpha, \Hcal, X)$ is defined as the minimal cardinality of a set $C \subset \mathbb{R}^n$, such that for any $h\in \Hcal$ there exists $c = (c_1, \ldots, c_n)\in C$ where $\frac{1}{n} \sum_{i=1}^{n} |h(x_i) - c_i| \le \alpha$.
\end{definition}

\begin{lemma}[Bounding covering number by pseudo dimension \citep{haussler1995sphere}]
\label{lem:pdim_covering}
Given hypothesis class $\Hcal \subset \Xcal \to \mathbb{R}$ with $\textrm{Pdim}(\Hcal) \le d$, we have for any $X\in \Xcal^n$, 
$$
\Ncal_1(\alpha, \Hcal, X) \le e(d+1)\left(\frac{2e}{\alpha}\right)^d.
$$
\end{lemma}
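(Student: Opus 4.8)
The plan is to prove the bound in two stages: first reduce the $\ell_1$ covering number to an $\ell_1$ \emph{packing} number, and then control the packing number by passing from the real-valued class $\Hcal$ (whose functions take values in $[0,1]$, as in our application with $\Gcal$; a general bounded range reduces to this by rescaling) to a binary class whose combinatorial complexity is governed by $\textrm{VC-dim}(\Hcal^+) = \textrm{Pdim}(\Hcal) = d$. For the reduction, let $\mathcal{M}_1(\alpha,\Hcal,X)$ denote the largest number of functions in $\Hcal$ that are pairwise separated by more than $\alpha$ in the normalized $\ell_1$ metric on $X = (x_1,\dots,x_n)$. A maximal $\alpha$-separated family is automatically an $\alpha$-cover, since any function not in the family must lie within $\alpha$ of some member, so $\Ncal_1(\alpha,\Hcal,X) \le \mathcal{M}_1(\alpha,\Hcal,X)$, and it suffices to bound the right-hand side by $e(d+1)(2e/\alpha)^d$.

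For the second stage I would use a randomized-threshold argument to transfer the real-valued packing into a Hamming packing of a binary class. Fix an $\alpha$-separated family $h_1,\dots,h_N$ and draw thresholds $\xi_1,\dots,\xi_n$ independently and uniformly from $[0,1]$, one per coordinate. Map each $h_j$ to the binary vector $\beta_j \in \{0,1\}^n$ with $[\beta_j]_i = \mathbf{1}[h_j(x_i) > \xi_i]$. The key identity is that, coordinatewise,
$$
\PP\big[[\beta_j]_i \ne [\beta_k]_i\big] = |h_j(x_i) - h_k(x_i)|,
$$
so the expected Hamming distance between $\beta_j$ and $\beta_k$ equals $\sum_{i=1}^n |h_j(x_i) - h_k(x_i)| > n\alpha$. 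Moreover each $\beta_j$ is a labeling realized by $\Hcal^+$ on the $n$ lifted points $(x_i,\xi_i)$, so all the binary vectors lie in a subset of the cube $\{0,1\}^n$ whose VC-dimension is at most $d$. The goal of this stage is to exhibit a realization of the thresholds under which the $N$ images are simultaneously distinct and pairwise Hamming-separated by order $n\alpha$.

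The heart of the proof, and the main obstacle, is then a purely combinatorial packing bound for subsets of the Boolean cube of bounded VC-dimension: any $V \subseteq \{0,1\}^n$ with VC-dimension $d$ whose elements are pairwise Hamming-separated by at least $k$ satisfies $|V| \le e(d+1)(2en/k)^d$. I would invoke Haussler's sphere-packing theorem for this step. Substituting the separation $k \approx n\alpha$ produced by the randomized rounding cancels the sample size $n$ exactly, turning $2en/k$ into $2e/\alpha$ and yielding the stated $n$-independent bound $e(d+1)(2e/\alpha)^d$. The difficulty is concentrated entirely here: the naive route of applying Sauer's lemma (Lemma~\ref{lem:sauer}) to $\Hcal^+$ on the lifted points only counts \emph{distinct} patterns and leaves an $(en/d)^d$ factor that grows with $n$, whereas the $n$-independence demands the stronger cube-packing estimate, which is proved by a recursive shifting/compression argument on the coordinates that trades Hamming separation for a reduction in VC-dimension. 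A secondary technical point requiring care is the extraction of a usable threshold realization: a union bound over all $\binom{N}{2}$ pairs is too lossy, so one instead combines the expected-distance identity above with the cube-packing bound applied to the random image $V(\xi)$, arguing that an overly large $N$ would force, for some realization of $\xi$, a separated family in the cube that violates the combinatorial limit—from which the bound on $\mathcal{M}_1(\alpha,\Hcal,X)$, and hence on $\Ncal_1(\alpha,\Hcal,X)$, follows.
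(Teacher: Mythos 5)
You should first note that the paper contains no proof of Lemma~\ref{lem:pdim_covering} at all: it is imported verbatim from \citet{haussler1995sphere} as a black box, so your proposal can only be measured against the standard literature proof. Its architecture you reproduce faithfully: reduce the covering number to a packing number via a maximal $\alpha$-separated family; transfer the real-valued packing to the Boolean cube by independent uniform thresholds, using the identity $\PP\big[\mathbf{1}[a>\xi] \ne \mathbf{1}[b>\xi]\big] = |a-b|$ (valid for $a,b \in [0,1]$ --- and you are right to flag that the lemma as stated for $\Hcal \subset \Xcal \to \RR$ implicitly assumes a bounded, rescaled-to-$[0,1]$ class, as in the paper's application to $\Gcal$); observe that the image vectors form a restriction of $\Hcal^+$ and hence have VC-dimension at most $d = \textrm{Pdim}(\Hcal)$; and close with Haussler's cube packing theorem, which is indeed the combinatorial core of the cited result and is reasonable to invoke rather than reprove.

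The genuine gap is in your final extraction step, which as described would fail. Knowing that \emph{each} pair of images has expected Hamming distance exceeding $n\alpha$ does not produce a single realization of the thresholds $\xi$ in which all $\binom{N}{2}$ pairs are \emph{simultaneously} well separated, and your proposed remedy --- a contradiction obtained by applying the cube-packing bound to the random image $V(\xi)$ --- does not rescue this: the cube bound constrains realized distances within each fixed realization, while you control only expectations, so no realization is forced to violate it. The standard repair, and the one underlying Haussler's argument, is replication: repeat each point $x_i$ some $t$ times with fresh independent thresholds. Unnormalized $\ell_1$ distances scale by $t$; the Hamming distance between any two images is then a sum of $nt$ independent Bernoulli variables with mean at least $nt\alpha$, so Chernoff gives a per-pair failure probability of $e^{-\Omega(nt\alpha)}$, and the union bound over pairs --- which you dismissed as too lossy --- succeeds once $t$ is large, since $N$ is fixed while $t$ grows. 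With all pairs $(1-\epsilon')nt\alpha$-separated, the cube bound yields $N \le e(d+1)\bigl(2e/((1-\epsilon')\alpha)\bigr)^d$, and letting $t \to \infty$ and $\epsilon' \to 0$ recovers exactly $e(d+1)(2e/\alpha)^d$ with no loss in the constant. In short, the union bound is not intrinsically lossy; it is only unusable with a single threshold per coordinate, and tensorizing the thresholds is the missing idea that completes your sketch.
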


\begin{lemma}[Uniform deviation bound using covering number \citep{pollard1984convergence}; also see \citet{devroye1996probabilistic}, Theorem 29.1]
\label{lem:udb_covering}
Let $\Hcal \subset \Xcal \to [0,b]$ be a hypothesis class, and $(x_1, \ldots, x_n)$ be i.i.d.~samples drawn from some distribution supported on $\Xcal$. For any $\alpha>0$,
$$
\Pr \left\{ \sup_{h\in\Hcal} \left| \frac{1}{n}\sum_{i=1}^n h(x_i) - \EE [h(x_1)] \right| > \alpha \right\} \le 8\, \EE\left[\Ncal_1\big(\alpha/8, \Hcal, (x_1, \ldots, x_n)\big)\right] \exp\left(-\frac{n \alpha^2}{128b^2}\right).
$$
\end{lemma}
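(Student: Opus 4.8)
The plan is to prove this by the classical two-step symmetrization argument combined with a covering-number union bound; the statement is essentially Pollard's inequality, so I would reconstruct the standard proof as follows. Write $P_n h = \frac{1}{n}\sum_{i=1}^n h(x_i)$ and $Ph = \EE[h(x_1)]$, so that the quantity of interest is $\sup_{h\in\Hcal}|P_n h - Ph|$.

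First I would symmetrize with a ghost sample $(x_1',\ldots,x_n')$ drawn independently from the same distribution, with empirical average $P_n' h$. The target is the inequality
$$
\Pr\Big\{\sup_{h\in\Hcal}|P_n h - Ph| > \alpha\Big\} \le 2\,\Pr\Big\{\sup_{h\in\Hcal}|P_n h - P_n' h| > \alpha/2\Big\},
$$
valid once $n$ is large enough that $n\alpha^2 \ge 2b^2$. On the left-hand event fix a (data-dependent) witness $h^\star$ with $|P_n h^\star - Ph^\star| > \alpha$; since $\mathrm{Var}(P_n' h^\star) \le b^2/(4n)$, Chebyshev's inequality gives $|P_n' h^\star - Ph^\star|\le \alpha/2$ with probability at least $1/2$ over the ghost sample, and on that event $|P_n h^\star - P_n' h^\star| > \alpha/2$. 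Integrating out the ghost sample yields the factor $2$.

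Next I would introduce Rademacher signs. By exchangeability of the pairs $(x_i,x_i')$, swapping any coordinate pair leaves the distribution of $\sup_h|P_n h - P_n' h|$ unchanged, so the right-hand probability equals $\Pr\{\sup_h|\frac{1}{n}\sum_i\sigma_i(h(x_i)-h(x_i'))| > \alpha/2\}$ with $\sigma_i$ i.i.d.\ uniform on $\{\pm1\}$, independent of the data. I would then condition on the pooled sample $Z=(x_1,\ldots,x_n,x_1',\ldots,x_n')$ and pass to a minimal $\ell_1$ cover of $\Hcal$ on the $Z$-coordinates at scale $\alpha/8$, of cardinality $\Ncal_1(\alpha/8,\Hcal,Z)$. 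For any $h$ with cover center $c$, the empirical $\ell_1$ bound shows the contribution of $h-c$ to the sign-weighted average is at most $\alpha/4$, so the supremum over $\Hcal$ at threshold $\alpha/2$ reduces to a maximum over the finite cover at threshold $\alpha/4$. For each \emph{fixed} center $c$, conditioned on $Z$ the variable $\frac{1}{n}\sum_i\sigma_i(c(x_i)-c(x_i'))$ is a sum of independent, bounded, zero-mean terms of range at most $2b/n$, so Hoeffding's inequality controls its tail. A union bound over the cover, followed by taking expectation over $Z$, produces the prefactor $8\,\EE[\Ncal_1(\alpha/8,\Hcal,Z)]$ and the exponent $\exp(-n\alpha^2/(128b^2))$.

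The main obstacle is bookkeeping rather than conceptual: one must carefully chain the threshold reductions $\alpha \to \alpha/2 \to \alpha/4$ against the covering scale $\alpha/8$ and the range $[0,b]$ so that all constants combine into exactly $\exp(-n\alpha^2/(128b^2))$ together with the prefactor $8$, and one must check that the $n\alpha^2 \ge 2b^2$ requirement from symmetrization is subsumed in the relevant regime. A secondary subtlety is that the cover is data-dependent (defined on $Z$), so Hoeffding must be applied conditionally on $Z$ to a fixed finite set of centers \emph{before} the outer expectation over $Z$ is taken; performing the union bound in the wrong order would destroy the independence of the $\sigma_i$ that the concentration step relies on.
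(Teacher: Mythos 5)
The paper itself offers no proof of this lemma---it is imported verbatim from \citet{pollard1984convergence} (see also \citet{devroye1996probabilistic}, Theorem~29.1)---so your plan must be compared against that canonical argument, which it follows except for one substantive deviation: after sign-symmetrization you cover on the pooled $2n$-point sample $Z$, so the prefactor your chain produces is $\EE\left[\Ncal_1\big(\alpha/8,\Hcal,(x_1,\ldots,x_n,x_1',\ldots,x_n')\big)\right]$, whereas the lemma asserts the covering number on $(x_1,\ldots,x_n)$ alone. These are not comparable in the direction you need: a cover of the $2n$ points at scale $\alpha/8$ restricts to a cover of the first $n$ points only at the doubled scale $\alpha/4$, and the double-sample covering number can be strictly larger than the single-sample one at the same scale. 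So as chained, your argument proves a standard but genuinely different (double-sample) variant, not the stated inequality. Relatedly, your constants do not actually combine to the claimed $8$ and $128$: one factor of $2$ from ghost symmetrization and one from Hoeffding at threshold $\alpha/4$ with per-term range $2b/n$ give $4\,\EE[\Ncal_1(\alpha/8,\Hcal,Z)]\exp\left(-n\alpha^2/(32b^2)\right)$---a stronger exponent, but attached to the larger, incomparable covering number. The $128$ in the statement is not loose bookkeeping; it is the price of the step you omitted.

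That missing step is the second triangle-inequality split in Pollard's proof: after exchanging with Rademacher signs, bound $\Pr\{\sup_h|\tfrac{1}{n}\sum_i\sigma_i(h(x_i)-h(x_i'))|>\alpha/2\} \le 2\Pr\{\sup_h|\tfrac{1}{n}\sum_i\sigma_i h(x_i)|>\alpha/4\}$, discarding the ghost sample entirely at the cost of a third factor of $2$. One then conditions on $(x_1,\ldots,x_n)$, covers at scale $\alpha/8$ so that centers are tested at threshold $\alpha/4-\alpha/8=\alpha/8$, and applies Hoeffding to $\tfrac{1}{n}\sum_i\sigma_i c_i$ with $c_i\in[0,b]$ (range $2b$ per summand), yielding $2\exp\left(-2n(\alpha/8)^2/(2b)^2\right)=2\exp\left(-n\alpha^2/(128b^2)\right)$; the three factors of two give the prefactor $8$ with the covering number on the original sample, exactly as stated. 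Your Chebyshev-based first symmetrization (variance $\le b^2/(4n)$, requiring $n\alpha^2\ge 2b^2$) and your observation that this requirement is subsumed (for $n\alpha^2<2b^2$ the right-hand side exceeds $8e^{-1/64}>1$ since $\Ncal_1\ge1$) are both correct, as is your point about applying Hoeffding conditionally on the data before the union bound. Finally, note that for the paper's downstream use the gap is immaterial: Corollary~\ref{cor:devPdim} controls the covering number via Lemma~\ref{lem:pdim_covering}, which holds uniformly over samples of any size, so your double-sample variant would serve there equally well---but it does not establish the lemma as stated.
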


\begin{corollary}[Uniform deviation bound using pseudo dimension]
\label{cor:devPdim}
Suppose $\textrm{Pdim}(\Hcal) \le d$, then 
$$
\Pr \left\{ \sup_{h\in\Hcal} \left| \frac{1}{n}\sum_{i=1}^n h(x_i) - \EE [h(x_1)] \right| > \alpha \right\} \le 8e(d+1)\left(\frac{16e}{\alpha}\right)^d \exp\left(-\frac{n \alpha^2}{128b^2}\right).
$$
To guarantee that this probability is upper bounded by $\delta$, it suffices to have
$$
n \ge \frac{128}{\alpha^2}\left( d \log\frac{16e}{\alpha} + \log(8e(d+1)) + \log\frac{1}{\delta}\right).
$$
\end{corollary}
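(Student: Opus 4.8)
The plan is to combine the two immediately preceding results in a purely mechanical way: Lemma~\ref{lem:pdim_covering} bounds the $\ell_1$ covering number of $\Hcal$ in terms of its pseudo dimension, and Lemma~\ref{lem:udb_covering} bounds the uniform deviation probability in terms of that same covering number. No fresh concentration argument or union bound is needed beyond what is already packaged inside Lemma~\ref{lem:udb_covering}; the entire content of the corollary is the substitution of one lemma into the other, together with inverting an exponential tail.

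First I would invoke Lemma~\ref{lem:udb_covering}, which states that the deviation probability is at most $8\,\EE[\Ncal_1(\alpha/8, \Hcal, (x_1,\ldots,x_n))]\exp(-n\alpha^2/(128 b^2))$. The crucial observation is that the covering number here is evaluated at resolution $\alpha/8$, so the next step is to feed precisely this resolution into Lemma~\ref{lem:pdim_covering}. Substituting $\alpha/8$ for the resolution argument gives the deterministic bound $\Ncal_1(\alpha/8, \Hcal, X) \le e(d+1)(2e/(\alpha/8))^d = e(d+1)(16e/\alpha)^d$, which holds uniformly over every $X \in \Xcal^n$. Because this bound is independent of $X$, it also controls the expectation over the random draw of samples, and plugging it back yields $\Pr\{\sup_{h}|\cdots| > \alpha\} \le 8e(d+1)(16e/\alpha)^d \exp(-n\alpha^2/(128 b^2))$. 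Taking $b=1$ (the range of the functions in the applications, consistent with the sample-size display) recovers the first claim exactly.

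For the second claim I would simply invert this exponential tail. Requiring the right-hand side to be at most $\delta$, taking logarithms, and rearranging produces the sufficient condition $n\alpha^2/128 \ge d\log(16e/\alpha) + \log(8e(d+1)) + \log(1/\delta)$, i.e. $n \ge (128/\alpha^2)\big(d\log(16e/\alpha) + \log(8e(d+1)) + \log(1/\delta)\big)$, which is the stated sample size.

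The only point requiring any care—hardly an obstacle—is keeping track of the two distinct roles played by $\alpha$: the deviation level in the corollary versus the covering resolution in Lemma~\ref{lem:pdim_covering}. It is the rescaling of the resolution to $\alpha/8$ inside Lemma~\ref{lem:udb_covering} that turns the factor $2e/\alpha$ of the covering bound into the $16e/\alpha$ appearing in the final expression, so this is the one place where a careless substitution would give the wrong constant.
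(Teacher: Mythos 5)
Your proposal is correct and is exactly the derivation the paper intends (the corollary is stated without proof precisely because it is the mechanical substitution of Lemma~\ref{lem:pdim_covering} at resolution $\alpha/8$ into Lemma~\ref{lem:udb_covering}, using that the covering bound holds uniformly over $X$ and hence controls the expectation, followed by inverting the tail). One cosmetic note: the first display of the corollary retains the general range parameter $b$, so no specialization is needed there; $b=1$ enters only in the second display, where the stated sample size drops the $b^2$ factor.
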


\subsubsection{Proof of Lemma~\ref{lem:devBerrDim}}
\label{app:inf_hyp_result}

The idea is to establish deviation bounds for each of the three terms
in the definition of $\empberr{(\pi,g),\pit{t},h_t}$
(Eq.~\eqref{eq:estm_berr}). Each term takes the form of an importance
weight multiplied by a real-valued function, and we first show that
the function space formed by these products has bounded
pseudo dimension. We state this supporting lemma in terms of an
arbitrary value-function class $\Vcal$ which might operate on an input
space $\Xcal'$ different from the context space $\Xcal$. In the
sequel, we instantiate $\Vcal$ and $\Xcal'$ in the the lemma with
specific choices to prove the desired results.

\begin{lemma}
\label{lem:pdim}
Let $\Ycal$ be a label space with $|\Ycal| = K$, let $\Pi \subseteq \Xcal\to \Ycal$ be a function class with Natarajan dimension at most $d_\Pi \in [6, \infty)$, and let $\Vcal \subseteq \Xcal' \to [0,1]$ be a class with pseudo dimension at most $d_\Vcal \in [6,\infty)$. 
The hypothesis class $\Hcal = \{(x,a,x') \mapsto \mathbf{1}[a =\pi(x)]g(x') : \pi \in \Pi, g \in \Vcal\}$ has pseudo dimension $\textrm{Pdim}(\Hcal) \le 6(d_{\Pi} + d_{\Vcal}) \log\left(2eK(d_{\Pi} + d_{\Vcal})\right)$.
\end{lemma}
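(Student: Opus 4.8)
The plan is to unwind the definition of pseudo dimension and bound the growth function of the associated threshold class by a product of two growth functions, one governed by the Natarajan dimension of $\Pi$ and one by the pseudo dimension of $\Vcal$. Recall from Definition~\ref{def:pdim} that $\textrm{Pdim}(\Hcal) = \textrm{VC-dim}(\Hcal^+)$, where $\Hcal^+$ consists of the threshold indicators $(x,a,x',\xi) \mapsto \mathbf{1}[\mathbf{1}[a=\pi(x)]g(x') > \xi]$. The first observation is that each such indicator decomposes cleanly on any sample point $(x,a,x',\xi)$: its value equals $\mathbf{1}[g(x') > \xi]$ when $a = \pi(x)$, and equals the \emph{constant} $\mathbf{1}[\xi < 0]$ (independent of both $\pi$ and $g$) when $a \neq \pi(x)$.

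First I would fix an arbitrary set of $n$ points $\{(x_i,a_i,x_i',\xi_i)\}_{i=1}^n$ and count the number of distinct label vectors realizable by $\Hcal^+$. By the decomposition above, the label vector is a deterministic function of the pair $(v,w)$, where $v_i = \mathbf{1}[a_i = \pi(x_i)]$ is the matching pattern induced by $\pi$ and $w_i = \mathbf{1}[g(x_i') > \xi_i]$ is the thresholding pattern induced by $g$ (on coordinates where $v_i = 0$ the label is the fixed constant $\mathbf{1}[\xi_i<0]$). Hence the number of realizable label vectors is at most the number of realizable pairs $(v,w)$, which is at most the product of the number of distinct $v$'s and the number of distinct $w$'s. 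The number of matching patterns $v$ is bounded by the restriction $|\Pi|_{(x_1,\ldots,x_n)}|$, to which I apply Sauer's Lemma for Natarajan dimension (Lemma~\ref{lem:natarajan}) to obtain $\left(\frac{ne(K+1)^2}{2d_\Pi}\right)^{d_\Pi}$. The number of thresholding patterns $w$ is the number of restrictions of $\Vcal^+$ to the points $\{(x_i',\xi_i)\}$, to which I apply ordinary Sauer's Lemma (Lemma~\ref{lem:sauer}) using $\textrm{VC-dim}(\Vcal^+) = \textrm{Pdim}(\Vcal) \le d_\Vcal$, yielding $(n+1)^{d_\Vcal}$.

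Multiplying the two factors and coarsening them into a common form (using $d_\Pi, d_\Vcal \ge 1$ and $K \ge 1$, so that $\frac{ne(K+1)^2}{2d_\Pi} \le ne(K+1)^2$ and $n+1 \le ne(K+1)^2$), I would bound the growth function of $\Hcal^+$ by $\left(ne(K+1)^2\right)^{d}$ with $d := d_\Pi + d_\Vcal$. If $\Hcal^+$ shatters $n$ points, then $2^n$ cannot exceed this growth function, which forces the inequality $n\log 2 \le d\log\!\left(ne(K+1)^2\right)$.

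The main obstacle, and the only genuinely technical step, is solving this transcendental inequality for $n$ to recover the stated closed form $6(d_\Pi+d_\Vcal)\log\!\left(2eK(d_\Pi+d_\Vcal)\right)$. I would invoke the standard fact that $n \le a\log(bn)$ implies $n = O\!\left(a\log(ab)\right)$, and verify that the constant $6$, together with the hypotheses $d_\Pi, d_\Vcal \ge 6$, supplies enough slack: for $n > 6d\log(2eKd)$ the left side $n\log 2$ strictly dominates $d\log(ne(K+1)^2)$, contradicting shatterability. This pins down $\textrm{VC-dim}(\Hcal^+) = \textrm{Pdim}(\Hcal) \le 6(d_\Pi+d_\Vcal)\log\!\left(2eK(d_\Pi+d_\Vcal)\right)$, as claimed.
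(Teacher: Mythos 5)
Your proposal is correct and takes essentially the same approach as the paper's proof: both reduce $\textrm{Pdim}(\Hcal)$ to $\textrm{VC-dim}(\Hcal^+)$, bound the number of realizable labelings on $n$ points by the product of the match-pattern count (Natarajan--Sauer applied to $\Pi$) and the threshold-pattern count (Sauer applied to $\Vcal^+$ with $\textrm{VC-dim}(\Vcal^+)=\textrm{Pdim}(\Vcal)$), and then solve the resulting inequality against $2^n$. The only differences are cosmetic --- the paper dispenses with the $\xi_i<0$ points by a WLOG reduction and verifies $|\Hcal^+_X|<2^d$ directly at $d=6(d_{\Pi}+d_{\Vcal})\log\left(2eK(d_{\Pi}+d_{\Vcal})\right)$ via explicit algebra using $d_{\Pi},d_{\Vcal}\ge 6$, whereas you fold those points into the $(v,w)$ encoding and invert $n\log 2\le (d_{\Pi}+d_{\Vcal})\log\left(ne(K+1)^2\right)$; your sketched final verification does go through with the stated constant.
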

\begin{proof}
Recall that $\textrm{Pdim}(\Hcal) = \textrm{VC-dim}(\Hcal^+)$, so it
suffices to show that for any \\$X = \{(x_1, a_1, x_1', \xi_1),
\ldots, (x_d, a_d, x_d', \xi_d)\} \in (\Xcal\times\Acal\times\Xcal'
\times \mathbb{R})^d$ where $d = 6(d_{\Pi}+d_{\Vcal})
\log\left(2eK(d_{\Pi}+ d_{\Vcal})\right)$, $|\Hcal^+_X| < 2^d$. Note
that since $g(x) \in [0,1]$ for all $g, x$
\begin{align*}
\Hcal^+ = &~ \{(x,a,x',\xi) \mapsto
\mathbf{1}\Big[\mathbf{1}[a=\pi(x)] g(x') > \xi \Big]\} \\ 
= &~ \{ (x,a,x',\xi) \mapsto\mathbf{1}[\xi < 0] + \mathbf{1}[\xi \ge
  0] \cdot \mathbf{1}[a=\pi(x)] \cdot \mathbf{1}[g(x') > \xi] \}  
\end{align*}
For points where $\xi_i < 0$, all hypotheses in $\Hcal^+$ produce
label $1$, so without loss of generality we can assume that $\xi_i \ge
0, i=1, \ldots, d$.

With a slight abuse of notation, let $\Pi_{X}$ denote the restriction
of $\Pi$ to the set of contexts $\{x_1, \ldots, x_d\}$ (actions and
future contexts $(a_1, x_1'), \ldots, (a_d, x_d')$ are ignored since
$\Pi$ does not operate on them), and $\Vcal^+_{X}$ denote the
restriction of $\Vcal^+$ to $\{(x_1', \xi_1), \ldots, (x_d',
\xi_d)\}$. $\Hcal_X^+$ can be produced by the Cartesian product of
$\Pi_X$ and $\Vcal^+_X$ as follows:
\begin{align*}
\Hcal^+_X = \{(\mathbf{1}[a_1 = \alpha_1] \beta_1, \ldots,
\mathbf{1}[a_d = \alpha_d] \beta_d): (\alpha_1, \ldots, \alpha_d) \in
\Pi_X, (\beta_1, \ldots, \beta_d) \in \Vcal^+_X\}.
\end{align*}
Therefore, $|\Hcal^+_X| \le |\Pi_X| \, |\Vcal^+_X|$. Recall that $\textrm{Ndim}(\Pi) \le d_{\Pi}$ and $\textrm{VC-dim}(\Vcal^+) = \textrm{Pdim}(\Vcal) \le d_{\Vcal}$. Applying Lemma~\ref{lem:natarajan} and \ref{lem:sauer}:
\begin{align*}
|\Hcal^+_X| \le \left(\frac{de(K+1)^2}{2d_{\Pi}}\right)^{d_{\Pi}} (d+1)^{d_{\Vcal}}.
\end{align*}
The logarithm of the RHS is
\begin{align*}
&~ d_{\Pi} \log \left(\frac{de(K+1)^2}{2d_{\Pi}}\right) +  d_{\Vcal} \log (d+1) 
< d_{\Pi} \log (de(K+1)^2) +  d_{\Vcal} \log (d+1) \\ 
\le &~ d_{\Pi} \log d + 2 d_{\Pi} \log (2eK) +  d_{\Vcal} \log (d + 1) 
\le 2 (d_{\Pi} + d_{\Vcal}) \log(2eK) + (d_{\Pi} + d_{\Vcal}) \log(2d).
\end{align*}
It remains to be shown that this is less than $\log(2^d) = d\log 2$. Note that
\begin{align*}
d\log 2 > 3(d_{\Pi}+d_{\Vcal})(\log(2eK) + \log(d_{\Pi}+ d_{\Vcal})),
\end{align*}
so we only need to show that $(d_{\Pi} + d_{\Vcal}) \log(2d) \le (d_{\Pi}+d_{\Vcal})\log(2eK) + 3(d_{\Pi}+d_{\Vcal}) \log(d_{\Pi}+ d_{\Vcal})$. Now
\begin{align*}
&~ (d_{\Pi} + d_{\Vcal}) \log(2d)
= (d_{\Pi} + d_{\Vcal}) \Big( \log(12(d_{\Pi}+d_{\Vcal})) + \log\log(2eK(d_{\Pi}+d_{\Vcal})) \Big) \\
\le &~ 2(d_{\Pi} + d_{\Vcal}) \log(d_{\Pi}+d_{\Vcal}) + (d_{\Pi} + d_{\Vcal})  \log\Big(\log(2eK) + \log(d_{\Pi}+d_{\Vcal}) \Big) \tag{$d_{\Pi} + d_{\Vcal} \ge 12$} \\
\le &~ 2(d_{\Pi} + d_{\Vcal}) \log(d_{\Pi}+d_{\Vcal}) + (d_{\Pi} + d_{\Vcal}) \big(\log(2eK) + \log(d_{\Pi}+d_{\Vcal}) \big). \tag*{\qedhere}
\end{align*}
\end{proof}

\begin{proof}[Proof of Lemma~\ref{lem:devBerrDim}]
Recall that when we are given a policy class $\Pi$ and separate \vval
function class $\Gcal$, for every $\pi \in \Pi,g \in \Gcal$, we
instead estimate average Bellman error with
$$
\empberr{(\pi,g),\pit{t},h_t} = \frac{1}{\ntrain}\sum_{i=1}^{\ntrain} \frac{\mathbf{1}[a_{h_t}^{(i)} = \pi(x_{h_t}^{(i)})]}{1/K} \Big(g(x_{h_t}^{(i)}) - r_{h_t}^{(i)} - g(x_{h_{t}+1}^{(i)})\Big).
$$
So it suffices to show that the averages of $\mathbf{1}[a_{h_t}^{(i)} = \pi(x_{h_t}^{(i)})]g(x_{h_t}^{(i)})$,  $\mathbf{1}[a_{h_t}^{(i)} = \pi(x_{h_t}^{(i)})]r_{h_t}^{(i)}$,  $\mathbf{1}[a_{h_t}^{(i)} = \pi(x_{h_t}^{(i)})]g(x_{h_{t}+1}^{(i)})$ are $\frac{\phi}{3K}$-close to their expectations with probability at least $1-\delta/3$, respectively. It turns out that, we can use Lemma~\ref{lem:pdim} for all the three terms. For the first and the third terms, we apply Lemma~\ref{lem:pdim} with $\Vcal = \Gcal, \Xcal' = \Xcal$, and obtain the necessary sample size directly from Corollary~\ref{cor:devPdim}. For the second term, we apply Lemma~\ref{lem:pdim} with $\Vcal = \{x \mapsto x\}, \Xcal' =\mathbb{R}$. Note that in this case $\Vcal$ is a singleton with the only element being the identity function over $\mathbb{R}$, so it is clear that $\textrm{Pdim}(\Vcal) < 6 \le d_{\Gcal}$, hence the sample size for the other two terms is also adequate for this term.
\end{proof}

\subsection{Proofs for \cdplearnslackbf}
Recall that the main lemmas for analyzing \cdplearnslack have been proved in Appendix~\ref{app:main-proofs}, so below we directly prove Theorem~\ref{thm:slack}.

\begin{proof}[Proof of Theorem~\ref{thm:slack}]
Suppose the preconditions of Lemma~\ref{lem:optimism_explore_app} (Eq.~\eqref{eq:vf_vpi_app}) and Lemma~\ref{lem:vol_app} (Eq.~\eqref{eq:learning_works_app}) hold; we show them by invoking the deviation bounds later. By Lemma~\ref{lem:optimism_explore_app}, when the algorithm terminates, the value of the output policy is at least
$$
\vstarslack - \epsilon' - H\slack. 
$$
Recall that $\epsilon' = \epsilon + 2H(3\sqrt{M} (\slack + \slackM) + \slackM)$ (Line \ref{lin:def_eps_prime_slack}), so the suboptimality compared to $\vstarslack$ is at most 
$$
\epsilon + 2H(3\sqrt{M} (\slack + \slackM) + \slackM) + H \slack \le \epsilon + 8H\sqrt{M}(\slack + \slackM),
$$
which establishes the suboptimality claim. 

It remains to show the sample complexity bound.
Applying Lemma~\ref{lem:optimism_explore_app}, in every iteration $t$ before the algorithm terminates, 
$$
\berr{\ft{t},  \pi_{t},h_t } \ge \frac{\epsilon'}{2H} =  \frac{\epsilon}{2H} + 3\sqrt{M}(\slack+\slackM) + \slackM = 3\sqrt{M}(2\phi + \slack + \slackM) + \slackM,
$$
thanks to the choice of $\phi$ and $\epsilon'$. For level $h = h_t$, Eq.~\eqref{eq:large_violation_app} is satisfied. According to Lemma~\ref{lem:vol_app}, the event $h_t = h$ can happen at most $M\log \left(\frac{\normbound}{2\phi}\right)/ \log\frac{5}{3}$ times for every $h\in[H]$. Hence, the total number of iterations in the algorithm is at most 
$$
H   M\log \left(\frac{\normbound}{2\phi}\right)/ \log\frac{5}{3} = H M\log \left(\frac{6H\sqrt{M} \normbound}{\epsilon}\right) / \log\frac{5}{3}.
$$
Now we are ready to apply the deviation bounds to show that Eq.~\eqref{eq:vf_vpi_app} and \ref{eq:learning_works} hold with high probability. We split the total failure probability $\delta$ among the following events:
\begin{enumerate}
\item Estimation of $\empVf{f}$ (Lemma~\ref{lem:devVf_app}; only once): $\delta / 3$.
\item Estimation of $\dtctberr{\ft{t},\pit{t},h}$ (Lemma~\ref{lem:devVht_app}; every iteration):  $\delta / \left( 3H M\log \left(\frac{6H\sqrt{M} \normbound}{\epsilon}\right) / \log\frac{5}{3}\right)$.
\item Estimation of $\empberr{f, \pit{t},h_t}$ (Lemma~\ref{lem:phi_app}; every iteration): same as above.
\end{enumerate}

Applying Lemma~\ref{lem:devVf_app}, \ref{lem:devVht_app},
\ref{lem:phi_app} with the above failure probabilities, the choices of
$\nest, \neval, \ntrain$ in the algorithm statement satisfy the
preconditions of Lemmas~\ref{lem:optimism_explore_app} and
\ref{lem:vol_app}. In particular, the choice of $\nest$ and $\neval$
guarantee that $|\empVf{f} - \Vf{f}| \le \epsilon/8$ and
$|\dtctberr{\ft{t},\pit{t},h} - \berr{\ft{t},\pit{t},h}| \le
\epsilon/(8H)$, which are tighter than needed as $\epsilon \le
\epsilon'$ (only $\epsilon'/8$ and $\epsilon'/(8H)$ are needed
respectively, but tightening these bounds does not improve the sample
complexity significantly, so we keep them the same as in
Theorem~\ref{thm:cdp_complexity} for simplicity). The remaining
calculation of sample complexity is exactly the same as in the proof
of Theorem~\ref{thm:cdp_complexity}.
\end{proof}

\section*{Acknowledgements}
Part of this work was completed while NJ and AK were at Microsoft Research. NJ was also partially supported by Rackham Predoctoral Fellowship in University of Michigan.

\bibliographystyle{plainnat} \bibliography{rl}

\end{document}